\setlist[enumerate]{nosep}
\newtheorem{theorem}{Theorem}
\newtheorem{assumption}{Assumption}
\newtheorem{definition}{Definition} 
\newtheorem*{proof}{Proof}
\newcommand{\tabincell}[2]{\begin{tabular}{@{}#1@{}}#2\end{tabular}}
\newcommand{\x}{\bm{x}}
\newcommand{\z}{\bm{z}}
\newcommand{\GADmethod}{KLOD}
\newcommand{\PADmethod}{KLODS}
\newcommand{\m}{\bm{\mu}}
\newcommand{\s}{\bm{\Sigma}}
\newcommand{\n}{\mathcal{N}}
\newcommand{\e}{\varepsilon}
\newcommand{\fKLnn}{KL(\mathcal{N}(\m_1,\s_1)||\mathcal{N}(\m_2,\s_2))}
\newcommand{\bKLnn}{KL(\mathcal{N}(\m_2,\s_2)||\mathcal{N}(\m_1,\s_1))}
\setlist[enumerate]{nosep}
\begin{document}
%
\title{Kullback-Leibler Divergence-Based Out-of-Distribution Detection  with Flow-Based Generative Models}
%
%
%
%
\author{Yufeng~Zhang, Jialu Pan, Wanwei~Liu, Zhenbang~Chen,  Kenli~Li, Ji~Wang, Zhiming~Liu, Hongmei~Wei
	\IEEEcompsocitemizethanks{
		\IEEEcompsocthanksitem Yufeng Zhang, Jialu Pan, and Kenli Li are with the College of Computer Science and Electronic Engineering, Hunan University, Changsha, China.\protect\\
		E-mail: yufengzhang@hnu.edu.cn, jialupan@hnu.edu.cn, lkl@hnu.edu.cn
		\IEEEcompsocthanksitem Wanwei Liu, Zhenbang Chen, and Ji Wang are with the College of Computer, National University of Defense Technology, Changsha, China.\protect\\
		E-mail: wwliu@nudt.edu.cn, zbchen@nudt.edu.cn, wj@nudt.edu.cn
		\IEEEcompsocthanksitem Zhiming Liu is with the Centre for Research and Innovation in Software Engineering, Southwest University, Chongqing, China.\protect\\
		E-mail: zhimingliu88@swu.edu.cn
		\IEEEcompsocthanksitem Hongmei Wei is with the National Research Center of Parallel Computer Engineering and Technology, China.\protect\\
		E-mail: wei\_hongm@163.com%
		\IEEEcompsocthanksitem Kenli Li and Ji Wang are the corresponding authors.
	}
	\thanks{Manuscript received xx xx, 2022; revised xx xx, 2022.}
}
\markboth{Journal of \LaTeX\ Class Files,~Vol.~14, No.~8, August~2015}%
{Shell \MakeLowercase{\textit{et al.}}: Bare Demo of IEEEtran.cls for Computer Society Journals}
%



\IEEEtitleabstractindextext{%
\begin{abstract}
Recent research has revealed that deep generative models including flow-based models and Variational Autoencoders may assign higher likelihoods to out-of-distribution (OOD) data than in-distribution (ID) data. However, we cannot sample  OOD data from the model. This counterintuitive phenomenon has not been satisfactorily explained and brings obstacles to OOD detection with flow-based models. 
In this paper, we prove theorems to investigate the Kullback-Leibler divergence in flow-based model and  give two explanations for the above phenomenon. 
Based on our theoretical analysis,  
we propose a new method \PADmethod\ to leverage KL divergence and local pixel dependence of representations to perform anomaly detection. 
Experimental results on prevalent benchmarks demonstrate the effectiveness and robustness of our method.
For group anomaly detection, our method achieves 98.1\% AUROC on average with a small batch size of 5. On the contrary, the baseline typicality test-based method only achieves 64.6\% AUROC on average due to its failure on challenging problems. Our method also outperforms the state-of-the-art method by 9.1\% AUROC. For point-wise anomaly detection, our method achieves 90.7\% AUROC on average and outperforms the baseline by 5.2\% AUROC. 
Besides, our method has the least notable failures and is the most robust one.
\end{abstract}

\begin{IEEEkeywords}
Out-of-distribution detection, deep learning, flow-based model, Kullback-Leibler divergence, Gaussian distribution.
\end{IEEEkeywords}}

\maketitle

\IEEEdisplaynontitleabstractindextext

%
\IEEEpeerreviewmaketitle

\IEEEraisesectionheading{\section{Introduction}\label{sec:intro}}

%
%
%
%

\IEEEPARstart{A}{nomaly} detection is the process of ``finding patterns in data that do not conform to expected behavior'' \cite{anomaly_detection_survey_2009,deep_anomaly_detection_survey_2021}.  
Under an unsupervised learning setting, the model is trained on a set of unlabeled data $\{\bm{x}_1, \cdots,\bm{x}_n\}$ which are drawn independently from an unknown distribution $p^\ast$. Group anomaly detection (GAD) \cite{GADSurvey2018} aims to determine whether a given group of test inputs $\{\widetilde{\bm{x}}_1,\cdots,\widetilde{\bm{x}}_m\}(m > 1)$ is sampled from $p^\ast$.  
Typical applications of GAD include discovering high-energy particle physics, 
\cite{oneclassSMM2013},
anomalous galaxy clusters in astronomy \cite{Supportmeasure2015,HierarchicalGAD2011}
, unusual vorticity in fluid dynamics  \cite{GAD-using-Genre-Model2011},
and stealthy attacks \cite{GADSurvey2018,finding-rats-in-cats-2019}. 
Point-wise anomaly detection (PAD) \cite{anomaly_detection_survey_2009, chalapathy2019deep} aims to determine whether an individual input is sampled from $p^\ast$.
PAD is applied in many areas including detecting intrusion \cite{anomaly_detection_survey_2009}, 
fraud \cite{credit_card_fraud_2019},
malware \cite{malware_detection_survey_2017}, and
medical anomalies \cite{anomaly_detection_survey_2009}.
It is worth noting that GAD cannot be implemented by PAD because the individual members of the input group may not be anomalies \cite{deep_anomaly_detection_survey_2021, GADSurvey2018, GADGenre_nips2011}.
In literature, the term \textit{anomaly} is also referred to as outlier, peculiarity, out-of-distribution (OOD) data, \textit{etc}. In the following, we mainly use terms \textit{OOD data} and \textit{anomaly} as in most related works.

\textbf{Counterintuitive Phenomenon}.
This paper focuses on unsupervised OOD detection using explicit deep generative models (DGM) including flow-based models and Variational Autoencoders (VAE).
Recent research shows that explicit deep generative models including flow-based models \cite{kingma2018glow,dinh2016realnvp}, VAE \cite{kingma2013auto}, and auto-regressive models~\cite{van2016conditional,salimans2017pixelcnn++} are not capable of distinguishing OOD data from in-distribution (ID) data (training data) according to the model likelihood (\textit{i.e.}, Type II errors) \cite{nalisnick2018deep,shafaei2018digitnotcat, choi2018generative, vskvara2018generative,nalisnick2019detecting,whyflowfailood}. 
For example, as shown in Figures  \ref{fig:logp_compare_problem}\subref{fig:logpx_fashionmnist_notmnist_mnist} and \ref{fig:logp_compare_problem}\subref{fig:logpx_cifar10_cifar100_svhn} in the supplementary material, Glow \cite{kingma2018glow} assigns  higher likelihoods for SVHN (MNIST) when trained on CIFAR-10 (FashionMNIST). 
Figure \ref{fig:logp_cifar10_vs_svhn_residual_flow} in the supplementary material shows similar results in recent proposed residual flows \cite{chen2019residualflows}.  
However, as pointed out by Nalisnick \textit{et al.} \cite{nalisnick2019detecting} \textit{we cannot sample OOD data from the model}.
We can also observe a similar phenomenon in class conditional Glow (GlowGMM), which contains a Gaussian mixture model  on the top layer with one Gaussian distribution for each class \cite{kingma2018glow, fetaya2019conditional,izmailov2019semisupervised}.  
For example, GlowGMM does not achieve the same performance as prevalent discriminative models such as ResNet \cite{resnet} on FashionMNIST. 
We observe that the centroids of different components are close to each other (see Figure \ref{fig:cond_glow_logp_of_centroids_under_all_other_components} in the supplementary material).
One component may assign higher likelihoods for other classes (see Table \ref{tbl:cond_glow_fashionmnist_logpz_classification} in the supplementary material). 
However, \textit{we always sample images of the correct class from the corresponding component}.  


Nalisnick \textit{et al.} explain the above phenomenon by the discrepancy of the typical set and high probability density regions of the model distribution \cite{nalisnick2019detecting}. They propose using typicality test to detect OOD data.  However, their explanation and method fail on problems where the likelihoods of ID and OOD data coincide (\textit{e.g.}, CIFAR-10 vs CIFAR-100, CelebA vs CIFARs).  
In this paper,  
we manipulate the model likelihoods such that ID and OOD data have coinciding likelihoods (see Subsection \ref{sec:attack_likelihood}). Such manipulation could make all existing likelihood-based OOD detection methods \cite{nalisnick2019detecting,2020InputComplexity,understanding_anomaly_2020_nips,morningstar21KDE} fail.
Some researchers investigate the behaviors of flow-based models in OOD detection. 
Kirichenko \textit{et al.} reveal that flow-based model learns local pixel correlations and generic image-to-latent-space transformations  \cite{whyflowfailood}. Such learned knowledge may also exist in OOD dataset. 
Zhang \textit{et al.} state that the estimation error of the flow-based model is the reason for the failure of anomaly detection \cite{understandingfailures}.  


\textbf{Research Questions}.
Currently, the above counterintuitive phenomenon has not been explained  satisfactorily.
In this paper, we rethink the existing  conclusions relating to OOD detection using flow-based model. 
We focus on the following two research questions:
\begin{itemize}
	\setlength{\itemsep}{0pt}
	\setlength{\parskip}{0pt}
	\item \textbf{Q1: Explanation\footnote{We focus on the reason behind Q1 rather than aiming to sample OOD data in this paper.}}. Why can we not sample OOD data from flow-based model? We need a unified answer to this question whenever OOD data have lower, higher, or coinciding likelihoods. 
	
	\item \textbf{Q2: OOD detection}. How to detect OOD data using flow-based model and VAE without supervision? 
\end{itemize}

We start our research from the sampling process. 
Flow-based model constructs diffeomorphism $\z=f(\x)$ from visible (data) space to latent space. The model maps each input data point $\x$ to a unique representation $\z$ in latent space. We can sample noise $\varepsilon$ from prior (usually standard Gaussian distribution) and generate new data $f^{-1}(\varepsilon)$.
So we should ask \textit{why we cannot sample the representations of OOD data from prior}. 
In this paper, we explain why we cannot sample OOD data. We abandon the model likelihood and leverage Kullback-Leibler (KL) divergence and local pixel dependence of representations for OOD detection.

\textbf{Contributions}. The contributions of this paper are:
\begin{enumerate}
	\setlength{\itemsep}{0pt}
	\setlength{\parskip}{0pt}
	\item We prove several theorems to investigate the KL divergence in flow-based model. 
	We answer why we cannot sample OOD data from two perspectives. The first answer reveals the large KL divergence between the distribution of representations of OOD data and the prior. The second answer states that the representations of OOD data locate in specific directions. 
	
	\item 
	We propose a unified OOD detection method in three steps based on our analysis. 
	Firstly, we propose leveraging the KL divergence between the distribution of representations and prior for GAD. We also propose using fitted Gaussian to estimate the (lower bound of) KL divergence.  
	Secondly, we decompose the KL divergence and leverage the last-scale KL divergence for OOD detection. 
	Finally, we leverage the local pixel dependence of representations to improve our method further and support PAD.
	\item We conduct experiments to demonstrate the effectiveness and robustness of our method.
\end{enumerate}

The remaining part of this paper is organized as follows.
Section \ref{sec:related_work} discusses the related work. 
Section \ref{sec:problem} discusses problem settings.
Section \ref{sec:whynotexplain} presents our theoretical analysis to answer Q1. 
Section \ref{sec:GADmethod} elaborates on the details of our OOD detection method. 
Section \ref{sec:experiment} presents experimental results. 
Finally, Section \ref{sec:conclusion} concludes. 
More details of the methods, experimental results, discussion, and related work are presented in the supplementary material.

\vspace{-10pt}
\section{Related Work}\label{sec:related_work}
We discuss the most related work here. More discussion is presented in Section \ref{sec:more_relatedwork} in the supplementary material.

\textbf{GAD and PAD}. In \cite{GADSurvey2018}, Toth \textit{et al.} give a survey on GAD methods and plenty of real-world GAD applications. In \cite{chalapathy2019deep}, Chalapathy \textit{et al.} survey  a wide range of deep learning-based GAD and PAD methods. 
In \cite{deep_anomaly_detection_survey_2021}, Pang \textit{et al.} review the deep learning-based anomaly detection methods.
It is worth noting that in GAD an individual data point in the input group can be normal \cite{deep_anomaly_detection_survey_2021, GADSurvey2018, GADGenre_nips2011}. So GAD and PAD have different contexts. 
According to the availability of supervision information, OOD detection can be classified into supervised, semi-supervised, and unsupervised settings. 
In this paper, we focus on unsupervised OOD detection using flow-based model, so we mainly compare with methods in the same category.

\textbf{OOD Detection Using Flow-Based Model}.
Generally, it seems straightforward to use model likelihood $p(\bm{x})$ (if any) of a generative model to detect OOD data~\cite{Pimentel2014A, GADSurvey2018}. 
However, these methods fail when OOD data have higher or similar likelihoods.
Choi \textit{et al.} propose using the Watanabe-Akaike Information Criterion (WAIC) to detect OOD data~\cite{choi2018generative}.  WAIC penalizes points that are sensitive to the particular choice of posterior model parameters. However, Nalisnick \textit{et al.} ~\cite{nalisnick2019detecting}  could not reproduce the results of WAIC.
Choi \textit{et al.}  also propose using typicality test in the latent space to detect OOD data. Our results reported in Subsection \ref{sec:attack_likelihood} demonstrate that typicality test in the latent space can be attacked.  
Sabeti \textit{et al.} propose detecting anomalies based on typicality~\cite{sabeti2019data}, but their method is not suitable for deep generative models. 
Nalisnick \textit{et al.} propose using typicality test on model distribution (Ty-test) for GAD \cite{nalisnick2019detecting}.
Jiang \textit{et al.} propose GOD2KS which combines random projection and two-sample KS test to perform GAD based on flow-based model \cite{jiang2022revisiting}.
Ren \textit{et al.}  propose to use likelihood ratios for OOD detection\cite{ren2019likelihood-ratio}.  Serr\`{a} \textit{et al.} propose using likelihood compensated by input complexity for OOD detection \cite{2020InputComplexity}.
In \cite{understanding_anomaly_2020_nips}, Schirrmeister \textit{et al.} find the likelihood contributed by the last scale of Glow ($L_{last}$) is a better criterion than $\log p(\bm{x})$ for PAD. We find $L_{last}$ should not be explained as likelihood consistently for OOD data. See Section \ref{sec:more_relatedwork} in the supplementary material for more discussion.
%
In \cite{morningstar21KDE}, Morningstar \textit{et al.} train density estimator (DoSE) and one-class SVM on the statistics of deep generative models to detect OOD data. 
Before this writing, GOD2KS \cite{jiang2022revisiting} and DoSE \cite{morningstar21KDE} are the SOTA GAD and PAD methods applicable to flow-based models under unsupervised setting, respectively. 
We will show that many baseline methods could degenerate into being not better than random guessing under data manipulation.
These results demonstrate the difficulty of OOD detection using flow-based model.




\vspace{-8pt}

\section{Problem Settings}
\label{sec:problem}
This paper mainly focuses on flow-based generative model, which constructs diffeomorphism $\z=f(\x)$ from visible space $\mathcal{X}$ to latent space $\mathcal{Z}$ \cite{kingma2018glow,dinh2016realnvp,dinh2014nice,papamakarios2019flow_model_survey}. Our work also involves Variational Autoencoder (VAE) \cite{kingma2013auto}.  
Please refer to Section \ref{sec:background} in the supplementary material for background.
In this section, we first discuss how to manipulate the model likelihoods. Then we note the target problems of this work.

 \vspace{-8pt}
\subsection{Manipulating Likelihoods}  \label{sec:attack_likelihood}
In \cite{nalisnick2019detecting}, Nalisnick \textit{et al.} conjecture that the counterintuitive phenomena in Q1 stem from the distinction of high probability density regions and the typical set of the model distribution~\cite{nalisnick2019detecting, choi2018generative}. For example, Figure \ref{fig:typical_set} in the supplementary material shows the typical set of $d$-dimensional standard Gaussian distribution, which is an annulus with a radius of $\sqrt{d}$ ~\cite{vershynin2018high}. When sampling from the Gaussian distribution, it is highly likely to get points in the typical set rather than the highest density region ($i.e.$ the center) or the lowest density region far from the mean. Based on this explanation, Nalisnick \textit{et al.} propose using typicality test (Ty-test in short) to detect OOD data \cite{nalisnick2019detecting}. 
However, their explanation and method do not apply to problems where  OOD data reside in the typical set of model distribution (\textit{i.e.}, OOD data has coinciding likelihoods with ID data).
Researchers have also proposed other likelihood-related OOD detection methods, including input complexity compensated likelihood \cite{2020InputComplexity}, likelihood contributed by the last scale \cite{understanding_anomaly_2020_nips}, and DoSE \cite{morningstar21KDE}. 
In the following, we show how to manipulate OOD data to make the likelihood  of ID and OOD dataset coincide. Such manipulation could make all existing likelihood-based methods fail.

\textbf{M1: Manipulating $p(\z)$ by Rescaling $\z$ to Typical Set of Prior}. We train Glow with 768-dimensional standard Gaussian prior on FashionMNIST. Figure \ref{fig:logp_compare_problem}\subref{fig:logpz_fashionmnist_notmnist_mnist} in the supplementary material shows the histogram of log-likelihood of representations under prior (\textit{i.e.}, $\log p(\bm{z})$)\footnote{In official Glow model, $\log p(\z)$ is implemented as the log-likelihood of the representation of the last scale of Glow under prior.}. 
Note that $\log p(\bm{z})$ of FashionMNIST is around $-768\times (0.5\times \text{ln} 2\pi e)\approx -1089.74$,
which is the log-probability of typical set of the prior~\cite{cover2012elements}. 
Here it seems that we can detect OOD data by $p(\bm{z})$ or typicality test in the latent space ~\cite{choi2018generative}.
However, as shown in Figure \ref{fig:typical_set} in the supplementary material, we can decode each OOD data point $\x$ as $\bm{z}=f(\x)$ and rescale $\z$ to the typical set by setting $\bm{z}'=\sqrt{d}\times \bm{z}/|\bm{z}|$ ($d=768$).
Then we decode $\z'$ to generate image $\x'=f^{-1}(\z')$. We find that $\bm{x}'$ corresponds to the similar image with $\bm{x}$. Figure \ref{fig:sample_images_rescale_to_typical_set_trained_glow_on_fashionmnist} in the supplementary material shows some examples of $\x'$. 
These results demonstrate that flow-based model cannot expel representations of OOD data from the typical set of the prior. 
Note that, Glow model uses multi-scale architecture and has three stages of representations with different scales. In our experiments, rescaling the last scale yields similar results as rescaling all scales simultaneously (see Figure \ref{fig:sample_images_rescale_to_typical_set_trained_glow_on_fashionmnist} in the supplementary material).
To the best of our knowledge, we are the first to discover that the latents rescaled to the typical set of prior still can be mapped back to legal images. 
In this paper, we will see that, such manipulation can make multiple exsiting OOD detection methods fail.

\textbf{M2: Manipulatinging $p(\x)$ by Adjusting Contrast}. 
Nalisnick \textit{et al.} find that the likelihoods can be manipulated by adjusting the variance of inputs
~\cite{nalisnick2018deep}.  
As shown in Figure \ref{fig:logp_compare_problem}\subref{fig:logpx_cifar10_svhn_contrast_glow} in the supplementary material, SVHN with increased contrast by a factor of 2.0 has coinciding likelihood distribution with CIFAR-10 on Glow trained on CIFAR-10. So it is impossible to detect OOD data by $p(\bm{x})$ or typicality test on the model distribution (see Figure \ref{fig:logp_compare_problem}\subref{fig:logpx_cifar10_cifar100_svhn} in the supplementary material too). In our experiments, we can manipulate the likelihoods of OOD dataset in this way for almost all problems (see Figure \ref{fig:logpx_glow_fashionmnist_vs_others}$\sim$\ref{fig:logpx_glow_celeba_vs_others} in the supplementary material). 
We will see that (in Section \ref{sec:experiment}) multiple existing OOD detection methods could degenerate into being not better than random guessing under data manipulation. 
Similarly, in VAE, we can also manipulate the likelihoods by adjusting the contrast of input images. 

\textbf{Summary}. We can manipulate both $p(\x)$ and $p(\z)$ of OOD data without knowing the model parameters. 
In this paper, we abandon the model likelihood and propose an OOD detection method that is robust to data manipulations.

\vspace{-10pt}
\subsection{Problems}

We use ID vs OOD to represent an OOD detection problem and use ``ID (OOD) representations'' to denote the representations of ID (OOD) data.
According to the statistics of OOD dataset, we group OOD detection problems into two categories:
\begin{itemize}
	\setlength{\itemsep}{0pt}
	\setlength{\parskip}{0pt}
	\item \textbf{Category I: smaller/similar variance, higher/similar likelihoods}. OOD dataset has smaller or similar variance with ID dataset and tends to have higher or similar likelihoods;
	\item \textbf{Category II: larger variance, lower likelihoods}. OOD dataset has larger variance than ID data and tends to have lower likelihoods.
\end{itemize}

As shown in Subsection \ref{sec:attack_likelihood}, we can use data manipulation \textbf{M2} (adjusting contrast) to convert one problem from one category to another.


\vspace{-10pt}
\section{Explaining Why Cannot Sample OOD Data}\label{sec:whynotexplain}
In this section, we explain why we cannot sample OOD data from two perspectives. 
Based on these analyses, we will derive our OOD detection method in Section \ref{sec:GADmethod}.

Figure \ref{fig:theory_and_algorithm} shows the overview of our analysis of the KL divergence in flow-based model for a certain case (discussed in Subsection \ref{sec:gaussian_case}). The top half of Figure 
\ref{fig:flowchart} in the supplementary material also summarizes our discussion in this section. 
Please refer to Figure \ref{fig:theory_and_algorithm} and Figure 
\ref{fig:flowchart} in the supplementary material when reading this section. 
\vspace{-10pt}
\subsection{\textbf{Explanation 1: Divergence Perspective}}
Our analysis involves the following distributions: the distributions of ID data ($p_X$) and OOD data ($q_X$), the distributions of ID representations ($p_Z$) and  OOD representations ($q_Z$), the prior $p_Z^r$, and the model induced distribution $p_X^r$ such that $Z_r\sim p_Z^r$ and $X_r=f^{-1}(Z_r)\sim p^r_X$.  Table \ref{tbl:notations} in the supplementary material summarizes the notations involved in our analysis and how they influence each other. In this subsection, we first discuss the general case. Then we conduct further analysis for  \textit{Category I} problems (smaller/similar variance, higher likelihoods).

\vspace{-5pt}
\subsubsection{General case}\label{sec:general_case}

We can analyze the KL divergence in flow-based model in the following steps.
\begin{figure}[t!]
	\centering
	\centering
	\includegraphics[width=8cm]{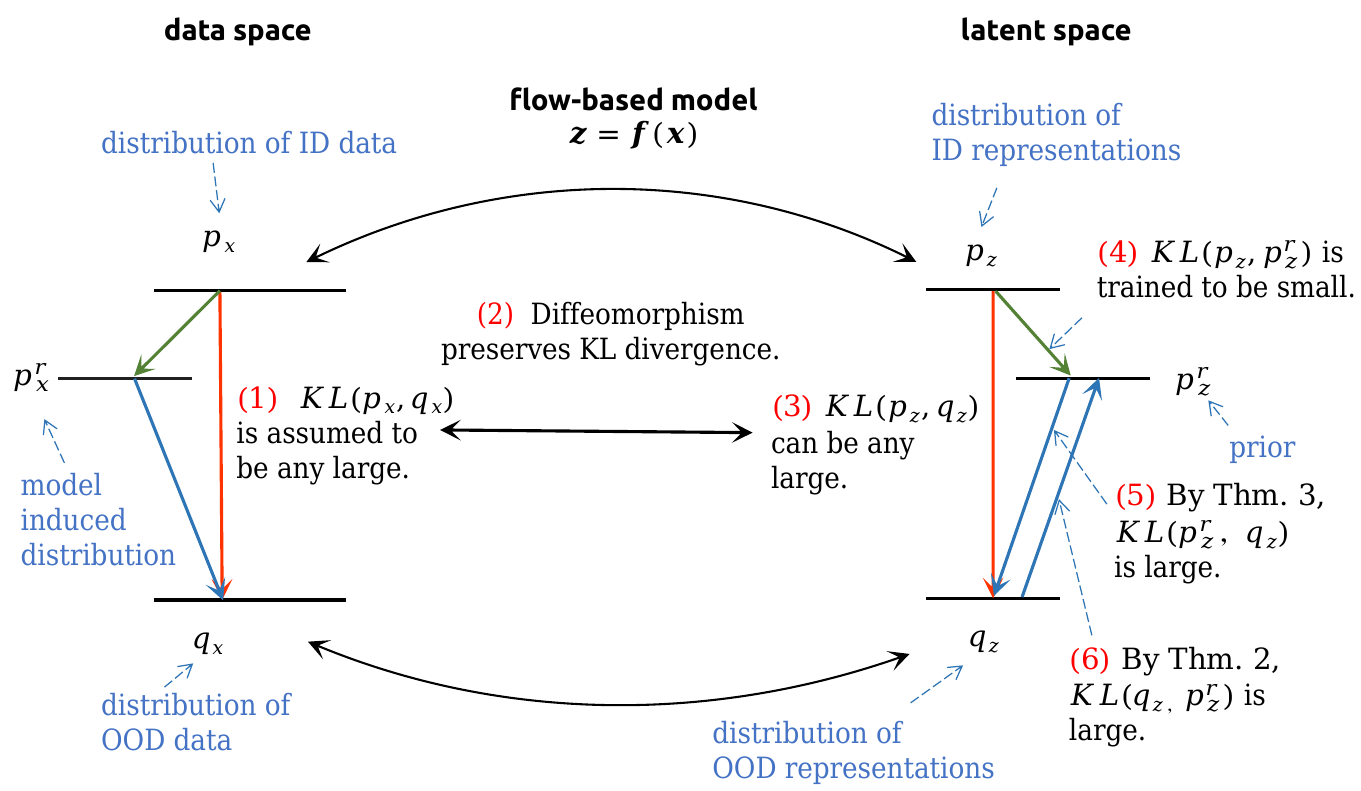}
	\caption{The key steps of our analysis for Gaussian case (Subsection \ref{sec:gaussian_case}). Arrows represent KL divergences. 
	}
	\label{fig:theory_and_algorithm} 
\end{figure}

%
\begin{enumerate}[(1)]
	\item \label{step1}
	We treat ID and OOD datasets as samples from different unknown distributions. 
	Therefore, 	it is reasonable to consider the following assumption.
	\begin{assumption}\label{assum:1}
		\textit{The KL divergence between the distributions of ID and OOD datasets is large.} 
	\end{assumption}
	So we can assume both $KL(p_X||q_X)$ and $KL(q_X||p_X)$ can be any large.

	\item 
	According to the following Theorem \ref{thm:all_enough_distance_guarantee}, we know diffeomorphism preserves KL divergence.
	\begin{theorem} \label{thm:all_enough_distance_guarantee}
		(See \cite{nielsen2018elementary}) Given a diffeomorphism $\bm{z}=f(\bm{x})$, let $X_1\sim p_X$, $X_2\sim q_X$, $Z_1=f(X_1)\sim p_Z$ and  $Z_2=f(X_2)\sim q_Z$. 
		Let $D_{\phi}^{h}$ be a $(h,\phi)$-divergence measure, 
		\vspace{-5pt}
		$$D_{\phi}^{h}(p_X,q_X)=D_{\phi}^{h}(p_Z,q_Z)$$
	\end{theorem}
		\vspace{-5pt}
	\begin{proof}\label{thm:proof}
		KL divergence is a member of the $(h,\phi)$-divergence family (See Section \ref{sec:appendix_divergence} in the supplementary material).  The proof of Theorem \ref{thm:all_enough_distance_guarantee} relies on diffeomorphisms. See \cite{nielsen2018elementary} for proof. 
	\end{proof}
	
	Thus, we can know $KL(p_X||q_X)=KL(p_Z||q_Z)$ is large. 
	\item \label{itm:small_forwardKL}
	We can suppose the model is expressible enough and trained by maximum likelihood estimation. 
	This is equal to minimizing  forward KL divergence $KL(p_X||p^r_X)$ \cite{papamakarios2019flow_model_survey}.
	By Theorem \ref{thm:all_enough_distance_guarantee}, we also have $KL(p_X||p^r_X)=KL(p_Z||p^r_Z)$. Thus, $KL(p_Z||p^r_Z)$ is small.
	\item  
	KL divergence is not symmetric and does not satisfy the triangle inequality (\textit{i.e.}, not a proper statistical distance) \footnote{For example, we can construct two distributions $p$ and $q$ such that $KL(p||q)$ is any small but $KL(q||p)$ is any large.}.  Otherwise, we would know that the reverse KL divergence $KL(p^r_Z||p_Z)$ is small and that $KL(q_Z||p^r_Z)$ is large by triangle inequality. Researchers have investigated other statistical divergences in different contexts  \cite{fGAN, improved_WGANs, pardo2018statistical}.
	However, flow-based model is usually trained by minimizing KL divergence. In order to explain the phenomenon of flow-based model, we should conduct further analysis on KL divergence.
	In this paper, we seek stronger conclusions for a special case.
	
	We perform generalized Shapiro-Wilk test for multivariate normality \cite{ComparisonNormalityTest2011} on representations. As shown in Table \ref{tbl:SW_test} in the supplementary material, ID representations always have high $p$-values. This indicates that ID representations always manifest strong normality.
	Therefore, we can use a Gaussian distribution $\n_p$ to approximate $p_Z$ and have $KL(p_Z||p^r_Z) \approx KL(\n_p||p^r_Z)$. 
	Now we can apply the following Theorem \ref{thm:duality_small_KL_general} which reveals the approximate symmetry of small KL divergence between Gaussian distributions.
	
	\begin{theorem}{(\textbf{Approximate symmetry of small KL divergence between Gaussian distributions})}\label{thm:duality_small_KL_general}
		For any $n$-dimensional Gaussian distributions $\n(\m_1,\s_1)$ and $\n(\m_2,\s_2)$, if $\fKLnn\leq \e  $ $(\varepsilon\geq 0)$, 
			\vspace{-5pt}
		\begin{align}\label{equ:supremum_duality_kl_maintext}
			\bKLnn  
			\leq 	 \varepsilon + 2\varepsilon^{1.5} + O(\varepsilon^2)
		\end{align}
		\vspace{-5pt}
	\end{theorem}
	%
	%

	\begin{proof}
		The proof is too long. See our manuscript \cite{zhang2021properties} for details. 
		Importantly, the supremum is independent of the dimension $n$. So Theorem \ref{thm:duality_small_KL_general} is applicable to high-dimensional problems (\textit{e.g.}, flow-based model). 
	\end{proof}
	 By Theorem \ref{thm:duality_small_KL_general}, we can know the reverse KL divergence $KL(p^r_Z||\n_p)\approx KL(p^r_Z||p_Z)$ must be small too. 
	Thus, we can consider the following assumption. 
	\begin{assumption}\label{assum:2}
		\textit{The distribution of ID representations and the  prior are close enough.}
	\end{assumption}
	\item 
	Now that the forward and reverse KL divergence between $p_Z$ and prior $p_Z^r$ are both small, we can consider a stronger assumption $p_Z\approx p_Z^r$. Thus, we have $KL(q_Z||p_Z^r) \approx KL(q_Z||p_Z)$. In step \ref{step1}, we have known $KL(q_X||p_X)=KL(q_Z||p_Z)$ is large, so $KL(q_Z||p_Z^r)$ is large too.
\end{enumerate}


	\vspace{-5pt}
\subsubsection{The Gaussian case} \label{sec:gaussian_case}
In the above Step (4), we use a strong assumption $p_Z\approx p^r_Z$. 
In fact, for \textit{Category I} problems  (smaller/similar variance, higher/similar likelihoods), we do not need such assumption. 
The results of normality test on OOD representations demonstrate OOD representations in all \textit{Category I} problems except for SVHN vs Constant have $p$-values greater than 0.05 (see Table \ref{tbl:SW_test} in supplementary material). It seems that OOD datasets ``sitting inside'' the training data are also ``Gaussianized'' along with the training data. 
As far as we know, we are the first to observe this phenomenon.

Based on this observation, we can conduct more analysis using the following Theorem \ref{thm:triangle_n1_n2_n3}, which reveals that KL divergence between Gaussian distributions follows a relaxed triangle inequality.
\begin{theorem}{(\textbf{Relaxed triangle inequality})}\label{thm:triangle_n1_n2_n3}
	For any three $n$-dimensional Gaussian distributions $\mathcal{N}(\bm{\mu}_i,\bm{\Sigma}_i)$ $(i\in\{1,2,3\})$  
	such that $KL(\mathcal{N}(\bm{\mu}_1,\bm{\Sigma}_1)||\mathcal{N}(\bm{\mu}_2,\bm{\Sigma}_2))\leq \varepsilon_1$ and 
	$ KL(\mathcal{N}(\bm{\mu}_2,\bm{\Sigma}_2)||\mathcal{N}(\bm{\mu}_3,\bm{\Sigma}_3))\leq \varepsilon_2$ for small $\varepsilon_1, \varepsilon_2\ge 0$, 
		\vspace{-5pt}
	\begin{align}
		&KL((\mathcal{N}(\bm{\mu}_1,\bm{\Sigma}_1)||\mathcal{N}(\bm{\mu}_3,\bm{\Sigma}_3))\nonumber \\
		< & 3\varepsilon_1+3\varepsilon_2+2\sqrt{\varepsilon_1\varepsilon_2}+o(\varepsilon_1)+o(\varepsilon_2)
	\end{align}
		\vspace{-10pt}
	
\end{theorem}
\begin{proof}
	The proof is complex and too long. See our work \cite{zhang2021properties} for details. 
	The bound is small for small $\varepsilon_1, \varepsilon_2$ and is 0 when $\varepsilon_1=\varepsilon_2=0$. Similarly, the bound is independent of the dimension $n$ and applicable to high-dimensional problems.
\end{proof}

As shown in Figure \ref{fig:theory_and_algorithm} and Figure 
\ref{fig:flowchart} in the supplementary material, when $q_Z$ is Gaussian-like, we can use a Gaussian distribution $\n_q$ to approximate $q_Z$ and have $KL(q_Z||p^r_Z) \approx KL(\n_q||p^r_Z)$, $KL(p_Z||q_Z)\approx KL(p_Z||\mathcal{N}_q)$. Now that $KL(p_Z||q_Z)$ is large and $KL(p_Z||p^r_Z)$ is small. According to the relaxed triangle inequality in Theorem \ref{thm:triangle_n1_n2_n3}, $KL(p^r_Z||\n_q)$ must not be small. Furthermore, we can apply  Theorem \ref{thm:duality_small_KL_general} on $KL(p^r_Z||\n_q)$ and know that $KL(\n_q||p^r_Z)$	is large. Finally, we know $KL(q_Z||p^r_Z)$ is large too. 	
	\vspace{-5pt}
\subsubsection{Summary}\label{sec:answer1}
Overall, we can explain why we cannot sample OOD data from the divergence perspective.

\begin{framed}
\textbf{\textit{Answer 1 to Q1}}: The KL divergence between the distribution of OOD representations and prior is large regardless of when the likelihoods of OOD data are higher, lower, or coinciding with that of ID data. So it is hard to sample OOD-like data from the model.
\end{framed}

\vspace{-10pt}
\subsection{\textbf{Explanation 2: Geometric Perspective}}\label{sec:KL_decomposition}
We can obtain another explanation from a geometric perspective based on the analysis in the last subsection.
The first step is to use the following Theorem \ref{thm:decompose_KL_ID} to decompose forward KL divergence.
Besides, we will use Theorem \ref{thm:decompose_KL_ID} to derive OOD detection method in Section \ref{sec:GADmethod}.
\begin{theorem}\label{thm:decompose_KL_ID}
	Let  $X\sim p^*_{X}$ be an $n$-dimensional random vector, $X_i\sim p^*_{X_i}$ be the $i$-th dimensional element of $X$. Then 
	\begin{align}\label{equ:decompose_KL_basic}
		& KL(p^*_X||\mathcal{N}(0,I_n))\\
		=&\underbrace{KL(p^*_X||\prod_{i=1}^n p^*_{X_i}(x))}_{I_d[p^*_X]\atop \text{total\ correlation}  } + \underbrace{\sum_{i=1}^n KL(p^*_{X_i}||\mathcal{N}(0,1))}_{  D_d[p^*_X]=\sum_{i=1}^n D_d^i[p^*_{X_i}]\atop \text{dimensional-wise\ KL\ divergence}}
	\end{align} 
\end{theorem}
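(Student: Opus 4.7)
The plan is to prove the decomposition by direct manipulation of the integral defining the KL divergence, exploiting two structural facts: the standard normal $\mathcal{N}(0, I_n)$ factorizes as a product of one-dimensional standard normals $\prod_{i=1}^n \mathcal{N}(0,1)$, and the product of marginals $\prod_{i=1}^n p^*_{X_i}(x_i)$ can be inserted and removed as a ``multiplicative pivot'' inside the logarithm without changing the integral.

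First I would expand $KL(p^*_X(\x)\,||\,\mathcal{N}(0,I_n)) = \int p^*_X(\x) \log \frac{p^*_X(\x)}{\mathcal{N}(0,I_n)(\x)}\,d\x$ and use the independence factorization $\mathcal{N}(0,I_n)(\x) = \prod_{i=1}^n \mathcal{N}(0,1)(x_i)$ in the denominator. Then I would multiply and divide by $\prod_{i=1}^n p^*_{X_i}(x_i)$, and split the resulting $\log$ of a product into a sum of two logarithms, yielding two integrals. The first integral becomes exactly the total correlation $KL\bigl(p^*_X(\x)\,||\,\prod_{i=1}^n p^*_{X_i}(x_i)\bigr)$ by definition.

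For the second integral $\int p^*_X(\x) \log \frac{\prod_i p^*_{X_i}(x_i)}{\prod_i \mathcal{N}(0,1)(x_i)}\,d\x$, I would pull the sum out of the logarithm to obtain $\sum_{i=1}^n \int p^*_X(\x) \log \frac{p^*_{X_i}(x_i)}{\mathcal{N}(0,1)(x_i)}\,d\x$. The key step is to recognize that each integrand depends only on the $i$-th coordinate, so integrating out the remaining $n-1$ coordinates of $p^*_X$ collapses the joint density to the marginal $p^*_{X_i}(x_i)$, and the integral becomes $KL(p^*_{X_i}\,||\,\mathcal{N}(0,1))$. Summing gives the dimensional-wise term and completes the decomposition.

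None of the steps presents a real obstacle: the argument is a standard add-and-subtract trick combined with marginalization, and the only care needed is to invoke Fubini to justify the interchange of sum and integral and the marginalization in the final step, which is valid whenever the original KL divergence is finite. The proof therefore amounts essentially to bookkeeping, and the result itself is a classical identity that expresses the fit-to-standard-normal objective as ``remove dependencies plus make each marginal standard normal,'' which the subsequent GAD/PAD method then exploits.
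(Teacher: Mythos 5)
Your proposal is correct and follows essentially the same route as the paper's proof: insert the product of marginals as a multiplicative pivot inside the logarithm, split into the total-correlation term and the sum of per-coordinate terms, and marginalize each one-dimensional integrand using the factorization of $\mathcal{N}(0,I_n)$. The only addition is your explicit appeal to Fubini, which the paper leaves implicit.
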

\begin{proof}
	We can decompose KL divergence as in \cite{chen2018isolating}.
	See Section \ref{sec:proof_decompose_kl_ID} in the supplementary material for proof. 
\end{proof}

Theorem \ref{thm:decompose_KL_ID} decomposes forward KL divergence into two non-negative parts: $I_d$ is total correlation (generalized mutual information) measuring the mutual dependence between dimensions \cite{mutual_info_esti_2013}; $D_d$ is dimension-wise KL divergence between the marginal distribution of each dimension and prior. We use $[p^*_X]$ to denote one term is computed from $p^*_X$.

Theorem \ref{thm:decompose_KL_ID} can help us further investigate the forward KL divergence. 
For ID data, we have known that $KL(p_Z||p^r_Z)$ is small. Applying Theorem \ref{thm:decompose_KL_ID} to $KL(p_Z||p^r_Z)$, we can know the total correlation $I_d[p_Z]$ must be small. This indicates that ID data tends to have independent representations.
On the contrary, for OOD data, 
a large $KL(q_Z||p^r_Z)$ allows a large total correlation $I_d[q_Z]$.
Although it is hard to estimate total correlation \cite{mutual_info_esti_2013}, we can use an alternative dependence measure, \textit{i.e.}, the most commonly used correlation coefficient, to investigate the linear dependency.
We train Glow on FashionMNIST and test on MNIST/notMNIST. Figure \ref{fig:nondiagonal_correlation_coefficient_fashionmnist_mnist_notmnist_glow} in the supplementary material shows the histogram of the non-diagonal elements in the correlation matrix of representations. We can see that OOD representations are more correlated. In fact, this happens for all the problems in our experiments. See Figure \ref{fig:heatmap_fashionmnist_correlation_coefficient} to \ref{fig:histo_correlation_train_celeba_test_others} in the supplementary material for more details.

From a geometric perspective, \textit{a high correlation between dimensions indicates  the representations of OOD dataset locate in specific directions} \cite{13WaysLookCorr} (see Figure \ref{fig:3d_gaussian} in the supplementary material for a 3-d example).  
It is hard to obtain data on specific directions in high dimensional space when sampling from standard Gaussian distribution. 

\textbf{Sampling OOD Data}.
To verify the above conclusion further, we have tried to restore the information of OOD dataset from the covariance of OOD representations.
Ordinarily, after training a flow-based model $f$, we sample noise $\varepsilon\sim \n(0,I)$ and feed back to the model, we can generate new image $f^{-1}(\varepsilon)$ seeming like training data. 
Now we feed the model with an OOD dataset and fit a Gaussian distributions $\n(\widetilde{\bm{\mu}},\widetilde{\bm{\Sigma}})$ from OOD representations, where $\widetilde{\bm{\mu}}$ and $\widetilde{\bm{\Sigma}}$ are the sample mean and covariance of OOD representations, respectively.
Then we sample noise $\varepsilon' \sim \n(\widetilde{\bm{\mu}},\widetilde{\bm{\Sigma}})$ and generate new image $f^{-1}(\varepsilon')$. We find that these generated images are meaningful OOD data.
For example, we train Glow on CIFAR-10 and perform the above OOD sampling using notMNIST as OOD dataset (gray-scale images are preprocessed for consistency, see Subsection \ref{sec:exp_setting}). 
As shown in Figure \ref{fig:sampled_notmnist_trained_on_fashionmnist} in the supplementary material, we can generate images similar to notMNIST, although the images are blurred. In this way, using a single Glow model trained on one training dataset, we can generate images like multiple OOD datasets, including MNIST, notMNIST, SVHN, CelebA, \textit{etc}, as long as we replace prior with the fitted Gaussian from the representations of the corresponding dataset (See Figure \ref{fig:sample_images_trained_on_cifar10}$\sim$
\ref{fig:sampled_mnist_notmnist_trained_on_fashionmnist} in the supplementary material for details). 
\textit{These results demonstrate that OOD representations reside in specific directions that can be partially characterized by the mean and covariance of OOD representations}. 
Such a similar phenomenon is also reported in \cite{gambardella2019transflow}, where Gambardella \textit{et al.} only use the mean of OOD representations. Their manuscript \cite{gambardella2019transflow} is released contemporaneously with the first edition of this paper. 


Furthermore, we scale the norm of OOD representations with different factors. The decoded images also vary from ID data to OOD data gradually. See Figure \ref{fig:glow_trained_on_ID_temperature_OOD} in the supplementary material for details. 
Overall, this leads to the second answer to Q1.
\begin{framed}
	\textbf{\textit{Answer 2 to Q1}}: OOD representations locate in specific directions with specific norms. The mean and covariance of OOD representations partially characterizes such specific directions. In high dimensional space, it is hard to sample data in specific directions from standard Gaussian distribution (prior) regardless of whether  these data reside in the typical set or not.
\end{framed}

\textbf{Note}.
In the proposed question Q1 ``why we cannot sample OOD data from the model'', we mean we cannot generate OOD data when sampling noise $\varepsilon$ from prior.
In this section, we sample OOD data from flow-based model with fitted Gaussian distribution from OOD representations. This does not contradict the proposed question Q1 because we need the mean and covariance of OOD representations in advance. More research on sampling OOD data is beyond the scope of this paper. We will explore this direction in the future.

\vspace{-10pt}
\section{Anomaly Detection Method}\label{sec:GADmethod}
In this section, we elaborate on our OOD detection method in three steps in three subsections, respectively. In Subsection \ref{sec:KL_GAD}, we propose leveraging KL divergence for OOD detection. In Subsection \ref{sec:last_scale}, we reduce the computation cost. Finally, in Subsection \ref{sec:PADmethod}, we present a unified OOD method supporting PAD and GAD with small batch sizes. Please refer to Figure \ref{fig:bigpicture2} and Figure \ref{fig:flowchart} in the supplementary material for an overview when reading this section.

\vspace{-10pt}
\subsection{Step 1: Leveraging KL divergence}\label{sec:KL_GAD}
Answer 1 in Subsection \ref{sec:answer1} reminds us to detect OOD data by estimating $KL(p||p_Z^r)$, where $p$ is the distribution of representations of inputs. %
However, when only samples are available, divergence estimation is provable hard, and the estimation error decays slowly in high dimension space \cite{Hoijtink_introductionto,Nguyen07estimatingdivergence, Rubenstein2019estimation}. This brings difficulty in applying existing divergence estimation \cite{estimation2005QingWang,estimation2009QingWang,estimate2010Nguyen,ensembleEstimation2014, Rubenstein2019estimation} to high dimensional problems with small sample size. 
Luckily, as shown in Table \ref{tbl:SW_test} in the supplementary material, we observe that both ID data and OOD data of \textit{Category I} problems (smaller/similar variance, higher/similar likelihood) follow a Gaussian-like distribution. This provides us with a facility to estimate the KL divergence for GAD. 

\vspace{-10pt}
\subsubsection{Flow-based Model}
\label{sec:investigate_flow_latents}
\textbf{ID Data}. 
As discussed in Section \ref{sec:whynotexplain}, we can use a Gaussian distribution $\mathcal{N}_p$ to approximate $p_Z$. Here we use sample expectation $\bm{\widetilde{\mu}}$ and covariance $\bm{\widetilde{\Sigma}}$ of representations to estimate the parameters of $\mathcal{N}_p$ \footnote{This is equal to using maximum likelihood estimation \cite{PRML}.}.
Experiments also show that we can generate high-quality images by sampling from  $\mathcal{N}_p$ rather than the prior (standard Gaussian distribution). 
Now we can calculate the KL divergence between two Gaussian distributions $\mathcal{N}(\bm{\widetilde{\mu}}, \bm{\widetilde{\Sigma}})$ and $\mathcal{N}(\bm{\mu},\bm{\Sigma})$ analytically by 
\begin{align}
	& KL(\mathcal{N}(\bm{\widetilde{\mu}}, \bm{\widetilde{\Sigma}})||\mathcal{N}(\bm{\mu},\bm{\Sigma})) \label{equ:KL_compuation}\\
	= 	& \dfrac{1}{2}\Big\{\log \dfrac{|\bm{\Sigma}|}{|\widetilde{\bm{\Sigma}}|}+\text{Tr}(\bm{\Sigma}^{-1}\widetilde{\bm{\Sigma}})+(\bm{\mu}-\widetilde{\bm{\mu}})^T\bm{\Sigma}^{-1}(\bm{\mu}-\widetilde{\bm{\mu}})-n\Big\} \nonumber
\end{align}
When the prior ($\mathcal{N}(\bm{\mu},\bm{\Sigma})$) is standard Gaussian distribution $\mathcal{N}(0,I)$, Equation \eqref{equ:KL_compuation} equals to  
\begin{equation}\label{equ:KL_computation_as_criterion}
\dfrac{1}{2} \big\{-\log|\widetilde{\bm{\Sigma}}|+\text{Tr}(\widetilde{\bm{\Sigma}})+\widetilde{\bm{\mu}}^{\top}\widetilde{\bm{\mu}}-n\big\}
\end{equation}
where generalized variance $|\widetilde{\bm{\Sigma}}|$  and total variation $\text{Tr}(\widetilde{\bm{\Sigma}})$ both measure the dispersion of representations.  $KL(\n_p||p^r_Z)$ can be calculated in $O(n^3)$ where $n$ is the dimension.
%

\textbf{OOD Data in \textit{Category I} Problems}. 
As discussed in Subsection \ref{sec:whynotexplain}, OOD representations of \textit{Category I} problems (smaller/similar variance, higher/similar likelihood) tend to follow a Gaussian-like distribution. 
Similar to ID data, we can use fitted Gaussian distribution $\mathcal{N}_q$ to approximate $q_Z$ and estimate $KL(q_Z||p_Z^r)$.



\textbf{OOD Data in \textit{Category II} Problems}. 
Our normality test results (see Table \ref{tbl:SW_test} in the supplementary material) show that OOD representations in \textit{Category II} problems (larger variance, lower likelihood) do not follow a Gaussian-like distribution. However, we find that Equation \eqref{equ:KL_computation_as_criterion}  performs  even better on \textit{Category II} problems. The rationality of using Equation \eqref{equ:KL_computation_as_criterion} for \textit{Category II} problems can be explained both intuitively and theoretically. 

Intuitively, the first two items of Equation \eqref{equ:KL_computation_as_criterion} compensate each other. For \textit{Category I} problems (smaller/similar variance, higher/similar likelihood), OOD representations are less dispersed than ID representations and have a larger $-\log|\widetilde{\bm{\Sigma}}|$.  For \textit{Category II} problems, OOD representations tend to be more dispersed and have a larger $\text{Tr}(\widetilde{\bm{\Sigma}})$. Besides, we  find  OOD representations tend to have a larger $\widetilde{\bm{\mu}}^{\top}\widetilde{\bm{\mu}}$ than ID representations. Thus, Equation \eqref{equ:KL_computation_as_criterion} always produces a larger result for OOD than ID data. Note that the term $\widetilde{\bm{\mu}}^{\top}\widetilde{\bm{\mu}}$ alone cannot achieve high performance in GAD. It can also be manipulated  by moving the center of dataset (i.e., adding a vector to the input dataset). We can treat Equation \eqref{equ:KL_computation_as_criterion} as a more comprehensive statistic than that used in t-test, Maximum Mean Discrepancy, \textit{etc}. 

Theoretically, the following Theorem \ref{thm:lower_bound_kl_ood_pr} can explain the rationality of using Equation \ref{equ:KL_computation_as_criterion} in \textit{Category II} problems.

\begin{theorem}\label{thm:lower_bound_kl_ood_pr}
(see \cite{pardo2018statistical}) Let $\mathcal{N}_1(\bm{\mu}_1,\bm{\Sigma}_1)$ and $\mathcal{N}_1(\bm{\mu}_2,\bm{\Sigma}_2)$ be two $n$-dimensional Gaussian distributions. Assume that $\bm{Z}\sim P_Z(\z) $ is an arbitrary $n$-dimensional continuous random variable with mean vector $\bm{\mu}_1$ and covariance matrix $\bm{\Sigma}_1$, then
\begin{align}\nonumber
	KL(\mathcal{N}_1(\bm{\mu}_1,\bm{\Sigma}_1)||\mathcal{N}_1(\bm{\mu}_2,\bm{\Sigma}_2))
	\leq KL(P_Z(\z)||\mathcal{N}_1(\bm{\mu}_2,\bm{\Sigma}_2))
\end{align}
\end{theorem}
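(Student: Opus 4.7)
The plan is to exploit the fact that the log-density of a Gaussian is a quadratic in $\z$, so its expectation is determined entirely by the first and second moments of the distribution one integrates against. Since $P_Z$ and $\mathcal{N}_1(\bm{\mu}_1,\bm{\Sigma}_1)$ are assumed to share mean $\bm{\mu}_1$ and covariance $\bm{\Sigma}_1$, any expectation of $\log \mathcal{N}_1(\z;\bm{\mu}_2,\bm{\Sigma}_2)$ taken with respect to $P_Z$ must agree with the corresponding expectation taken with respect to $\mathcal{N}_1(\bm{\mu}_1,\bm{\Sigma}_1)$.

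Concretely, I would first expand both sides as differences of a negative entropy and a cross entropy, writing
\begin{align*}
KL(P_Z \| \mathcal{N}_1(\bm{\mu}_2,\bm{\Sigma}_2)) &= -H(P_Z) - \mathbb{E}_{P_Z}\!\left[\log \mathcal{N}_1(\z;\bm{\mu}_2,\bm{\Sigma}_2)\right], \\
KL(\mathcal{N}_1(\bm{\mu}_1,\bm{\Sigma}_1) \| \mathcal{N}_1(\bm{\mu}_2,\bm{\Sigma}_2)) &= -H(\mathcal{N}_1(\bm{\mu}_1,\bm{\Sigma}_1)) - \mathbb{E}_{\mathcal{N}_1(\bm{\mu}_1,\bm{\Sigma}_1)}\!\left[\log \mathcal{N}_1(\z;\bm{\mu}_2,\bm{\Sigma}_2)\right].
\end{align*}
Then I would invoke the moment-matching observation: since $\log \mathcal{N}_1(\z;\bm{\mu}_2,\bm{\Sigma}_2)$ is a constant plus $-\tfrac{1}{2}(\z-\bm{\mu}_2)^{\top}\bm{\Sigma}_2^{-1}(\z-\bm{\mu}_2)$, the two cross-entropy terms above are identical. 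Subtracting yields
\begin{equation*}
KL(P_Z \| \mathcal{N}_1(\bm{\mu}_2,\bm{\Sigma}_2)) - KL(\mathcal{N}_1(\bm{\mu}_1,\bm{\Sigma}_1) \| \mathcal{N}_1(\bm{\mu}_2,\bm{\Sigma}_2)) = H(\mathcal{N}_1(\bm{\mu}_1,\bm{\Sigma}_1)) - H(P_Z).
\end{equation*}

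The final step is to recognize that this difference is exactly $KL(P_Z \| \mathcal{N}_1(\bm{\mu}_1,\bm{\Sigma}_1))$, by applying the same quadratic-log trick once more to $\mathbb{E}_{P_Z}[\log \mathcal{N}_1(\z;\bm{\mu}_1,\bm{\Sigma}_1)]$, which again equals $\mathbb{E}_{\mathcal{N}_1(\bm{\mu}_1,\bm{\Sigma}_1)}[\log \mathcal{N}_1(\z;\bm{\mu}_1,\bm{\Sigma}_1)] = -H(\mathcal{N}_1(\bm{\mu}_1,\bm{\Sigma}_1))$. Gibbs' inequality then gives $KL(P_Z \| \mathcal{N}_1(\bm{\mu}_1,\bm{\Sigma}_1)) \geq 0$, which is equivalent to the standard maximum-entropy property of Gaussians under fixed mean and covariance, and the theorem follows.

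There is no genuinely hard step; the only thing to be careful about is to state explicitly that the moment-matching identity requires $P_Z$ to be a continuous distribution on $\mathbb{R}^n$ with finite second moments (so that $\mathbb{E}_{P_Z}[\z\z^\top]$ exists), which is guaranteed by the hypothesis. The argument does not require $P_Z$ to possess a density, but since the statement already assumes $P_Z$ is continuous, the entropy $H(P_Z)$ is well defined (possibly $-\infty$, in which case the inequality is trivial). Everything else is a one-line reduction once the quadratic structure of $\log \mathcal{N}$ is noted.
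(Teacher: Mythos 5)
Your proof is correct. The paper does not supply its own argument for this statement --- it simply cites \cite{pardo2018statistical} --- and your derivation is the standard one underlying that reference: since $\log\mathcal{N}(\z;\bm{\mu}_2,\bm{\Sigma}_2)$ is quadratic in $\z$, the cross-entropy terms under $P_Z$ and under $\mathcal{N}(\bm{\mu}_1,\bm{\Sigma}_1)$ coincide by moment matching, and the residual difference is exactly $KL(P_Z\|\mathcal{N}(\bm{\mu}_1,\bm{\Sigma}_1))\ge 0$, i.e.\ the maximum-entropy property of the Gaussian under fixed first and second moments. Your handling of the edge cases (finite second moments, possibly $-\infty$ differential entropy making the bound trivial) is also appropriate; the only implicit assumption worth stating is that $\bm{\Sigma}_2$ is nonsingular so that the quadratic form and the cross-entropies are well defined.
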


According to Theorem \ref{thm:lower_bound_kl_ood_pr}, when we use fitted Gaussian $\n_q$ from OOD representations, $KL(\n_q||p_Z^r)$ is a lower bound of $KL(q_Z||p_Z^r)$.  If the lower bound is large, $KL(q_Z||p_Z^r)$ must be large.


\textbf{Summary}. 
Equation \eqref{equ:KL_computation_as_criterion} is a unified conservative criterion for GAD due to the following reasons. 
\begin{enumerate}
\item For ID data, Equation \eqref{equ:KL_computation_as_criterion} approximates $KL(p_Z||p_Z^r)$ and should be small;
\item For OOD data whose representations follow a Gaussian-like distribution, Equation \eqref{equ:KL_computation_as_criterion} approximates $KL(q_Z||p_Z^r)$ and should be large;
\item For OOD data whose representations do not follow a Gaussian-like distribution, Equation \eqref{equ:KL_computation_as_criterion} computes the lower bound of $KL(q_Z||p_Z^r)$. If the lower bound is large, then $KL(q_Z||p_Z^r)$ must be large.
\end{enumerate}

Note that Equation \eqref{equ:KL_computation_as_criterion} also applies to Gaussian prior with diagonal covariance $diag(\bm{\sigma}^2)$ and mean $\bm{\mu}$. In such a case, we only need to normalize the data by a linear operation $Z'=(Z-\bm{\mu})/\bm{\sigma}$ while keeping $KL(p_{Z}||\mathcal{N}(\bm{\mu}, diag(\bm{\sigma}^2)))=KL(p_{Z'}||\mathcal{N}(0,I))$ (by Theorem \ref{thm:all_enough_distance_guarantee}). This equals to using Equation \eqref{equ:KL_compuation} directly. We also note that we are not pursuing precise divergence estimation or parameter estimation that are proven to be hard with very small batch sizes in high-dimensional problems.
\vspace{-8pt}
\subsubsection{VAE}\label{sec:method_vae}

It is well-known that VAE and its variations learn independent representations~\cite{burgess2018understanding, factorVAE,isolating2018Chen,kumar2017variational,locatello2019challenging}. 
In VAE, the probabilistic encoder $q_{\phi}(\bm{z}|\bm{x})$ is often chosen as Gaussian form $\mathcal{N}(\bm{\mu}(\bm{x}),diag(\bm{\sigma}(\bm{x})^2))$, where $\bm{z}\sim q_{\phi}(\bm{z}|\bm{x})$ is used as sampled representation, $\bm{\mu}(\bm{x})$ is used as mean representation.
The KL term in variational evidence lower bound objective (ELBO, see Equation \eqref{equ:VAE_ELBO} in the supplementary material) can be rewritten as 
$E_{p(x)}[KL(q_{\phi}(\bm{z}|\bm{x})||p(\bm{z}))]=I(\bm{x};\bm{z})+KL(q(\bm{z})||p(\bm{z}))$,
where $p(\bm{z})$ is the prior, $q(\bm{z})$ the aggregated posterior, and $I(\bm{x};\bm{z})$ the mutual information between $\bm{x}$ and $\bm{z}$ \cite{hoffman2016elbo}. Here the term $KL(q(\bm{z})||p(\bm{z}))$ pulls $p_Z$ to the Gaussian prior and encourages independent sampled representations. We also investigate the representations in VAE. The results show that: 
\begin{enumerate}
\item ID representations in VAE do not always have $p$-value greater than 0.05 in Shapiro-Wilk (normality) test;
\item the representations of all OOD datasets do not have $p$-value greater than 0.05 in normality test;
\item the representations of OOD datasets are more correlated (see Figure \ref{fig:heatmap_correlation_fashionmnist_mnist_notmnist_VAE}$\sim$\ref{fig:histo_correlation_VAE_train_cifar10} in  the supplementary material).
\end{enumerate}
Furthermore, there is no theoretical guarantee that $KL(q_Z||p_Z^r)$ is large enough because Theorem \ref{thm:all_enough_distance_guarantee} does not apply to non-diffeomorphisms. Nevertheless, we find that Equation \eqref{equ:KL_computation_as_criterion} also works for GAD with VAE.

\vspace{-5pt}
\subsection{Step 2: Leveraging Last-Scale KL Divergence}\label{sec:last_scale}
Although we can use Equation \eqref{equ:KL_computation_as_criterion} as a preliminary criterion for GAD, it is  expensive to compute the sample covariance of representations in $O(n^3)$ when the dimension $n$ reaches several thousand in flow-based model. We propose to use the last scale of representations instead.

Glow model uses multi-scale architecture and has three stages of representations \cite{glowopenai}. At the end of the first two stages, outputs are split into two parts $\bm{h}_i$ and $\bm{z}_i$ ($i=1,2$), where $\bm{h}_i$ is processed by the next stage. The output of the final stage (\textit{i.e.}, $\bm{z}_3$) contains a quarter of the whole dimensions. 
Among the three scales, the last scale is the most special one. Interpolating between two representations of the last scale can generate gradually varying images between two real-world images. 
Schirrmeister \textit{et al.} have shown that Glow network scales manifest a hierarchy of features \cite{understanding_anomaly_2020_nips}. Earlier scales learn low-level features that may be generic in different datasets. The last scale learns high-level features that are more specific to the training dataset. The results in \cite{understanding_anomaly_2020_nips} also demonstrate that the likelihood contributed by the last scale is a better metric than the whole likelihood for OOD detection. Other work such as \cite{HierarchicalVAEsKnowDontKnow} also demonstrates the effectiveness of the higher scale. 
Therefore, the last scale of OOD representations should differ more from ID representations than earlier stages.
More precisely, let $q_{Z_1}\sim q_{Z_3}$ be the marginal distribution of the three scales of OOD representations, respectively. We should observe $KL(q_{z_3}||\n(0,I))>KL(q_{Z_i}||\n(0,I))\ (i\in\{1,2\})$. 

Theoretically, 
similar to Theorem  \ref{thm:decompose_KL_ID}, we can decompose the whole KL divergence into local divergence inside each scale and total correlation between different scales as follows. 
\begin{align}\label{equ:decompose_KL_to_scales}
	&KL(p_Z(\z)||\mathcal{N})=\underbrace{KL(p_Z(\z)||p_{Z_{1,2}}(\z_1\z_2)p_{Z_3}(\z)}_{\text{total\ correlation between scales}  } \nonumber \\
	&+ \underbrace{ KL(p_{Z_{1,2}}(\z_1\z_2)||\mathcal{N})}_{\text{KL divergence from first two scales}}
	 + \underbrace{KL(p_{Z_{3}}(\z)||\mathcal{N})}_{\text{last-scale KL divergence}}
\end{align} 
where $\z=\z_1\z_2\z_3$, $\z_1\z_2\sim p_{Z_{1,2}}$, $\z_3\sim p_{Z_3}$ and $\mathcal{N}$ is standard Gaussian distribution. Figure \ref{fig:bigpicture2} shows the decomposition.
We call the last item of Equation \eqref{equ:decompose_KL_to_scales} as \textit{last-scale KL divergence}.
The rationality of using last-scale KL divergence as the criterion for OOD detection is based on the following inequality.
\vspace{-3pt}
\begin{align}\label{equ:KL_last_scale_metric}
	KL(q_{Z_3}||\n)>KL(p_{Z_3}||\n)
\end{align}

where $q_{Z_3}$ and $p_{Z_3}$ are the marginal distributions of the last scale of OOD and ID representations, respectively. Since the last scale contains fewer dimensions, we can efficiently calculate the last-scale KL divergence. For the non-Gaussian case, we can still rely on Theorem \ref{thm:lower_bound_kl_ood_pr} to compute the lower bound.
\vspace{-10pt}
\subsection{Step 3: Leveraging Group-Wise KL divergence in the Last Scale}\label{sec:PADmethod}
Up to now, we are still facing two issues. First, when batch size is small (e.g., $<$5), the performance of last-scale KL divergence is unsatisfactory. Second, the last-scale KL divergence does not support PAD. In this subsection, we address these two issues. The key idea is splitting representation into groups.

The factorizability of standard Gaussian distribution allows us to investigate representations in groups. 
Intuitively, if $\z\sim \mathcal{N}(0,I)$, then each dimension group of $\z$ follows $\mathcal{N}(0,I)$; Otherwise, it is unlikely that each part of $\z$ follows $\mathcal{N}(0,I)$.  Thus, we can split one single $\z$ into multiple subvectors and investigate these subvectors separately. This also generates multiple samples from one data point artificially. 
Formally, we split random vector $Z$ into $k$ $l$-dimensional ($k=n/l$) subvectors $\bar{Z}_1,\dots,\bar{Z}_k$. We note the marginal distribution of $\bar{Z}_i$ as $p_{\bar{Z}_i}$ ($1\leq i \leq k$). Then we can use the following Theorem \ref{thm:decompose_KL_PAD} to further decompose  the last-scale KL divergence.
\begin{theorem} \label{thm:decompose_KL_PAD}
Let  $X\sim p^*_{X}$ be an $n$-dimensional random vector. Note $X=\bar{X}_1\dots \bar{X}_k$ where  $\bar{X}_i\sim p^*_{\bar{X}_i}$ be the $i$-th $l$-dimensional ($k=n/l$) subvector of $X$, $\bar{X}_{ij}\sim p^*_{\bar{X}_{ij}}$ is the $j$-th element of $\bar{X}_{i}$. Then, 
\begin{scriptsize}
\vspace{-5pt}
\begin{align}
	\nonumber &KL(p^*_X(\x)||\mathcal{N}(0,I_n))\\
	=&\underbrace{KL(p^*_X(\x)||\prod_{i=1}^k p^*_{\bar{X}_i}(\x))}_{\text{$I_g[p^*_X]$}} +  \underbrace{\sum_{i=1}^k KL(p^*_{\bar{X}_i}(\x)||\mathcal{N}(0,I_l))}_{\text{$D_g[p^*_X]
		=\sum_{i=1}^k D^i_g[p^*_{\bar{X}_i}]$}\atop{\text{Group-wise KL divergence}} }  \label{equ:decompose_KL_group} \\
	\nonumber = & \underbrace{KL(p^*_X(\x)||\prod_{i=1}^k p^*_{\bar{X}_i}(\x))}_{\text{$I_g[p^*_X]$}} + 
	\underbrace{\sum_{i=1}^{k}KL(p^*_{\bar{X}_i}(\x)||\prod_{j=1}^l p^*_{\bar{X}_{ij}}(x))  }_{   \text{$I_{l}[p^*_{X}]=\sum_{i=1}^{k}I^i_{d}[p^*_{\bar{X}_i}] $}}\\
	&+ \underbrace{\sum_{i=1}^n KL(p^*_{X_i}(\z)||\mathcal{N}(0,1))}_{\text{$D_d[p^*_X]$}}\label{equ:decompose_KL_3parts}
\end{align}
\end{scriptsize}
\end{theorem}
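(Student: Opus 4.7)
The plan is to mimic the argument used for Theorem \ref{thm:decompose_KL_ID} but apply it hierarchically: first at the level of groups, and then within each group. The only facts needed are (i) the factorization $\mathcal{N}(0,I_n)=\prod_{i=1}^{k}\mathcal{N}(0,I_l)$ and $\mathcal{N}(0,I_l)=\prod_{j=1}^{l}\mathcal{N}(0,1)$, and (ii) the marginalization identity $\mathbb{E}_{p^*_X}[g(\bar{x}_i)]=\mathbb{E}_{p^*_{\bar{X}_i}}[g(\bar{x}_i)]$ for any function $g$ depending only on the $i$-th subvector. No new analytic tool is required, so I do not expect a genuine obstacle; the only thing to be careful about is bookkeeping when peeling off two layers of products of densities.

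First I would prove the group-level decomposition in Equation \eqref{equ:decompose_KL_group}. Starting from $KL(p^*_X\|\mathcal{N}(0,I_n))=\mathbb{E}_{p^*_X}[\log(p^*_X/\mathcal{N}(0,I_n))]$, I multiply and divide the integrand by $\prod_{i=1}^{k} p^*_{\bar{X}_i}$ and split the logarithm, getting
\[
\mathbb{E}_{p^*_X}\!\left[\log\frac{p^*_X}{\prod_{i} p^*_{\bar{X}_i}}\right]+\mathbb{E}_{p^*_X}\!\left[\log\frac{\prod_{i} p^*_{\bar{X}_i}}{\prod_{i}\mathcal{N}(0,I_l)}\right].
\]
The first summand is exactly $I_g[p^*_X]$. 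In the second summand the integrand depends on $\bar{x}_i$ only through the $i$-th block, so I can replace the expectation under the joint $p^*_X$ by expectations under each marginal $p^*_{\bar{X}_i}$; this produces $\sum_{i=1}^{k} KL(p^*_{\bar{X}_i}\|\mathcal{N}(0,I_l))=D_g[p^*_X]$. This yields the first stated identity.

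For the second identity in Equation \eqref{equ:decompose_KL_3parts}, I simply apply Theorem \ref{thm:decompose_KL_ID} to each of the $k$ group-wise divergences separately, treating $\bar{X}_i$ as an $l$-dimensional random vector with marginals $p^*_{\bar{X}_{ij}}$. Theorem \ref{thm:decompose_KL_ID} then gives
\[
KL(p^*_{\bar{X}_i}\|\mathcal{N}(0,I_l))=\underbrace{KL\!\Big(p^*_{\bar{X}_i}\Big\|\prod_{j=1}^{l} p^*_{\bar{X}_{ij}}\Big)}_{I^i_d[p^*_{\bar{X}_i}]}+\sum_{j=1}^{l}KL(p^*_{\bar{X}_{ij}}\|\mathcal{N}(0,1)).
\]
Summing over $i=1,\dots,k$, the first type of terms aggregates into $I_l[p^*_X]$ by definition, and the second into $D_d[p^*_X]$ since every coordinate $X_i$ of $X$ appears exactly once across the double sum $\sum_i\sum_j$. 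Substituting this back into the first identity completes the proof, so the whole argument reduces to two applications of the same ``multiply-and-divide'' maneuver plus one use of the previously established Theorem \ref{thm:decompose_KL_ID}.
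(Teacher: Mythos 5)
Your proposal is correct and follows essentially the same route as the paper's proof: the group-level identity in Equation \eqref{equ:decompose_KL_group} is obtained by the same multiply-and-divide manipulation used for Theorem \ref{thm:decompose_KL_ID}, and Equation \eqref{equ:decompose_KL_3parts} then follows by applying Theorem \ref{thm:decompose_KL_ID} to each $KL(p^*_{\bar{X}_i}(\x)||\mathcal{N}(0,I_l))$ and summing over $i$. Nothing is missing.
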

\vspace{-5pt}
\begin{proof}
The proof of Theorem \ref{thm:decompose_KL_PAD} is similar to Theorem \ref{thm:decompose_KL_ID}. See Subsection \ref{sec:proof_theorem_decompose_KL_PAD} in the supplementary material for details.
$\hfill\square$ 
\end{proof}


In Equation \eqref{equ:decompose_KL_group},  $I_g$ is the generalized mutual information between dimension groups \cite{mutual_info_esti_2013}. $D_g$ is \textit{group-wise KL divergence}. Furthermore, in Equation \eqref{equ:decompose_KL_3parts} $D_g$ is decomposed as $I_l+D_d$, where $I_l$ is the generalized mutual information inside each group, $D_d$ is dimension-wise KL divergence that also occurs in Equation \eqref{equ:decompose_KL_basic}. Combining Equation \eqref{equ:decompose_KL_basic} and \ref{equ:decompose_KL_3parts}, we have $I_d=I_g+I_l$ and $D_g=I_l+D_d$.
Equation 
\eqref{equ:decompose_KL_group} distributes more divergence into the second term than Equation \eqref{equ:decompose_KL_basic}. 
In principle, there are multiple strategies to split $Z$ into $k$ subvectors $\bar{Z}_1,\dots,\bar{Z}_k$. 
The splitting strategy affects how the whole KL divergence is distributed into $I_g$ and $D_g$ in Equation \eqref{equ:decompose_KL_group}.
When $k=n$, Equation \eqref{equ:decompose_KL_group} is equal to Equation \eqref{equ:decompose_KL_basic}.

As shown in Figure \ref{fig:bigpicture2}, we can apply Theorem \ref{thm:decompose_KL_PAD} on $p_{Z_3}$ and $q_{Z_3}$ and get
\begin{align}
KL(p_{Z_3}||p^r_Z) =I_g[p_{Z_3}]+D_g[p_{Z_3}]=I_g[p_{Z_3}]+\sum\nolimits_{i=1}^{k}D^i_g[p_{\bar{Z}_i}] \nonumber\\
KL(q_{Z_3}||p^r_Z) =I_g[q_{Z_3}]+D_g[q_{Z_3}]=I_g[q_{Z_3}]+\sum\nolimits_{i=1}^{k}D^i_g[q_{\bar{Z}_i}] \nonumber
\end{align}
where $p_{\bar{Z}_i}$, $q_{\bar{Z}_i}$ are the marginal distributions of subvectors of the last scale of ID and OOD  representations, respectively. 
Combining Equation \eqref{equ:KL_last_scale_metric}, we can know
\begin{equation}\label{equ:decomposed_criterion}
\begin{aligned}
I_g[q_{Z_3}]+D_g[q_{Z_3}]
>
I_g[p_{Z_3}]+D_g[p_{Z_3}]
\end{aligned}
\end{equation}
\textbf{Final Criterion}.
Based on  the analysis up to now, we can obtain a final criterion for both GAD and PAD. Figure \ref{fig:bigpicture2} shows our analysis in this Section. For ID data, $KL(p_Z||\n)$ is trained to be small (see Subsection \ref{sec:general_case}).
According to Equation \eqref{equ:decompose_KL_to_scales}, the last-scale KL divergence $KL(p_{Z_3}||\n)=I_g[p_{Z_3}]+D_g[p_{Z_3}]$ must be smaller. 
We can assume the mutual information between  groups $I_g[p_{Z_3}]$ is sufficiently small, \textit{i.e.},  $I_g[p_{Z_3}]< \varepsilon$. To make Equation \eqref{equ:decomposed_criterion} hold, it suffices that the group-wise KL divergence part satisfies $D_g[q_{Z_3}]> D_g[p_{Z_3}]+\varepsilon$. 
If we choose an appropriate splitting strategy and distribute more divergence to group-wise KL divergence part ($D_g[q_{Z_3}]$) in Equation \eqref{equ:decompose_KL_group}, it is highly likely that we can make 
\vspace{-3pt}
\begin{align}\label{equ:groupwise_kl_lastscale_criterion}
	D_g[q_{Z_3}]>D_g[p_{Z_3}]
\end{align}
Then we can use group-wise KL divergence of the last scale $D_g$ as the criterion to detect OOD data. 

The remaining problems are: (1) how to choose a strategy to split $Z$ into $k$ subvectors so that more divergence is distributed into $D_g$ and (2) how to leverage group-wise KL divergence for OOD detection.
\vspace{-10pt}
\subsubsection{Splitting Strategy: Leveraging Local Pixel Dependence}\label{sec:splitting_strategy}
From Equation \eqref{equ:decompose_KL_group} and \eqref{equ:decompose_KL_3parts}, we can know a good splitting strategy should retain enough intragroup dependence in $I_l[q_{Z_3}]$ to make group-wise KL divergence part satisfy $D_g[q_{Z_3}]>D_g[p_{Z_3}]+\varepsilon$. 

Take the Glow model for example, the last scale has a shape of $(H\times W\times C)$ \footnote{The shape of the last scale of the representation in Glow is $4\times 4\times 48$.} where $H,W,C$ are the height, width, and the channels, respectively. We can split the last scale into multiple groups.
The most natural choices are as follows. 
\begin{enumerate}
\item \textit{\textbf{horizontal}}: treat dimensions in the same pixel position in different channels as one group and split $\z$ as $H\times W$ $C$-dimensional vectors; 
\item \textit{\textbf{vertical}}: treat dimensions in one channel as one group and split $\z$ as $C$ ($H\times W$)-dimensional vectors.
\end{enumerate}
Here horizontal strategy retains inter-channel dependence into group-wise KL divergence part (\textit{i.e.}, $D_g$). Vertical strategy retains pixel dependence into $D_g$.


Figure \ref{fig:splitting_strategy} in the supplementary material shows the idea behind this subsection.
Precisely, we split a single representation $\z$ into $k$ subvectors $\z_1,\dots, \z_k$ and treat $\z_i$ as a sample of random vector $\bar{Z}_i\sim p_{\bar{Z}_i}$. Then we can treat $\z_1,\dots, \z_k$ as $k$ samples of one random vector $\bar{Z}_m$ which follows a mixture of distributions $p_{\bar{Z}_m}=(1/k)\Sigma_{i=1}^k p_{\bar{Z}_i}$. If the $r$-th element $\bar{Z}_{i,r}$ and $s$-th element $\bar{Z}_{i,s}$ are strongly correlated for all $1\leq i\leq k$, we can say that $\bar{Z}_{m,r}$ and $\bar{Z}_{m,s}$ are also strongly correlated. More generally, if $\bar{Z}_1,\dots,\bar{Z}_k$ have a similar dependence structure, $\bar{Z}_m$ would also have a similar dependence structure.
Based on this intuition, we conduct experiments and find that OOD representations  manifest local pixel dependence. 
For example, we test ImageNet32 on Glow trained on SVHN. For each OOD dataset, we visualize the correlation between pixels. We find that in almost all channels each pixel always has stronger correlation with its neighbors. For example, Figure \ref{fig:heatmap_correlation_between_pixels_cifar10_imagenet_under_svhn_glow} in the supplementary material shows the correlation between each pixel with its neighbors in a randomly selected channel.
Therefore, we can say that $\bar{Z}_1,\dots,\bar{Z}_k$ tend to have a similar dependence structure. This means that the vertical strategy tends to retain more divergence to $D_g$.  
On the contrary, we cannot observe a similar dependence structure between channels when using the horizontal strategy. 
Thus, the vertical strategy can leverage pixel dependence of representations and is more suitable for OOD detection. 
Besides, we have also tried other splitting strategies. Evaluation results show that the vertical strategy is the best one. 

\vspace{-10pt}
\subsubsection{How to leverage Group-wise KL Divergence in the Last Scale}
We want to leverage group-wise KL divergence $D_g$ for OOD detection.
For ID data, we treat each representation as $k$ data points sampled from a mixture of distributions $p_{\bar{Z}_m}=(1/k) \sum_{i=1}^k p_{\bar{Z}_i}$ where $p_{\bar{Z}_i}(1\leq i\leq k)$ is very close to $\mathcal{N}(0,I)$. Thus, we can use a single Gaussian distribution 
$\n_{\bar{Z}_s}$ to approximate each $p_{\bar{Z}_i}$.    
Therefore, $D_g[p_{Z_3}]$ can be approximated as
\vspace{-5pt}
\begin{equation}\label{equ:approximation_split_KL}
\begin{aligned}
D_g[p_{Z_3}]  = \sum_{i=0}^k KL(p_{\bar{Z}_i}(\z)||\mathcal{N}(0,I))
 \approx k\times KL(\n_{\bar{Z}_s}||\mathcal{N}(0,I))
\end{aligned}
\end{equation}
Now we can plug Equation \eqref{equ:KL_computation_as_criterion} in Equation 
\eqref{equ:approximation_split_KL}, except that each representation $\z$ is treated as $k$ samples of $p_{\bar{Z}_m}$.

For OOD data, we cannot use a single Gaussian distribution to approximate $q_{\bar{Z}_m}=(1/k) \sum_{i=1}^k q_{\bar{Z}_i}$ when $q_{\bar{Z}_i}$ are far from each other or $q_Z$ is not Gaussian-like. Nevertheless, we  can still use fitted Gaussian and Equation \eqref{equ:KL_computation_as_criterion} to compute the lower bound according to Theorem \ref{thm:lower_bound_kl_ood_pr}. 



\textbf{Summary}. 
Overall, we get the following answer to Q2. 
\vspace{-5pt}
\begin{framed}
\textbf{\textit{Answer to Q2}}: We use group-wise KL divergence in last scale (\textit{i.e.}, $D_g$ in Equation \ref{equ:decompose_KL_group}) as a unified criterion for both GAD and PAD with flow-based generative models. 
\end{framed}
\begin{figure}[t]
\centering
\includegraphics[width=7cm]{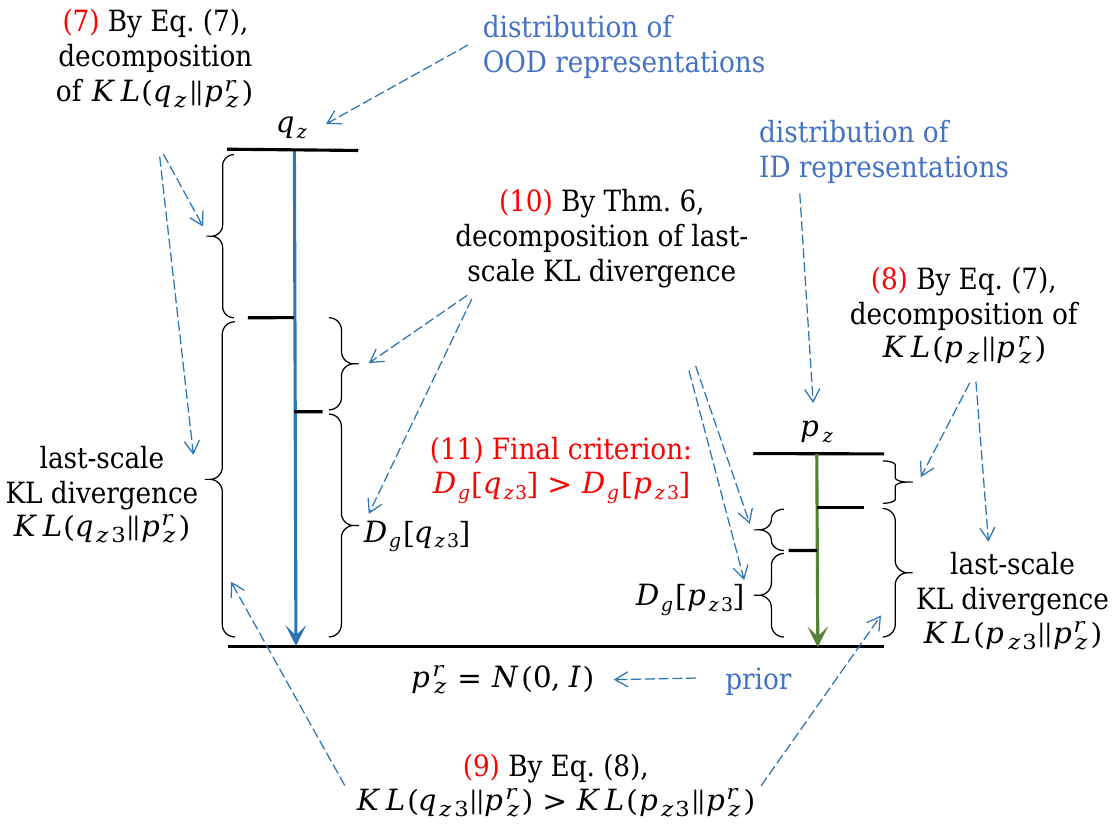}
\vspace{-5pt}
\caption{Decomposition of KL divergence for OOD detection. Steps (1) $\sim$ (6) are shown in Figure \ref{fig:theory_and_algorithm}. 
}  
\label{fig:bigpicture2}
\end{figure}

\vspace{-10pt}

\subsection{Algorithm}\label{sec:ood_detection}

Algorithm \ref{alg:GAD} shows the details of our OOD detection method. 
The inputs are a set of data points $\bm{X}=\{\bm{x}_1,\cdots,\bm{x}_m\}(m\geq 1)$, where each $\x_i$ is an individual input.
Here we support both GAD and PAD (in the case of $m=1$). 
For each input $\x_i$, we first compute the representation  $\z_i=f(\x_i)$  (line \ref{alg:getz}).
Here $f$ represents flow-based model or the encoder part of VAE.
Then we normalize representations $\{\bm{z}_1,\cdots,\bm{z}_m\}$ as  $\bar{\z}_i=(\bm{z}_i-\bm{\mu})/\bm{\sigma}$, where $\bm{\mu}$ and $diag(\bm{\sigma}^2))$ are the mean and covariance matrix of Gaussian prior, respectively (line \ref{alg:normalize}). If  $\z_i$ has multiple stages, we choose only the last-scale representation to leverage last-scale KL divergence (line \ref{alg:last_scale}, Section \ref{sec:last_scale}). Then we split $\bar{\z}_i$ as $C$ $(H\times W)$-dimensional subvectors to leverage local pixel dependence as discussed in Subsection \ref{sec:splitting_strategy}. We collect the subvectors from each $\x_i$ and treat $\bar{\bm{Z}}$ as $m\times C$ data points sampled from a mixture of distributions (line \ref{alg:collectz}).
Then we calculate the sample covariance $\widetilde{\bm{\Sigma}}$ and sample mean $\widetilde{\bm{\mu}}$ of $\bar{\bm{Z}}$ (line \ref{alg:calculate}).
Finally, we use the following \textit{anomaly score} 
\vspace{-3pt}
\begin{align}\label{equ:score}
score=(1/2) \big\{-\log|\widetilde{\bm{\Sigma}}|+\text{Tr}(\widetilde{\bm{\Sigma}})+\widetilde{\bm{\mu}}^{\top}\widetilde{\bm{\mu}}-n\big\}	
\end{align}
as the criterion (line \ref{alg:criterion}). For ID data, $score$ is the estimated group-wise KL divergence in last scale (\textit{i.e.}, $D_g[p_{Z_3}]$) except for neglecting the constant $k$ in Equation \eqref{equ:approximation_split_KL}. For OOD data, $score$ is the lower bound of $D_g[q_{Z_3}]$.
The larger $score$ is, the more like OOD the input.
If $score$ is greater than a threshold $t$,  the input  is determined as OOD data (line \ref{alg:ood}). Otherwise, the input  is determined as ID data (line \ref{alg:id}). 

We name our method as \textbf{\PADmethod}\ for \textit{KL divergence-based Out-of-Distribution Detection with Split representations}. We also call our method without split representations as \textbf{\GADmethod}.

\begin{algorithm}[t]
\caption{KL divergence-based Out-of-Distribution Detection with Split representations (\PADmethod\ )} 
\label{alg:GAD}  
\begin{algorithmic}[1] 
\STATE {\bfseries Input:}  
$f(\bm{x})$: a well-trained flow-based model or the encoder of VAE using Gaussian prior $\mathcal{N}(\bm{\mu},diag(\bm{\sigma}))$;
$\bm{X}=\{\bm{x}_1,\cdots,\bm{x}_m\}(m\geq 1)$: a batch of inputs; $(H,W,C)$: the height, width, and channels of last scale of representations.
$t$: threshold
\STATE $\bar{\bm{Z}}=\emptyset$
\FOR {$i=1$ to $m$}
\STATE $\bm{z}_i=f(\bm{x}_i)$\label{alg:getz}
\STATE $\bar{\bm{z}}_i=(\bm{z}_i-\bm{\mu})/\bm{\sigma}$\label{alg:normalize}
	\IF {$\bm{z}_i$ consists of multiple stages} \label{alg:last_scale}
	\STATE $\bar{\bm{z}}_i=$ last scale of $\bar{\bm{z}}_i$
	\ENDIF
\STATE split  $\bar{\z}_i$ as $C$ $(H\times W)$-dimensional subvectors $\bar{\z}_{i,1}\dots \bar{\z}_{i,C}$ \label{alg:split}
\STATE $\bar{\bm{Z}}=\bar{\bm{Z}}\cup \{\bar{\z}_{i,1},\dots,\bar{\z}_{i,C}\}$ \label{alg:collectz}
\ENDFOR
\STATE calculate sample covariance $\widetilde{\bm{\Sigma}}$ and sample mean $\widetilde{\bm{\mu}}$ of $\bar{\bm{Z}}$\label{alg:calculate}
\STATE $score=(1/2) \big\{-\log|\widetilde{\bm{\Sigma}}|+\text{Tr}(\widetilde{\bm{\Sigma}})+\widetilde{\bm{\mu}}^{\top}\widetilde{\bm{\mu}}-n\big\}$ \label{alg:criterion}
\IF {$score > t$} \label{alg:line_threshold}
\STATE return ``$\bm{X}$ is OOD data'' \label{alg:ood}
\ELSE
\STATE return ``$\bm{X}$ is ID data''\label{alg:id}
\ENDIF		
\end{algorithmic} 
\end{algorithm}

\vspace{-10pt}
\subsection{Summary}
In Figure 1, we have illustrated our analysis steps in explaining why we cannot sample OOD data. In Figure \ref{fig:bigpicture2}, we summarize how to leverage KL divergence for OOD detection in Section \ref{sec:GADmethod}.
To help readers have a bird's eye view of our whole work, \textbf{we summarize all critical steps in a big flowchart in Figure \ref{fig:flowchart} in the supplementary material}.

\vspace{-10pt}
\section{Experiments}\label{sec:experiment}

We conduct experiments to evaluate the effectiveness and robustness of our OOD detection method.

%
\vspace{-10pt}
\subsection{Experimental Setting}\label{sec:exp_setting}
\textbf{Benchmarks}.
We evaluate our method with prevalent benchmarks in deep anomaly detection research \cite{nalisnick2018deep, nalisnick2019detecting, lee2018simple,shafaei2018digitnotcat,hendrycks2016baseline,hendrycks2018deep}, including 
Constant,
Uniform,
MNIST \cite{mnist}, 
FashionMNIST \cite{fashionmnist},
notMNIST \cite{notmnist}, 
KMNIST \cite{KMNIST},
Omniglot \cite{Omniglot},
CIFAR-10/100 \cite{cifar}, 
SVHN \cite{svhn}, 
CelebA \cite{celeba},
TinyImageNet \cite{tinyimagenet},
ImageNet32 \cite{imagenet},
and LSUN \cite{LSUN}.
We use different dataset compositions falling into \textit{Category I} (smaller/similar variance, higher/similar likelihoods, \textit{e.g.}, CIFAR-10 vs SVHN) and \textit{Category II} (larger variance, lower likelihoods, \textit{e.g.}, SVHN vs CIFAR-10) problems. 
All datasets are resized to $32\times 32 \times 3$ for consistency.
We use $S$-C($k$) ($k\ge 0$) to denote dataset $S$ with adjusted contrast by a factor $k$. 
More details of the benchmarks are described in Section \ref{sec:benchmarks} in the supplementary material.

\textbf{Baselines}. 
We choose the following recently published OOD detection methods as baselines.

\textbf{GAD}:
\begin{enumerate}
\setlength{\itemsep}{0pt}
\setlength{\parskip}{0pt}
\item \textbf{$t$-test}: two-sample students' $t$-test for a difference in means in the empirical likelihoods. 

\item \textbf{Kolmogorov-Smirnov test (KS-test)}: two-sample KS-test to the likelihood empirical distribution functions.

\item \textbf{Maximum Mean Discrepancy (MMD)} \cite{MMD2012}: two-sample MMD test.

\item \textbf{Kernelized Stein Discrepancy (KSD)} \cite{KSD2016}: test for Goodness of Fit to the generative model.

\item \textbf{Annulus Method} \cite{choi2018generative}: Typicality test in latent space. Inputs whose latents are far from the annulus with radius $\sqrt{n}$ are classified as OOD data. 

\item \textbf{Ty-test} \cite{nalisnick2019detecting}: typicality test in model distribution. 

\item \textbf{GOD2KS} \cite{jiang2022revisiting}: combining  random projection and two-sample KS test.

\end{enumerate}
Among the above GAD methods, Annulus Method, Ty-test, and GOD2KS are the best ones. 
We reimplement Annulus Method and Ty-test to produce more results. 

\textbf{PAD}:
\begin{enumerate}
	\setlength{\itemsep}{0pt}
	\setlength{\parskip}{0pt}
\item \label{input_complexity} $\bm{\mathcal{S}}$ \cite{2020InputComplexity}: input complexity compensated likelihood.

\item $\bm{L}_{last}$ \cite{understanding_anomaly_2020_nips}: likelihood contributed by the last-scale representation of Glow. 

\item \textbf{DoSE}  \cite{morningstar21KDE}: density estimators on the statistics of models to detect OOD data. 
\item \textbf{ODIN}  \cite{ODIN2017}: Liang \textit{et al.} introduce ODIN method for OOD detection.
\item \textbf{Joint confidence loss} \cite{confidenceloss2018ICLR}: Lee \textit{et al.} introduce joint confidence loss for OOD detection. 
\item \textbf{Joint confidence loss+ODIN} \cite{confidenceloss2018ICLR}: combination of Joint confidence loss and ODIN (better than each method alone). 
\end{enumerate}
For a more comprehensive evaluation, we reimplement the first three PAD baselines applicable to flow-based model. 
DoSE is the SOTA PAD method applicable to flow-based model.
The rest baselines apply to classification networks rather than flow-based models.
See Section \ref{sec:more_discussion} in the supplementary material for more discussion about baselines.

\textbf{Models}.
We use the official Glow model \cite{glowopenai} and the model released by the authors of Ty-test (DeepMind \cite{glowdeepmind}). 
See Section \ref{sec:appendix_model_details} in the supplementary material for details.

\textbf{Metrics}. We use the same metrics as baseline methods in their original publications. These metrics include  false positive rate (\textbf{FPR}), true positive rate (\textbf{TPR}), threshold-independent metrics area under the receiver operating characteristic curve (\textbf{AUROC}) and area under the precision-recall curve (\textbf{AUPR}) \cite{buckland1994roc}, and threshold-dependent Equal Error Rate (\textbf{EER}). 
We treat OOD data as positive data.   
For GAD, each dataset is shuffled and then divided into groups of size $m$. 
We run each method for 5 times and show ``mean$\pm$ standard deviation'' for each GAD problem.

\vspace{-10pt}
\subsection{Experimental Results}\label{sec:experimental_results}

\begin{figure*}[t]
\centering
\vspace{-0pt}
\subfigure{
\begin{minipage}[t]{4.1cm}
	\includegraphics[width=4.1cm]{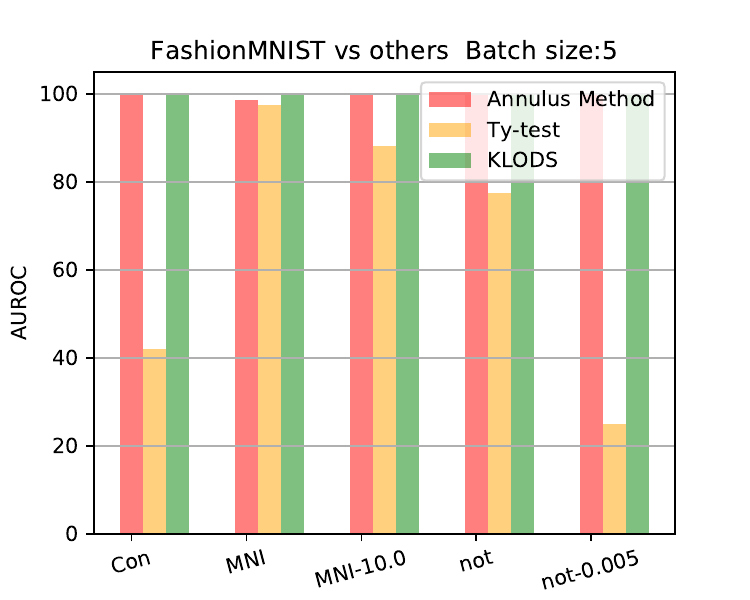}
	\label{fig:GAD_bar_fashion_vs_others_bs5}
\end{minipage}
}
\subfigure{
\begin{minipage}[t]{4.1cm}
	\centering
	\includegraphics[width=4.1cm]{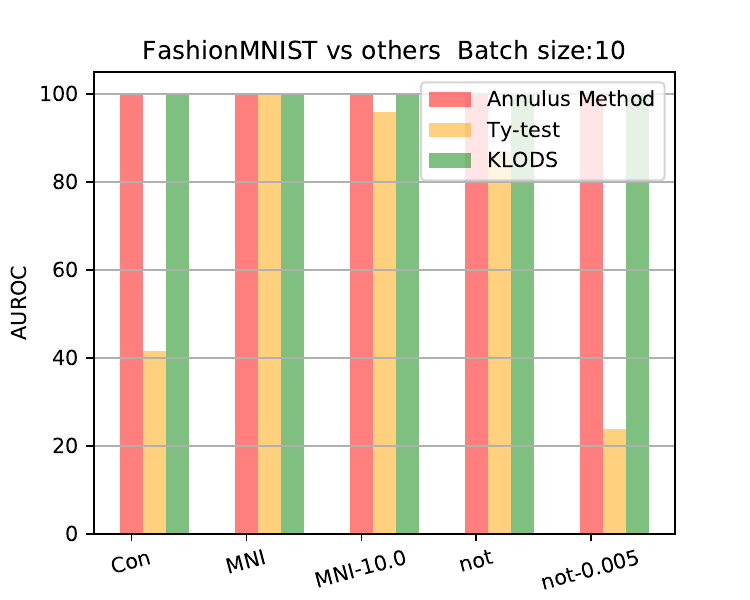}
	\label{fig:GAD_bar_fashion_vs_others_bs10}
\end{minipage}%
}%
\subfigure{
\begin{minipage}[t]{4.2cm}
	\includegraphics[width=4.2cm]{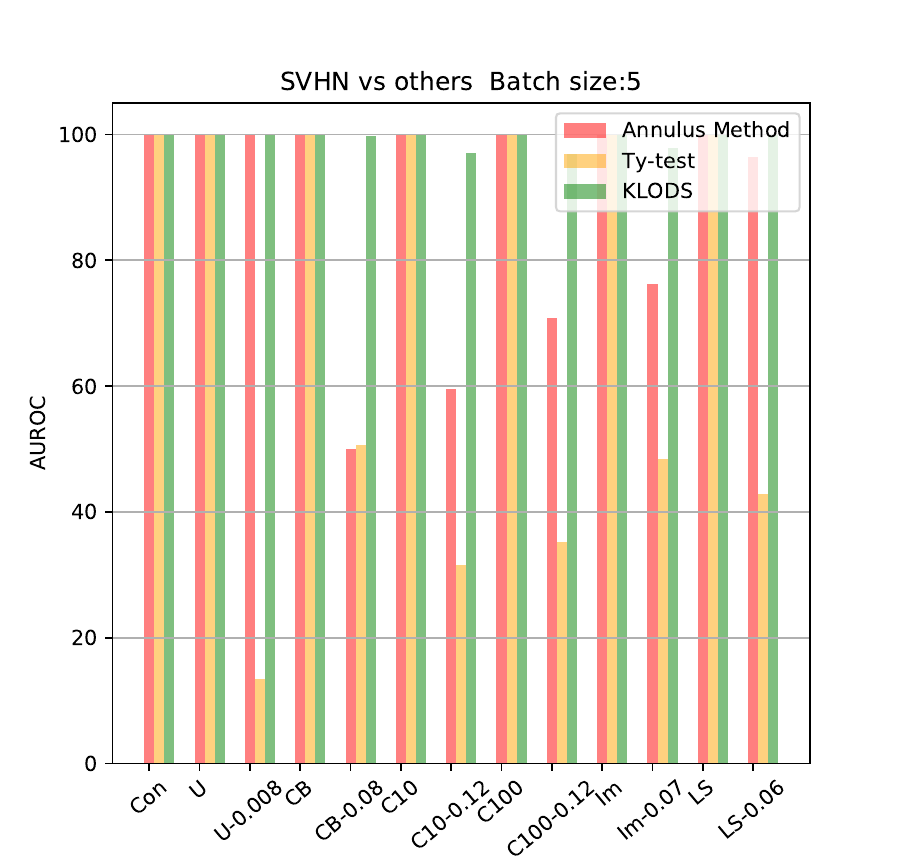}
	\label{fig:GAD_bar_svhn_vs_others_bs5}
\end{minipage}
}
\subfigure{
\begin{minipage}[t]{4.2cm}
	\includegraphics[width=4.2cm]{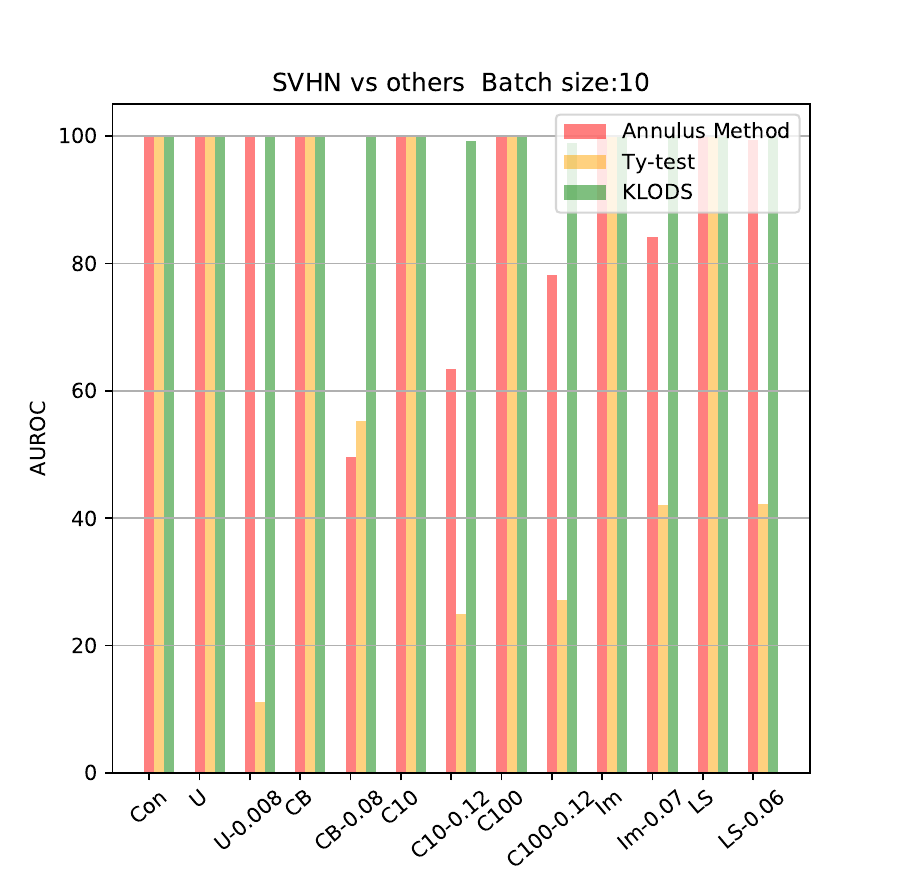}
	\label{fig:GAD_bar_svhn_vs_others_bs10}
\end{minipage}
}
\vspace{-20pt}

\subfigure{
\begin{minipage}[t]{4.3cm}
	\includegraphics[width=4.3cm]{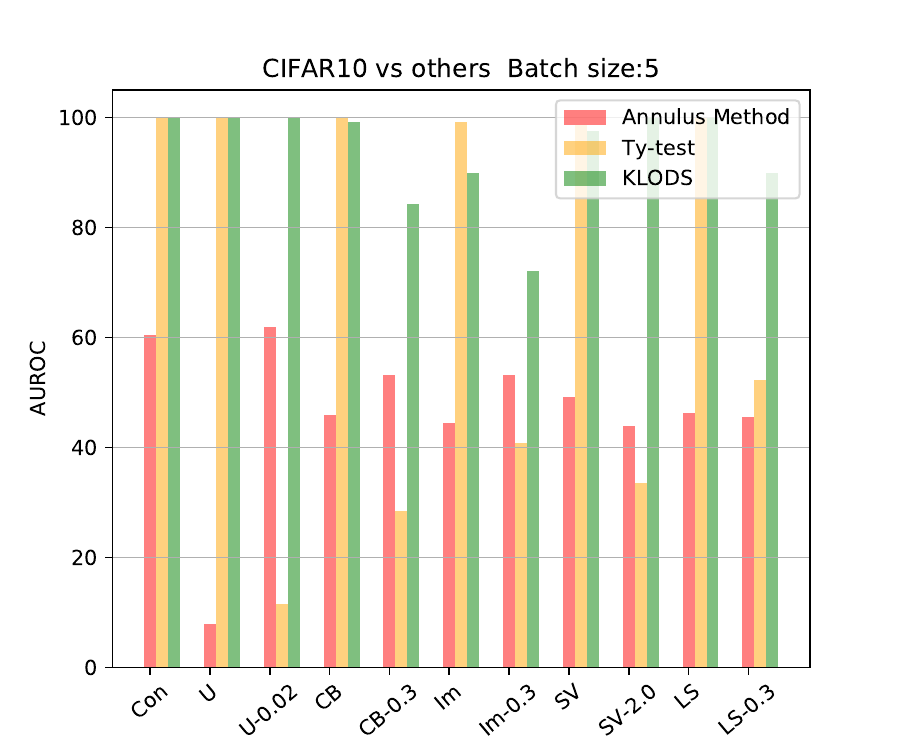}
	\label{fig:GAD_bar_cifar10_vs_others_bs5}
\end{minipage}
}
\subfigure{
\begin{minipage}[t]{4.3cm}
	\centering
	\includegraphics[width=4.3cm]{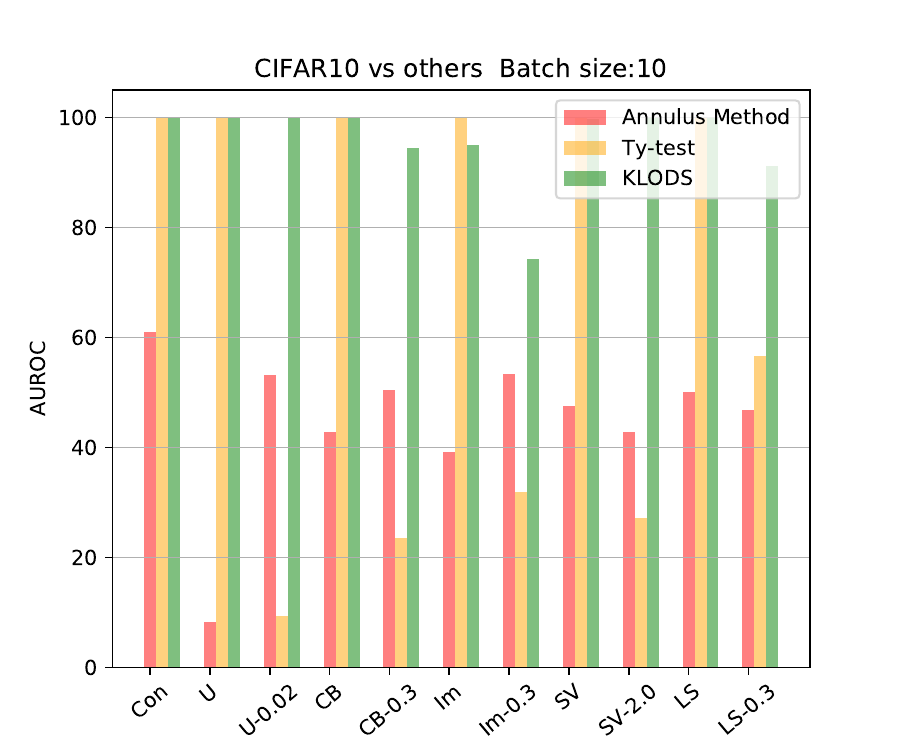}
	\label{fig:GAD_bar_cifar10_vs_others_bs10}
\end{minipage}%
}%
\subfigure{
\begin{minipage}[t]{4.25cm}
	\includegraphics[width=4.15cm]{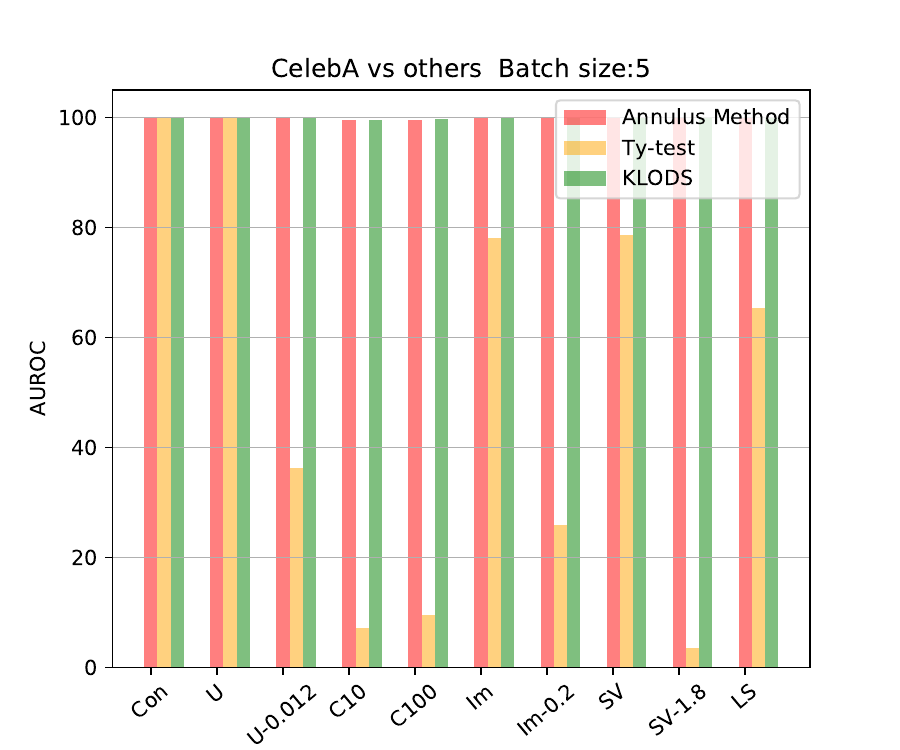}
	\label{fig:GAD_bar_celeba_vs_others_bs5}
\end{minipage}
}
\subfigure{
\begin{minipage}[t]{4.25cm}
	\includegraphics[width=4.15cm]{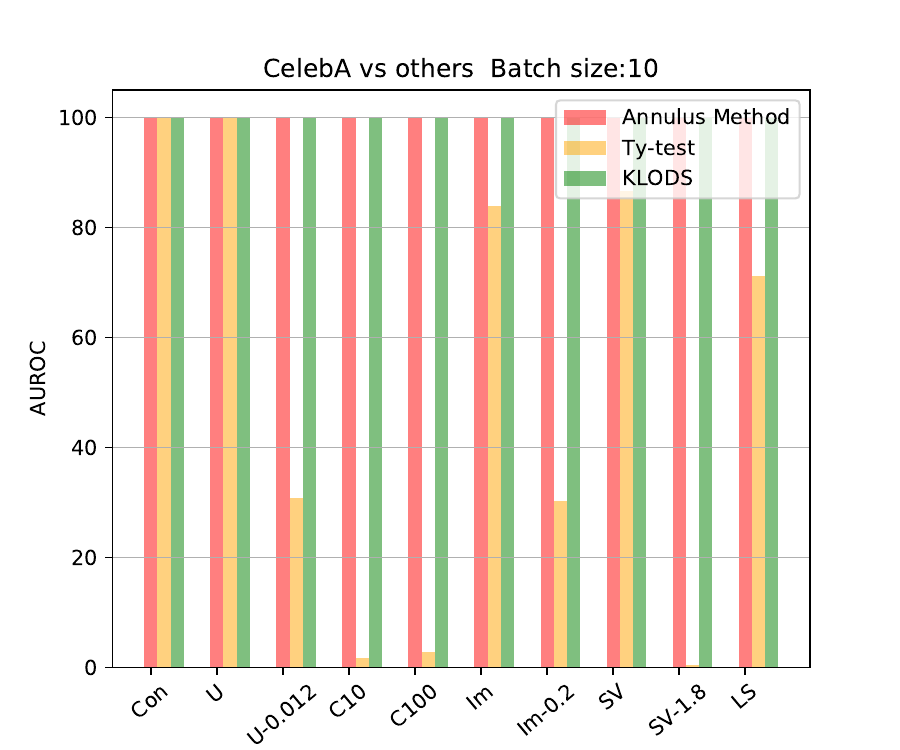}
	\label{fig:GAD_bar_celeba_vs_others_bs10}
\end{minipage}
}
\vspace{-15pt}
\caption{GAD results (AUROC) on Glow with batch sizes 5 and 10. The X-axis are labeled with OOD datasets including Con: Constant, MNI: MNIST, not: notMNIST, U: Uniform, CB: CelebA, C10/100: CIFAR-10/100, Im: ImageNet, SV: SVHN, LS: LSUN. 
The corresponding numerical results are shown in Table \ref{tbl:glow_fashionmnist_svhn_cifar10_celeba_DOCR_TC_M_bs_5_10_S2}, \ref{tbl:glow_fashionmnist_svhn_cifar10_celeba_DOCR_TC_M_bs_5_10_S2_vs_annulus_method}, and \ref{tbl:glow_fashionmnist_svhn_cifar10_celeba_DOCR_TC_M_EER_bs_5_10_S2} in the supplementary material.}  
\label{fig:GAD_results_bar}
\end{figure*}

\subsubsection{Group Anomaly Detection}\label{sec:GAD_results}
\textbf{Main Results on Unconditional Glow}.

\textbf{FashionMNIST vs Others}.
Table \ref{tbl:glow_GAD_many_baselines} in the supplementary material shows the GAD results of Glow trained on FashionMNIST.
The ID column reflects false positive rate (ideally should be 0). The MNIST and notMNIST columns reflect true positive rate (ideally should be 1).  The authors of baselines apply bootstrap procedure on validation data to establish thresholds. 
See Section \ref{appendix:sec:GADresults} in the supplementary materials for the details on how we establish thresholds.
We can see that all methods cannot achieve satisfactory results with small batch size $m=2$. Our method achieves the highest true positive rate with the lowest false positive rate for larger batch sizes (\textit{i.e.}, 10, 25).

The first two subfigures of Figure \ref{fig:GAD_results_bar} show the comparison of \PADmethod, our reimplementation of Ty-test, and Annulus Method on FashionMNIST vs Others. The corresponding numerical results of Figure \ref{fig:GAD_results_bar} are shown in Table \ref{tbl:glow_fashionmnist_svhn_cifar10_celeba_DOCR_TC_M_bs_5_10_S2}, \ref{tbl:glow_fashionmnist_svhn_cifar10_celeba_DOCR_TC_M_bs_5_10_S2_vs_annulus_method}, and \ref{tbl:glow_fashionmnist_svhn_cifar10_celeba_DOCR_TC_M_EER_bs_5_10_S2} in the supplementary material. In our reimplementation, Annulus Method achieves much better results than that reported in \cite{nalisnick2019detecting} (and Table \ref{tbl:glow_GAD_many_baselines} in the supplementary material).
Nevertheless, our method outperforms all baselines significantly.

\textbf{SVHN/CIFAR-10/CelebA vs Others}. Figure \ref{fig:GAD_results_bar} also shows the GAD results on Glow trained on SVHN, CIFAR-10, and CelebA. Our method is the best one. 
We adjust the contrast of OOD dataset to make the likelihood distributions of ID and OOD data coincide. For these kinds of problems, the performance of Annulus Method and Ty-test degenerate severely.
Our method is more robust against data manipulation. 

CelebA vs CIFAR-10/100 are challenging for Ty-test as reported by \cite{nalisnick2019detecting}.  Our method can achieve 100\% AUROC with batch size 10.  In our experiments, it is hard to make the likelihood distributions of CelebA train and test split fit very well on the official Glow model \footnote{We stop training after 2,000 epochs.}. This affects the performance of Ty-test. Please see Section \ref{appendix:sec:GADresults} in the supplementary material for more discussion.


CIFAR-10 vs CIFAR-100 is one of the most challenging problems. Annulus Method and Ty-test achieve 47.2\% and 72.4\% AUROCs with batch size $m=200$, respectively.
\GADmethod\ and \PADmethod\ achieve around 70\% AUROC when the batch size $m=200$.
We think this is due to unsuccessful model and the similarity between ID and OOD datasets.
Please see Section \ref{sec:more_discussion} in the supplementary material for more discussion on CIFAR-10 vs CIFAR-100.

\textbf{Smaller Batch Sizes}. \PADmethod\ outperforms Ty-test when batch size is smaller (\textit{i.e.}, $2\sim 4$). 
See Table \ref{tbl:glow_fashionmnist_svhn_cifar10_celeba_DOCR_TC_M_bs_2_4_S2} in the supplementary material for details.

\textbf{Comparison with GOD2KS}. Table \ref{tbl:glow_fashionmnist_svhn_cifar10_celeba_DOCR_TC_M_bs_5_10_S2_vs_GOD2KS} in the supplementary material compares our method and GOD2KS \cite{jiang2022revisiting} on Glow. We use the same problems reported in \cite{jiang2022revisiting}. Our method outperforms GOD2KS. 




\textbf{Robustness}. 
The results presented above have demonstrated the robustness of our method against data manipulation method \textbf{M2} (adjusting contrast). Experimental results show that \PADmethod\ achieves the same performance under  \textbf{M1} (rescaling representations), except that a slightly larger batch size (+5) is needed for CIFAR-10-related problems. As shown in Figure \ref{fig:GAD_results_bar}, Table \ref{tbl:glow_fashionmnist_svhn_cifar10_celeba_DOCR_TC_M_bs_5_10_S2}, \ref{tbl:glow_fashionmnist_svhn_cifar10_celeba_DOCR_TC_M_bs_5_10_S2_vs_annulus_method}, and \ref{tbl:glow_fashionmnist_svhn_cifar10_celeba_DOCR_TC_M_EER_bs_5_10_S2}  in the supplementary material, Annulus Method and Ty-test is affected by data manipulation M2 (adjusting contrast). Besides, \textit{Annulus Method achieves exactly \textbf{0 AUROCs} for all problems under data manipulation \textbf{M1} (rescaling representations)}. This is because all OOD representations are rescaled to the annulus of typical set of prior and hence definitely closer to the typical set annulus than ID representations (see Section \ref{sec:attack_likelihood}). 
The results are omitted for brevity. Currently, the performance of GOD2KS under data manipulations is still not clear.

\vspace{-5pt}
\begin{framed}
	\textbf{Summary}. For GAD, our method achieves 98.1\% AUROC, 98.2\% AUPR, and 4.6\% EER on average with batch size 5 and outperforms Ty-test by 33.5\%, 29.2\%, 29.3\% on average in AUROC, AUPR, and EER, respectively.
	Our method also outperforms GOD2KS by 9.1\%, 12.1\% on average in AUROC and AUPR with batch size 5, respectively. 
	Our method is robust against data manipulations, while the baseline methods Ty-test and Annulus Method can be attacked in almost all cases.
\end{framed}

\vspace{-5pt}
\noindent\textbf{More Results}.

\textbf{Mixture of OOD Datasets}.
We also use the mixture of two datasets among SVHN, CelebA, and CIFAR-10 as one OOD dataset. We can treat samples from multiple distributions as from a mixture of distributions.  We randomly choose 5,000 samples from each dataset and get 10,000 samples as one OOD dataset. Table \ref{tbl:glow_mixture_ood_datasets_DOCR_TC_M_bs_5_10_S2} in the supplementary material shows the results of \PADmethod.
Our method outperforms Ty-test by 38.9\% AUROC on average with batch size 5.

\textbf{Ablation Study}. We compare the following four methods to evaluate how the techniques proposed in Section \ref{sec:GADmethod} affect the performance.
\begin{enumerate}
\item \textbf{Ty-test}: the baseline.
\item \textbf{\GADmethod-all}: GAD with all scales of representation, without splitting dimensions.
\item \textbf{\GADmethod}: GAD  using the last-scale representation, without splitting dimensions.
\item \textbf{\PADmethod}: GAD using the last-scale representation with splitting dimensions.

\end{enumerate}

Table \ref{tbl:glow_fashionmnist_svhn_cifar10_celeba_ablation_study} in the supplementary material shows the results. Neglecting CIFAR10 vs ImageNet32-C(0.3), the order of the methods by performance is \PADmethod\ $>$ \GADmethod\ $>$ \GADmethod-all $>$ Ty-test. The only exception is CIFAR10 vs ImageNet32-C(0.3), where KLOD outperforms KLODS. The low contrast leads to weak local pixel dependence and affects our splitting strategy.
Overall, we can see that both using the last scale and splitting dimensions into groups can improve the performance of GAD.  Note that splitting dimensions also makes PAD feasible. Besides, when the batch size is smaller (\textit{e.g.}, 5), \PADmethod\ outperforms \GADmethod\ more obviously. More results are not shown for brevity.



\textbf{One-vs-Rest}.
We conduct one-vs-rest evaluation on MNIST. For each class from 0 to 9, we use images in that class as ID data and the rest classes as OOD data. We train one Glow model under each setting for 120 epochs. As shown in Table \ref{tbl:glow_GAD_mnist_one_vs_rest} in the supplementary material, our method achieves 85.4\% AUROC and 85.8\% AUPR on average with batch size 5, outperforming the baseline by 8\% and 5\%, respectively.



\textbf{GAD on GlowGMM}.
We train  GlowGMM on FashionMNIST. 
We treat each class as ID data and the rest as OOD data. \PADmethod\ can achieve 100\% AUROC on average when batch size is 25. On the contrary, Ty-test is worse than random guessing in most cases. See Figure \ref{fig:glowgmm_fashionmnist_bar} and Table \ref{tbl:cond_glow_fashionmnist_CTR_TC} in the supplementary material for results.
Experimental results also demonstrate that each component may assign higher likelihoods to other classes (See Table \ref{tbl:cond_glow_fashionmnist_logpz_classification} in the supplementary material). 

\textbf{Generating OOD Images Using GlowGMM}.  
In GlowGMM, we can generate \textit{high-quality} OOD images.
See Section \ref{sec:more_results} in the supplementary material for more discussion. 

\textbf{GAD on VAE}.
We train convolutional VAE with 8-/16-/32-dimensional latent space on FashionMNIST, SVHN, and CIFAR-10, respectively.   
The latent space is not large enough, so we did not split representations and only used \GADmethod\ in experiments. 
The results are shown in Figure \ref{fig:GAD_VAE_results_bar} and Table \ref{tbl:vae_fashionmnist_svhn_cifar10_CTR_TC}  in the supplementary material.
\GADmethod\ achieves 99.9\% AUROC on average when $m=25$ for most problems. CIFAR-10 vs CIFAR-100 is also the most challenging problem on VAE.
\GADmethod\ needs a batch size of 150 to achieve 98\%+ AUROC (See Table \ref{tbl:vae_cifar10_cifar100_imagenet32_ctr_tc} in the supplementary material). Nevertheless, \GADmethod\ still outperforms Ty-test. Again, Ty-test can be attacked by data manipulations \textbf{M2} (adjusting contrast). 
Finally, as pointed out by \cite{An2015VariationalAB}, for vanilla VAE the reconstruction probability is not a reliable criterion for OOD detection (See Table \ref{tbl:vae_cifar10_reconstruction_probability} in the supplementary material). 
\vspace{-5pt}
\subsubsection{Point-wise Anomaly Detection}

The PAD results of \PADmethod, $\mathcal{S}$, $L_{last}$, and DoSE are shown in Table \ref{tbl:PAD_all_results}.

\textbf{SVHN vs Others}.
The problems above the dash line in Table \ref{tbl:PAD_all_results} fall in \textit{Category II} (larger variance, lower likelihoods). \PADmethod\ can achieve 98.8\%$+$ AUROC and outperforms the baselines. 
In \cite{2020InputComplexity}, although the authors state that their method $\mathcal{S}$ can detect OOD data with more complexity than ID data (roughly \textit{Category II}), they did not evaluate their method thoroughly on \textit{Category II} problems. We find $\mathcal{S}$ does not perform well on these problems.

The problems for SVHN vs others below the dash line in  Table \ref{tbl:PAD_all_results} fall in \textit{Category I} (smaller/similar variance, higher likelihoods). For these problems, 
$L_{last}$ and DoSE degenerate into being not better than random guessing.
\PADmethod\ is comparable with  $\mathcal{S}$ and outperforms $L_{last}$ and DoSE significantly. The reason is all the distributions of $\log p(\bm{x})$, $\log p(\bm{z})$, and $\log p(\bm{x})$ contributed by the last scale overlap with those of ID data. Figure \ref{fig:logpx_glow_SVHN_vs_others} and \ref{fig:logpx_last_scale_glow_SVHN_vs_others_hierarchical_model} in the supplementary material shows the histograms of these three statistics. 
These issues make DoSE fail because DoSE relies on the effectiveness of its based statistics.

\begin{table}[t]
	\centering
	\scriptsize
	\caption{PAD results (AUROC in percentage) on Glow. 
		Notable failures (below 60\%) are underlined.
	}
	\vspace{-5pt}
	\label{tbl:PAD_all_results}
	\begin{tabular}{ c l c c c c }
		\toprule[1pt]
		ID&OOD& $\mathcal{S}$ & $L_{last}$ & DoSE& \PADmethod\\ 
		\cline{1-6}
		\multirow{14}*{\rotatebox{90}{\tabincell{c}{ SVHN}}}
		&Uniform & \textbf{100.0} & \textbf{100.0} & \textbf{100.0} & \textbf{100.0}  \\ 
		&ImageNet32 & 78.7 & 99.8 & \textbf{99.9} & \textbf{99.9}  \\ 
		&CelebA & 83.1 & 100.0 & \textbf{100.0} & \textbf{100.0}  \\ 
		&CIFAR-10 & \underline{43.8} & 97.7 & 96.2 & \textbf{98.9}  \\ 
		&CIFAR-100 & \underline{44.9} & 97.3 & 96.5 & \textbf{98.8}   \\ 
		&LSUN & 91.8 & \textbf{100.0} & 91.6 & \textbf{100.0}\\
		\cdashline{2-6}[1pt/1pt]
		&Uniform-C(0.008) & 97.9 & 0.0 & 96.8 &\textbf{98.6} \\
		&CelebA-C(0.08)  & 81.4 & \underline{41.6} & \underline{48.0} & \textbf{82.2}   \\ 
		&CIFAR-10-C(0.12) & \textbf{75.3} & 47.7 & \underline{50.5} & 72.5  \\ 
		&CIFAR-100-C(0.12) & 75.2 & \underline{48.6} & \underline{54.5} & \textbf{75.3}   \\ 
		&ImageNet32-C(0.07)  & \textbf{99.6} & \underline{42.2} & \underline{55.7} & 84.0   \\ 
		&LSUN-C(0.06) & 81.3 & 3.0 & 69.5 & \textbf{91.6}\\
		&notMNIST & \textbf{100.0} & 98.7 &99.9&99.6  \\
		&Constant  & \textbf{100.0} & \underline{0.4} & 99.9 & 99.8  \\
		\hline
		\multirow{10}*{\rotatebox{90}{\tabincell{c}{CelebA}}}
		&Constant& 98.0  & 99.8 & 99.9 & \textbf{100.0}  \\
		&Uniform & 91.0   & 100 &\textbf{100.0} &  \textbf{100.0}  \\
		&Uniform-C(0.012) & 97.2 & 98.1 & 90.9 & \textbf{99.5} \\
		&ImageNet & \underline{16.5}  & 99.7 & 99.8   & \textbf{100.0}  \\
		&ImageNet-C(0.2) & 88.5 & \textbf{97.9} & 91 & 93.3\\
		&CIFAR-10 & \underline{55.0}  & 90.4 &\textbf{94.9} & 69.0\\
		&CIFAR-100 & \underline{53.2}  & 90.6 &\textbf{95.6} & 72.3\\
		&SVHN & 83.9  & 99.3  & \textbf{99.7}&  94.7  \\
		&SVHN-C(1.8) & 90.5 & \textbf{99.9} & 85.2 & 98.9 \\
		&LSUN & 65.4 & \textbf{99.6} & 84.9 & 99.2 \\
		\hline
		 
		 \multirow{11}*{\rotatebox{90}{\tabincell{c}{CIFAR-10}}}
		&Constant& \textbf{100} & \underline{1.4} & 99.8 & 98.9 \\ 
		&Uniform&\textbf{100} & \textbf{100} & \textbf{100} & \textbf{100} \\ 
		&Uniform-C(0.2) & 98.8 & 1.9 & 64.7 & \textbf{99.7}\\
		&CelebA& 86.3 & 96.6 & \textbf{99.5} & 85.2 \\ 
		&CelebA-C(0.3) & \textbf{95.0} & 7.8 & \underline{46.5} & 64.9 \\ 
		&SVHN & 95.0 & 92.9 & \textbf{95.5} & 82.6 \\ 
		&SVHN-C(2.0) & 94.0 & \textbf{98.9} & 93.7 & 95 \\
		
		&TinyImageNet&71.6 & \textbf{90.7} & 76.7 & 83.9 \\ 
		&CIFAR-100&\textbf{73.6} & 60.0  & \underline{57.1} & \underline{54.1} \\ 
		&LSUN & 91.1 & 82.8 & 98.0 & \textbf{98.9}\\
		&LSUN-C(0.3) & \textbf{96.4} & 94.8 & 61.2 & 83.3\\
		\hline
		& \textbf{average} & 82.7 & 73.7 & 85.5 & \textbf{90.7}\\
		& \textbf{\#notable failures} & 5 & 5 & 6 & \textbf{1}\\
		\bottomrule[1pt]
	\end{tabular}
\end{table}

\textbf{CelebA vs Others}.
The performance of $\mathcal{S}$ degenerates severely on these problems. Our method is slightly affected because the likelihoods of the train and test split of CelebA do not fit very well (see Figure \ref{fig:logpx_glow_celeba_vs_others} in the supplementary material). 



\textbf{CIFAR-10 vs Others}.
As discussed in the last subsection, Glow model fails to generate high-quality CIFAR-10-like images. Our method is affected on CIFAR-10 vs others. As discussed before, we argue that it is hard to require an ``unsuccessful'' model can detect OOD data. 

Finally, our method has only one notable failure (\textit{i.e.}, below 60\% AUROC). $\mathcal{S}$, $L_{last}$, and DoSE have 5, 5, and 6 notable failures in total, respectively.
\noindent\textbf{Other comparisons}.

We compare \PADmethod\ with Joint confidence loss, ODIN, and Joint confidence loss+ODIN.
These three baseline methods do not apply to flow-based model. The results are shown in Table \ref{tbl:PAD_vs_confidence_loss_ODIN} in the supplementary material, where we use the same datasets reported in \cite{confidenceloss2018ICLR}. Our method is the best one. 
See Section \ref{sec:more_discussion} in the supplementary material for more discussion on our method and results.
\vspace{-5pt}
\begin{framed}
	\textbf{Summary}. For PAD, our method achieves 90.7\% AUROC on average and outperforms the SOTA baseline DoSE by 5.2\% in AUROC. 
	Our method also has the least notable failures. 
\end{framed}

\vspace{-7pt}

\vspace{-3pt}
\section{Conclusion} \label{sec:conclusion}
In this paper, we prove theorems to investigate KL divergences in flow-based models. We observe the normality of ID and OOD representations in flow-based model for a wide range of problems. Based on our analysis, we explain why we cannot sample OOD data from flow-based model from two perspectives. 
We propose leveraging KL divergence for OOD detection. 
We further decompose the KL divergence to leverage the last-scale KL divergence of Glow model. Furthermore, we split representations into groups to leverage group-wise KL divergence as the final OOD detection criterion.
Experimental results have demonstrated the effectiveness and robustness of our method.

\ifCLASSOPTIONcaptionsoff
  \newpage
\fi

\vspace{-10pt}
\bibliographystyle{IEEEtran}
\bibliography{main}

\appendices
\renewcommand\appendix{\par
	\setcounter{section}{0}
	\setcounter{subsection}{0}
	\gdef\thesection{附录 \Alph{section}}}

\counterwithin{figure}{section}
\counterwithin{table}{section}

\section{Background}\label{sec:background}
\textbf{Flow-based generative model}  constructs diffeomorphism $f$ from visible space $\mathcal{X}$ to latent space $\mathcal{Z}$ \cite{kingma2018glow,dinh2016realnvp,dinh2014nice,papamakarios2019flow_model_survey}. The model uses a series of  diffeomorphisms implemented by multilayered neural networks
\begin{equation}\nonumber
	\setlength{\abovedisplayskip}{3pt}
	\setlength{\belowdisplayskip}{3pt}
	\bm{x} \stackrel{f_1}{\longleftrightarrow} \bm{h_1} \stackrel{f_2}{\longleftrightarrow} \bm{h_2} \dots \stackrel{f_n}{\longleftrightarrow} \bm{z} 
\end{equation}
like flow. The whole bijective transformation 
$f(\bm{x})=f_n\circ f_{n-1} \cdots f_1(\bm{x})$ can be seen as encoder, and the inverse function $f^{-1}(\bm{z})$ is used as decoder.
According to the change of variable rule, the probability density function of the model can be formulated as 
\begin{equation}
	\setlength{\abovedisplayskip}{3pt}
	\setlength{\belowdisplayskip}{0pt}
	\begin{aligned}
		\log p_{X}(\bm{x}) & =\log p_{Z}(f(\bm{x}))+\log \left|{\rm det}\dfrac{\partial \bm{z}}{\partial \bm{x}^T}\right| \\
		& = \log p_{Z}(f(\bm{x}))+\sum\nolimits_{i=1}^n\log \left|{\rm det}\dfrac{\partial \bm{h}_i}{\partial \bm{h}_{i-1}^T}\right| \label{equ:change_of_var} 
	\end{aligned}
\end{equation}
where $\bm{x}=\bm{h}_0, \bm{z}=\bm{h}_n, \frac{\partial \bm{h}_i}{d\bm{h}_{i-1}^T}$ is the Jacobian of $f_i$, $\det$ is the determinant.

Here prior $p_{Z}(\bm{z})$ is chosen as tractable density function. For example, the most popular prior is standard  Gaussian distribution $\mathcal{N}(0, I)$, which makes $\log p_{Z}(\bm{z})=-(1/2)\times\sum_i \bm{z}_i^2+C$ ($C$ is a constant).
After training, one can sample noise $\varepsilon$ from prior and generate new samples $f^{-1}(\varepsilon)$.

In this paper, we replace prior $\mathcal{N}(0, I)$ with fitted Gaussian distributions $\n(\widetilde{\bm{\mu}},\widetilde{\bm{\Sigma}})$ from OOD representations, where $\widetilde{\bm{\mu}}$ and $\widetilde{\bm{\Sigma}}$ are the sample mean and covariance, respectively. Then we sample noise $\varepsilon' \sim \n(\widetilde{\bm{\mu}},\widetilde{\bm{\Sigma}})$ and generate OOD samples $f^{-1}(\varepsilon')$.


\textbf{Variational Autoencoder (VAE)} is directed graphical model approximating the data distribution $p(\bm{x})$ with encoder-decoder architecture \cite{kingma2013auto}. The probabilistic encoder $q_{\phi}(\bm{z}|\bm{x})$ approximates the unknown intractable posterior $p(\bm{z}|\bm{x})$. The probabilistic decoder 
$p_{\theta}(\bm{x}|\bm{z})$ approximates $p(\bm{x}|\bm{z})$. 
In VAE, the variational lower bound of the marginal likelihood of data points (ELBO) 
\begin{equation}
	\begin{aligned}\label{equ:VAE_ELBO}
		\mathcal{L}(\theta,\phi)=\dfrac{1}{N}\sum _{i=1}^N E_{z\sim q_{\phi}}[\log p_{\theta}(\bm{x}^i|\bm{z})]-KL(q_{\phi}(\bm{z}|\bm{x}^i)||p(\bm{z}))
	\end{aligned}
\end{equation}
can be optimized using stochastic gradient descent.
After training, one can sample $\bm{z}$ from prior $p(\bm{z})$ and use the decoder $p_{\theta}(\bm{x}|\bm{z})$ to generate new samples.

\section{Definitions}\label{sec:appendix_divergence}
\vspace{-0pt}
\begin{definition}[$\phi$-divergence]
	\label{def:phi_divergence}
	The $\phi$-divergence between two densities $p(\bm{x})$ and $q(\bm{x})$ is defined by 
	\begin{equation}\label{def:phi_distance}
		D_{\phi}(p,q)=\int \phi (p(\bm{x})/q(\bm{x}))q(\bm{x})\mathrm{d}\bm{x},
	\end{equation}
	where $\phi$ is a convex function on $[0,\infty)$ such that $\phi(1)=0$. When $q(\bm{x})=0$, $0\phi(0/0)=0$ and $0\phi(p/0)=\lim_{t\rightarrow\infty}\phi(t)/t$\cite{ali1966general}.
\end{definition}
\vspace{-0pt}
$\phi$-divergence family is used widely in machine learning fields. As shown in Table \ref{tbl:phi_divergence}, many commonly used measures, including the KL divergence, Jensen-Shannon divergence, and squared Hellinger distance, belong to the $\phi$-divergence family. Many $\phi$-divergences are not proper distance metrics and do not satisfy the triangle inequality.

\begin{table} [!h]
	
	\vspace{-0pt}
	\caption{Examples of $\phi$-divergence family} 
	\label{tbl:phi_divergence}  
	\begin{center}  
		\begin{tabular}{c c}  
			\toprule[1pt]
			$\phi(x)$ & Divergence  \\
			\hline 
			$x\log x-x+1$ & Kullback-Leibler \\
			$-\log x + x -1$ & Minimum Discrimination Information \\
			$(x-1)\log x $ & $J$-Divergence\\
			$\frac{1}{2}\vert 1-x\vert$ & Total Variation Distance\\
			$(1-\sqrt{x})^2$ & Squared Hellinger distance\\	
			$x\log \frac{2x}{x+1}+\log \frac{2}{x+1}$ & Jensen-Shannon divergence \\			
			\bottomrule[1pt] 
		\end{tabular}  
	\end{center}  
\end{table}

\section{Tables}
Table \ref{tbl:notations} summarizes the notations of distributions and KL divergences involved in our analysis.
\begin{table} [ht]
	\small
	\vspace{-0pt}
	\caption{Distributions and KL divergences in our analysis.} 
	\label{tbl:notations}  
	\begin{center}  
		\begin{tabular}{|c|l|}  
			\hline
			\textbf{Notations} & \textbf{Explanations}\\  
			\hline
			$\z=f(\x)$ & the flow-based model function\\\hline
			$p_X$  & the distribution of ID data \\\hline
			$q_X$ & the distribution of OOD data\\\hline
			$p_Z$ & the distribution of ID representations\\\hline  
			$q_Z$ & the distribution of OOD representations\\\hline
			$p^r_Z$ & the prior of the flow-based model\\\hline
			\multirow{2}{*}{$p^r_X$} &\multirow{2}{*}{\makecell[l]{the model induced distribution such \\that $Z_r\sim p_Z^r$ and $X_r=f^{-1}(Z_r)\sim p^r_X$}}\\
			& \\
			\hline
			\multirow{2}{*}{$KL(p_X||q_X)$} & \multirow{2}{*}{\makecell[l]{the KL divergence between $p_X$ and $q_X$,\\ \textit{assumed to be any large (by Assumption 1)}.}} \\
			& \\\hline
			\multirow{2}{*}{$KL(p_Z||q_Z)$} & \multirow{2}{*}{\makecell[l]{the KL divergence between $p_Z$ and $q_Z$,\\ \textit{equals to $KL(p_X||q_X)$ (by Theorem \ref{thm:all_enough_distance_guarantee})}.}}\\
			& \\\hline
			\multirow{2}{*}{$KL(p_Z||p^r_Z)$} & \multirow{2}{*}{\makecell[l]{the KL divergence between $p_Z$ and prior,\\ \textit{trained to be small (by Assumption 2)}.}}\\
			& \\
			\hline
			\multirow{5}{*}{$KL(p^r_Z||q_Z)$} & \multirow{5}{*}{\makecell[l]{the KL divergence between prior and $q_Z$,\\ influenced by $KL(p_Z||q_Z)$ and $KL(p_Z||p^r_Z)$. \\\textit{By relaxed triangle inequality (Theorem \ref{thm:triangle_n1_n2_n3}),} \\\textit{a small $KL(p_Z||q_Z)$ and a large $KL(p_Z||p^r_Z)$ } \\\textit{imply  $KL(p^r_Z||q_Z)$ is large}.}}\\
			& \\
			& \\
			& \\
			& \\
			\hline
			\multirow{4}{*}{$KL(q_Z||p^r_Z)$} & \multirow{4}{*}{\makecell[l]{the KL divergence between prior and $q_Z$, \\ influenced by $KL(p^r_Z||q_Z)$. \textit{By the approximate}\\ \textit{ symmetry property (Theorem \ref{thm:duality_small_KL_general}), a large} \\ \textit{$KL(p^r_Z||q_Z)$ must lead to a large $KL(q_Z||p^r_Z)$}.}}\\
			& \\
			& \\
			& \\
			\hline
		\end{tabular}  
	\end{center}  
\end{table}

Table \ref{tbl:values_e_bound_f} shows some approximate values of the
supremum of KL divergence in Theorem \ref{thm:duality_small_KL_general}.
\begin{table} [h]
	\small
	\vspace{-0pt}
	\caption{Some approximate values of the
		supremum of KL divergence} 
	\label{tbl:values_e_bound_f}  
	\begin{center}  
		\begin{tabular}{ccccccc}  
			\toprule[1pt]
			$\varepsilon$  & 0.001 & 0.005 & 0.01 & 0.05 & 0.1 & 0.5 \\
			$\sup $ & 0.001 & 0.006 & 0.011 & 0.069 & 0.016 & 1.732\\	
			\bottomrule[1pt] 
		\end{tabular}  
	\end{center}  
\end{table}

\begin{table}[t]
	\centering
	\scriptsize
	\caption{Results of Generalized Shapiro-Wilk test for multivariate normality on the representations of datasets under Glow. See Section \ref{sec:exp_setting} for the explanation of dataset names. For each dataset, we randomly select 2000 inputs for normality test.  The larger $W$ and $p$ are, the more Gaussian-like the distribution is. When $p\geq 0.05$, there is no evidence to reject the normality hypothesis. 
		In our experiments, ID representations under all models manifest strong normality. For \textit{Category I} problems, all OOD representations except for SVHN vs Constant manifest normality. Some OOD representation (e.g., Unform, ImageNet32) even has a higher $p$-value than ID data (CelebA).}
	\label{tbl:SW_test}
	\begin{tabular}{llcll}
		\toprule[1pt]
		ID & Input(\textbf{ID}/OOD) & Category & $W$ & $p$-value \\ \hline
		\multirow{4}*{ \rotatebox{90}{\textbf{Fashion.}} } & \textbf{Fashion.} & - & 0.9996 & \textbf{0.9479} \\ \cline{2-5}
		& Constant & I & 0.9992 & \textbf{0.5872}\\ \cline{2-5}
		& Constant-C(0.1) & I & 0.9995 & \textbf{0.9212 }\\ \cline{2-5}
		& MNIST & I & 0.9985 &  \textbf{0.0733 } \\ \cline{2-5}
		& MNIST-C(10.0) & I &  0.9991 &  \textbf{0.4114}  \\ \cline{2-5}
		& notMNIST & I &  0.9989 & \textbf{0.2337}   \\ \cline{2-5}
		& notMNIST-C(0.005) & I & 0.9993 & \textbf{0.6411}   \\ \hline
		
		\multirow{9}*{\rotatebox{90}{\textbf{SVHN}}} 
		& \textbf{SVHN} & - & 0.9993 & \textbf{0.6227} \\ \cline{2-5}
		& Constant & I & 0.9911 & 9.6e-10\\ \cline{2-5}
		& Constant-C(0.1) & I & 0.9992 & \textbf{0.5442 }\\ \cline{2-5}
		& Uniform & II & 0.9992 & \textbf{0.5273}\\ \cline{2-5}
		& Uniform-C(0.008) & II & 0.9993 & \textbf{0.6203}\\ \cline{2-5}
		& CelebA & II & 0.9336 & < 2.2$e$-16   \\ \cline{2-5}
		& CelebA-C(0.08) & I &  0.9993 & \textbf{0.6503}  \\ \cline{2-5}
		& CIFAR-10 & II & 0.99429 & 5.7$e$-07  \\ \cline{2-5}
		& CIFAR-10-C(0.12) & I & 0.9995 & \textbf{0.8838}  \\ \cline{2-5}
		& CIFAR-100 & II & 0.9528 & < 2.2e-16 \\ \cline{2-5}
		& CIFAR-100-C(0.12) & I & 0.9985 & \textbf{0.0760} \\ \cline{2-5}
		& ImageNet32 & II & 0.8618 & < 2.2$e$-16 \\ \cline{2-5}
		& ImageNet32-C(0.07) & I &  0.9670 & < 2.2$e$-16  \\ \hline
		
		\multirow{8}*{\rotatebox{90}{\textbf{CIFAR-10}}} 
		& \textbf{CIFAR-10} & - & 0.9995 & \textbf{0.9064} \\ \cline{2-5}
		
		& Constant & I & 0.9992 & \textbf{0.5512 }\\ \cline{2-5}
		& Constant-C(0.1) & I & 0.9991 & \textbf{0.4725 }\\ \cline{2-5}
		& Uniform & I & 0.70958 & <2.2e-16\\ \cline{2-5}
		& Uniform-C(0.02) & II & 0.99931 & \textbf{0.6964}\\ \cline{2-5}
		& CIFAR-100 & I & 0.9994 & \textbf{0.8426} \\ \cline{2-5}
		& CelebA & I & 0.9987 & \textbf{0.1390} \\ \cline{2-5}
		& CelebA-C(0.3) & I & 0.9994 &  \textbf{0.7960}  \\ \cline{2-5}
		& ImageNet32 & I & 0.9977 & 0.0048  \\ \cline{2-5}
		& TinyImageNet & I &  0.9995 & \textbf{0.3092}   \\ \cline{2-5}
		& SVHN & I & 0.9989 & \textbf{0.2532}  \\ \cline{2-5}
		& SVHN-C(2.0) & I & 0.9989 & \textbf{0.2547}  \\ \hline
		
		\multirow{7}*{\rotatebox{90}{\textbf{CelebA}}} 
		& \textbf{CelebA} & - & 0.9992 & \textbf{0.6064}\\ \cline{2-5}
		& Constant & I & 0.9989 & \textbf{0.2605 }\\ \cline{2-5}
		& Constant-C(0.1) & I & 0.9984 & \textbf{0.7184 }\\ \cline{2-5}
		& Uniform & II & 0.9993 & \textbf{0.6922}\\ \cline{2-5}
		& Uniform-C(0.012) & II & 0.9992 & \textbf{0.5815}\\ \cline{2-5}
		& CIFAR-10 & I & 0.9992 & \textbf{0.5953} \\ \cline{2-5}
		& CIFAR-100 & I & 0.9990 & \textbf{0.3313 }\\ \cline{2-5}
		& ImageNet32 & I & 0.9993 &  \textbf{0.6410}  \\ \cline{2-5}
		& ImageNet32-C(0.2) & I & 0.9990 & \textbf{0.3676}  \\ \cline{2-5}
		& SVHN & I & 0.9991 & \textbf{0.4351} \\ \cline{2-5}
		& SVHN-C(1.8) & I & 0.9990 &  \textbf{0.3600} \\ \bottomrule[1pt]
	\end{tabular}
\end{table}	

\section{Proofs}

\subsection{Proof of Theorem \ref{thm:decompose_KL_ID}}\label{sec:proof_decompose_kl_ID}
\begin{proof}
	{\small
		\begin{align}
			&KL(p^*_{X}(\x)||\mathcal{N}(0,I_n))\nonumber\\
			=&\mathbb{E}_{p^*_{X}(\x)}\Big[ \log\Big(\dfrac{p^*_{X}(\x)}{\mathcal{N}(0,I_n)}\Big)\Big]\nonumber\\
			=&\mathbb{E}_{p^*_{X}(\x)}\Big[ \log\Big(\dfrac{p^*_{X}(\x)}{\prod_{i=1}^n p^*_{X_i}(x)} \dfrac{\prod_{i=1}^n p^*_{X_i}(x)}{\mathcal{N}(0,I_n)}\Big)\Big]\nonumber\\
			=&\mathbb{E}_{p^*_{X}(\x)}\Big[ \log\Big(\dfrac{p^*_{X}(\x)}{\prod_{i=1}^n p^*_{X_i}(x)}\Big)\Big] +  \mathbb{E}_{p^*_{X}(\x)}\Big[\log\Big( \dfrac{\prod_{i=1}^n p^*_{X_i}(x)}{\prod_{i=1}^n \mathcal{N}(0,1)}\Big)\Big]\nonumber\\
			=&\mathbb{E}_{p^*_{X}(\x)}\Big[ \log\Big(\dfrac{p^*_{X}(\x)}{\prod_{i=1}^n p^*_{X_i}(x)}\Big)\Big] +  \mathbb{E}_{p^*_{X}(\x)}\Big[\sum_{i=1}^n \log\Big( \dfrac{p^*_{X_i}(x)}{\mathcal{N}(0,1)}\Big)\Big]\nonumber\\
			=&\mathbb{E}_{p^*_{X}(\x)}\Big[ \log\Big(\dfrac{p^*_{X}(\x)}{\prod_{i=1}^n p^*_{X_i}(x)}\Big)\Big] +  \sum_{i=1}^n \mathbb{E}_{p^*_{X_i}(x)}\Big[ \log\Big( \dfrac{p^*_{X_i}(x)}{\mathcal{N}(0,1)}\Big)\Big]\nonumber\\
			=&KL(p^*_{X}(\x)||\prod_{i=1}^n p^*_{X_i}(x))+ \sum_{i=1}^n KL(p^*_{X_i}(x)||\mathcal{N}(0,1))\nonumber
		\end{align}
	}
	$\hfill\square$ 
\end{proof}

\subsection{Proof of Theorem \ref{thm:decompose_KL_PAD}}\label{sec:proof_theorem_decompose_KL_PAD}
\begin{proof}
	We can use the similar deduction in Theorem \ref{thm:decompose_KL_ID} and get Equation \eqref{equ:decompose_KL_group}.
	\begin{equation}
		\begin{aligned}
			\nonumber &KL(p^*_X(\x)||\mathcal{N}(0,I_n))\\
			\nonumber =&\mathbb{E}_{p^*_X(\x)}\Big[ \log\Big(\dfrac{p^*_X(\x)}{\prod_{i=1}^k p^*_{\bar{X}_i}(\x)} \dfrac{\prod_{i=1}^k p^*_{\bar{X}_i}(\x)}{\mathcal{N}(0,I_n)}\Big)\Big]\\
			= & I_g[p^*_X] + D_g[p^*_X]
		\end{aligned}
	\end{equation}
	Then we apply Theorem \ref{thm:decompose_KL_ID} on each $D_g^i[p^*_{\bar{X}_i}]$ and have 
	\begin{equation} \label{equ:decompose_D_g_i}
		\begin{aligned}
			& KL(p^*_{\bar{X}_i}(\x)||\mathcal{N}(0,I_l))\\
			= &  KL(p^*_{\bar{X}_i}(\x)||\prod_{j=1}^l p^*_{\bar{X}_{ij}}(x)) + \sum_{j=1}^l KL(p^*_{\bar{X}_{ij}}(x)||\mathcal{N}(0,1))
		\end{aligned}
	\end{equation}
	Finally, combining Equation \eqref{equ:decompose_KL_group} and \ref{equ:decompose_D_g_i} we can obtain Equation \eqref{equ:decompose_KL_3parts}.
	$\hfill\square$ 
\end{proof}

\section{Summary of Our Work}
The flowchart in Figure \ref{fig:flowchart} can help readers to have an overview of our work.
\begin{figure*}[h!]
	\centering
	\includegraphics[width=16.5cm]{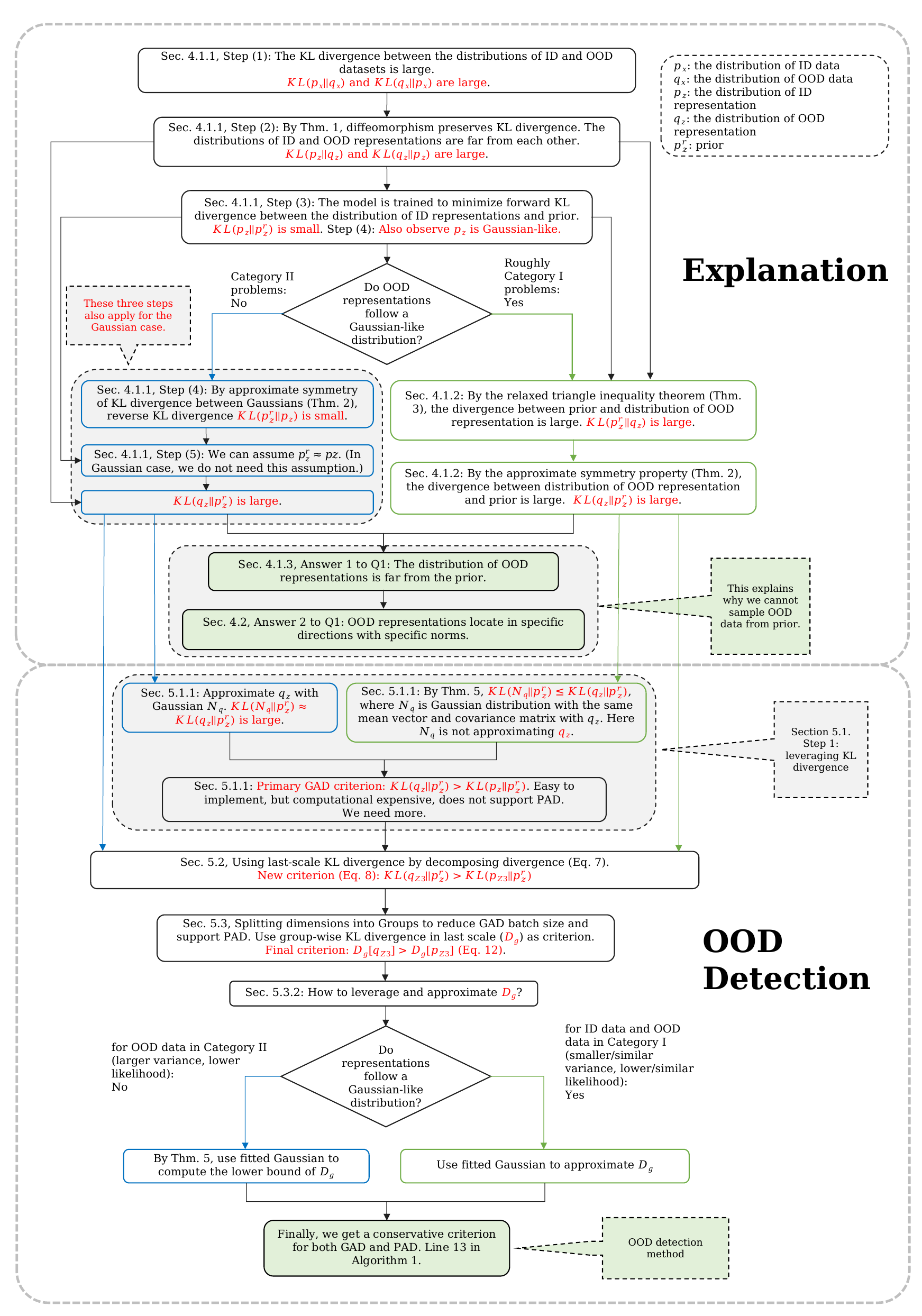}
	\vspace{-5pt}
	\caption{Overview of our method. The top half explains why we cannot sample OOD data from the flow-based model. The bottom half is for OOD detection method. Green lines are for the Gaussian case. Blue lines are for the non-Gaussian case. Please also refer to Figure \ref{fig:theory_and_algorithm} and \ref{fig:bigpicture2} in the main text. 
	}  
	\label{fig:flowchart}
\end{figure*}

\clearpage

\section{Benchmarks}\label{sec:benchmarks}
Here we briefly introduce the benchmarks used in our experiments:
\begin{enumerate}
	\item Constant: The Constant dataset consists of images with all pixels equal to the same constant $C\sim U\{0,255\}$.
	\item Uniform: The Uniform dataset consists of images with each pixel sampled independently from $U\{0,255\}$.
	\item MNIST \cite{mnist}: MNIST is a dataset of handwritten digits.
	\item FashionMNIST \cite{fashionmnist}: FashionMNIST is a dataset of images of clothes and shoes.
	\item notMNIST \cite{notmnist}: notMNIST is a dataset of fonts and extracting glyphs similar to MNIST. 
	\item KMNIST \cite{KMNIST}: KMNIST is a dataset of Japanese characters.
	\item Omniglot \cite{Omniglot}: Omniglot is a dataset of handwritten digits of a set of alphabets.
	\item CIFAR-10/100 \cite{cifar}: CIFAR-10/100 are datasets of natural images including animals and vehicles.
	\item SVHN \cite{svhn}: SVHN is a dataset of street view housing numbers.
	\item CelebA \cite{celeba}: CelebA is a dataset of face images of celebrities.
	\item TinyImageNet \cite{tinyimagenet}: TinyImageNet is a subset of ImageNet. 
	\item ImageNet32 \cite{imagenet}: Imagenet32 is a dataset of small images called the down-sampled version of Imagenet.
	\item LSUN \cite{LSUN}: LSUN is a dataset of scene categories including bedrooms, classroom, \textit{etc}.
\end{enumerate} 

All datasets are resized to $32\times 32 \times 3$ for consistency. The size of each test dataset is fixed to 10,000 for comparison. For grayscale datasets of size $28\times 28\times 1$, we replicate channels and pad zeros around images. We use the same method to process LSUN as the baseline method GOD2KS \cite{jiang2022revisiting}. 
See  Figure \ref{fig:dataset_examples} in the supplementary material for example images of different datasets.

\section{Model Details}\label{sec:appendix_model_details}

We use the released model (checkpoints) by the author of the baseline to conduct experiments if possible.
Otherwise, we train the model by ourselves.  

The authors of GAD baseline method (\textit{i.e.}, Ty-test) reimplement Glow with PyTorch and release only one model checkpoint trained on CIFAR-10 \cite{nalisnick2019detecting}. We use their model for CIFAR-10 vs others. For other problems, we train the official Glow model \cite{glowopenai} by ourselves.
For VAE, we train convolutional VAE and use sampled representation for all problems. 
Among baseline methods, only the authors of $L_{last}$ released their model checkpoints. We use their checkpoints to produce results on problems not evaluated in the original paper.


The Glow model consists of three stages, each containing 32 coupling layers with width 512. After each stage, the latent variables are split into two parts. One half is treated as the final representations and another half is processed by the next stage. 
We use additive coupling layers for grayscale datasets and CelebA and use affine coupling layers for SVHN and CIFAR-10. 
We find no difference between these two coupling layers for OOD detection.
All priors are standard Gaussian distribution except for CIFAR-10, which has learned mean and diagonal covariance. All models are trained using Adamax optimization method with a batch size of 64. The learning rate is increased from 0 up to 0.001 in the first 10 epochs and keeps invariable in the remaining epochs. Flow-based models are  resource consuming. We train Glow on FashionMNIST/SVHN/CelebA32 for 130/390/2000 epochs, respectively. The training curve of Glow on CelebA32 is shown in Figure \ref{fig:traing_curve}.
We have also conducted experiments using the checkpoint released by OpenAI \cite{glowopenai} for CIFAR-10 vs others. The results are similar.

\begin{figure}[t]
	\centering
	\centering
	\includegraphics[width=6cm]{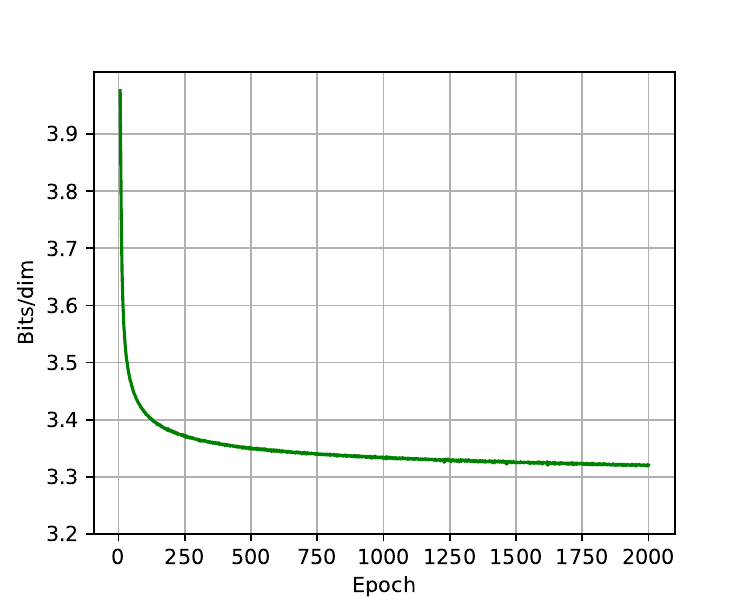}
	\label{fig:training_curve_celeba}
	\caption{The training curve of Glow on CelebA32.}
	\label{fig:traing_curve}
\end{figure}



For VAE, we use convolutional architecture in the encoder and decoder. The encoder consists of three $4\times 4 \times 64$ convolution layers. On top of convolutional layers, two dense layer heads output the mean $\bm{\mu}(\bm{x})$ and the standard variance $\bm{\sigma}(\bm{x})$ respectively. The decoder has the mirrored architecture as the encoder. All activations are LeakyReLU with $\alpha=0.3$. For FashionMNIST, SVHN, and CIFAR-10, we use 8-, 16- and 32-dimensional latent space, respectively. Models are trained using Adam without dropout. The learning rate is $5\times 1^{-4}$ with no decay. 

\textbf{Details of Baseline Methods}. 
\begin{enumerate}
	\item $\mathcal{S}$. 
	The authors of $\mathcal{S}$ modified the official Glow model by using zero padding and removing ActNorm layer \cite{2020InputComplexity}.
	They did not explain the reason for such modification. In principle, such modification to models should not affect the baseline method. Since the authors did not release their source code and model checkpoint, we reimplement $\mathcal{S}$ method using the official Glow model \cite{glowdeepmind} for those problems not evaluated in \cite{2020InputComplexity}. We also use FLIF \cite{flif} as the compressor, which is the best compressor in \cite{2020InputComplexity}. We find the performance of $\mathcal{S}$ degenerates on the official Glow model. 
	
	\item $L_{last}$. We use the model checkpoints released by the authors of $L_{last}$ for results not reported in the original publication
	\cite{understanding_anomaly_2020_nips}. 
	
	\item DoSE. The authors of DoSE did not release their source code and model. We reimplement DoSE$_{\text{SVM}}$ on the official Glow model for problems not evaluated in \cite{morningstar21KDE}. We use the same parameters as the original paper. The performance of DoSE on the official Glow model trained on SVHN is slightly better than the results in \cite{morningstar21KDE}. We also find the performance of DoSE degenerates severely on CelebA32 vs CIFAR-10/100 compared with the results reported in their original publication (below 75\% AUROC). We did not present these results in the main text. 
\end{enumerate}

\section{Discussion}\label{sec:more_discussion}



\textbf{Normality of Representations}.
The normality of ID and OOD representation facilitates our theoretical analysis and OOD detection algorithm on flow-based model. 
In our experiments, we find that the normality of OOD representation is a widely existing phenomenon under flow-based models. 
We are investigating the underlying reason.
Most importantly, our method performs even better on \textit{Category II} problems, although the criterion computes the lower bound of the KL divergence.

Both Flow-based model and VAE are trained to minimize KL divergence between $p_Z$ and prior. However, it seems that the normality of ID/OOD representations is a characteristic of flow-based model.

In principle, we can construct latents following any distribution and decode these latents back to data space to construct an OOD dataset. 
Note that such manipulation does not necessarily make our OOD detection method fail, although it can violate the
normality hypothesis. Besides, such manipulation is much more difficult to conduct than the data manipulations presented in this paper because we need the model parameters. 

\textbf{PAD Results}.
From the results of three Glow models trained on SVHN/CelebA/CIFAR-10 (Table \ref{tbl:PAD_all_results} in the main text), 
we can see that the more high-quality generated images are, the better performance our method can achieve.
This is consistent with our theoretical analysis. 
Our method has the a solid theoretical foundation.
We believe that our method can achieve better performance with the increasing success of flow-based model in the future.

\textbf{Limitation}.
Our method requires the model to capture the distribution of training data. 
For example, Glow trained on CIFAR-10 does not generate meaningful images. Thus, the KL divergence between the distributions of ID representation and prior is not small enough. So our theoretical analysis does not apply to this problem well. This is why our method does not achieve high AUROC on CIFAR-10 vs CIFAR-100.
We think it is hard to achieve high AUROC with a model which does not succeed in generating meaningful images (see Figure \ref{fig:generated_images_glow} in the supplementary material). In such a case, the model generates ``OOD data'' that differ from the training set. Besides, the similarity between CIFAR-10 and CIFAR-100 also brings obstacles to OOD methods.

There are two possible solutions for the most challenging problem CIFAR-10 vs CIFAR-100. The first one is to improve the model. Modeling data is a long-standing goal of unsupervised learning \cite{PRML}. Up to now, it is still hard to generate high-quality CIFAR-10-like images using unconditional flow-based models. The second possible solution is to use a more sensitive criterion to estimate KL divergence or dependence.
We leave this direction as future work. 

Ty-test applies to flow-based model, VAE, and auto-regressive model. Our method applies to models which learn independent or disentangled representations \cite{disentanglement_challenge_2019, higgins2018definition, eastwood2018a,higgins2017beta, factorVAE, isolating2018Chen, kumar2017variational}, not including auto-regressive model.

\textbf{Other Baselines}.

In this paper, we mainly choose recently proposed methods applicable to flow-based model as baselines.
We did not choose WAIC \cite{choi2018generative} whose results could not be reproduced by  Nalisnick \textit{et al} \cite{nalisnick2019detecting}. 
We did not choose the likelihood ratios method \cite{ren2019likelihood-ratio} as the baseline either for several reasons. First, in \cite{2020InputComplexity}, Serr\`a \textit{et al.}
interpret their method $\bm{\mathcal{S}}$ as a likelihood-ratio test statistic and achieve better performance than likelihood ratios. 
Second, the authors of the likelihood ratios method \cite{ren2019likelihood-ratio} did not report results on flow-based models. So we choose method $\bm{\mathcal{S}}$ rather than likelihood ratios as baseline.
Finally, we choose SOTA method DoSE as baseline, which is better than $\log p(\x)$, $\log p(\z)$, WAIC, and likelihood ratio as reported in \cite{morningstar21KDE}.


In \cite{jiang2022revisiting}, the authors propose GOD2KS mainly for GAD. In the appendix of \cite{jiang2022revisiting}, the authors also use data augmentation to support PAD using GOD2KS. However, they only report a few PAD results based on RealNVP. The PAD results of GOD2KS on CIFAR-10 vs SVHN/CelebA/LSUN with RealNVP are 85\%/57\%/46\% AUROCs, respectively. Our method achieves 82.6\%/85.2\%/99.2\% AUROCs on the same three problems with Glow. Due to this situation, we did not use GOD2KS as PAD baseline.

Just after we receive the first round of review comments, Osada \textit{et al.}  propose PRE method \cite{2023WACV-recons-error-OOD}, which uses reconstruction error and typicality-based penalty to perform point-wise anomaly detection with flow-based model. PRE uses the original Equation (4) in \cite{2023WACV-recons-error-OOD} as anomaly score. The larger the score, the more likely the input is OOD. We did not choose PRE as baseline for two reasons. First, the authors of PRE do not use as many dataset compositions in evaluation as ours. They also use ID datasets of different sizes. It is hard to compare two methods in this situation. Second, similar to Annulus Method, PRE can also be attacked by data manipulation \textbf{M1} (rescaling representations). For each OOD dataset $S=\{\x\}$, we can use data manipulation \textbf{M1} (see Subsection \ref{sec:attack_likelihood}) to construct an OOD dataset $S'=\{\x'=f^{-1}(\sqrt{d}\frac{f(\x)}{|f(\x)|})|\x\in S\}$ (see Figure \ref{fig:typical_set} and Figure \ref{fig:sample_images_rescale_to_typical_set_trained_glow_on_fashionmnist}). 
For input $\x'\in S'$, the representation $\z'=f(\x')$ locates in the typical set annulus of prior precisely and Equation (4) in \cite{2023WACV-recons-error-OOD} equals 0. This can make PRE method achieve near 0 AUROC.

\textbf{Other Comparisons}.
Explicit generative models, including autoregressive models, flow-based models, and VAEs, can provide users with likelihoods (or lower bound). An ideal explicit generative model should: 1) generate new data from the training data distribution and 2) provide likelihood indicating the confidence of whether the data belongs to the training data distribution.  
Implicit generative models (\textit{i.e.}, GAN) do not produce likelihood, so these two kinds of models are under different settings.
Commonly, explicit generative models are compared together in anomaly detection publications. All the baseline methods applying to flow-based model are compared with explicit generative models in evaluation. (\textit{e.g.}, \cite{choi2018generative, whyflowfailood, 2020InputComplexity, ren2019likelihood-ratio, understanding_anomaly_2020_nips, morningstar21KDE}). 

We notice that the existing hybrid model \cite{CVAE-GAN-OOD-2019} achieves better performance on leave-one-out setting on MNIST \footnote{The author of \cite{CVAE-GAN-OOD-2019} did not report experimental results on cross-dataset problems.}. It is unfair to compare a flow-based method with a hybrid model combining explicit and implicit generative models. 
For example, existing work has shown that the combination of GAN and flow-based model can improve the quality of the generated images of flow-based model.   For example, in Flow-GAN \cite{flow-GAN}, flow-based model is used to avoid mode collapse. Adversarial training is used to improve the image quality of flow-based model. The method is to sample noise $\z$ from the typical set of prior and use a discriminator to distinguish $f^{-1}(\z)$ and training data. In our data manipulation \textbf{M1} (rescaling representations to the typical set), we find that Glow trained by maximum likelihood estimation cannot expel OOD representation from the typical set of prior (see Subsection \ref{sec:attack_likelihood}). In Flow-GAN, adversarial training tends to compel Glow ($f^{-1}$) to map latents in the typical set of prior to in-distribution images. Importantly, our theoretical analysis also applies to flow-GAN whose loss function includes the basic loss function of Glow.
We did not conduct experiments on Flow-GAN \cite{flow-GAN} because Flow-GAN uses old flow-based models NICE \cite{dinh2014nice} and RealNVP \cite{dinh2016realnvp}. Besides, adversarial training would affect the divergences in flow-based model.
We will explore the properties of hybrid models which combine the SOTA flow-based model and GAN in the future. 

We conduct comprehensive experiments to evaluate our method on different dataset compositions falling into \textit{Category I} and \textit{II}. Flow-based models have different behaviors for these two categories of problems.
Commonly, one OOD detection method's performance may vary on different problems.
Several existing OOD detection methods have been evaluated with very few datasets (\textit{e.g.}, only CIFAR-10) as the training dataset. We did not compare our method with such methods for two reasons. First, there is no result reported on more problems. Second, our method requires the model to succeed in modeling the training dataset.  Unluckily, flow-based model is not as successful as other datasets on CIFAR-10. This affects our method. We think it is not comprehensive if using only CIFAR-10 as training dataset in evaluation. 

Researchers also propose ensembling algorithms for anomaly detection \cite{ensemble-OOD-zhao-2015}, which is orthogonal to our method. We plan to explore this direction in the future. 

\textbf{Models}.
We did not conduct more experiments on flow-based models with various architectures. In principle, a more expressive model can make the forward KL divergence smaller, and our method can benefit more.

Our theoretical analysis relies on the assumption that the KL divergence between the distributions of ID and OOD representations is large (see Section \ref{sec:general_case} in the main text). So our analysis does not apply to VAE and autoregressive models directly. According to the Brouwer Invariance of Domain Theorem \cite{Brouwer1911invarianceofdomain}, $R^n$ cannot be homeomorphic to $R^m$ if $n \neq m$ .
The Brouwer Invariance of Domain Theorem also implies that there is no dead neuron in flow-based model. Otherwise, we can construct diffeomorphism from high to low dimensional space. 
For VAE, a high-dimensional latent space may contain nearly dead neurons. This may reduce the performance of our method.
We did not conduct experiments on other VAE variations, \textit{e.g.}, $\beta$-VAE \cite{higgins2017beta}, FactorVAE \cite{factorVAE}, $\beta$-TCVAE \cite{isolating2018Chen},  and DIP-VAE \cite{kumar2017variational}. These variations add more regularization strength on disentanglement and have more independent representations than vanilla VAE \cite{locatello2019challenging}. 
We will conduct experiments on larger VAE models and variations in the future.

Finally, we use the model checkpoints released by baselines as long as possible. These released models should be tuned elaborately for their methods, so our method benefits less from fine-tuning.

\section{More Related Work}\label{sec:more_relatedwork}
\textbf{OOD Detection}.
In \cite{understanding_anomaly_2020_nips}, Schirrmeister \textit{et al.} find the likelihood contributed by the last scale of Glow ($L_{last}$) is  better than $\log p(\bm{x})$ for PAD. 
Their method relies on the decomposition of the likelihood (original Equation (3) in \cite{understanding_anomaly_2020_nips}). Such decomposition requires that the split two parts at each stage of Glow are independent. This may not hold for OOD data due to covariate shift. 
Experimental results show that the last-scale log-likelihood of OOD data may be larger than 0 (See Figure \ref{fig:logpx_last_scale_glow_SVHN_vs_others_hierarchical_model} in the supplementary material). So the criterion used by $L_{last}$ should not be explained as likelihood for OOD data. 
Finally, as shown in Table \ref{tbl:PAD_all_results}, $L_{last}$ is also affected by data manipulation.

\textbf{Theoretical Analysis}. Previous works \cite{papamakarios2019flow_model_survey, masked_autoregressive_flow_2017_nips} analyze the training objective of flow-based model in KL divergence form. We apply the property of diffeomorphism to investigate the divergences between distributions in flow-based models in the setting of OOD detection. We also propose new theorems on the properties of KL divergence between Gaussian distributions for further analysis.
Currently, there is no similar work on the properties of KL divergence between Gaussian distributions. Theorems \ref{thm:duality_small_KL_general}, and \ref{thm:triangle_n1_n2_n3} can be used as basic theorems in machine/deep learning and information theory. For example, after we post the manuscript containing the proofs of Theorems \ref{thm:duality_small_KL_general} and \ref{thm:triangle_n1_n2_n3}  on Arxiv \cite{zhang2021properties}, the relaxed triangle inequality (Theorem \ref{thm:triangle_n1_n2_n3}) has been used in constrained variational policy optimization for safe reinforcement learning \cite{liu2022constrainedSafeRL}.

\textbf{Other Approximator and Divergence Estimation}.
In principle, GMM can approximate a target density better than a single Gaussian distribution \cite{2001Advanced}. We tried to use GMM to approximate the distribution of representations and use Monte Carlo sampling to estimate the KL divergence between GMMs. The results show GMM is worse than using Gaussian distribution for OOD detection. The reasons are twofold: a) it is inappropriate to use GMM for modeling ID representations that follow a Gaussian-like distribution. b) the batch size is too small (usually 5 $\sim 10$) to estimate the parameters of GMM. 

We also tried the SOTA $\phi$-divergence estimation method applicable for VAE, \textit{i.e.} RAM-MC \cite{Rubenstein2019estimation}. Results show that RAM-MC can also be affected by data manipulation \textbf{M2} (adjusting contrast, see Section \ref{sec:attack_likelihood}) \footnote{This does not prove that RAM-MC is not applicable to general-purpose divergence estimation.}. 

\textbf{OOD Sampling}. In this paper, we sample OOD data using the fitted Gaussian from OOD representations to verify that OOD representations reside in specific directions. Such directions can be partially captured by the sample mean and sample covariance. 
In \cite{gambardella2019transflow}, the authors sample noise $\varepsilon\sim \n(\bm{\widetilde{\mu}}, I)$ and generate OOD data $f^{-1}(\varepsilon)$ using flow-based model, where $\widetilde{\bm{\mu}}$ is the sample mean of OOD representations. They did not use the sample covariance of OOD representations. Their manuscript is released contemporaneously with the first edition of this paper \cite{OOD_yufeng_2020_v1}.
Some researchers have explored other OOD sampling methods. Sinha \textit{et al.} use various operations including Jigsaw, Stitching on normal images to generate negative data \cite{negative-data-sampling2021}. These negative data can be used to help GAN to improve generation quality and OOD detection ability. However, their performance of OOD detection on CIFAR-10 vs others is worse than our method. Dionelis \textit{et al}. propose to generate samples \cite{Dionelis_2020} on the boundary of the support of data distributions which is learned by flow-based model. Their method does not modify the parameters of flow-based model and has the same OOD detection ability as the original flow-based model. In this paper, we sample OOD data in order to verify that OOD representations reside in specific directions that can be partially captured by the sample mean and covariance of representations.
We will explore more work on OOD sampling in the future.

\textbf{Local Pixel Dependence}. In \cite{whyflowfailood}, Kirichenko \textit{et al.} reshape the representations of flow-based models to the original input shape ($32\times 32\times 3$) and analyze the induction biases of flow-based model. Their work reveals the reshaped representation manifests local pixel dependence. 
Our work show that the representations with a raw shape ($4\times 4\times 48$) also manifest local pixel dependence.

\textbf{OOD Detection With Auxiliary Data}.
OOD detection can be improved with the help of an auxiliary outlier dataset.
In \cite{understanding_anomaly_2020_nips}, Schirrmeister \textit{et al.} improve likelihood-ratio-based method by the help of a huge outlier dataset (80 Million Tiny ImageNet). 
This is not unsupervised learning due to the exposure to outliers in training as like \cite{hendrycks2018deep}. Besides, the huge outlier dataset includes almost all the image classes in the testing phase. We did not compare with such methods  due to different problem settings.

\textbf{Classification of Problems}. We classify OOD problems into \textit{Category I} and \textit{II}  according to the variance and likelihoods of datasets. This criterion is roughly similar to the complexity used in \cite{2020InputComplexity}. See Figure \ref{fig:complexity_length} in the supplementary material for details.


\section{More Experimental Results} \label{sec:more_results}

\subsection{GAD Results on Glow}\label{appendix:sec:GADresults}

\textbf{FashionMNIST vs Others}. 
Table \ref{tbl:glow_GAD_many_baselines} shows the GAD results of Glow trained on FashionMNIST.
The results of baselines are referenced from \cite{nalisnick2019detecting}, in which the authors use bootstrap method to establish thresholds.
We use a low false positive rate to establish a threshold $t$ (see Algorithm \ref{alg:GAD}) and then compute the corresponding true positive rate with $t$. 
Take FashionMNIST vs MNIST with batch size $m=2$ as example. We find the threshold $t$ corresponding to false positive rate 0.01 which is the second lowest one among all the baselines (column 2 in Table \ref{tbl:glow_GAD_many_baselines}). Then we use $t$ to compute the corresponding true positive rate $0.43\pm 0.02$. For larger batch sizes ($m=10, 25$), we set the thresholds $t$, achieving a  most rigorous 0 false positive rate.

\textbf{SVHN/CIFAR-10/CelebA vs Others}.
Table \ref{tbl:glow_fashionmnist_svhn_cifar10_celeba_DOCR_TC_M_bs_5_10_S2}, \ref{tbl:glow_fashionmnist_svhn_cifar10_celeba_DOCR_TC_M_bs_5_10_S2_vs_annulus_method} and \ref{tbl:glow_fashionmnist_svhn_cifar10_celeba_DOCR_TC_M_EER_bs_5_10_S2} shows numerical GAD results corresponding to Figure \ref{fig:GAD_results_bar} in the main text. Table \ref{tbl:glow_fashionmnist_svhn_cifar10_celeba_DOCR_TC_M_bs_2_4_S2} shows GAD results with smaller batch sizes 2 and 4. 

\textbf{CelebA vs CIFAR-10/100} are  challenging for Ty-test. In principle, if the train and test split of ID dataset have coinciding likelihoods, the worst AUROC of Ty-test should be around 50\%.
But Ty-test only achieves 1.7\% and 2.9\% AUROCs on these two problems.
The reasons are two-fold.
First, CIFAR-10/100 have  coinciding likelihoods with CelebA. Please see Figure \ref{fig:logpx_glow_celeba_vs_others} for details. Second, we find it is hard to make the likelihood distributions of CelebA train and test split fit very well on the official Glow model even within 2,000 epochs (see Figure \ref{fig:traing_curve} for training curve). 
The likelihoods of CIFAR-10/100 are closer to CelebA train set than the CelebA test set. This misleads Ty-test to make wrong decisions (below 10\% AUROC).
This also makes Ty-test perform worse when batch size is larger because a larger batch size eliminates randomness (see Table \ref{tbl:glow_fashionmnist_svhn_cifar10_celeba_DOCR_TC_M_bs_5_10_S2}).
On the contrary, our method is not affected by such possible underfitting or overfitting.

\textbf{Comparison with GOD2KS}. Table \ref{tbl:glow_fashionmnist_svhn_cifar10_celeba_DOCR_TC_M_bs_5_10_S2_vs_GOD2KS} shows the comparison of our method with GOD2KS.  Our method is better.

\begin{table*} [h]
	\vspace{-0pt}
	\scriptsize
	\caption{GAD Results on Glow trained on FashionMNIST. The ID column reflects FPR (ideally should be 0) and the MNIST and notMNIST columns are TPR (ideally should be 1).  The results of baselines are referenced from \cite{nalisnick2019detecting}. Notable failures (under 0.5 TPR) are underlined.} 
	\label{tbl:glow_GAD_many_baselines}  
	\begin{center}  
		\begin{tabular}{lrrr| rrr|rrr}  
			\toprule[1pt]
			& \multicolumn{3}{c}{$m$=2}
			&\multicolumn{3}{c}{$m$=10} 
			&\multicolumn{3}{c}{$m$=25}  
			\\
			\hline   
			Method & \multicolumn{1}{c}{ID} & \multicolumn{1}{c}{MNIST} & \multicolumn{1}{c}{notMNIST} & \multicolumn{1}{c}{ID} &\multicolumn{1}{c}{MNIST}  & \multicolumn{1}{c}{notMNIST} & \multicolumn{1}{c}{ID} & \multicolumn{1}{c}{MNIST} & \multicolumn{1}{c}{notMNIST}\\\hline

			Ty-test 
			& 0.02$\pm$0.01 & \underline{0.14$\pm$0.10} & \underline{0.08$\pm$0.04} 
			&0.02$\pm$0.02 &\textbf{1.00$\pm$0.00} & 0.69$\pm$0.11
			&0.01$\pm$0.00 & \textbf{1.00$\pm$0.00} & \textbf{1.00$\pm$0.00} \\
			$t$-test 
			&0.01$\pm$0.00 &\underline{0.08$\pm$0.00} &0.06$\pm$0.00
			&0.01$\pm$0.00 & \textbf{1.00$\pm$0.00} & 0.67$\pm$0.01
			&0.01$\pm$0.00 & \textbf{1.00$\pm$0.00} &0.99$\pm$0.00
			\\
			KS-Test 
			&\textbf{0.00$\pm$0.00} & \underline{0.00$\pm$0.00} & \underline{0.00$\pm$0.00}
			&0.01$\pm$0.00 & \textbf{1.00$\pm$0.00} & 0.61$\pm$0.01
			&\textbf{0.00$\pm$0.00} & \textbf{1.00$\pm$0.00} & 0.98$\pm$0.01
			\\
			
			Max Mean Dis. 
			& 0.05$\pm$0.02 & \underline{0.17$\pm$0.06} & \underline{0.04$\pm$0.03} 
			&0.02$\pm$0.02 & 0.63$\pm$0.12 & \underline{0.37$\pm$0.24}
			&0.04$\pm$0.04 & \textbf{1.00$\pm$0.00} &\textbf{1.00$\pm$0.00}
			
			\\
			Kern. Stein Dis.
			&0.05$\pm$0.05 & \underline{0.16$\pm$0.14} &\underline{0.01$\pm$0.01}
			&0.01$\pm$0.01 & \underline{0.21$\pm$0.11} & \underline{0.01$\pm$0.00} 
			&0.02$\pm$0.03&0.76$\pm$0.21 & \underline{0.00$\pm$0.00}
			
			\\
			Annulus Method
			&0.01$\pm$0.01 & \underline{0.00$\pm$0.00} & \textbf{0.96$\pm$0.03}
			&0.02$\pm$0.00 & \underline{0.00$\pm$0.00} & \textbf{1.00$\pm$0.00}
			&0.03$\pm$0.03 &\underline{0.00$\pm$0.00} & \textbf{1.00$\pm$0.00}
			\\
			
			\PADmethod
			&0.01$\pm$0.00 &\underline{\textbf{0.43$\pm$0.02}} & 0.95$\pm$0.00
			&\textbf{0.00$\pm$0.00} &\textbf{1.00$\pm$0.00} & \textbf{1.00$\pm$0.00}
			&\textbf{0.00$\pm$0.00} & \textbf{1.00$\pm$0.00} & \textbf{1.00$\pm$0.00}\\
			\bottomrule[1pt]   			
		\end{tabular}  
	\end{center}  
\end{table*}

\begin{table*} [ht]
	\scriptsize
	\caption{GAD Results (AUROC and AUPR in percentage) of \PADmethod\ and Ty-test on Glow with batch sizes 5 and 10. The higher  the better.
		The performance of Ty-test on CelebA vs CIFAR-10/100 decreases when the batch size is larger. See our explanation in Section \ref{sec:GAD_results} in the main text.
	} 
	\label{tbl:glow_fashionmnist_svhn_cifar10_celeba_DOCR_TC_M_bs_5_10_S2}  
	\begin{center}  
		\begin{tabular}{cllr rr r|r rr r}  
			\toprule[1pt]
			\multirow{3}*{ID$\downarrow$}&\multirow{3}*{OOD$\downarrow$}&Batch size$\rightarrow$&
			\multicolumn{4}{c}{$m$=5}  &\multicolumn{4}{c}{$m$=10}
			\\
			\cline{3-11}   
			& & Method$\rightarrow$& \multicolumn{2}{c}{\PADmethod} & \multicolumn{2}{c}{Ty-test} 
			& \multicolumn{2}{c}{\PADmethod} & \multicolumn{2}{c}{Ty-test} \\
			\cline{3-11} 
			&&Metric$\rightarrow$
			&  \multicolumn{1}{c}{AUROC} 
			& \multicolumn{1}{c}{AUPR}
			&  \multicolumn{1}{c}{AUROC} 
			& \multicolumn{1}{c}{AUPR}
			&  \multicolumn{1}{c}{AUROC} 
			& \multicolumn{1}{c}{AUPR}
			&  \multicolumn{1}{c}{AUROC} 
			& \multicolumn{1}{c}{AUPR}\\
			\hline
			
			\multirow{5}*{\rotatebox{90}{\tabincell{c}{Fash.}}}

			&\multicolumn{2}{l}{Constant}
			& \textbf{100.0$\pm$0.0} & \textbf{100.0$\pm$0.0} & 42.1$\pm$0.3 & 42.1$\pm$0.2 & \textbf{100.0$\pm$0.0} & \textbf{100.0$\pm$0.0} & 41.7$\pm$0.5 & 41.9$\pm$0.2 \\


			&\multicolumn{2}{l}{MNIST} 
			& \textbf{99.8$\pm$0.0} & \textbf{99.8$\pm$0.0} & 97.6$\pm$0.1 & 95.8$\pm$0.5 & \textbf{100.0$\pm$0.0} & \textbf{100.0$\pm$0.0} & 99.7$\pm$0.1 & 99.6$\pm$0.1 \\
			
			&\multicolumn{2}{l}{MNIST-C(10.0)}
			& \textbf{100.0$\pm$0.0} & \textbf{100.0$\pm$0.0} & 88.2$\pm$0.3 & 81.8$\pm$0.2 & \textbf{100.0$\pm$0.0} & \textbf{100.0$\pm$0.0} & 95.8$\pm$0.5 & 93.5$\pm$1.2 \\
			& \multicolumn{2}{l}{notMNIST} 
			& \textbf{100.0$\pm$0.0} & \textbf{100.0$\pm$0.0} & 77.5$\pm$0.3 & 74.6$\pm$0.4 & \textbf{100.0$\pm$0.0} & \textbf{100.0$\pm$0.0} & 87.1$\pm$0.2 & 85.4$\pm$0.4 \\
			& \multicolumn{2}{l}{notMNIST-C(0.005)}
			& \textbf{100.0$\pm$0.0} & \textbf{100.0$\pm$0.0} & 25.0$\pm$0.6 & 35.8$\pm$0.2 & \textbf{100.0$\pm$0.0} & \textbf{100.0$\pm$0.0} & 23.8$\pm$0.4 & 35.5$\pm$0.1 \\
			\hline
			\multirow{13}*{\rotatebox{90}{SVHN}} 
			
			& \multicolumn{2}{l}{Constant} 
			& \textbf{100.0$\pm$0.0} & \textbf{100.0$\pm$0.0} & \textbf{100.0$\pm$0.0} & \textbf{100.0$\pm$0.0} & \textbf{100.0$\pm$0.0} & \textbf{100.0$\pm$0.0} & \textbf{100.0$\pm$0.0} & \textbf{100.0$\pm$0.0} \\
			
			& \multicolumn{2}{l}{Uniform} 
			& \textbf{100.0$\pm$0.0} & \textbf{100.0$\pm$0.0} & \textbf{100.0$\pm$0.0} & \textbf{100.0$\pm$0.0} & \textbf{100.0$\pm$0.0} & \textbf{100.0$\pm$0.0} & \textbf{100.0$\pm$0.0} & \textbf{100.0$\pm$0.0} \\

			& \multicolumn{2}{l}{Uniform-C(0.008)} 
			& \textbf{100.0$\pm$0.0} & \textbf{100.0$\pm$0.0} & 13.5$\pm$0.5 & 33.0$\pm$0.1 & \textbf{100.0$\pm$0.0} & \textbf{100.0$\pm$0.0} & 11.1$\pm$0.5 & 32.6$\pm$0.1 \\

			& \multicolumn{2}{l}{CelebA} 
			& \textbf{100.0$\pm$0.0} & \textbf{100.0$\pm$0.0} & \textbf{100.0$\pm$0.0} & \textbf{100.0$\pm$0.0} & \textbf{100.0$\pm$0.0} & \textbf{100.0$\pm$0.0} & \textbf{100.0$\pm$0.0} & \textbf{100.0$\pm$0.0} \\
			& \multicolumn{2}{l}{CelebA-C(0.08)}
			& \textbf{99.7$\pm$0.0} & \textbf{99.7$\pm$0.0} & 50.7$\pm$0.7 & 47.0$\pm$0.3 & \textbf{100.0$\pm$0.0} & \textbf{100.0$\pm$0.0} & 55.2$\pm$0.4 & 49.1$\pm$0.3 \\
			& \multicolumn{2}{l}{CIFAR-10} 
			& \textbf{100.0$\pm$0.0} & \textbf{100.0$\pm$0.0} & \textbf{100.0$\pm$0.0} & \textbf{100.0$\pm$0.0} & \textbf{100.0$\pm$0.0} & \textbf{100.0$\pm$0.0} & \textbf{100.0$\pm$0.0} & \textbf{100.0$\pm$0.0} \\
			& \multicolumn{2}{l}{CIFAR-10-C(0.12)}
			& \textbf{97.0$\pm$0.2} & \textbf{97.4$\pm$0.2} & 31.6$\pm$0.5 & 37.9$\pm$0.2 & \textbf{99.3$\pm$0.1} & \textbf{99.4$\pm$0.1} & 25.0$\pm$0.3 & 35.6$\pm$0.1 \\
			& \multicolumn{2}{l}{CIFAR-100} 
			& \textbf{100.0$\pm$0.0} & \textbf{100.0$\pm$0.0} & \textbf{100.0$\pm$0.0} & \textbf{100.0$\pm$0.0} & \textbf{100.0$\pm$0.0} & \textbf{100.0$\pm$0.0} & \textbf{100.0$\pm$0.0} & \textbf{100.0$\pm$0.0} \\
			& \multicolumn{2}{l}{CIFAR-100-C(0.12)}
			& \textbf{96.9$\pm$0.1} & \textbf{97.3$\pm$0.1} & 35.3$\pm$0.5 & 39.4$\pm$0.2 & \textbf{98.9$\pm$0.3} & \textbf{99.0$\pm$0.3} & 27.2$\pm$0.8 & 36.3$\pm$0.2 \\
			& \multicolumn{2}{l}{ImageNet32} 
			& \textbf{100.0$\pm$0.0} & \textbf{100.0$\pm$0.0} & \textbf{100.0$\pm$0.0} & \textbf{100.0$\pm$0.0} & \textbf{100.0$\pm$0.0} & \textbf{100.0$\pm$0.0} & \textbf{100.0$\pm$0.0} & \textbf{100.0$\pm$0.0} \\
			& \multicolumn{2}{l}{ImageNet32-C(0.07)}
			& \textbf{99.8$\pm$0.0} & \textbf{99.8$\pm$0.0} & 45.5$\pm$0.9 & 46.0$\pm$0.5 & \textbf{100.0$\pm$0.0} & \textbf{100.0$\pm$0.0} & 42.1$\pm$0.7 & 44.1$\pm$0.5 \\
			& \multicolumn{2}{l}{LSUN}
			&\textbf{100.0$\pm$0.0} & \textbf{100.0$\pm$0.0} & \textbf{100.0$\pm$0.0} & \textbf{100.0$\pm$0.0} &\textbf{100.0$\pm$0.0} & \textbf{100.0$\pm$0.0} & \textbf{100.0$\pm$0.0} & \textbf{100.0$\pm$0.0}\\ 
			& \multicolumn{2}{l}{LSUN-C(0.06)}
			&\textbf{99.9$\pm$0.0} & \textbf{99.0$\pm$0.0} & 42.8$\pm$0.6 & 42.5$\pm$0.3 
			&\textbf{100.0$\pm$0.0} &\textbf{100.0$\pm$0.0} & 42.3$\pm$0.5 & 42.2$\pm$0.1\\
			\hline
			\multirow{11}*{\rotatebox{90}{CIFAR-10}}
			
			& \multicolumn{2}{l}{Constant}
			& \textbf{100.0$\pm$0.0} & \textbf{100.0$\pm$0.0} & \textbf{100.0$\pm$0.0} & \textbf{100.0$\pm$0.0} & \textbf{100.0$\pm$0.0} & \textbf{100.0$\pm$0.0} & \textbf{100.0$\pm$0.0} & \textbf{100.0$\pm$0.0} \\
			
			& \multicolumn{2}{l}{Uniform}
			& \textbf{100.0$\pm$0.0} & \textbf{100.0$\pm$0.0} & \textbf{100.0$\pm$0.0} & \textbf{100.0$\pm$0.0} & \textbf{100.0$\pm$0.0} & \textbf{100.0$\pm$0.0} & \textbf{100.0$\pm$0.0} & \textbf{100.0$\pm$0.0} \\
			& \multicolumn{2}{l}{Uniform-C(0.02)}
			& \textbf{100.0$\pm$0.0} & \textbf{100.0$\pm$0.0} & 11.5$\pm$0.0 & 32.9$\pm$0.0 & \textbf{100.0$\pm$0.0} & \textbf{100.0$\pm$0.0} & 9.3$\pm$0.0 & 32.5$\pm$0.0 \\
			
			& \multicolumn{2}{l}{CelebA}
			& 99.2$\pm$0.1 & 99.4$\pm$0.1 & \textbf{100.0$\pm$0.0} & \textbf{100.0$\pm$0.0} & \textbf{100.0$\pm$0.0} & \textbf{100.0$\pm$0.0} & \textbf{100.0$\pm$0.0} & \textbf{100.0$\pm$0.0} \\
			& \multicolumn{2}{l}{CelebA-C(0.3)}
			& \textbf{84.3$\pm$0.3} & \textbf{84.4$\pm$0.4} & 28.4$\pm$0.5 & 36.7$\pm$0.2 & \textbf{94.5$\pm$0.3} & \textbf{94.7$\pm$0.3} & 23.5$\pm$0.5 & 35.2$\pm$0.1 \\
			& \multicolumn{2}{l}{ImageNet32} 
			& 90.0$\pm$0.2 & 92.1$\pm$0.1 & \textbf{99.2$\pm$0.1} & \textbf{99.3$\pm$0.1} & 95.0$\pm$0.4 & 96.2$\pm$0.2 & \textbf{100.0$\pm$0.0} & \textbf{100.0$\pm$0.0} \\
			& \multicolumn{2}{l}{ImageNet32-C(0.3)}
			& \textbf{72.0$\pm$0.3} & \textbf{72.6$\pm$0.4} & 40.9$\pm$0.4 & 43.2$\pm$0.2 & \textbf{74.3$\pm$0.6} & \textbf{74.8$\pm$0.8} & 32.0$\pm$0.7 & 38.5$\pm$0.3 \\
			& \multicolumn{2}{l}{SVHN} 
			& 97.6$\pm$0.2 & 97.8$\pm$0.2 & \textbf{98.6$\pm$0.1} & \textbf{98.4$\pm$0.1} & 99.8$\pm$0.0 & 99.8$\pm$0.0 & \textbf{99.9$\pm$0.1} & \textbf{99.9$\pm$0.1} \\
			& \multicolumn{2}{l}{SVHN-C(2.0)}
			& \textbf{100.0$\pm$0.0} & \textbf{100.0$\pm$0.0} & 33.5$\pm$0.4 & 61.0$\pm$0.2 & \textbf{100.0$\pm$0.0} & \textbf{100.0$\pm$0.0} & 27.2$\pm$0.5 & 58.2$\pm$0.1 \\
			& \multicolumn{2}{l}{LSUN}
			&\textbf{100.0$\pm$0.0}&\textbf{100.0$\pm$0.0} & 99.9$\pm$0.0 & 99.9$\pm$0.0
			&\textbf{100.0$\pm$0.0}&\textbf{100.0$\pm$0.0}&\textbf{100.0$\pm$0.0}&\textbf{100.0$\pm$0.0}\\
			& \multicolumn{2}{l}{LSUN-C(0.3)}
			& \textbf{90.0$\pm$0.2} & \textbf{90.8$\pm$0.2} & 52.2$\pm$0.8 & 48.8$\pm$0.4
			& \textbf{91.2$\pm$0.2} & \textbf{92.0$\pm$0.2} & 56.6$\pm$0.4 & 51.3$\pm$0.3\\
			\hline
			
			\multirow{10}*{\rotatebox{90}{CelebA}} 
			& \multicolumn{2}{l}{Constant}
			& \textbf{100.0$\pm$0.0} & \textbf{100.0$\pm$0.0} & \textbf{100.0$\pm$0.0} & \textbf{100.0$\pm$0.0} & \textbf{100.0$\pm$0.0} & \textbf{100.0$\pm$0.0} & \textbf{100.0$\pm$0.0} & \textbf{100.0$\pm$0.0} \\
			& \multicolumn{2}{l}{Uniform}
			& \textbf{100.0$\pm$0.0} & \textbf{100.0$\pm$0.0} & \textbf{100.0$\pm$0.0} & \textbf{100.0$\pm$0.0} & \textbf{100.0$\pm$0.0} & \textbf{100.0$\pm$0.0} & \textbf{100.0$\pm$0.0} & \textbf{100.0$\pm$0.0} \\
			& \multicolumn{2}{l}{Uniform-C(0.012)}
			& \textbf{100.0$\pm$0.0} & \textbf{100.0$\pm$0.0} & 36.2$\pm$0.7 & 39.6$\pm$0.2 & \textbf{100.0$\pm$0.0} & \textbf{100.0$\pm$0.0} & 30.9$\pm$0.7 & 37.9$\pm$0.2 \\
			& \multicolumn{2}{l}{CIFAR-10}
			& \textbf{99.6$\pm$0.0} & \textbf{99.6$\pm$0.0} & 7.2$\pm$0.2 & 31.4$\pm$0.0 & \textbf{100.0$\pm$0.0} & \textbf{100.0$\pm$0.0} & 1.7$\pm$0.1 & 30.8$\pm$0.0 \\
			& \multicolumn{2}{l}{CIFAR-100} 
			& \textbf{99.8$\pm$0.0} & \textbf{99.8$\pm$0.0} & 9.5$\pm$0.3 & 31.8$\pm$0.1 & \textbf{100.0$\pm$0.0} & \textbf{100.0$\pm$0.0} & 2.9$\pm$0.2 & 30.9$\pm$0.0 \\
			& \multicolumn{2}{l}{ImageNet32}
			& \textbf{100.0$\pm$0.0} & \textbf{100.0$\pm$0.0} & 78.1$\pm$0.4 & 85.6$\pm$0.3 & \textbf{100.0$\pm$0.0} & \textbf{100.0$\pm$0.0} & 83.9$\pm$0.4 & 89.6$\pm$0.2 \\
			& \multicolumn{2}{l}{ImageNet32-C(0.2)}
			& \textbf{100.0$\pm$0.0} & \textbf{100.0$\pm$0.0} & 26.0$\pm$0.3 & 36.4$\pm$0.2  	
			& \textbf{100.0$\pm$0.0} & \textbf{100.0$\pm$0.0} & 18.2$\pm$0.3 & 33.8$\pm$0.0\\
			& \multicolumn{2}{l}{SVHN} 
			& \textbf{100.0$\pm$0.0} & \textbf{100.0$\pm$0.0} & 78.7$\pm$0.3 & 73.3$\pm$0.9 & \textbf{100.0$\pm$0.0} & \textbf{100.0$\pm$0.0} & 86.6$\pm$0.8 & 83.3$\pm$1.4 \\
			& \multicolumn{2}{l}{SVHN-C(1.8)} 
			& \textbf{100.0$\pm$0.0} & \textbf{100.0$\pm$0.0} & 3.5$\pm$0.2 & 31.0$\pm$0.0 & \textbf{100.0$\pm$0.0} & \textbf{100.0$\pm$0.0} & 0.5$\pm$0.1 & 30.7$\pm$0.0 \\
			& \multicolumn{2}{l}{LSUN}
			& \textbf{100.0$\pm$0.0} & \textbf{100.0$\pm$0.0} & 65.4$\pm$0.3 & 64.3$\pm$0.2  	
			& \textbf{100.0$\pm$0.0} & \textbf{100.0$\pm$0.0} & 71.2$\pm$0.3 & 70.0$\pm$0.5\\
			
			\hline
			&\textbf{average}& & \textbf{98.1} & \textbf{98.2} & 64.6 & 69.0 & \textbf{98.8} & \textbf{98.9} & 64.0 & 68.4\\
			\bottomrule[1pt]   			
		\end{tabular}  
	\end{center}  
\end{table*}

\begin{table*} [ht]
	\scriptsize
	\caption{GAD Results (AUROC and AUPR in percentage) of \PADmethod\ and Annulus Method (R) (reimplementation) on Glow with batch sizes 5 and 10. The higher  the better.  \textit{Our reimplementation of Annulus Method achieves much better results than that reported in \cite{nalisnick2019detecting} (referenced by Table \ref{tbl:glow_GAD_many_baselines})}.
	} 
	\label{tbl:glow_fashionmnist_svhn_cifar10_celeba_DOCR_TC_M_bs_5_10_S2_vs_annulus_method}  
	\begin{center}  
		\begin{tabular}{cllr rr r|r rr r}  
			\toprule[1pt]
			\multirow{3}*{ID$\downarrow$}&\multirow{3}*{OOD$\downarrow$}&Batch size$\rightarrow$&
			\multicolumn{4}{c}{$m$=5}  &\multicolumn{4}{c}{$m$=10}
			\\
			\cline{3-11}   
			& & Method$\rightarrow$& \multicolumn{2}{c}{\PADmethod} & \multicolumn{2}{c}{Annulus Method (R)} 
			& \multicolumn{2}{c}{\PADmethod} & \multicolumn{2}{c}{Annulus Method (R)} \\
			\cline{3-11} 
			&&Metric$\rightarrow$
			&  \multicolumn{1}{c}{AUROC} 
			& \multicolumn{1}{c}{AUPR}
			&  \multicolumn{1}{c}{AUROC} 
			& \multicolumn{1}{c}{AUPR}
			&  \multicolumn{1}{c}{AUROC} 
			& \multicolumn{1}{c}{AUPR}
			&  \multicolumn{1}{c}{AUROC} 
			& \multicolumn{1}{c}{AUPR}\\
			\hline
			
			\multirow{5}*{\rotatebox{90}{\tabincell{c}{Fash.}}}

			&\multicolumn{2}{l}{Constant}
			& \textbf{100.0$\pm$0.0} & \textbf{100.0$\pm$0.0} & \textbf{100.0$\pm$0.0} & \textbf{100.0$\pm$0.0} & \textbf{100.0$\pm$0.0} & \textbf{100.0$\pm$0.0} & \textbf{100.0$\pm$0.0} & \textbf{100.0$\pm$0.0} \\


			&\multicolumn{2}{l}{MNIST} 
			& \textbf{99.8$\pm$0.0} & \textbf{99.8$\pm$0.0} & 98.6$\pm$0.0 & 98.7$\pm$0.0 & \textbf{100.0$\pm$0.0} & \textbf{100.0$\pm$0.0} & 99.9$\pm$0.0 & 99.9$\pm$0.0 \\
			
			&\multicolumn{2}{l}{MNIST-C(10.0)}
			& \textbf{100.0$\pm$0.0} & \textbf{100.0$\pm$0.0} & \textbf{100.0$\pm$0.0} & \textbf{100.0$\pm$0.0} & \textbf{100.0$\pm$0.0} & \textbf{100.0$\pm$0.0} & \textbf{100.0$\pm$0.0} & \textbf{100.0$\pm$0.0} \\
			
			& \multicolumn{2}{l}{notMNIST} 
			& \textbf{100.0$\pm$0.0} & \textbf{100.0$\pm$0.0} & \textbf{100.0$\pm$0.0} & \textbf{100.0$\pm$0.0} & \textbf{100.0$\pm$0.0} & \textbf{100.0$\pm$0.0} & \textbf{100.0$\pm$0.0} & \textbf{100.0$\pm$0.0} \\
			& \multicolumn{2}{l}{notMNIST-C(0.005)}
			& \textbf{100.0$\pm$0.0} & \textbf{100.0$\pm$0.0} & \textbf{100.0$\pm$0.0}& \textbf{100.0$\pm$0.0} & \textbf{100.0$\pm$0.0} & \textbf{100.0$\pm$0.0} & \textbf{100.0$\pm$0.0} & \textbf{100.0$\pm$0.0} \\
			\hline
			\multirow{13}*{\rotatebox{90}{SVHN}} 
			
			& \multicolumn{2}{l}{Constant} 
			& \textbf{100.0$\pm$0.0} & \textbf{100.0$\pm$0.0} & \textbf{100.0$\pm$0.0} & \textbf{100.0$\pm$0.0} & \textbf{100.0$\pm$0.0} & \textbf{100.0$\pm$0.0} & \textbf{100.0$\pm$0.0} & \textbf{100.0$\pm$0.0} \\
			
			& \multicolumn{2}{l}{Uniform} 
			& \textbf{100.0$\pm$0.0} & \textbf{100.0$\pm$0.0} & \textbf{100.0$\pm$0.0} & \textbf{100.0$\pm$0.0} & \textbf{100.0$\pm$0.0} & \textbf{100.0$\pm$0.0} & \textbf{100.0$\pm$0.0} & \textbf{100.0$\pm$0.0} \\

			& \multicolumn{2}{l}{Uniform-C(0.008)} 
			& \textbf{100.0$\pm$0.0} & \textbf{100.0$\pm$0.0} & \textbf{100.0$\pm$0.0} & \textbf{100.0$\pm$0.0} & \textbf{100.0$\pm$0.0} & \textbf{100.0$\pm$0.0} & \textbf{100.0$\pm$0.0} & \textbf{100.0$\pm$0.0} \\

			& \multicolumn{2}{l}{CelebA} 
			& \textbf{100.0$\pm$0.0} & \textbf{100.0$\pm$0.0} & \textbf{100.0$\pm$0.0} & \textbf{100.0$\pm$0.0} & \textbf{100.0$\pm$0.0} & \textbf{100.0$\pm$0.0} & \textbf{100.0$\pm$0.0} & \textbf{100.0$\pm$0.0} \\
			& \multicolumn{2}{l}{CelebA-C(0.08)}
			& \textbf{99.7$\pm$0.0} & \textbf{99.7$\pm$0.0} & 50.0$\pm$0.2 & 49.3$\pm$0.1 & \textbf{100.0$\pm$0.0} & \textbf{100.0$\pm$0.0} & 49.6$\pm$0.2 & 49.2$\pm$0.1 \\
			
			& \multicolumn{2}{l}{CIFAR-10} 
			& \textbf{100.0$\pm$0.0} & \textbf{100.0$\pm$0.0} & \textbf{100.0$\pm$0.0} & \textbf{100.0$\pm$0.0} & \textbf{100.0$\pm$0.0} & \textbf{100.0$\pm$0.0} & \textbf{100.0$\pm$0.0} & \textbf{100.0$\pm$0.0} \\
			
			& \multicolumn{2}{l}{CIFAR-10-C(0.12)}
			& \textbf{97.0$\pm$0.2} & \textbf{97.4$\pm$0.2} & 59.6$\pm$0.0 & 58.2$\pm$0.0 & \textbf{99.3$\pm$0.1} & \textbf{99.4$\pm$0.1} & 63.5$\pm$0.1 & 62.0$\pm$0.5 \\
			
			& \multicolumn{2}{l}{CIFAR-100} 
			& \textbf{100.0$\pm$0.0} & \textbf{100.0$\pm$0.0} & \textbf{100.0$\pm$0.0} & \textbf{100.0$\pm$0.0} & \textbf{100.0$\pm$0.0} & \textbf{100.0$\pm$0.0} & \textbf{100.0$\pm$0.0} & \textbf{100.0$\pm$0.0} \\
			
			& \multicolumn{2}{l}{CIFAR-100-C(0.12)}
			& \textbf{96.9$\pm$0.1} & \textbf{97.3$\pm$0.1} & 70.9$\pm$0.2 & 70.4$\pm$0.3 & \textbf{98.9$\pm$0.3} & \textbf{99.0$\pm$0.3} & 78.2$\pm$0.3 & 77.9$\pm$0.4 \\
			
			& \multicolumn{2}{l}{ImageNet32} 
			& \textbf{100.0$\pm$0.0} & \textbf{100.0$\pm$0.0} & \textbf{100.0$\pm$0.0} & \textbf{100.0$\pm$0.0} & \textbf{100.0$\pm$0.0} & \textbf{100.0$\pm$0.0} & \textbf{100.0$\pm$0.0} & \textbf{100.0$\pm$0.0} \\
			
			& \multicolumn{2}{l}{ImageNet32-C(0.07)}
			& \textbf{99.8$\pm$0.0} & \textbf{99.8$\pm$0.0} & 76.3$\pm$0.2 & 75.7$\pm$0.3 & \textbf{100.0$\pm$0.0} & \textbf{100.0$\pm$0.0} & 84.2$\pm$0.5 & 83.8$\pm$0.5 \\
			& \multicolumn{2}{l}{LSUN}
			& \textbf{100.0$\pm$0.0} & \textbf{100.0$\pm$0.0} & \textbf{100.0$\pm$0.0}& \textbf{100.0$\pm$0.0}
			& \textbf{100.0$\pm$0.0}& \textbf{100.0$\pm$0.0}& \textbf{100.0$\pm$0.0}& \textbf{100.0$\pm$0.0}\\
			& \multicolumn{2}{l}{LSUN-C(0.06)}
			&\textbf{99.9$\pm$0.0} & \textbf{99.0$\pm$0.0} & 96.4$\pm$0.1 & 96.1$\pm$0.1
			&\textbf{100.0$\pm$0.0} &\textbf{100.0$\pm$0.0} & 99.4$\pm$0.1  & 99.4$\pm$0.1 \\
			\hline
			
			\multirow{11}*{\rotatebox{90}{CIFAR-10}}
			
			& \multicolumn{2}{l}{Constant}
			& \textbf{100.0$\pm$0.0} & \textbf{100.0$\pm$0.0} & 62.0$\pm$1.9 & 67.2$\pm$2.5 & \textbf{100.0$\pm$0.0} & \textbf{100.0$\pm$0.0} & 66.7$\pm$4.5 & 75.6$\pm$2.8 \\

			& \multicolumn{2}{l}{Uniform}
			& \textbf{100.0$\pm$0.0} & \textbf{100.0$\pm$0.0} & 6.3$\pm$1.0 & 31.8$\pm$0.4 & \textbf{100.0$\pm$0.0} & \textbf{100.0$\pm$0.0} & 7.3$\pm$1.4 & 34.1$\pm$1.6 \\
			& \multicolumn{2}{l}{Uniform-C(0.02)}
			& \textbf{100.0$\pm$0.0} & \textbf{100.0$\pm$0.0} & 54.4$\pm$2.1 & 63.3$\pm$2.6 & \textbf{100.0$\pm$0.0} & \textbf{100.0$\pm$0.0} & 61.3$\pm$2.9 & 71.8$\pm$2.7 \\

			& \multicolumn{2}{l}{CelebA}
			& \textbf{99.2$\pm$0.1} & \textbf{99.4$\pm$0.1} & 36.8$\pm$1.9 & 48.3$\pm$2.2 & \textbf{100.0$\pm$0.0} & \textbf{100.0$\pm$0.0} & 45.4$\pm$4.9 & 60.6$\pm$4.1 \\
			
			& \multicolumn{2}{l}{CelebA-C(0.3)}
			& \textbf{84.3$\pm$0.3} & \textbf{84.4$\pm$0.4} & 55.9$\pm$2.8 & 63.0$\pm$2.7 & \textbf{94.5$\pm$0.3} & \textbf{94.7$\pm$0.3} & 48.4$\pm$2.8 & 62.1$\pm$2.8 \\
			
			& \multicolumn{2}{l}{ImageNet32} 
			& \textbf{90.0$\pm$0.2} & \textbf{92.1$\pm$0.1} & 45.8$\pm$1.9& 55.2$\pm$1.4  & \textbf{95.0$\pm$0.4} & \textbf{96.2$\pm$0.2} &43.8$\pm$2.5 & 57.7$\pm$2.1 \\
			
			& \multicolumn{2}{l}{ImageNet32-C(0.3)}
			& \textbf{72.0$\pm$0.3} & \textbf{72.6$\pm$0.4} & 47.2$\pm$2.6 & 56.1$\pm$2.5 & \textbf{74.3$\pm$0.6} & \textbf{74.8$\pm$0.8} & 51.3$\pm$3.5 & 64.5$\pm$2.7 \\
			
			& \multicolumn{2}{l}{SVHN} 
			& \textbf{97.6$\pm$0.2} & \textbf{97.8$\pm$0.2} & 47.3$\pm$1.6 & 56.9$\pm$2.4 & \textbf{99.8$\pm$0.0} & \textbf{99.8$\pm$0.0} & 49.4$\pm$2.4 & 63.0$\pm$2.4 \\
			
			& \multicolumn{2}{l}{SVHN-C(2.0)}
			& \textbf{100.0$\pm$0.0} & \textbf{100.0$\pm$0.0} & 45.0$\pm$1.5 & 56.0$\pm$1.9 & \textbf{100.0$\pm$0.0} & \textbf{100.0$\pm$0.0} & 39.4$\pm$0.9 & 54.9$\pm$1.5 \\
			& \multicolumn{2}{l}{LSUN}
			&  \textbf{100.0$\pm$0.0} & \textbf{100.0$\pm$0.0} & 29.1$\pm$1.2 & 44.6$\pm$2.2
			&  \textbf{100.0$\pm$0.0} & \textbf{100.0$\pm$0.0} & 29.9$\pm$4.2 & 46.5$\pm$4.2\\
			& \multicolumn{2}{l}{LSUN-C(0.3)}
			& \textbf{90.0$\pm$0.2} & \textbf{90.8$\pm$0.2} & 49.4$\pm$2.1 & 58.1$\pm$2.3
			& \textbf{91.2$\pm$0.2} & \textbf{92.0$\pm$0.2} & 43.6$\pm$1.1 & 58.1$\pm$1.0\\
			\hline
			
			\multirow{10}*{\rotatebox{90}{CelebA}} 
			& \multicolumn{2}{l}{Constant}
			& \textbf{100.0$\pm$0.0} & \textbf{100.0$\pm$0.0} & \textbf{100.0$\pm$0.0} & \textbf{100.0$\pm$0.0} & \textbf{100.0$\pm$0.0} & \textbf{100.0$\pm$0.0} & \textbf{100.0$\pm$0.0} & \textbf{100.0$\pm$0.0} \\
			
			& \multicolumn{2}{l}{Uniform}
			& \textbf{100.0$\pm$0.0} & \textbf{100.0$\pm$0.0} & \textbf{100.0$\pm$0.0} & \textbf{100.0$\pm$0.0} & \textbf{100.0$\pm$0.0} & \textbf{100.0$\pm$0.0} & \textbf{100.0$\pm$0.0} & \textbf{100.0$\pm$0.0} \\
			
			& \multicolumn{2}{l}{Uniform-C(0.012)}
			& \textbf{100.0$\pm$0.0} & \textbf{100.0$\pm$0.0} & \textbf{100.0$\pm$0.0} & \textbf{100.0$\pm$0.0} & \textbf{100.0$\pm$0.0} & \textbf{100.0$\pm$0.0} & \textbf{100.0$\pm$0.0} & \textbf{100.0$\pm$0.0} \\
			
			& \multicolumn{2}{l}{CIFAR-10}
			& \textbf{99.6$\pm$0.0} & \textbf{99.6$\pm$0.0} & 99.5$\pm$0.0 & 99.6$\pm$0.0 & \textbf{100.0$\pm$0.0} & \textbf{100.0$\pm$0.0} & \textbf{100.0$\pm$0.0} & \textbf{100.0$\pm$0.0} \\
			
			& \multicolumn{2}{l}{CIFAR-100} 
			& \textbf{99.8$\pm$0.0} & \textbf{99.8$\pm$0.0} & 99.6$\pm$0.0 & 99.6$\pm$0.0 & \textbf{100.0$\pm$0.0} & \textbf{100.0$\pm$0.0} & \textbf{100.0$\pm$0.0} & \textbf{100.0$\pm$0.0} \\

			& \multicolumn{2}{l}{ImageNet32}
			& \textbf{100.0$\pm$0.0} & \textbf{100.0$\pm$0.0} & \textbf{100.0$\pm$0.0} & \textbf{100.0$\pm$0.0} & \textbf{100.0$\pm$0.0} & \textbf{100.0$\pm$0.0} & \textbf{100.0$\pm$0.0} & \textbf{100.0$\pm$0.0} \\
			
			& \multicolumn{2}{l}{ImageNet32-C(0.2)}
			& \textbf{100.0$\pm$0.0} & \textbf{100.0$\pm$0.0} & \textbf{100.0$\pm$0.0} & \textbf{100.0$\pm$0.0} & \textbf{100.0$\pm$0.0} & \textbf{100.0$\pm$0.0} & \textbf{100.0$\pm$0.0} & \textbf{100.0$\pm$0.0} \\
			
			& \multicolumn{2}{l}{SVHN} 
			& \textbf{100.0$\pm$0.0} & \textbf{100.0$\pm$0.0} & \textbf{100.0$\pm$0.0} & \textbf{100.0$\pm$0.0} & \textbf{100.0$\pm$0.0} & \textbf{100.0$\pm$0.0} & \textbf{100.0$\pm$0.0} & \textbf{100.0$\pm$0.0} \\
			
			& \multicolumn{2}{l}{SVHN-C(1.8)} 
			& \textbf{100.0$\pm$0.0} & \textbf{100.0$\pm$0.0} & \textbf{100.0$\pm$0.0} & \textbf{100.0$\pm$0.0} & \textbf{100.0$\pm$0.0} & \textbf{100.0$\pm$0.0} & \textbf{100.0$\pm$0.0} & \textbf{100.0$\pm$0.0} \\
			& \multicolumn{2}{l}{LSUN}
			& \textbf{100.0$\pm$0.0}& \textbf{100.0$\pm$0.0} & \textbf{100.0$\pm$0.0}& \textbf{100.0$\pm$0.0}
			& \textbf{100.0$\pm$0.0}& \textbf{100.0$\pm$0.0} & \textbf{100.0$\pm$0.0}&\textbf{100.0$\pm$0.0}\\
			\hline
			&\textbf{average}& & \textbf{98.1} & \textbf{98.2} & 80.3 & 83.3 & \textbf{98.8} & \textbf{98.9} & 81.1 & 85.2\\
			\bottomrule[1pt]   			
		\end{tabular}  
	\end{center}  
\end{table*}

\begin{table*} [ht]
	\scriptsize
	\caption{GAD Results (EER in percentage) of \PADmethod, Ty-test and Annulus Method (Annulus.) on Glow with batch sizes 5 and 10. The lower the better.
	} 
	\label{tbl:glow_fashionmnist_svhn_cifar10_celeba_DOCR_TC_M_EER_bs_5_10_S2}  
	\begin{center}  
		\begin{tabular}{cll r r r |r r r}  
			\toprule[1pt]
			\multirow{3}*{ID$\downarrow$}&\multirow{3}*{OOD$\downarrow$}&Batch size$\rightarrow$&
			\multicolumn{3}{c}{$m$=5}  &\multicolumn{3}{c}{$m$=10}
			\\
			\cline{3-9}   
			& & Method$\rightarrow$& \multicolumn{1}{c}{\PADmethod} & \multicolumn{1}{c}{Ty-test} & \multicolumn{1}{c}{Annulus.} 
			& \multicolumn{1}{c}{\PADmethod} & \multicolumn{1}{c}{Ty-test} & \multicolumn{1}{c}{Annulus.}\\
			
			\hline
			
			\multirow{5}*{\rotatebox{90}{\tabincell{c}{Fash.}}}

			&\multicolumn{2}{l}{Constant}
			& \textbf{0.0$\pm$0.0} & 54.2$\pm$0.4 & \textbf{0.0$\pm$0.0} & \textbf{0.0$\pm$0.0} & 53.9$\pm$0.8 & \textbf{0.0$\pm$0.0} \\


			&\multicolumn{2}{l}{MNIST} 
			& \textbf{0.9$\pm$0.2} & 5.6$\pm$0.3 & 6.1$\pm$0.4 & \textbf{0.0$\pm$0.0} & 1.2$\pm$0.2 & 1.6$\pm$0.3\\

			&\multicolumn{2}{l}{MNIST-C(10.0)}
			& \textbf{0.0$\pm$0.0} & 14.7$\pm$0.5 & 0.6$\pm$0.1 & \textbf{0.0$\pm$0.0} & 7.3$\pm$0.2 &\textbf{0.0$\pm$0.0} \\

			& \multicolumn{2}{l}{notMNIST} 
			& \textbf{0.0$\pm$0.0} & 29.5$\pm$0.3 & \textbf{0.0$\pm$0.0} & \textbf{0.0$\pm$0.0} & 20.8$\pm$0.3 & \textbf{0.0$\pm$0.0} \\

			& \multicolumn{2}{l}{notMNIST-C(0.005)}
			& \textbf{0.0$\pm$0.0} & 44.5$\pm$0.4 & \textbf{0.0$\pm$0.0} &  \textbf{0.0$\pm$0.0} & 39.4$\pm$0.5 & \textbf{0.0$\pm$0.0} \\
			\hline
			\multirow{13}*{\rotatebox{90}{SVHN}} 
			
			& \multicolumn{2}{l}{Constant} 
			& \textbf{0.0$\pm$0.0} & \textbf{0.0$\pm$0.0} & \textbf{0.0$\pm$0.0} & \textbf{0.0$\pm$0.0} & \textbf{0.0$\pm$0.0} & \textbf{0.0$\pm$0.0} \\
			
			& \multicolumn{2}{l}{Uniform} 
			& \textbf{0.0$\pm$0.0} & \textbf{0.0$\pm$0.0} & \textbf{0.0$\pm$0.0} & \textbf{0.0$\pm$0.0} & \textbf{0.0$\pm$0.0} & \textbf{0.0$\pm$0.0} \\

			& \multicolumn{2}{l}{Uniform-C(0.008)} 
			& \textbf{0.0$\pm$0.0} & 79.1$\pm$0.6 & \textbf{0.0$\pm$0.0} & \textbf{0.0$\pm$0.0} & 81.7$\pm$0.4 & \textbf{0.0$\pm$0.0} \\

			& \multicolumn{2}{l}{CelebA} 
			& \textbf{0.0$\pm$0.0} & \textbf{0.0$\pm$0.0} & \textbf{0.0$\pm$0.0}  & \textbf{0.0$\pm$0.0} & \textbf{0.0$\pm$0.0} & \textbf{0.0$\pm$0.0}  \\

			& \multicolumn{2}{l}{CelebA-C(0.08)}
			& \textbf{8.0$\pm$0.3} & 43.2$\pm$0.3 & 50.3$\pm$0.3 & \textbf{2.3$\pm$0.2} & 40.2$\pm$0.8 &50.2$\pm$0.3 \\

			& \multicolumn{2}{l}{CIFAR-10} 
			& \textbf{0.0$\pm$0.0} & \textbf{0.0$\pm$0.0} & \textbf{0.0$\pm$0.0} & \textbf{0.0$\pm$0.0} & \textbf{0.0$\pm$0.0} & \textbf{0.0$\pm$0.0} \\

			& \multicolumn{2}{l}{CIFAR-10-C(0.12)}
			& \textbf{24.0$\pm$0.5} & 64.1$\pm$0.3 & 43.2$\pm$0.4 & \textbf{18.7$\pm$0.5} & 70.9$\pm$0.3 & 40.6$\pm$0.6 \\

			& \multicolumn{2}{l}{CIFAR-100} 
			& \textbf{0.0$\pm$0.0} & \textbf{0.0$\pm$0.0} & \textbf{0.0$\pm$0.0} & \textbf{0.0$\pm$0.0} & \textbf{0.0$\pm$0.0} & \textbf{0.0$\pm$0.0} \\

			& \multicolumn{2}{l}{CIFAR-100-C(0.12)}
			& \textbf{23.3$\pm$0.4} & 61.0$\pm$0.3 & 34.6$\pm$0.3 & \textbf{19.4$\pm$0.5} & 68.2$\pm$0.2 & 29.2$\pm$0.7 \\

			& \multicolumn{2}{l}{ImageNet32} 
			& \textbf{0.0$\pm$0.0} & \textbf{0.0$\pm$0.0} &\textbf{0.0$\pm$0.0} & \textbf{0.0$\pm$0.0} & \textbf{0.0$\pm$0.0} & \textbf{0.0$\pm$0.0} \\
			& \multicolumn{2}{l}{ImageNet32-C(0.07)}
			& \textbf{9.3$\pm$0.2} & 53.7$\pm$0.6 & 30.8$\pm$0.3 & \textbf{4.0$\pm$0.3} & 56.4$\pm$0.6 & 23.8$\pm$0.4 \\
			& \multicolumn{2}{l}{LSUN}
			& \textbf{0.0$\pm$0.0} & \textbf{0.0$\pm$0.0} & \textbf{0.0$\pm$0.0} & \textbf{0.0$\pm$0.0} & \textbf{0.0$\pm$0.0} & \textbf{0.0$\pm$0.0} \\
			& \multicolumn{2}{l}{LSUN-C(0.06)}
			& \textbf{0.8$\pm$0.1} & 54.1$\pm$0.4 & 9.6$\pm$0.3 & \textbf{0.0$\pm$0.0} & 42.3$\pm$0.5 & 3.7$\pm$0.5\\
			\hline

			\multirow{11}*{\rotatebox{90}{CIFAR-10}}
			
			& \multicolumn{2}{l}{Constant}
			& \textbf{0.0$\pm$0.0} & \textbf{0.0$\pm$0.0} & 41.4$\pm$2.0 & \textbf{0.0$\pm$0.0} & \textbf{0.0$\pm$0.0} & 37.8$\pm$4.7 \\

			& \multicolumn{2}{l}{Uniform}
			& \textbf{0.0$\pm$0.0} & \textbf{0.0$\pm$0.0} & 87.6$\pm$1.2 & \textbf{0.0$\pm$0.0} & \textbf{0.0$\pm$0.0} & 89.2$\pm$3.3 \\
			& \multicolumn{2}{l}{Uniform-C(0.02)}
			& \textbf{0.0$\pm$0.0} & 85.9$\pm$0.5 & 48.0$\pm$2.3 & \textbf{0.0$\pm$0.0} & 87.5$\pm$1.0 & 41.3$\pm$3.4 \\

			& \multicolumn{2}{l}{CelebA}
			& 3.9$\pm$0.2 & \textbf{0.9$\pm$0.1} & 59.2$\pm$1.8 & 0.4$\pm$0.1 & \textbf{0.0$\pm$0.0} &55.1$\pm$4.1 \\

			& \multicolumn{2}{l}{CelebA-C(0.3)}
			& \textbf{23.3$\pm$0.7} & 65.5$\pm$0.6 & 46.0$\pm$2.7 & \textbf{13.4$\pm$0.5} & 69.5$\pm$0.8 & 50.4$\pm$3.0 \\

			& \multicolumn{2}{l}{ImageNet32} 
			& 18.9$\pm$0.6 & \textbf{3.8$\pm$0.3} & 53.5$\pm$0.9 & 12.3$\pm$0.6 & \textbf{0.8$\pm$0.1} & 55.4$\pm$3.2 \\

			& \multicolumn{2}{l}{ImageNet32-C(0.3)}
			& \textbf{35.3$\pm$0.6} & 57.1$\pm$0.5 &51.4$\pm$2.8 & \textbf{32.6$\pm$1.3} & 64.0$\pm$0.7 & 50.9$\pm$2.7 \\

			& \multicolumn{2}{l}{SVHN} 
			& 8.0$\pm$0.2 & \textbf{5.6$\pm$0.2} & 52.2$\pm$1.8 & 2.5$\pm$0.4 & \textbf{1.1$\pm$0.2} & 52.5$\pm$2.4 \\

			& \multicolumn{2}{l}{SVHN-C(2.0)}
			& \textbf{0.2$\pm$0.1} & 61.3$\pm$0.4 & 53.7$\pm$1.7 & \textbf{0.0$\pm$0.0} & 67.0$\pm$0.6 & 59.0$\pm$1.5 \\
			& \multicolumn{2}{l}{LSUN}
			& \textbf{0.0$\pm$0.0} & 1.2$\pm$0.1  & 65.9$\pm$0.8 & \textbf{0.0$\pm$0.0} &  \textbf{0.0$\pm$0.0} & 65.4$\pm$4.7 \\
			& \multicolumn{2}{l}{LSUN-C(0.3)}
			& \textbf{18.2$\pm$0.3} & 47.7$\pm$0.6 & 51.1$\pm$1.0 & \textbf{17.2$\pm$0.4} & 44.7$\pm$0.9 & 54.1$\pm$2.1\\
			\hline
			
			\multirow{10}*{\rotatebox{90}{CelebA}} 
			& \multicolumn{2}{l}{Constant}
			& \textbf{0.0$\pm$0.0} & \textbf{0.0$\pm$0.0} &\textbf{0.0$\pm$0.0}  & \textbf{0.0$\pm$0.0} & \textbf{0.0$\pm$0.0} &\textbf{0.0$\pm$0.0}\\
			
			& \multicolumn{2}{l}{Uniform}
			& \textbf{0.0$\pm$0.0} & \textbf{0.0$\pm$0.0} & \textbf{0.0$\pm$0.0} & \textbf{0.0$\pm$0.0} & \textbf{0.0$\pm$0.0} &\textbf{0.0$\pm$0.0} \\

			& \multicolumn{2}{l}{Uniform-C(0.012)}
			& \textbf{0.0$\pm$0.0} & 71.0$\pm$0.5 & 0.1$\pm$0.0 & \textbf{0.0$\pm$0.0} & 78.3$\pm$0.6 & \textbf{0.0$\pm$0.0}\\

			& \multicolumn{2}{l}{CIFAR-10}
			& \textbf{3.2$\pm$0.3} & 85.1$\pm$0.6 & 3.4$\pm$0.3 & \textbf{0.1$\pm$0.1} & 93.1$\pm$0.6 &0.4$\pm$0.2 \\

			& \multicolumn{2}{l}{CIFAR-100} 
			& \textbf{2.8$\pm$0.2} & 82.9$\pm$0.1 & 3.0$\pm$0.2 & \textbf{0.0$\pm$0.0} & 91.6$\pm$0.5 &0.3$\pm$0.1 \\

			& \multicolumn{2}{l}{ImageNet32}
			& \textbf{0.0$\pm$0.0} & 26.4$\pm$0.2 & \textbf{0.0$\pm$0.0} & \textbf{0.0$\pm$0.0} & 20.7$\pm$0.8 & \textbf{0.0$\pm$0.0} \\

			& \multicolumn{2}{l}{ImageNet32-C(0.2)}
			& \textbf{0.0$\pm$0.0} & 67.7$\pm$0.6 & \textbf{0.0$\pm$0.0} & \textbf{0.0$\pm$0.0} & 73.7$\pm$0.5  & \textbf{0.0$\pm$0.0}\\

			& \multicolumn{2}{l}{SVHN} 
			& \textbf{0.0$\pm$0.0} & 28.5$\pm$0.2 & \textbf{0.0$\pm$0.0} & \textbf{0.0$\pm$0.0} & 22.0$\pm$0.4 & \textbf{0.0$\pm$0.0} \\

			& \multicolumn{2}{l}{SVHN-C(1.8)} 
			& \textbf{0.0$\pm$0.0} & 90.0$\pm$0.4  & \textbf{0.0$\pm$0.0} & \textbf{0.0$\pm$0.0} & 97.0$\pm$0.3 & \textbf{0.0$\pm$0.0} \\
			& \multicolumn{2}{l}{LSUN}
			& \textbf{0.0$\pm$0.0} & 38.8$\pm$0.5 &  \textbf{0.0$\pm$0.0} & \textbf{0.0$\pm$0.0} & 34.6$\pm$0.3 & \textbf{0.0$\pm$0.0}\\
			\hline
			&\textbf{average} && \textbf{4.6}  &  33.9 & 33.9 & \textbf{3.2} & 34.0 & 19.5\\
			\bottomrule[1pt]   			
		\end{tabular}  
	\end{center}  
\end{table*}

\begin{table*} [ht]
	\scriptsize
	\caption{GAD Results (AUROC and AUPR  in percentage) of \PADmethod\ and Ty-test on Glow with batch sizes 2 and 4. The higher the better. 
	} 
	\label{tbl:glow_fashionmnist_svhn_cifar10_celeba_DOCR_TC_M_bs_2_4_S2}  
	\begin{center}  
		\begin{tabular}{cllr rr r|r rr r}  
			\toprule[1pt]
			\multirow{3}*{ID$\downarrow$}&\multirow{3}*{OOD$\downarrow$}&Batch size $\rightarrow$&
			\multicolumn{4}{c}{$m$=2}  &\multicolumn{4}{c}{$m$=4}
			\\
			\cline{3-11}   
			& & Method$\rightarrow$  & \multicolumn{2}{c}{\PADmethod} & \multicolumn{2}{c}{Ty-test} 
			& \multicolumn{2}{c}{\PADmethod} & \multicolumn{2}{c}{Ty-test} \\
			\cline{3-11} 
			&&Metric$\rightarrow$
			&  \multicolumn{1}{c}{AUROC} 
			& \multicolumn{1}{c}{AUPR}
			&  \multicolumn{1}{c}{AUROC} 
			& \multicolumn{1}{c}{AUPR}
			&  \multicolumn{1}{c}{AUROC} 
			& \multicolumn{1}{c}{AUPR}
			&  \multicolumn{1}{c}{AUROC} 
			& \multicolumn{1}{c}{AUPR}\\
			\hline
			
			\multirow{5}*{\rotatebox{90}{\tabincell{c}{Fash.}}}
			
			&\multicolumn{2}{l}{Constant} 
			& \textbf{100.0$\pm$0.0} & \textbf{100.0$\pm$0.0} & 40.6$\pm$0.4 & 41.3$\pm$0.2 & \textbf{100.0$\pm$0.0} & \textbf{100.0$\pm$0.0} & 41.0$\pm$0.5 & 41.6$\pm$0.2 \\

			
			&\multicolumn{2}{l}{MNIST} 
			& \textbf{91.6$\pm$0.1} & \textbf{91.8$\pm$0.2} & 88.7$\pm$0.2 & 81.1$\pm$0.4 & \textbf{99.4$\pm$0.0} & \textbf{99.4$\pm$0.0} & 96.1$\pm$0.1 & 93.2$\pm$0.1 \\
			&\multicolumn{2}{l}{MNIST-C(10.0)}
			& \textbf{97.2$\pm$0.1} & \textbf{97.3$\pm$0.1} & 74.0$\pm$0.3 & 65.2$\pm$0.2 & \textbf{100.0$\pm$0.0} & \textbf{100.0$\pm$0.0} & 85.4$\pm$0.2 & 77.4$\pm$0.5 \\
			& \multicolumn{2}{l}{notMNIST} 
			& \textbf{99.2$\pm$0.0} & \textbf{99.4$\pm$0.0} & 64.0$\pm$0.3 & 61.8$\pm$0.3 & \textbf{100.0$\pm$0.0} & \textbf{100.0$\pm$0.0} & 74.3$\pm$0.4 & 71.2$\pm$0.3 \\
			& \multicolumn{2}{l}{notMNIST-C(0.005)}
			& \textbf{100.0$\pm$0.0} & \textbf{100.0$\pm$0.0} & 23.2$\pm$0.2 & 35.3$\pm$0.0 & \textbf{100.0$\pm$0.0} & \textbf{100.0$\pm$0.0} & 24.8$\pm$0.3 & 35.7$\pm$0.1 \\
			\hline
			\multirow{13}*{\rotatebox{90}{SVHN}} 
			
			& \multicolumn{2}{l}{Constant} 
			& \textbf{100.0$\pm$0.0} & \textbf{100.0$\pm$0.0} & \textbf{100.0$\pm$0.0} & \textbf{100.0$\pm$0.0} & \textbf{100.0$\pm$0.0} & \textbf{100.0$\pm$0.0} & \textbf{100.0$\pm$0.0} & \textbf{100.0$\pm$0.0} \\
			&\multicolumn{2}{l}{Uniform} 
			& \textbf{100.0$\pm$0.0} & \textbf{100.0$\pm$0.0} & \textbf{100.0$\pm$0.0} & \textbf{100.0$\pm$0.0} & \textbf{100.0$\pm$0.0} & \textbf{100.0$\pm$0.0} & \textbf{100.0$\pm$0.0} & \textbf{100.0$\pm$0.0} \\
			
			&\multicolumn{2}{l}{Uniform-C(0.008)} 
			& \textbf{99.9$\pm$0.0} & \textbf{99.8$\pm$0.0} & 14.6$\pm$0.5 & 33.2$\pm$0.1 & \textbf{100.0$\pm$0.0} & \textbf{100.0$\pm$0.0} & 14.5$\pm$0.4 & 33.2$\pm$0.1 \\
			& \multicolumn{2}{l}{CelebA} 
			& \textbf{100.0$\pm$0.0} & \textbf{100.0$\pm$0.0} & \textbf{100.0$\pm$0.0} & \textbf{100.0$\pm$0.0} & \textbf{100.0$\pm$0.0} & \textbf{100.0$\pm$0.0} & \textbf{100.0$\pm$0.0} & \textbf{100.0$\pm$0.0} \\
			& \multicolumn{2}{l}{CelebA-C(0.08)}
			& \textbf{90.1$\pm$0.2} & \textbf{88.6$\pm$0.4} & 49.0$\pm$0.3 & 46.6$\pm$0.2 & \textbf{96.2$\pm$0.2} & \textbf{95.1$\pm$0.3} & 55.8$\pm$0.4 & 50.4$\pm$0.3 \\
			& \multicolumn{2}{l}{CIFAR-10} 
			& \textbf{100.0$\pm$0.0} & \textbf{100.0$\pm$0.0} & 99.9$\pm$0.0 & 99.9$\pm$0.0 & \textbf{100.0$\pm$0.0} & \textbf{100.0$\pm$0.0} & \textbf{100.0$\pm$0.0} & \textbf{100.0$\pm$0.0} \\
			& \multicolumn{2}{l}{CIFAR-10-C(0.12)}
			& \textbf{77.2$\pm$0.2} & \textbf{76.1$\pm$0.2} & 36.6$\pm$0.2 & 40.0$\pm$0.1 & \textbf{81.9$\pm$0.3} & \textbf{80.6$\pm$0.2} & 32.9$\pm$0.8 & 38.3$\pm$0.3 \\
			& \multicolumn{2}{l}{CIFAR-100} 
			& \textbf{99.9$\pm$0.0} & \textbf{99.9$\pm$0.0} & 99.9$\pm$0.0 & 99.9$\pm$0.0 & \textbf{100.0$\pm$0.0} & \textbf{100.0$\pm$0.0} & 100.0$\pm$0.0 & 100.0$\pm$0.0 \\
			& \multicolumn{2}{l}{CIFAR-100-C(0.12)}
			& \textbf{79.8$\pm$0.3} & \textbf{79.3$\pm$0.3} & 40.6$\pm$0.4 & 42.1$\pm$0.2 & \textbf{83.5$\pm$0.2} & \textbf{82.7$\pm$0.1} & 36.5$\pm$0.7 & 39.9$\pm$0.3 \\
			& \multicolumn{2}{l}{ImageNet32} 
			& \textbf{100.0$\pm$0.0} & \textbf{100.0$\pm$0.0} & \textbf{100.0$\pm$0.0} & \textbf{100.0$\pm$0.0} & \textbf{100.0$\pm$0.0} & \textbf{100.0$\pm$0.0} & \textbf{100.0$\pm$0.0} & \textbf{100.0$\pm$0.0} \\
			& \multicolumn{2}{l}{ImageNet32-C(0.07)}
			& \textbf{97.8$\pm$0.1} & \textbf{98.1$\pm$0.1} & 48.4$\pm$0.3 & 48.2$\pm$0.1 & \textbf{100.0$\pm$0.0} & \textbf{100.0$\pm$0.0} & 42.5$\pm$0.3 & 44.1$\pm$0.1 \\
			& \multicolumn{2}{l}{LSUN}
			& \textbf{100.0$\pm$0.0}& \textbf{100.0$\pm$0.0} & \textbf{100.0$\pm$0.0}& \textbf{100.0$\pm$0.0}
			& \textbf{100.0$\pm$0.0}& \textbf{100.0$\pm$0.0}& \textbf{100.0$\pm$0.0}& \textbf{100.0$\pm$0.0}\\
			& \multicolumn{2}{l}{LSUN-C(0.06)}
			& \textbf{98.1$\pm$0.0} & \textbf{97.7$\pm$0.1} & 41.3$\pm$0.4 & 42.0$\pm$0.2 
			& \textbf{99.8$\pm$0.0} & \textbf{99.7$\pm$0.0} & 42.4$\pm$0.4 & 42.4$\pm$0.1\\
			\hline
			\multirow{11}*{\rotatebox{90}{CIFAR-10}}

			& \multicolumn{2}{l}{Constant}
			& \textbf{100.0$\pm$0.0} & \textbf{100.0$\pm$0.0} & \textbf{100.0$\pm$0.0} & \textbf{100.0$\pm$0.0} & \textbf{100.0$\pm$0.0} & \textbf{100.0$\pm$0.0} & \textbf{100.0$\pm$0.0} & \textbf{100.0$\pm$0.0} \\
			& \multicolumn{2}{l}{Uniform}
			& \textbf{100.0$\pm$0.0} & \textbf{100.0$\pm$0.0} & \textbf{100.0$\pm$0.0} & \textbf{100.0$\pm$0.0} & \textbf{100.0$\pm$0.0} & \textbf{100.0$\pm$0.0} & \textbf{100.0$\pm$0.0} & \textbf{100.0$\pm$0.0} \\
			& \multicolumn{2}{l}{Uniform-C(0.02)}
			& \textbf{100.0$\pm$0.0} & \textbf{100.0$\pm$0.0} & 9.9$\pm$0.0 & 32.5$\pm$0.0 & \textbf{100.0$\pm$0.0} & \textbf{100.0$\pm$0.0} & 11.2$\pm$0.0 & 32.8$\pm$0.0 \\
			
			& \multicolumn{2}{l}{CelebA}
			& 93.3$\pm$0.1 & 94.6$\pm$0.1 & \textbf{98.0$\pm$0.1} & \textbf{98.1$\pm$0.0} & 98.4$\pm$0.1 & 98.7$\pm$0.1 & \textbf{99.9$\pm$0.0} & \textbf{99.9$\pm$0.0} \\
			& \multicolumn{2}{l}{CelebA-C(0.3)}
			& \textbf{72.4$\pm$0.3} & \textbf{71.6$\pm$0.3} & 32.4$\pm$0.3 & 38.1$\pm$0.1 & \textbf{81.3$\pm$0.3} & \textbf{81.2$\pm$0.3} & 29.7$\pm$0.4 & 37.1$\pm$0.1 \\
			& \multicolumn{2}{l}{ImageNet32} 
			& 82.2$\pm$0.2 & 85.2$\pm$0.1 & \textbf{93.1$\pm$0.2} & \textbf{94.6$\pm$0.2} & 87.8$\pm$0.2 & 90.2$\pm$0.2 & \textbf{98.3$\pm$0.2} & \textbf{98.7$\pm$0.1} \\
			& \multicolumn{2}{l}{ImageNet32-C(0.3)}
			& \textbf{68.2$\pm$0.1} & \textbf{69.1$\pm$0.3} & 47.8$\pm$0.3 & 48.0$\pm$0.2 & \textbf{70.2$\pm$0.3} & \textbf{71.0$\pm$0.2} & 42.6$\pm$0.9 & 44.0$\pm$0.6 \\
			& \multicolumn{2}{l}{SVHN} 
			& 90.0$\pm$0.1 & \textbf{90.7$\pm$0.2} & \textbf{91.2$\pm$0.1} & 88.1$\pm$0.3 & 96.2$\pm$0.1 & 96.5$\pm$0.1 & \textbf{97.6$\pm$0.1} & \textbf{96.8$\pm$0.2} \\
			& \multicolumn{2}{l}{SVHN-C(2.0)}
			& \textbf{99.1$\pm$0.1} & \textbf{99.2$\pm$0.0} & 39.2$\pm$0.1 & 64.0$\pm$0.1 & \textbf{100.0$\pm$0.0} & \textbf{100.0$\pm$0.0} & 35.2$\pm$0.5 & 61.9$\pm$0.2\\
			& \multicolumn{2}{l}{LSUN}
			& \textbf{100.0$\pm$0.0} & \textbf{100.0$\pm$0.0} & 97.4$\pm$0.0 & 97.8$\pm$0.0
			& \textbf{100.0$\pm$0.0} & \textbf{100.0$\pm$0.0} & 99.8$\pm$0.0 & 99.8$\pm$0.0\\
			& \multicolumn{2}{l}{LSUN-C(0.3)}
			& \textbf{86.8$\pm$0.1} & \textbf{87.6$\pm$0.2} & 46.6$\pm$0.4 & 45.5$\pm$0.2
			& \textbf{89.2$\pm$0.3} & \textbf{90.0$\pm$0.2} & 50.4$\pm$0.3 & 47.7$\pm$0.2\\
			\hline
			
			\multirow{11}*{\rotatebox{90}{CelebA}} 
			
			& \multicolumn{2}{l}{Constant}
			& \textbf{100.0$\pm$0.0} & \textbf{100.0$\pm$0.0} & \textbf{100.0$\pm$0.0} & \textbf{100.0$\pm$0.0} & \textbf{100.0$\pm$0.0} & \textbf{100.0$\pm$0.0} & \textbf{100.0$\pm$0.0} & \textbf{100.0$\pm$0.0} \\
			
			& \multicolumn{2}{l}{Uniform}
			& \textbf{100.0$\pm$0.0} & \textbf{100.0$\pm$0.0} & \textbf{100.0$\pm$0.0} & \textbf{100.0$\pm$0.0} & \textbf{100.0$\pm$0.0} & \textbf{100.0$\pm$0.0} & \textbf{100.0$\pm$0.0} & \textbf{100.0$\pm$0.0} \\
			
			& \multicolumn{2}{l}{Uniform-C(0.12)}
			& \textbf{99.2$\pm$0.0} & \textbf{99.3$\pm$0.0} & 35.1$\pm$0.2 & 39.3$\pm$0.1 & \textbf{100.0$\pm$0.0} & \textbf{100.0$\pm$0.0} & 29.4$\pm$0.4 & 37.5$\pm$0.1 \\
			& \multicolumn{2}{l}{CIFAR-10}
			& \textbf{86.3$\pm$0.2} & \textbf{86.4$\pm$0.2} & 21.4$\pm$0.2 & 34.6$\pm$0.1 & \textbf{98.5$\pm$0.1} & \textbf{98.6$\pm$0.1} & 10.2$\pm$0.3 & 31.9$\pm$0.1 \\
			& \multicolumn{2}{l}{CIFAR-100} 
			& \textbf{89.6$\pm$0.2} & \textbf{90.0$\pm$0.2} & 25.0$\pm$0.2 & 35.9$\pm$0.0 & \textbf{99.2$\pm$0.1} & \textbf{99.3$\pm$0.1} & 12.9$\pm$0.1 & 32.5$\pm$0.0 \\
			& \multicolumn{2}{l}{ImageNet32}
			& \textbf{100.0$\pm$0.0} & \textbf{100.0$\pm$0.0} & 76.4$\pm$0.3 & 83.5$\pm$0.1 & \textbf{100.0$\pm$0.0} & \textbf{100.0$\pm$0.0} & 76.9$\pm$0.4 & 84.5$\pm$0.2 \\
			& \multicolumn{2}{l}{ImageNet32-C(0.2)}
			& \textbf{99.2$\pm$0.0} & \textbf{99.3$\pm$0.0} & 34.6$\pm$0.2 & 40.3$\pm$0.1 & \textbf{100.0$\pm$0.0} & \textbf{100.0$\pm$0.0} & 28.4$\pm$0.2 & 37.4$\pm$0.0\\
			& \multicolumn{2}{l}{SVHN} 
			& \textbf{99.9$\pm$0.0} & \textbf{99.9$\pm$0.0} & 69.3$\pm$0.1 & 62.5$\pm$0.1 & \textbf{100.0$\pm$0.0} & \textbf{100.0$\pm$0.0} & 76.0$\pm$0.2 & 70.5$\pm$0.4 \\
			& \multicolumn{2}{l}{SVHN-C(1.8)} 
			& \textbf{100.0$\pm$0.0} & \textbf{100.0$\pm$0.0} & 14.5$\pm$0.2 & 32.9$\pm$0.1 & \textbf{100.0$\pm$0.0} & \textbf{100.0$\pm$0.0} & 5.6$\pm$0.2 & 31.3$\pm$0.0 \\
			& \multicolumn{2}{l}{LSUN}
			& \textbf{100.0$\pm$0.0} & \textbf{100.0$\pm$0.0} & 60.2$\pm$0.1 &  58.8$\pm$0.3 
			&\textbf{100.0$\pm$0.0} & \textbf{100.0$\pm$0.0} & 63.8$\pm$0.1 & 62.5$\pm$0.3\\
			\hline
			& \textbf{average} && \textbf{94.8} & \textbf{94.9} &64.4 & 65.2 & \textbf{97.0} & \textbf{97.0} & 64.7 & 68.6 \\
			\bottomrule[1pt]   			
		\end{tabular}  
	\end{center}  
\end{table*}

\begin{table*} [ht]
	\scriptsize
	\caption{GAD Results (AUROC and AUPR in percentage) of \PADmethod\ and GOD2KS on Glow with batch sizes 5 and 10. We run our method for 5 times. The higher  the better. We reference the results on all problems of GOD2KS reported in \cite{jiang2022revisiting}, where the authors did not report average results of multiple runs.
	} 
	\label{tbl:glow_fashionmnist_svhn_cifar10_celeba_DOCR_TC_M_bs_5_10_S2_vs_GOD2KS}  
	\begin{center}  
		\begin{tabular}{cllr rr r|r rr r}  
			\toprule[1pt]
			\multirow{3}*{ID$\downarrow$}&\multirow{3}*{OOD$\downarrow$}&Batch size$\rightarrow$&
			\multicolumn{4}{c}{$m$=5}  &\multicolumn{4}{c}{$m$=10}
			\\
			\cline{3-11}   
			& & Method$\rightarrow$& \multicolumn{2}{c}{\PADmethod} & \multicolumn{2}{c}{GOD2KS} 
			& \multicolumn{2}{c}{\PADmethod} & \multicolumn{2}{c}{GOD2KS} \\
			\cline{3-11} 
			&&Metric$\rightarrow$
			&  \multicolumn{1}{c}{AUROC} 
			& \multicolumn{1}{c}{AUPR}
			&  \multicolumn{1}{c}{AUROC} 
			& \multicolumn{1}{c}{AUPR}
			&  \multicolumn{1}{c}{AUROC} 
			& \multicolumn{1}{c}{AUPR}
			&  \multicolumn{1}{c}{AUROC} 
			& \multicolumn{1}{c}{AUPR}\\
			\hline
			\multirow{3}*{FashionMNIST}

			&\multicolumn{2}{l}{MNIST} 
			& \textbf{99.8$\pm$0.0} & \textbf{99.8$\pm$0.0} & 98 & 98 & \textbf{100.0$\pm$0.0} & \textbf{100.0$\pm$0.0} & \textbf{100} & \textbf{100} \\
			&\multicolumn{2}{l}{KMNIST}
			& \textbf{99.9$\pm$0.0} & \textbf{99.9$\pm$0.0} & 97 & 96 & \textbf{100.0$\pm$0.0} & \textbf{100.0$\pm$0.0} & \textbf{100} & \textbf{100} \\
			&\multicolumn{2}{l}{Omniglot}
			& \textbf{100.0$\pm$0.0} & \textbf{100.0$\pm$0.0} & \textbf{100} & \textbf{100} & \textbf{100.0$\pm$0.0} & \textbf{100.0$\pm$0.0} & \textbf{100} & \textbf{100} \\
			
			\hline
			\multirow{4}*{SVHN}

			& \multicolumn{2}{l}{CelebA} 
			& \textbf{100.0$\pm$0.0} & \textbf{100.0$\pm$0.0} & \textbf{100} & 99 & \textbf{100.0$\pm$0.0} & \textbf{100.0$\pm$0.0} & \textbf{100} & \textbf{100} \\
			
			& \multicolumn{2}{l}{CIFAR-10} 
			& \textbf{100.0$\pm$0.0} & \textbf{100.0$\pm$0.0} & 92 & 84 & \textbf{100.0$\pm$0.0} & \textbf{100.0$\pm$0.0} & 99 & 98 \\
			
			& \multicolumn{2}{l}{CIFAR-100} 
			& \textbf{100.0$\pm$0.0} & \textbf{100.0$\pm$0.0} & 93 & 86 & \textbf{100.0$\pm$0.0} & \textbf{100.0$\pm$0.0} & 99 & 98 \\
			& \multicolumn{2}{l}{LSUN}
			&  \textbf{100.0$\pm$0.0} &  \textbf{100.0$\pm$0.0} & 99 & 98 
			&  \textbf{100.0$\pm$0.0} &  \textbf{100.0$\pm$0.0} &  \textbf{100.0} &  \textbf{100.0}\\
			
			\hline
			\multirow{3}*{CIFAR-10}

			& \multicolumn{2}{l}{CelebA}
			& \textbf{99.2$\pm$0.1} & \textbf{99.4$\pm$0.1} & 86 & 92 & \textbf{100.0$\pm$0.0} & \textbf{100.0$\pm$0.0} & 96 & 98 \\
			
			& \multicolumn{2}{l}{SVHN} 
			& \textbf{97.6$\pm$0.2} & 97.8$\pm$0.2 & 96 & \textbf{98} & 99.8$\pm$0.0 & 99.8$\pm$0.0 & \textbf{100} & \textbf{100} \\
			& \multicolumn{2}{l}{LSUN}
			&  \textbf{100.0$\pm$0.0} & \textbf{100.0$\pm$0.0} & 60 & 58 & \textbf{100.0$\pm$0.0} & \textbf{100.0$\pm$0.0} & 58 & 56 \\
			
			\hline
			
			\multirow{4}*{CelebA}
			& \multicolumn{2}{l}{CIFAR-10}
			& \textbf{99.6$\pm$0.0} & \textbf{99.6$\pm$0.0} & 84 & 73 & \textbf{100.0$\pm$0.0} & \textbf{100.0$\pm$0.0} & 94 & 91 \\
			
			& \multicolumn{2}{l}{CIFAR-100} 
			& \textbf{99.8$\pm$0.0} & \textbf{99.8$\pm$0.0} & 82 & 71  & \textbf{100.0$\pm$0.0} & \textbf{100.0$\pm$0.0} & 94 & 90 \\
			
			& \multicolumn{2}{l}{SVHN} 
			& \textbf{100.0$\pm$0.0} & \textbf{100.0$\pm$0.0} & 97 & 98 & \textbf{100.0$\pm$0.0} & \textbf{100.0$\pm$0.0} & \textbf{100} & \textbf{100} \\
			& \multicolumn{2}{l}{LSUN}
			& \textbf{100.0$\pm$0.0} & \textbf{100.0$\pm$0.0} &85 & 75 & \textbf{100.0$\pm$0.0} & \textbf{100.0$\pm$0.0} & 96 & 92\\
			
			\hline
			&\textbf{average}& & \textbf{99.7} & \textbf{99.7} & 90.6 & 87.6 & \textbf{100.0} & \textbf{100.0} & 95.4 & 94.5\\
			\bottomrule[1pt]   			
		\end{tabular}  
	\end{center}  
\end{table*}

\textbf{Mixture of OOD Datasets}.
Table \ref{tbl:glow_mixture_ood_datasets_DOCR_TC_M_bs_5_10_S2} shows the results of \PADmethod\ when OOD dataset is a mixture of two of the three datasets: SVHN, CelebA, and CIFAR-10. 

\begin{table*} [t]
	\scriptsize
	\caption{GAD Results (AUROC and AUPR in percentage) of \PADmethod\ and Ty-test on Glow with batch size 5 and 10. For SVHN, CIFAR-10, and CelebA, we choose one dataset as ID data and the mixture of the other two datasets as OOD data.
	} 
	\label{tbl:glow_mixture_ood_datasets_DOCR_TC_M_bs_5_10_S2}  
	\begin{center}  
		\begin{tabular}{cllr rr r|r rr r}  
			\toprule[1pt]
			\multirow{3}*{ID$\downarrow$}&\multirow{3}*{OOD $\downarrow$}&Batch size$\rightarrow$&
			\multicolumn{4}{c}{$m$=5}  &\multicolumn{4}{c}{$m$=10}
			\\
			\cline{3-11}   
			& & Method$\rightarrow$& \multicolumn{2}{c}{\PADmethod} & \multicolumn{2}{c}{Ty-test} 
			& \multicolumn{2}{c}{\PADmethod} & \multicolumn{2}{c}{Ty-test} \\
			\cline{3-11} 
			&&Metric$\rightarrow$
			&  \multicolumn{1}{c}{AUROC} 
			& \multicolumn{1}{c}{AUPR}
			&  \multicolumn{1}{c}{AUROC} 
			& \multicolumn{1}{c}{AUPR}
			&  \multicolumn{1}{c}{AUROC} 
			& \multicolumn{1}{c}{AUPR}
			&  \multicolumn{1}{c}{AUROC} 
			& \multicolumn{1}{c}{AUPR}\\
			\hline
			SVHN 
			
			&\multicolumn{2}{l}{CelebA+CIFAR-10}
			& \textbf{100.0$\pm$0.0} & \textbf{100.0$\pm$0.0} & \textbf{100.0$\pm$0.0} & \textbf{100.0$\pm$0.0} & \textbf{100.0$\pm$0.0} & \textbf{100.0$\pm$0.0} & \textbf{100.0$\pm$0.0} & \textbf{100.0$\pm$0.0}\\
			CIFAR-10

			&\multicolumn{2}{l}{SVHN+CelebA}
			& \textbf{98.2$\pm$0.2} & \textbf{98.5$\pm$0.2} & 60.8$\pm$0.3 & 64.4$\pm$0.5 & \textbf{99.8$\pm$0.0} & \textbf{99.9$\pm$0.0} & 52.8$\pm$1.0 & 58.1$\pm$1.1 \\

			CelebA

			&\multicolumn{2}{l}{SVHN+CIFAR-10}
			& \textbf{100.0$\pm$0.0} & \textbf{100.0$\pm$0.0} & 20.7$\pm$0.2 & 34.9$\pm$0.1 & \textbf{100.0$\pm$0.0} & \textbf{100.0$\pm$0.0} & 11.8$\pm$0.5 & 32.3$\pm$0.1 \\

			\hline
			&\textbf{average} & & \textbf{99.4 }& \textbf{99.5} & 60.5 & 66.4 & \textbf{99.9} & \textbf{100.0} & 54.9 & 63.5\\
			\bottomrule[1pt]   	
			
		\end{tabular}  
	\end{center}  
\end{table*}

\textbf{Ablation Study}.
Table \ref{tbl:glow_fashionmnist_svhn_cifar10_celeba_ablation_study} shows the results of ablation study. Except for CIFAR-10 vs ImageNet32-C(0.3), the order of performance is \PADmethod\ $>$ \GADmethod\ $>$ \GADmethod-all $>$ Ty-test. The only exception is CIFAR-10 vs ImageNet32-C(0.3). 
Note that KLOD only applies to GAD.

\textbf{One-vs-Rest}. Table \ref{tbl:glow_GAD_mnist_one_vs_rest} shows GAD Results (AUROC and AUPR  in percentage) of Glow on One-vs-Rest on MNIST. 

\begin{table*} [h]
	\vspace{-0pt}
	\scriptsize
	\caption{GAD Results (AUROC  in percentage) of ablation study. We compare four methods Ty-test, \GADmethod-all, \GADmethod, and \PADmethod. \PADmethod\ is the best one.  } 
	\label{tbl:glow_fashionmnist_svhn_cifar10_celeba_ablation_study}  
	\begin{center}  
		\begin{tabular}{cllr rr r|r rr r}  
			\toprule[1pt]
			\multirow{2}*{ID}&\multirow{2}*{OOD$\downarrow$}&Batch size& \multicolumn{4}{c}{$m$=10}  &\multicolumn{4}{c}{$m$=25}  \\
			\cline{3-11}   
			& & Method& Ty-test & \GADmethod-all & \GADmethod & \PADmethod & Ty-test & \GADmethod-all & \GADmethod & \PADmethod \\
			\cline{3-11} 
			\hline

			\multirow{5}*{\rotatebox{90}{\tabincell{c}{Fash.M}}} 
			&\multicolumn{2}{l}{Constant} 
			& 41.6$\pm$0.41 
			& \textbf{100.0$\pm$0.0}
			&   \textbf{100.0$\pm$0.0}  
			& \textbf{100.0$\pm$0.0}
			& 39.1$\pm$0.8
			& \textbf{100.0$\pm$0.0}
			&\textbf{100.0$\pm$0.0} 
			& \textbf{100.0$\pm$0.0}\\
			&\multicolumn{2}{l}{MNIST} 
			& 99.2$\pm$0.1 
			& \textbf{100.0$\pm$0.0}
			&   \textbf{100.0$\pm$0.0}  
			& \textbf{100.0$\pm$0.0}
			& \textbf{100.0$\pm$0.0}
			& \textbf{100.0$\pm$0.0}
			&\textbf{100.0$\pm$0.0} 
			& \textbf{100.0$\pm$0.0}\\
			
			&\multicolumn{2}{l}{MNIST-C(10.0)}
			&84.9$\pm$0.3
			&\textbf{100.0$\pm$0.0}
			&\textbf{100.0$\pm$0.0}
			&\textbf{100.0$\pm$0.0}
			&94.7$\pm$0.3
			&\textbf{100.0$\pm$0.0}
			&\textbf{100.0$\pm$0.0}
			&\textbf{100.0$\pm$0.0}
			\\
			& \multicolumn{2}{l}{notMNIST} 
			& 92.7$\pm$0.5
			& \textbf{100.0$\pm$0.0}
			&  \textbf{100.0$\pm$0.0} 
			& \textbf{100.0$\pm$0.0}
			& 98.9$\pm$0.2 
			& \textbf{100.0$\pm$0.0}
			& \textbf{100.0$\pm$0.0}
			&\textbf{100.0$\pm$0.0}\\
			& \multicolumn{2}{l}{notMNIST-C(0.005)}
			&7.0$\pm$0.6
			&\textbf{100.0$\pm$0.0}
			&\textbf{100.0$\pm$0.0}
			&\textbf{100.0$\pm$0.0}
			&2.7$\pm$0.2
			&\textbf{100.0$\pm$0.0}
			&\textbf{100.0$\pm$0.0}
			&\textbf{100.0$\pm$0.0}
			\\
			\hline
			\multirow{11}*{\rotatebox{90}{SVHN}} 
			& \multicolumn{2}{l}{Constant} 
			& \textbf{100.0$\pm$0.0}  
			& \textbf{100.0$\pm$0.0}
			&\textbf{100.0$\pm$0.0}
			&\textbf{100.0$\pm$0.0}
			&\textbf{100.0$\pm$0.0}
			& \textbf{100.0$\pm$0.0}
			&\textbf{100.0$\pm$0.0}
			&\textbf{100.0$\pm$0.0}
			\\
			
			& \multicolumn{2}{l}{Uniform} 
			& \textbf{100.0$\pm$0.0}  
			& \textbf{100.0$\pm$0.0}
			&\textbf{100.0$\pm$0.0}
			&\textbf{100.0$\pm$0.0}
			&\textbf{100.0$\pm$0.0}
			& \textbf{100.0$\pm$0.0}
			&\textbf{100.0$\pm$0.0}
			&\textbf{100.0$\pm$0.0}
			\\
			
			& \multicolumn{2}{l}{Uniform-C(0.008)} 
			& 11.8$\pm$0.3  
			& \textbf{100.0$\pm$0.0}
			&\textbf{100.0$\pm$0.0}
			&\textbf{100.0$\pm$0.0}
			& 5.2$\pm$0.6
			& \textbf{100.0$\pm$0.0}
			&\textbf{100.0$\pm$0.0}
			&\textbf{100.0$\pm$0.0}
			\\
			
			& \multicolumn{2}{l}{CelebA} 
			& \textbf{100.0$\pm$0.0}  
			& \textbf{100.0$\pm$0.0}
			&\textbf{100.0$\pm$0.0}
			&\textbf{100.0$\pm$0.0}
			&\textbf{100.0$\pm$0.0}
			& \textbf{100.0$\pm$0.0}
			&\textbf{100.0$\pm$0.0}
			&\textbf{100.0$\pm$0.0}
			\\
			& \multicolumn{2}{l}{CelebA-C(0.08)}
			&54.7$\pm$0.5
			&\textbf{100.0$\pm$0.0}
			&\textbf{100.0$\pm$0.0}
			&\textbf{100.0$\pm$0.0}
			&58.2$\pm$0.3
			& \textbf{100.0$\pm$0.0}
			&\textbf{100.0$\pm$0.0}  
			&\textbf{100.0$\pm$0.0}
			\\
			& \multicolumn{2}{l}{CIFAR10} 
			&\textbf{100.0$\pm$0.0}  
			&\textbf{100.0$\pm$0.0}
			&\textbf{100.0$\pm$0.0}
			&\textbf{100.0$\pm$0.0}
			&\textbf{100.0$\pm$0.0}
			&\textbf{100.0$\pm$0.0}
			&\textbf{100.0$\pm$0.0}
			&\textbf{100.0$\pm$0.0}
			\\
			& \multicolumn{2}{l}{CIFAR10-C(0.12)}
			&54.7$\pm$0.5
			& 86.2$\pm$0.3
			&\textbf{100.0$\pm$0.0}
			&\textbf{100.0$\pm$0.0}
			&12.6$\pm$0.9
			& 98.3$\pm$0.5
			&\textbf{99.1$\pm$0.3}
			&\textbf{100.0$\pm$0.0}
			\\
			& \multicolumn{2}{l}{CIFAR100} 
			& \textbf{100.0$\pm$0.0}  
			&\textbf{100.0$\pm$0.0}
			&\textbf{100.0$\pm$0.0}
			&\textbf{100.0$\pm$0.0}
			&\textbf{100.0$\pm$0.0}
			&\textbf{100.0$\pm$0.0}
			&\textbf{100.0$\pm$0.0}
			&\textbf{100.0$\pm$0.0}
			\\
			& \multicolumn{2}{l}{CIFAR100-C(0.12)}
			&26.9$\pm$1.3
			&86.0$\pm$0.8
			&\textbf{95.5$\pm$0.4}
			&\textbf{100.0$\pm$0.0}
			&12.0$\pm$1.1
			&96.2$\pm$1.0
			&\textbf{97.2$\pm$0.2}
			&\textbf{100.0$\pm$0.0}
			\\
			& \multicolumn{2}{l}{ImageNet32} 
			& \textbf{100.0$\pm$0.0}  
			& \textbf{100.0$\pm$0.0} 
			&\textbf{100.0$\pm$0.0}
			&\textbf{100.0$\pm$0.0} 
			&\textbf{100.0$\pm$0.0}
			&\textbf{100.0$\pm$0.0} 
			&\textbf{100.0$\pm$0.0}
			&\textbf{100.0$\pm$0.0} 
			\\
			& \multicolumn{2}{l}{ImageNet32-C(0.07)}
			&42.6$\pm$0.4
			&\textbf{100.0$\pm$0.0} 
			&\textbf{100.0$\pm$0.0}
			&\textbf{100.0$\pm$0.0} 
			&35.7$\pm$0.3
			&\textbf{100.0$\pm$0.0} 
			&\textbf{100.0$\pm$0.0}
			&\textbf{100.0$\pm$0.0} 
			\\
			\hline
			\multirow{9}*{\rotatebox{90}{CIFAR10}} 
			
			& \multicolumn{2}{l}{Constant}
			&\textbf{100.0$\pm$0.0}
			&\textbf{100.0$\pm$0.0}
			&\textbf{100.0$\pm$0.0}
			&\textbf{100.0$\pm$0.0}
			&\textbf{100.0$\pm$0.0}
			&\textbf{100.0$\pm$0.0}
			&\textbf{100.0$\pm$0.0}
			&\textbf{100.0$\pm$0.0}
			\\
			
			& \multicolumn{2}{l}{Uniform}
			&\textbf{100.0$\pm$0.0}
			&\textbf{100.0$\pm$0.0}
			&\textbf{100.0$\pm$0.0}
			&\textbf{100.0$\pm$0.0}
			&\textbf{100.0$\pm$0.0}
			&\textbf{100.0$\pm$0.0}
			&\textbf{100.0$\pm$0.0}
			&\textbf{100.0$\pm$0.0}
			\\
			
			& \multicolumn{2}{l}{Uniform-C(0.02)}
			&10.7$\pm$0.1
			&\textbf{100.0$\pm$0.0}
			&\textbf{100.0$\pm$0.0}
			&\textbf{100.0$\pm$0.0}
			&5.1$\pm$1.0
			&\textbf{100.0$\pm$0.0}
			&\textbf{100.0$\pm$0.0}
			&\textbf{100.0$\pm$0.0}
			\\

			& \multicolumn{2}{l}{CelebA}
			&\textbf{100.0$\pm$0.0}
			&\textbf{100.0$\pm$0.0}
			&\textbf{100.0$\pm$0.0}
			&\textbf{100.0$\pm$0.0}
			&\textbf{100.0$\pm$0.0}
			&\textbf{100.0$\pm$0.0}
			&\textbf{100.0$\pm$0.0}
			&\textbf{100.0$\pm$0.0}
			\\
			& \multicolumn{2}{l}{CelebA-C(0.3)}
			&23.4$\pm$5.3
			&\textbf{100.0$\pm$0.0}
			&\textbf{100.0$\pm$0.0}
			&\textbf{100.0$\pm$0.0}
			&12.6$\pm$0.7
			&\textbf{100.0$\pm$0.0}
			&\textbf{100.0$\pm$0.0}
			&\textbf{100.0$\pm$0.0}
			\\
			& \multicolumn{2}{l}{ImageNet32} 
			&\textbf{100.0$\pm$0.0}
			&99.7$\pm$0.1
			&99.3$\pm$0.0
			&94.7$\pm$0.1
			&\textbf{100.0$\pm$0.0}
			&95.3$\pm$0.7
			&99.0$\pm$0.3
			&98.9$\pm$0.4
			\\
			& \multicolumn{2}{l}{ImageNet32-C(0.3)}
			&31.7$\pm$0.7
			&98.4$\pm$0.2
			&\textbf{94.8$\pm$0.3}
			&73.1$\pm$1.5
			&15.0$\pm$1.0
			&97.5$\pm$0.3
			&\textbf{96.7$\pm$0.5}
			&76.9$\pm$1.3
			\\
			& \multicolumn{2}{l}{SVHN} 
			&\textbf{99.9$\pm$0.0}
			&96.7$\pm$0.2
			&99.1$\pm$0.0
			&99.8$\pm$0.1
			&\textbf{100.0$\pm$0.0}
			&87.6$\pm$0.5
			&99.6$\pm$0.1
			&\textbf{100.0$\pm$0.0}
			\\
			& \multicolumn{2}{l}{SVHN-C(2.0)}
			&26.7$\pm$0.6
			&\textbf{100.0$\pm$0.0}
			&\textbf{100.0$\pm$0.0}
			&\textbf{100.0$\pm$0.0}
			&58.2$\pm$0.2
			&\textbf{100.0$\pm$0.0}
			&\textbf{100.0$\pm$0.0}
			&\textbf{100.0$\pm$0.0}
			\\
			\hline
			\multirow{6}*{\rotatebox{90}{CelebA}} & \multicolumn{2}{l}{CIFAR10}
			&1.0$\pm$0.1
			&94.3$\pm$0.8
			&\textbf{99.8$\pm$0.0}
			&\textbf{100.0$\pm$0.0}
			&0.0$\pm$0.0
			&95.6$\pm$0.5
			&\textbf{100.0$\pm$0.0}
			&\textbf{100.0$\pm$0.0}
			\\
			& \multicolumn{2}{l}{CIFAR100} 
			&2.0$\pm$0.2
			&94.7$\pm$0.4
			&\textbf{99.8$\pm$0.0}
			&\textbf{100.0$\pm$0.0}
			&0.0$\pm$0.0
			&95.2$\pm$0.4
			&\textbf{100.0$\pm$0.0}
			&\textbf{100.0$\pm$0.0}
			\\
			& \multicolumn{2}{l}{ImageNet32}
			&87.9$\pm$0.3
			&\textbf{100.0$\pm$0.0}
			&\textbf{100.0$\pm$0.0}
			&\textbf{100.0$\pm$0.0}
			&96.7$\pm$0.4
			&\textbf{100.0$\pm$0.0}
			&\textbf{100.0$\pm$0.0}
			&\textbf{100.0$\pm$0.0}
			\\
			& \multicolumn{2}{l}{ImageNet32-C(0.2)}
			&18.2$\pm$0.3 
			&\textbf{100.0$\pm$0.0}
			&\textbf{100.0$\pm$0.0}
			&\textbf{100.0$\pm$0.0}
			&7.8$\pm$0.3
			&\textbf{100.0$\pm$0.0}
			&\textbf{100.0$\pm$0.0}
			&\textbf{100.0$\pm$0.0}
			\\
			& \multicolumn{2}{l}{SVHN} 
			&91.5$\pm$0.6
			&\textbf{100.0$\pm$0.0}
			&\textbf{100.0$\pm$0.0}
			&\textbf{100.0$\pm$0.0}
			&98.6$\pm$0.2
			&\textbf{100.0$\pm$0.0}
			&\textbf{100.0$\pm$0.0}
			&\textbf{100.0$\pm$0.0}
			\\
			& \multicolumn{2}{l}{SVHN-C(1.8)} 
			&1.4$\pm$0.2
			&\textbf{100.0$\pm$0.0}
			&\textbf{100.0$\pm$0.0}
			&\textbf{100.0$\pm$0.0}
			&0.0$\pm$0.0
			&\textbf{100.0$\pm$0.0}
			&\textbf{100.0$\pm$0.0}
			&\textbf{100.0$\pm$0.0}
			\\
			\hline
			&\multicolumn{2}{c}{\textbf{average except CIFAR10 vs ImageNet32-C(0.3)}} & 65.49 & 98.76 & 99.79 & \textbf{99.82} & 64.43 & 98.94 & 99.83 & \textbf{99.96}\\
			&\textbf{average} & & 64.40 & 98.75 & \textbf{99.63} & 98.95 & 62.84 & 98.89 & \textbf{99.73} &99.22\\
			
			\bottomrule[1pt]   			
		\end{tabular}  
	\end{center}  
\end{table*}

\begin{table*} [h]
	\vspace{-0pt}
	\scriptsize
	\caption{GAD Results (AUROC and AUPR  in percentage) of Glow on One-vs-Rest on MNIST. The higher is the better.} 
	\label{tbl:glow_GAD_mnist_one_vs_rest}  
	\begin{center}  
		\begin{tabular}{lr rr r|r rr r}  
			\toprule[1pt]
			Batch size&
			\multicolumn{4}{c}{$m$=5}  &\multicolumn{4}{c}{$m$=10}
			\\
			\hline   
			Method & \multicolumn{2}{c}{\PADmethod} & \multicolumn{2}{c}{Ty-test} 
			& \multicolumn{2}{c}{\PADmethod} & \multicolumn{2}{c}{Ty-test} \\
			\hline
			Metric
			& \multicolumn{1}{c}{AUROC} 
			& \multicolumn{1}{c}{AUPR}
			& \multicolumn{1}{c}{AUROC} 
			& \multicolumn{1}{c}{AUPR}
			& \multicolumn{1}{c}{AUROC} 
			& \multicolumn{1}{c}{AUPR}
			& \multicolumn{1}{c}{AUROC} 
			& \multicolumn{1}{c}{AUPR}\\
			\hline
			class 0 vs rest 
			& \textbf{92.9$\pm$1.2} & \textbf{93.4$\pm$1.1} & 86.5$\pm$1.5 & 89.9$\pm$1.1 & \textbf{99.1$\pm$0.6} & \textbf{99.2$\pm$0.5} & 95.7$\pm$1.6 & 96.8$\pm$1.0\\
			class 1 vs rest
			& 96.6$\pm$0.5 & 96.9$\pm$0.6 & \textbf{100.0$\pm$0.0} & \textbf{100.0$\pm$0.0} & 99.8$\pm$0.1 & 99.8$\pm$0.1 & \textbf{100.0$\pm$0.0} & \textbf{100.0$\pm$0.0}\\
			class 2 vs rest
			& \textbf{85.0$\pm$0.7} & \textbf{85.8$\pm$0.9} & 69.8$\pm$1.5 & 74.8$\pm$1.8 & \textbf{95.0$\pm$1.2} & \textbf{95.2$\pm$1.2} & 76.7$\pm$1.4 & 81.7$\pm$1.4\\
			class 3 vs rest
			& 65.5$\pm$2.6 & 64.9$\pm$3.2 & \textbf{65.7$\pm$1.5} & \textbf{70.0$\pm$1.3} & \textbf{76.2$\pm$1.5} & \textbf{77.0$\pm$1.1} & 68.1$\pm$2.1 & 72.7$\pm$1.7\\
			class 4 vs rest
			& \textbf{94.8$\pm$0.6} & \textbf{95.3$\pm$0.5} & 77.4$\pm$0.8 & 82.2$\pm$1.4 & \textbf{99.8$\pm$0.1} & \textbf{99.8$\pm$0.1} & 81.6$\pm$2.1 & 86.0$\pm$1.8\\
			class 5 vs rest
			& \textbf{66.3$\pm$2.3} & \textbf{66.1$\pm$2.6} & 61.6$\pm$2.2 & 64.6$\pm$2.2 & \textbf{78.3$\pm$2.4} & \textbf{78.6$\pm$3.0} & 62.6$\pm$2.7 & 64.6$\pm$2.8\\
			class 6 vs rest
			& \textbf{88.0$\pm$1.0} & \textbf{88.4$\pm$0.9} & 64.3$\pm$1.7 & 66.9$\pm$1.1 & \textbf{98.0$\pm$0.7} & \textbf{98.2$\pm$0.7} & 58.6$\pm$1.2 & 63.0$\pm$1.2\\
			class 7 vs rest
			& \textbf{94.0$\pm$0.5} & \textbf{94.3$\pm$0.6} & 91.0$\pm$0.3 & 93.2$\pm$0.1 & \textbf{99.2$\pm$0.3} & \textbf{99.3$\pm$0.3} & 96.3$\pm$1.1 & 97.4$\pm$0.6\\
			class 8 vs rest
			& 76.0$\pm$1.4 & 76.8$\pm$1.6 & \textbf{81.2$\pm$1.6} & \textbf{85.7$\pm$1.3} & 86.2$\pm$1.4 & 86.7$\pm$1.8 & \textbf{92.1$\pm$0.8} & \textbf{93.7$\pm$1.0}\\
			class 9 vs rest
			& \textbf{95.0$\pm$0.7} & \textbf{95.6$\pm$0.8} & 76.3$\pm$1.5 & 80.2$\pm$1.8 & \textbf{99.7$\pm$0.2} & \textbf{99.7$\pm$0.2} & 80.1$\pm$2.6 &  84.2$\pm$1.9
			\\
			\hline
			\textbf{average} & \textbf{85.4} & \textbf{85.8} & 77.4 & 80.8 & \textbf{93.1} & \textbf{93.4} & 81.2 & 84.0\\
			\bottomrule[1pt]   			
		\end{tabular}  
	\end{center}  
\end{table*}

\textbf{GlowGMM}.  \label{sec:class_condition_glow}
Figure \ref{fig:glowgmm_fashionmnist_bar} and Table \ref{tbl:cond_glow_fashionmnist_CTR_TC} shows the results of GAD on FashionMNIST. Our method achieves 100\% AUROC on average when the batch size is 25. The baseline only reaches 22.7\% AUROC.
Recent works have improved the accuracy of conditional Glow on classification problems \cite{izmailov2019semisupervised, atanov2019semiconditional}.
However, as long as GlowGMM does not achieve 100\% classification accuracy, the question proposed in the introduction remains.

Table \ref{tbl:cond_glow_fashionmnist_logpz_classification} shows the results of using $p(\bm{z})$ for one-vs-rest classification on FashionMNIST with GlowGMM. $p(\bm{z})$ is not a good criterion for OOD detection. For example, the AUROC for class 8 vs rest is only 55.5\%.

\textbf{Generating OOD data Using GlowGMM}.
In Section \ref{sec:KL_GAD}, we sample blurred OOD data with fitted Gaussian distribution. 
In GlowGMM, we can generate \textit{high-quality} OOD images with fitted Gaussian distribution from OOD representations.
Figure \ref{fig:glow_cond_fashionmnist_samples}(a) shows the generated images using noise sampled from each Gaussian component $\mathcal{N}_i(\bm{\mu}_i, diag(\bm{\sigma_i^2}))\ (1\leq i \leq 10)$ as prior. The  $i$-th column corresponds to the $i$-th Gaussian $\mathcal{N}_i$. Figure \ref{fig:glow_cond_fashionmnist_samples}(b) shows the generated images using the similar operation in Section \ref{sec:investigate_flow_latents}. For each $i$, we compute the representations $\{\z\}$ of the $((i+1)\%10)$-th class and normalize them under $\mathcal{N}_i(\bm{\mu}_i, diag(\bm{\sigma_i^2}))$ as $\bm{z}'=(\bm{z}-\bm{\mu}_i)/\bm{\sigma}_i$. We use the normalized representation $\{\z'\}$ to fit a Gaussian distribution $\widetilde{\mathcal{N}}_i(\widetilde{\bm{\mu}}_{i'}, \widetilde{\bm{\Sigma}}_{i'})$.
Then We sample $\varepsilon_{i'}  \sim \widetilde{\mathcal{N}}_i(\widetilde{\bm{\mu}}_{i'}, \widetilde{\bm{\Sigma}}_{i'})$  and unnormalize $\varepsilon_{i'}$ back using parameters of the $i$-th component as $\varepsilon_{i'} \cdot \bm{\sigma}_i + \bm{\mu}_i$. Finally, we compute $f^{-1}(\varepsilon_{i'} \cdot \bm{\sigma}_i + \bm{\mu}_i)$ to generate new images. As shown in Figure \ref{fig:glow_cond_fashionmnist_samples}(b), we can generate almost high quality images of the $((i+1)\%10)$-th class from the fitted Gaussian. 
These results verify that OOD representations reside in specific directions that can be characterized by the mean and covariance matrix of OOD representations. We did not conduct more experiments on OOD sampling because it is beyond the scope of this paper.


\begin{figure*}[h!]%
	\centering
	\includegraphics[width=4.5cm]{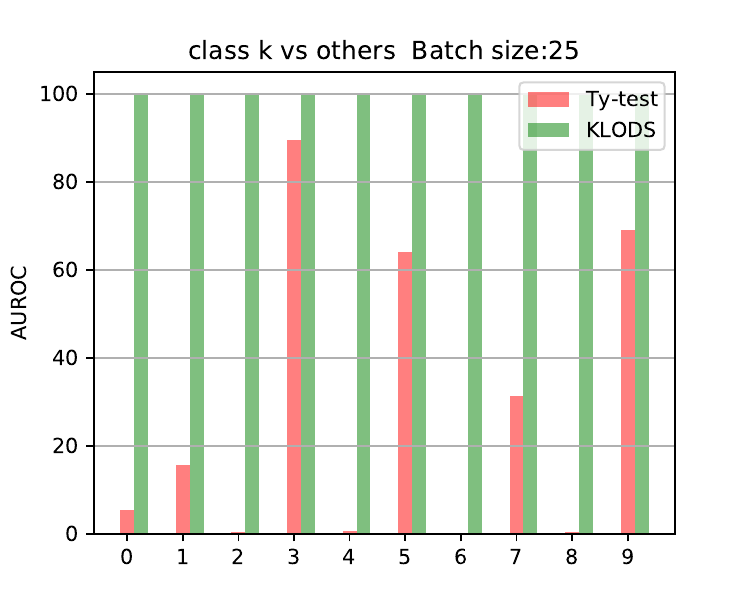}
	\vspace{-8pt}
	\caption{GAD results(AUROC) on GlowGMM trained on FashionMNIST. Numerical results are shown in Table \ref{tbl:cond_glow_fashionmnist_CTR_TC} in the supplementary material.}
	\label{fig:glowgmm_fashionmnist_bar}
\end{figure*}

\begin{table*}[t] 
	\vspace{-0pt}
	\scriptsize
	\caption{GAD results (AUROC and AUPR  in percentage) on GlowGMM trained on FashionMNIST.} 
	\vspace{-10pt}
	\label{tbl:cond_glow_fashionmnist_CTR_TC}  
	\begin{center}  
		\begin{tabular}{ l r r |r r }  
			\toprule[1pt]
			Batch size &\multicolumn{4}{ c }{$m$=25}  \\
			\hline  
			Method&\multicolumn{2}{ c }{\PADmethod} &\multicolumn{2}{ c }{Ty-test} \\
			\hline
			
			Metrics 
			& \multicolumn{1}{ c}{AUROC} 
			& \multicolumn{1}{c }{AUPR} 
			& \multicolumn{1}{ c}{AUROC} 
			& \multicolumn{1}{c }{AUPR}\\
			\hline
			
			class 0 vs rest 
			&\textbf{100.0$\pm$0.0} & \textbf{100.0$\pm$0.0}	
			& 5.4$\pm$1.6& 31.2$\pm$0.3
			\\
			class 1 vs rest 
			&\textbf{100.0$\pm$0.0} & \textbf{100.0$\pm$0.0} 	&15.7$\pm$2.4 
			&33.4$\pm$4.9\\
			class 2 vs rest 
			&\textbf{100.0$\pm$0.0} & \textbf{100.0$\pm$0.0} 	&0.5$\pm$0.5 
			&30.7$\pm$0.0 \\
			class 3 vs rest 
			&\textbf{99.9$\pm$0.1} & \textbf{99.9$\pm$0.1}	
			& 89.6$\pm$2.5 & 91.3$\pm$2.3 \\
			class 4 vs rest &
			\textbf{100.0$\pm$0.0} & \textbf{100.0$\pm$0.0} 	&0.7$\pm$0.6 &30.7$\pm$0.0 \\
			class 5 vs rest & 
			\textbf{100.0$\pm$0.0} & \textbf{100.0$\pm$0.0} 	&64.2$\pm$1.4 &66.4$\pm$2.9 \\
			class 6 vs rest 
			&\textbf{99.9$\pm$0.1} & \textbf{99.9$\pm$0.1}	&
			0.0$\pm$0.0 & 30.7$\pm$0.0\\
			class 7 vs rest  
			&\textbf{100.0$\pm$0.0}  & \textbf{100.0$\pm$0.0}  	
			&31.4$\pm$2.8 &46.6$\pm$3.3\\
			class 8 vs rest 
			&\textbf{100.0$\pm$0.0}  & \textbf{100.0$\pm$0.0} 	
			&0.4$\pm$0.5 &30.7$\pm$0.0 \\
			class 9 vs rest & 
			\textbf{100.0$\pm$0.0}  & \textbf{100.0$\pm$0.0}	
			& 69.0$\pm$3.6 &76.0$\pm$1.7 \\
			\hline
			\textbf{average} & \textbf{100} & \textbf{100} & 27.7 & 46.8\\
			\bottomrule[1pt] 
		\end{tabular}  
	\end{center}  
\end{table*}

\begin{figure*}[t]
	\centering
	\subfigure[]{
		\begin{minipage}[t]{3cm}
			\centering
			\includegraphics[width=3cm]{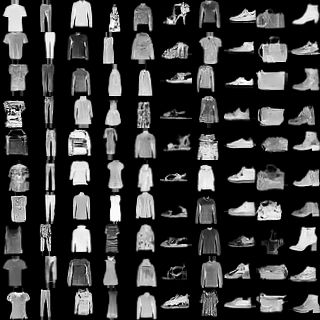}
			\label{fig:glow_cond_fashionmnist}
		\end{minipage}
	}
	\subfigure[]{
		\begin{minipage}[t]{3cm}
			\centering
			\includegraphics[width=3cm]{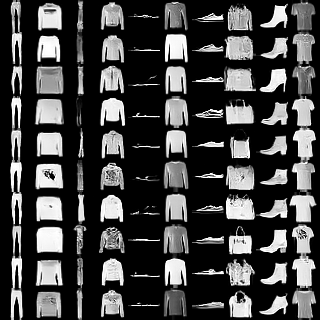}
			\label{fig:glow_cond_fashionmnist_sample_according_to_next_class_representations}
		\end{minipage}
	}	
	\vspace{-10pt}
	\caption{ 
		GlowGMM with 10 components trained on FashionMNIST. (a) sampling  from $\mathcal{N}_i(\bm{\mu}_i, diag(\bm{\sigma_i^2}))$. The  $i$-th column corresponds to Gaussian distribution $\mathcal{N}_i$. (b) For $\mathcal{N}_i$, we fit another Gaussian distribution $\widetilde{\mathcal{N}}_i(\widetilde{\bm{\mu}}_{i'}, \widetilde{\bm{\Sigma}}_{i'})$ using the normalized representations (by parameters of $\mathcal{N}_i$) of inputs of the $((i+1)\%10)$-th class. The $i$-th column shows images generated from $\widetilde{\mathcal{N}}_i$. 
	}
	\label{fig:glow_cond_fashionmnist_samples}
\end{figure*}

\begin{table*} [ht]
	\scriptsize
	\vspace{-0pt}
	\caption{GlowGMM trained on FashionMNIST. Use $p(\bm{z})$ as criterion for 1 vs rest classification.} 
	\label{tbl:cond_glow_fashionmnist_logpz_classification}  
	\begin{center}  
		\begin{tabular}{ l r r }  
			\toprule[1pt]
			Method & \multicolumn{2}{ c }{$ p(\bm{z})$}  \\
			\hline 
			Metrics &  \multicolumn{1}{ c}{AUROC} 
			& \multicolumn{1}{c }{AUPR}\\
			\hline
			
			class 0 vs rest &
			72.7$\pm$1.6 & 72.0$\pm$1.4 \\
			class 1 vs rest & 
			85.1$\pm$0.6 & 86.2$\pm$0.6\\
			class 2 vs rest & 
			74.8$\pm$4.5 & 76.9$\pm$4.0 \\
			class 3 vs rest & 
			68.9$\pm$4.7 & 71.2$\pm$4.5 \\
			class 4 vs rest & 
			77.1$\pm$2.1 & 78.4$\pm$3.2 \\
			class 5 vs rest & 
			71.7$\pm$1.4 & 71.9$\pm$1.2 \\
			class 6 vs rest & 
			73.5$\pm$7.8 & 73.7$\pm$8.6 \\
			class 7 vs rest & 
			86.9$\pm$0.4 & 88.6$\pm$0.4 \\
			class 8 vs rest & 
			55.5$\pm$0.9 & 53.8$\pm$0.5 \\
			class 9 vs rest& 
			86.6$\pm$0.3 & 87.1$\pm$0.3 \\
			\hline
			\textbf{average} & 75.3 & 76.0\\
			\bottomrule[1pt] 
		\end{tabular}  
	\end{center}  
\end{table*}

\subsection{GAD Results on VAE}

Figure \ref{fig:GAD_VAE_results_bar} and Table \ref{tbl:vae_fashionmnist_svhn_cifar10_CTR_TC} show GAD results on convolutional VAE on problems FashionMNIST/SVHN/CIFAR10 vs others.

Table \ref{tbl:vae_cifar10_cifar100_imagenet32_ctr_tc} shows the GAD results on CIFAR10 vs CIFAR100/ImageNet32.

\begin{figure*}[t]
	\centering
	\vspace{-0pt}
	\subfigure[]{
		\begin{minipage}[t]{4.15cm}
			\includegraphics[width=4.15cm]{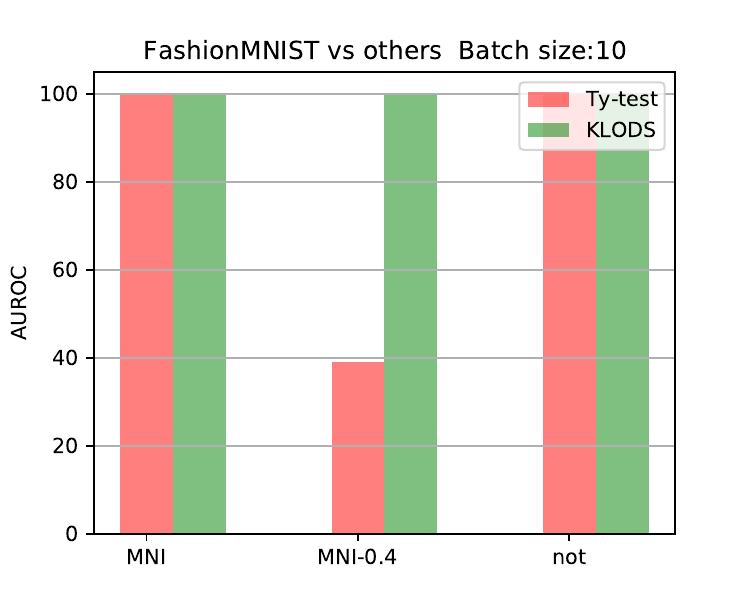}
			\label{fig:GAD_VAE_bar_fashion_vs_others_bs10}
		\end{minipage}
	}
	\subfigure[]{
		\begin{minipage}[t]{4.15cm}
			\centering
			\includegraphics[width=4.15cm]{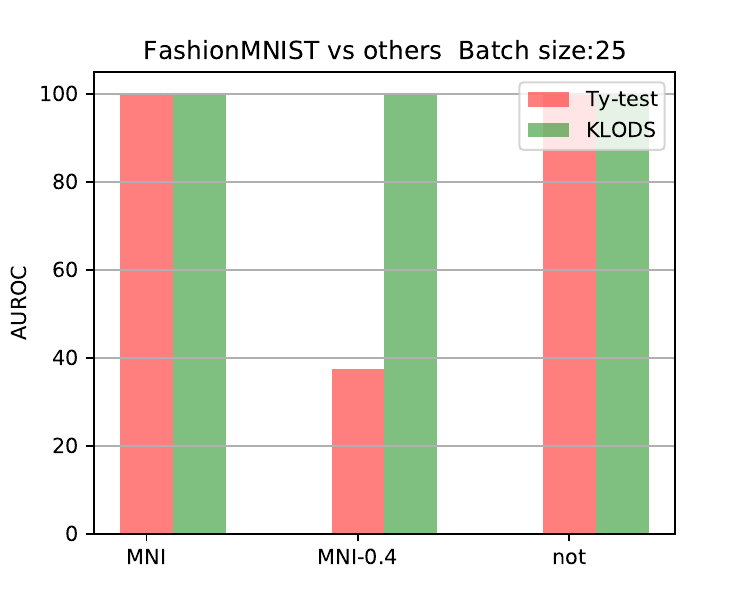}
			\label{fig:GAD_VAE_bar_fashion_vs_others_bs25}
		\end{minipage}%
	}%
	\subfigure[]{
		\begin{minipage}[t]{4.15cm}
			\includegraphics[width=4.15cm]{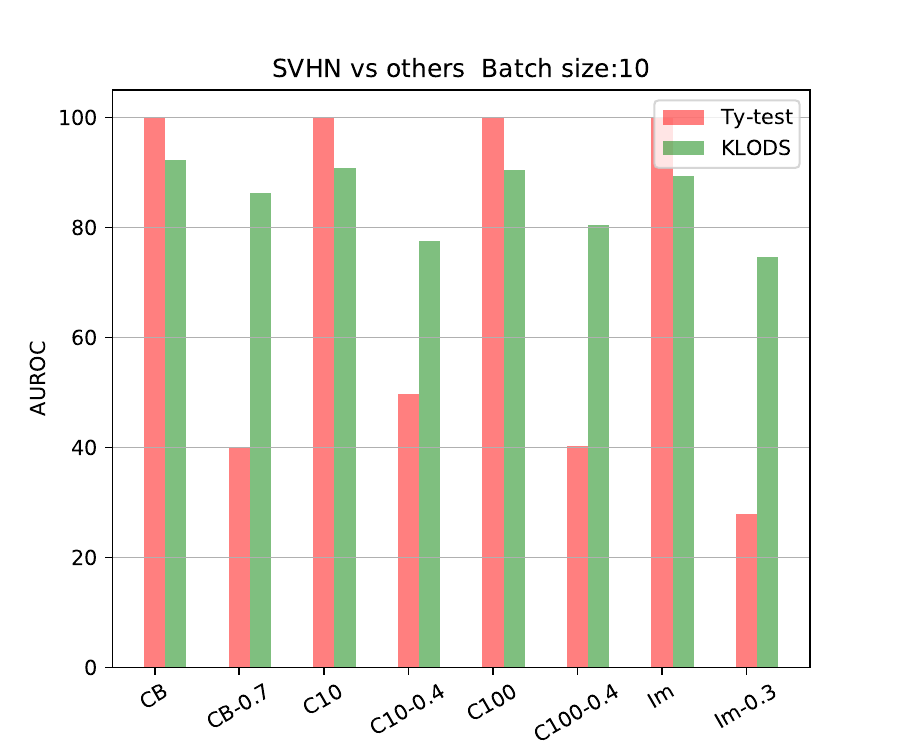}
			\label{fig:GAD_VAE_bar_svhn_vs_others_bs10}
		\end{minipage}
	}
	\vspace{-10pt}
	
	\subfigure[]{
		\begin{minipage}[t]{4.15cm}
			\includegraphics[width=4.15cm]{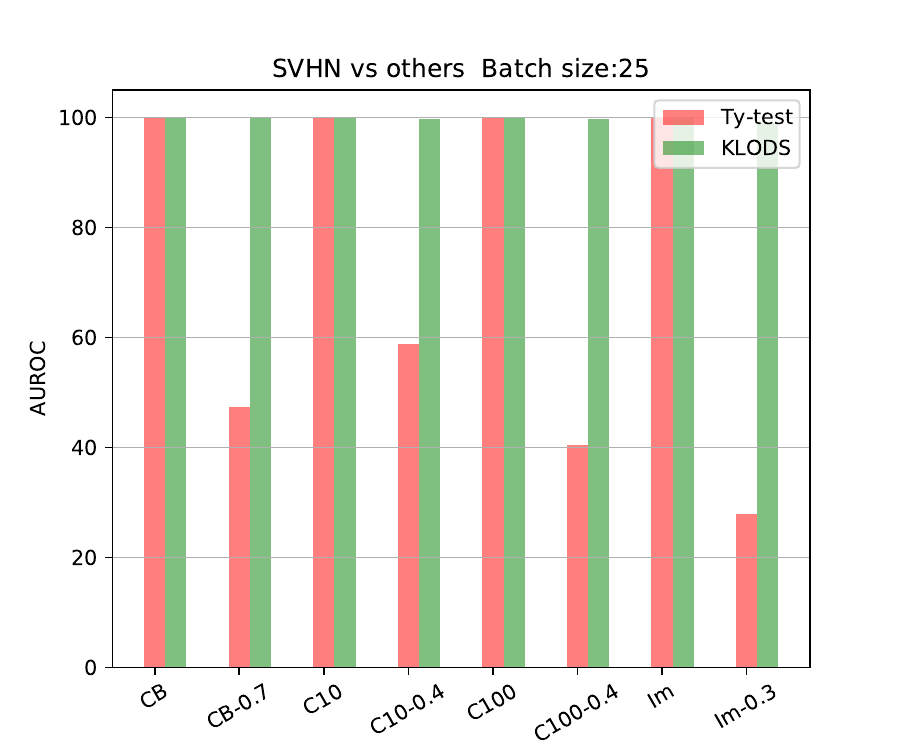}
			\label{fig:GAD_VAE_bar_svhn_vs_others_bs25}
		\end{minipage}
	}
	\subfigure[]{
		\begin{minipage}[t]{4.15cm}
			\includegraphics[width=4.15cm]{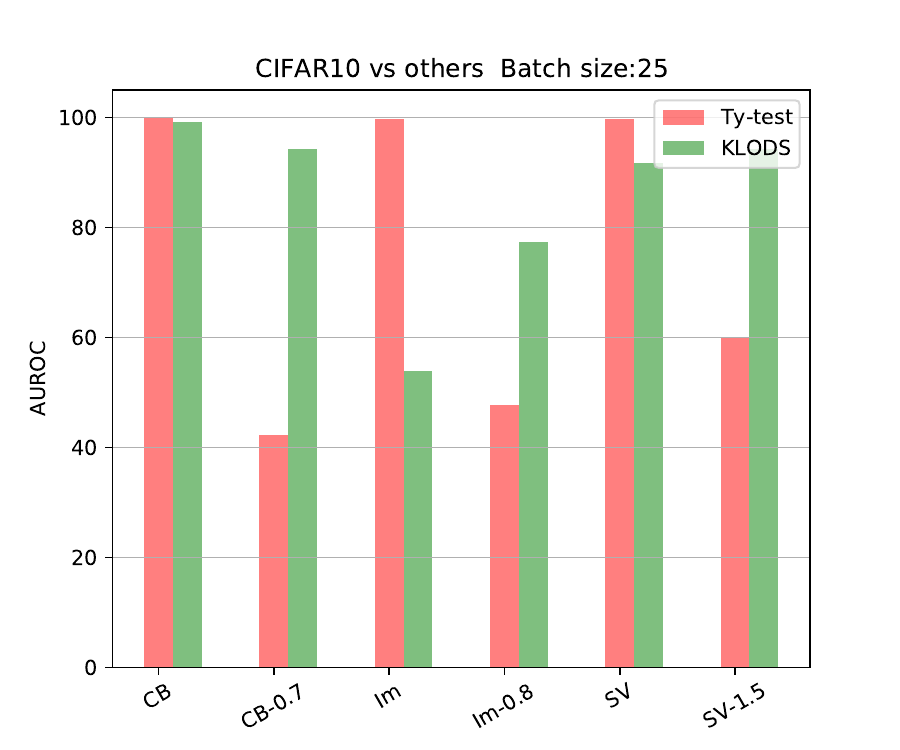}
			\label{fig:GAD_VAE_bar_cifar10_vs_others_bs25}
		\end{minipage}
	}
	\subfigure[]{
		\begin{minipage}[t]{4.15cm}
			\centering
			\includegraphics[width=4.15cm]{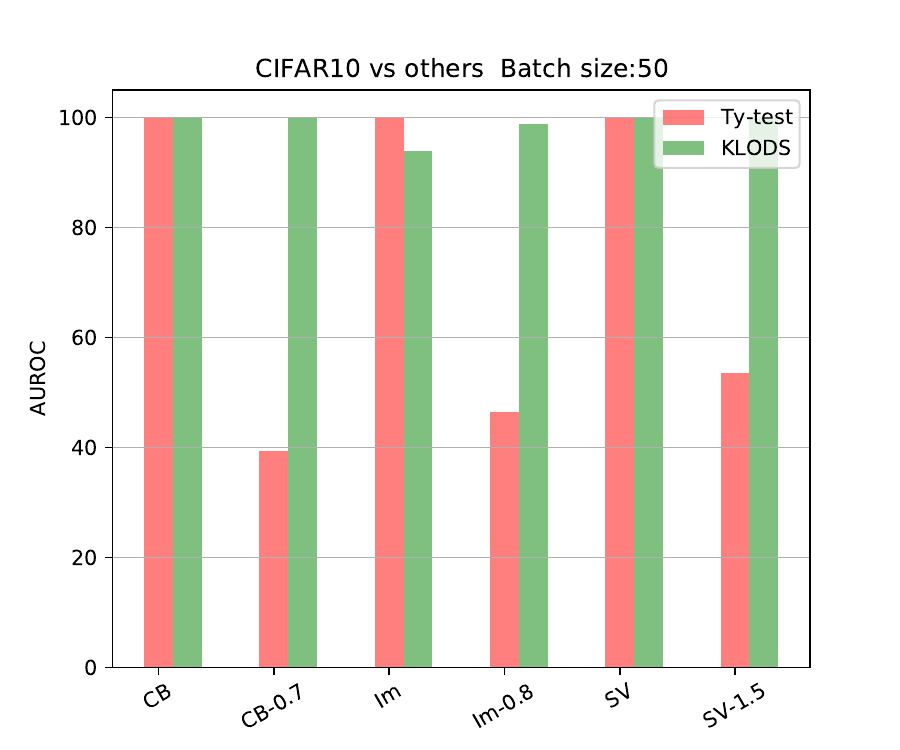}
			\label{fig:GAD_VAE_bar_cifar10_vs_others_bs50}
		\end{minipage}%
	}%
	\vspace{-10pt}
	\caption{GAD results (AUROC) on convolutional VAE. with batch sizes 5 and 10. The X-axis labeled with OOD datasets with abbreviated names. MNI: MNIST, not: notMNIST, CB: CelebA, C10/100: CIFAR-10/100, Im: ImageNet, SV: SVHN. The number $k$ after the dataset name indicates the dataset with adjusted contrast with a factor $k$. For example, CB-0.7 means CelebA-C(0.7). Numerical results are shown in Table \ref{tbl:vae_fashionmnist_svhn_cifar10_CTR_TC} in the supplementary material.}  
	\label{fig:GAD_VAE_results_bar}
\end{figure*}

\begin{table*} [t]
	\vspace{-10pt}
	\scriptsize
	\caption{GAD results (AUROC and AUPR  in percentage) of \GADmethod\ on VAE.} 
	\label{tbl:vae_fashionmnist_svhn_cifar10_CTR_TC}  
	\vspace{-10pt}
	\begin{center}  
		\begin{tabular}{ c l l r r r r | r r r r }  
			\toprule[1pt]
			\multirow{3}*{ID$\downarrow$}&\multirow{3}*{OOD$\downarrow$}&Batch size$\rightarrow$& \multicolumn{4}{ c }{$m$=10}  &\multicolumn{4}{ c }{$m$=25}  \\
			\cline{3-11}   
			& & Method$\rightarrow$& \multicolumn{2}{ c }{\GADmethod} & \multicolumn{2}{ c }{Ty-test} & \multicolumn{2}{ c }{\GADmethod} & \multicolumn{2}{ c }{Ty-test} \\
			\cline{3-11} 
			&&Metric$\rightarrow$
			&  \multicolumn{1}{ c}{AUROC} 
			& \multicolumn{1}{c }{AUPR}
			&  \multicolumn{1}{ c}{AUROC} 
			& \multicolumn{1}{c }{AUPR}
			&  \multicolumn{1}{ c}{AUROC} 
			& \multicolumn{1}{c }{AUPR}
			&  \multicolumn{1}{ c}{AUROC} 
			& \multicolumn{1}{c }{AUPR}\\
			\hline
			
			\multirow{3}*{\rotatebox{90}{\tabincell{c}{Fash.}}} & \multicolumn{2}{ l }{MNIST} 
			&99.7$\pm$0.1 
			&99.5$\pm$0.2
			&\textbf{100.0$\pm$0.0} 
			&\textbf{100.0$\pm$0.0}
			&\textbf{100.0$\pm$0.0}
			&\textbf{100.0$\pm$0.0} 
			&\textbf{100.0$\pm$0.0} 
			&\textbf{100.0$\pm$0.0}
			\\
			& \multicolumn{2}{ l }{MNIST-C(0.4)}
			&\textbf{99.8$\pm$0.0}
			&\textbf{99.8$\pm$0.0}
			&39.1$\pm$0.7
			&40.5$\pm$0.3
			&\textbf{100.0$\pm$0.0}
			&\textbf{100.0$\pm$0.0}
			&37.6$\pm$1.9
			&39.8$\pm$0.7
			\\			
			& \multicolumn{2}{ l }{notMNIST} 
			&\textbf{100.0$\pm$0.0} 
			&\textbf{100.0$\pm$0.0}
			&\textbf{100.0$\pm$0.0} 
			&\textbf{100.0$\pm$0.0} 
			&\textbf{100.0$\pm$0.0} 
			&\textbf{100.0$\pm$0.0} 
			&\textbf{100.0$\pm$0.0} 
			&\textbf{100.0$\pm$0.0} 
			\\
			\hline
			
			\multirow{8}*{\rotatebox{90}{\tabincell{c}{SVHN}}} &\multicolumn{2}{ l }{CelebA} 
			&92.2$\pm$0.6
			&82.3$\pm$1.1
			&\textbf{100.0$\pm$0.0}
			&\textbf{100.0$\pm$0.0}
			&\textbf{100.0$\pm$0.0}
			&\textbf{100.0$\pm$0.0}
			&\textbf{100.0$\pm$0.0}
			&\textbf{100.0$\pm$0.0}
			\\
			& \multicolumn{2}{ l }{CelebA-C(0.7)} 
			&\textbf{86.2$\pm$0.9}
			&\textbf{76.5$\pm$1.5}
			&39.9$\pm$1.2
			&41.2$\pm$0.5
			&\textbf{100.0$\pm$0.0}
			&\textbf{100.0$\pm$0.0}
			&47.4$\pm$1.5
			&44.3$\pm$0.7
			\\
			& \multicolumn{2}{ l }{CIFAR-10}
			&90.9$\pm$1.3
			&81.3$\pm$2.3
			&\textbf{100.0$\pm$0.0}
			&\textbf{100.0$\pm$0.0}
			&\textbf{100.0$\pm$0.0}
			&\textbf{100.0$\pm$0.0}
			&\textbf{100.0$\pm$0.0}
			&\textbf{100.0$\pm$0.0}
			\\
			& \multicolumn{2}{ l }{CIFAR-10-C(0.4)}
			&\textbf{77.6$\pm$8.8}
			&\textbf{69.9$\pm$1.3}
			&49.8$\pm$0.6
			&45.8$\pm$0.3
			&\textbf{99.7$\pm$0.2}
			&\textbf{99.6$\pm$0.3}
			&58.8$\pm$0.9
			&50.2$\pm$0.4
			\\
			& \multicolumn{2}{ l }{CIFAR-100}
			&90.4$\pm$0.4
			&80.3$\pm$0.6
			&\textbf{100.0$\pm$0.0}
			&\textbf{100.0$\pm$0.0}
			&\textbf{100.0$\pm$0.0}
			&\textbf{100.0$\pm$0.0}
			&\textbf{100.0$\pm$0.0}
			&\textbf{100.0$\pm$0.0}
			\\
			& \multicolumn{2}{ l }{CIFAR-100-C(0.4)}
			&\textbf{80.5$\pm$1.0}
			&\textbf{73.2$\pm$1.8}
			&40.3$\pm$0.8
			&40.7$\pm$1.3
			&\textbf{99.8$\pm$0.0}
			&\textbf{99.8$\pm$0.0}
			&40.5$\pm$0.4
			&41.3$\pm$0.2
			\\
			& \multicolumn{2}{ l }{ImageNet32} 
			&89.3$\pm$8.6
			&80.1$\pm$1.5
			&\textbf{100.0$\pm$0.0}
			&\textbf{100.0$\pm$0.0}
			&\textbf{100.0$\pm$0.0}
			&\textbf{100.0$\pm$0.0}
			&\textbf{100.0$\pm$0.0}
			&\textbf{100.0$\pm$0.0}
			\\
			& \multicolumn{2}{ l }{ImageNet32-C(0.3)}
			&\textbf{74.6$\pm$0.6}
			&\textbf{67.8$\pm$0.7}
			&27.9$\pm$1.0
			&36.5$\pm$0.3
			&\textbf{99.0$\pm$0.0}
			&\textbf{99.0$\pm$0.0}
			&27.9$\pm$1.0
			&36.5$\pm$0.3
			\\	
			\hline
			& \textbf{average} & & \textbf{89.2} & \textbf{82.8}&72.5 &73.2&\textbf{99.9}&\textbf{99.9}&73.8&73.8\\
			\hline
			ID $\downarrow$ & OOD$\downarrow$ & Batch size & \multicolumn{4}{ c }{$m$=25}  &\multicolumn{4}{ c }{$m$=50}  
			\\
			\hline{\tiny }
			\multirow{6}*{\rotatebox{90}{\tabincell{c}{CIFAR-10}}} & \multicolumn{2}{ l }{CelebA} 
			&99.1$\pm$0.4
			&99.1$\pm$0.4
			&\textbf{100.0$\pm$0.0}
			&\textbf{100.0$\pm$0.0}
			&\textbf{100.0$\pm$0.0}
			&\textbf{100.0$\pm$0.0}
			&\textbf{100.0$\pm$0.0}
			&\textbf{100.0$\pm$0.0}
			\\
			& \multicolumn{2}{ l }{CelebA-C(0.7)} 
			&\textbf{94.2$\pm$0.6}
			&\textbf{93.8$\pm$0.8}
			&42.3$\pm$1.1
			&42.8$\pm$0.6
			&\textbf{100.0$\pm$0.0}
			&\textbf{100.0$\pm$0.0}
			&39.3$\pm$2.0
			&41.1$\pm$1.0
			\\
			& \multicolumn{2}{ l }{ImageNet32} 
			&54.0$\pm$1.9
			&53.4$\pm$0.7
			&\textbf{99.8$\pm$0.1}
			&\textbf{99.8$\pm$0.1}
			&94.0$\pm$0.6
			&94.0$\pm$0.5
			&\textbf{100.0$\pm$0.0}
			&\textbf{100.0$\pm$0.0}
			\\
			& \multicolumn{2}{ l }{ImageNet32-C(0.8)}
			&\textbf{77.4$\pm$1.4}
			&\textbf{77.3$\pm$1.8}
			&47.8$\pm$1.5
			&48.0$\pm$1.5
			&\textbf{98.8$\pm$0.5}
			&\textbf{98.9$\pm$0.4}
			&46.4$\pm$1.7
			&46.8$\pm$1.2
			\\	
			& \multicolumn{2}{ l }{SVHN} 
			&91.8$\pm$1.5
			&91.1$\pm$2.3
			&\textbf{99.8$\pm$0.0}
			&\textbf{99.8$\pm$0.0}
			&\textbf{100.0$\pm$0.0}
			&\textbf{100.0$\pm$0.0}
			&\textbf{100.0$\pm$0.0}
			&\textbf{100.0$\pm$0.0}
			\\
			& \multicolumn{2}{ l }{SVHN-C(1.5)}
			&\textbf{94.2$\pm$1.5}
			&\textbf{91.1$\pm$2.3}
			&60.0$\pm$1.7
			&61.4$\pm$1.7
			&\textbf{100.0$\pm$0.0}
			&\textbf{100.0$\pm$0.0}
			&53.6$\pm$2.7
			&55.7$\pm$1.6
			\\
			\hline
			& \textbf{average} & & \textbf{85.1} &\textbf{84.3} & 75.0&75.3&\textbf{98.8}&\textbf{98.8}&73.2&73.9\\
			\bottomrule[1pt]   			
		\end{tabular}  
	\end{center}  
\end{table*}

\begin{table*}[ht] 
	\vspace{-10pt}
	\scriptsize
	\caption{GAD results (AUROC and AUPR) of \GADmethod\ without split representations on VAE trained on CIFAR10 and tested on CIFAR100. Each row is for one batch size.} 
	\label{tbl:vae_cifar10_cifar100_imagenet32_ctr_tc}  
	\vspace{-0pt}
	\begin{center}  
		\begin{tabular}{ l r r r r | r r r r }  
			\toprule[1pt]
			Problem & \multicolumn{4}{ c }{CIFAR10 vs CIFAR100}& \multicolumn{4}{ c }{CIFAR10 vs ImageNet32} \\
			\hline
			Method
			& \multicolumn{2}{ c }{\GADmethod}
			&\multicolumn{2}{ c }{Ty-test}  
			& \multicolumn{2}{ c }{\GADmethod}
			&\multicolumn{2}{ c }{Ty-test}  \\
			\hline
			Metric
			&  \multicolumn{1}{ c}{AUROC} 
			& \multicolumn{1}{c }{AUPR}
			&  \multicolumn{1}{ c}{AUROC} 
			& \multicolumn{1}{c }{AUPR}
			&  \multicolumn{1}{ c}{AUROC} 
			& \multicolumn{1}{c }{AUPR}
			&  \multicolumn{1}{ c}{AUROC} 
			& \multicolumn{1}{c }{AUPR} \\
			\hline
			
			$m$=50 
			&72.9$\pm$0.7
			&73.7$\pm$2.1
			&\textbf{73.8$\pm$0.5}
			&\textbf{74.3$\pm$1.8}
			&94.0$\pm$0.6
			&94.0$\pm$0.5
			&\textbf{100.0$\pm$0.0}
			&\textbf{100.0$\pm$0.0}
			\\
			$m$=100
			&\textbf{90.9$\pm$1.0}
			&\textbf{91.3$\pm$1.3}
			&82.6$\pm$0.5
			&83.5$\pm$1.1
			&99.9$\pm$0.2
			&99.9$\pm$0.2
			&\textbf{100.0$\pm$0.0}
			&\textbf{100.0$\pm$0.0}
			\\
			$m$=150 
			&\textbf{98.0$\pm$0.4}
			&\textbf{98.1$\pm$0.5}
			&88.4$\pm$1.3
			&88.6$\pm$2.3
			&\textbf{100.0$\pm$0.0}
			&\textbf{100.0$\pm$0.0}
			&\textbf{100.0$\pm$0.0}
			&\textbf{100.0$\pm$0.0}
			\\
			\bottomrule[1pt] 			
		\end{tabular}  
	\end{center}  
\end{table*}

Table \ref{tbl:vae_cifar10_reconstruction_probability} shows the results of using reconstruction probability $E_{z\sim q_{\phi}}[\log p_{\theta}(\bm{x}|\bm{z})]$ for OOD detection in VAE.

\begin{table*} [ht]
	\scriptsize
	\vspace{-0pt}
	\caption{VAE trained on CIFAR10. Use reconstruction probability for OOD data detection.} 
	\label{tbl:vae_cifar10_reconstruction_probability}  
	\begin{center}  
		\begin{tabular}{ l r r }  
			\toprule[1pt]
			Method & \multicolumn{2}{ c }{reconstruction probability}  \\
			\hline 
			Metrics &  \multicolumn{1}{ c}{AUROC} 
			& \multicolumn{1}{c }{AUPR}\\
			\hline
			SVHN
			&17.6$\pm$0.0
			&34.3$\pm$0.0
			\\
			CelebA
			&83.1$\pm$0.0
			&82.5$\pm$0.0
			\\
			ImageNet32
			&72.4$\pm$0.2
			&75.0$\pm$0.1
			\\
			CIFAR100
			&52.3$\pm$0.0
			&53.6$\pm$0.0
			\\
			\hline
			\textbf{average} & 56.4 &61.4\\
			\bottomrule[1pt] 
		\end{tabular}  
	\end{center}  
\end{table*}

\subsection{PAD results on Glow}\label{sec:PAD_appendix}
Table \ref{tbl:PAD_vs_confidence_loss_ODIN} shows PAD results (AUROC in percentage) of Joint confidence loss method, ODIN, Joint confidence loss method+ODIN, DoSE and KLODS.

\begin{table*}[!ht]
	\centering
	\caption{PAD results (AUROC in percentage) of Joint confidence loss method, ODIN, Joint confidence loss method+ODIN, DoSE, and KLODS. We use all the problems evaluated in the original publication of Joint confidence loss method \cite{confidenceloss2018ICLR}.  Our method outperforms all baselines.\\
		\textbf{*} The authors of Joint confidence loss method \cite{confidenceloss2018ICLR} did not report AUROC result of Joint confidence loss method+ODIN on SVHN vs TinyImageNet. Since joint confidence loss+ODIN is reported to be better than Joint confidence loss method, so we just use 100\% AUROC.}
	\label{tbl:PAD_vs_confidence_loss_ODIN}  
	\begin{tabular}{llcccccc}
		\toprule[1pt]
		ID & OOD & confidence loss & joint confidence loss & ODIN & joint confidence loss+ODIN& DoSE & KLODS\\
		\hline
		\multirow{3}*{SVHN} & CIFAR-10 & 83 & 97.6 & 94 & \textbf{99} &96.2 &  98.9 \\ 
		~ & TinyImageNet & 98 & 99.5 & 95 & \textbf{100*} & \textbf{100} & 99.8 \\ 
		~ & LSUN & 98.5 & 99.8 & 94 & \textbf{100} & 91.6 &  \textbf{100} \\ \hline
		\multirow{3}*{CIFAR-10} & SVHN & 46.5 & 67.5 & 85 & 85 & \textbf{95.5}  & 82.6 \\ 
		~ & TinyImageNet & 67 & 72.5 & 76 & \textbf{86} & 76.7 & 83.9 \\ 
		~ & LSUN & 62.5 & 76 & 78 & 87.5 & 98 & \textbf{98.9} \\ \hline
		&\textbf{average} & 75.9 & 85.5 & 87 & 92.9 & 93 & \textbf{94.0}\\
		\bottomrule[1pt]
	\end{tabular}
\end{table*}

\clearpage

\section{Figures} \label{sec:appendix_more_figures}
This section contains figures referred to in the other parts.

\begin{figure*}[t]
	\centering
	\vspace{-0pt}
	\subfigure[]{
		\begin{minipage}[t]{4.4cm}
			\centering
			\includegraphics[width=4.4cm]{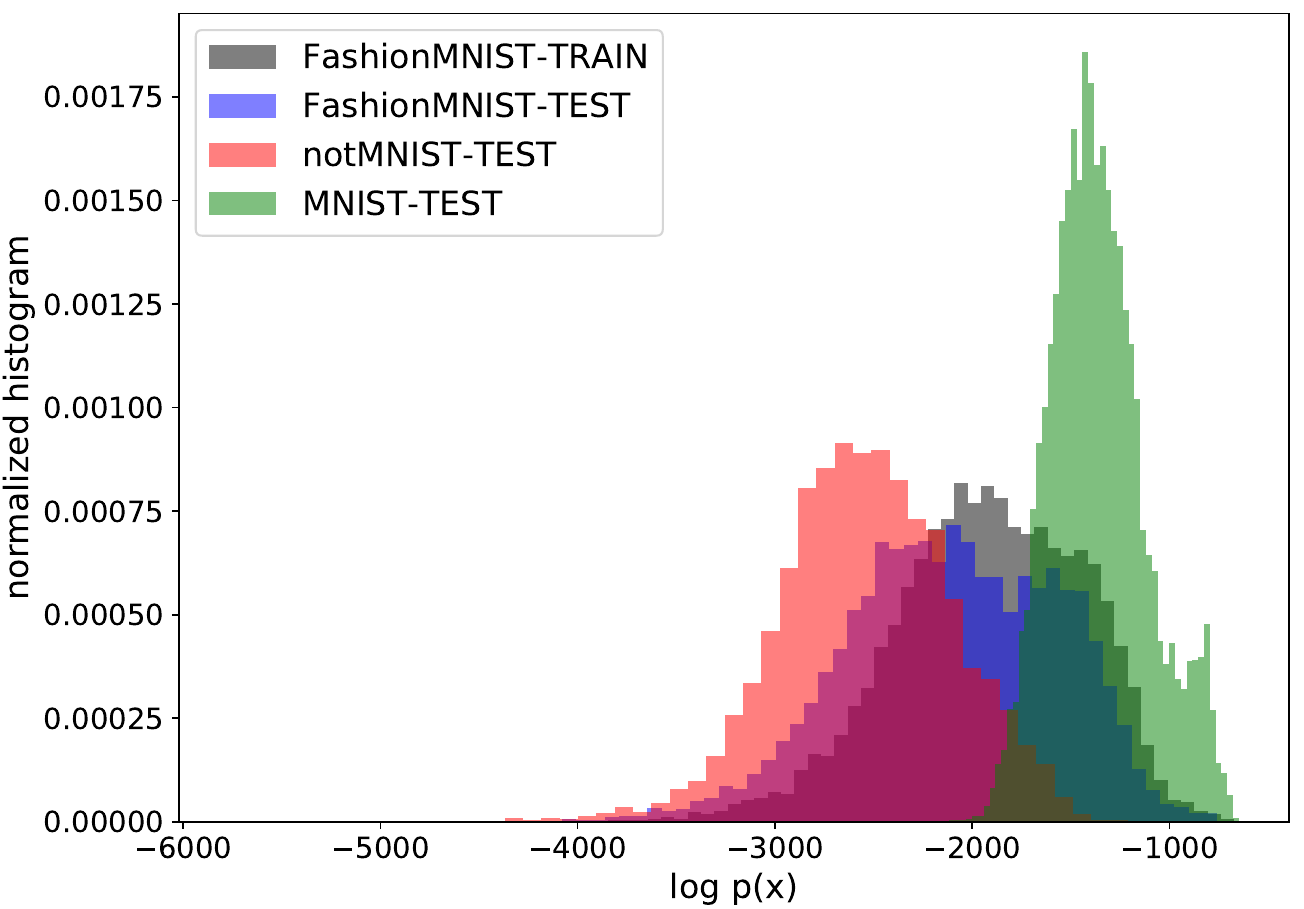}
			\label{fig:logpx_fashionmnist_notmnist_mnist}
		\end{minipage}%
	}%
	\subfigure[]{
		\begin{minipage}[t]{4.15cm}
			\includegraphics[width=4.15cm]{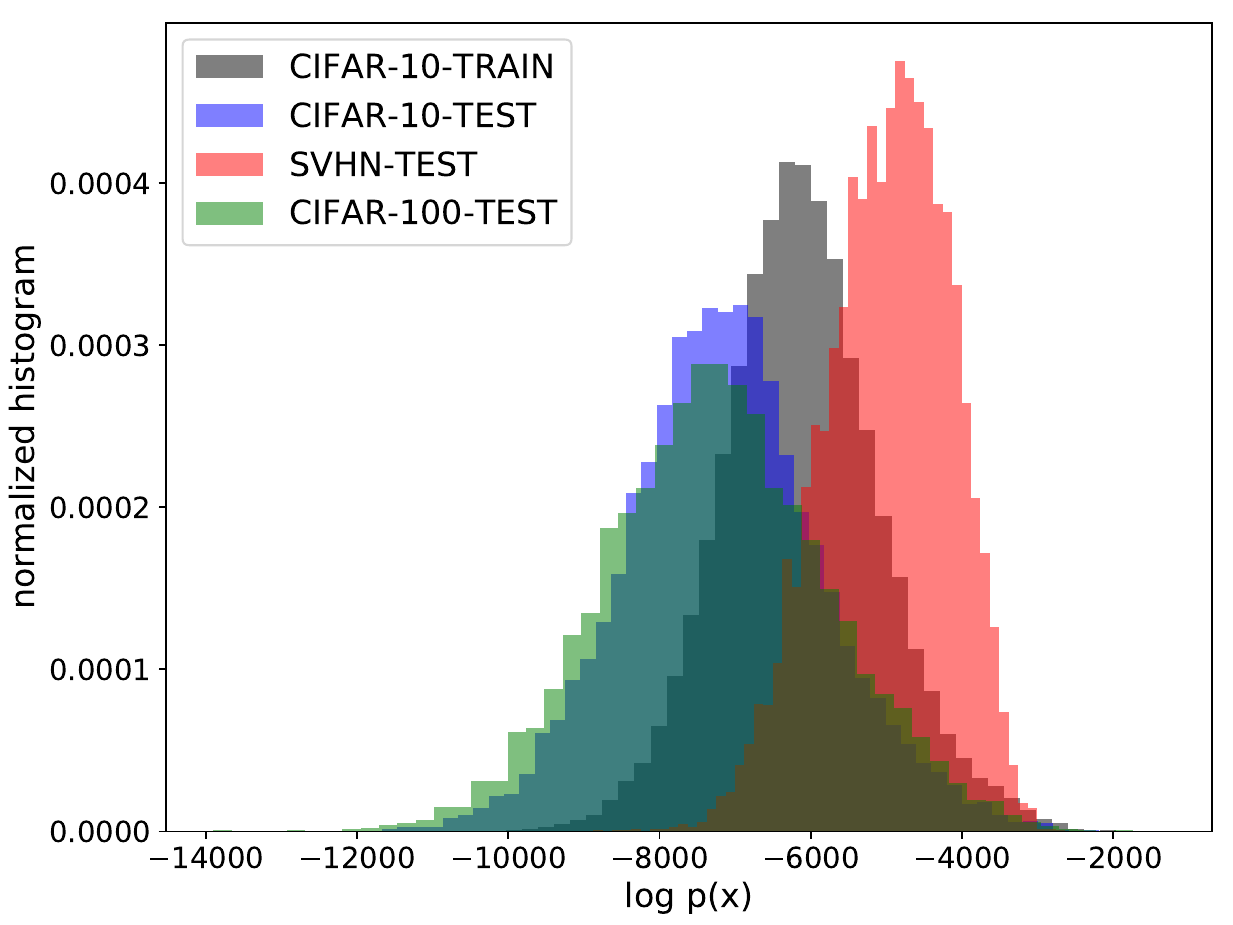}
			\label{fig:logpx_cifar10_cifar100_svhn}
		\end{minipage}
	}
	\subfigure[]{
		\begin{minipage}[t]{4.15cm}
			\includegraphics[width=4.15cm]{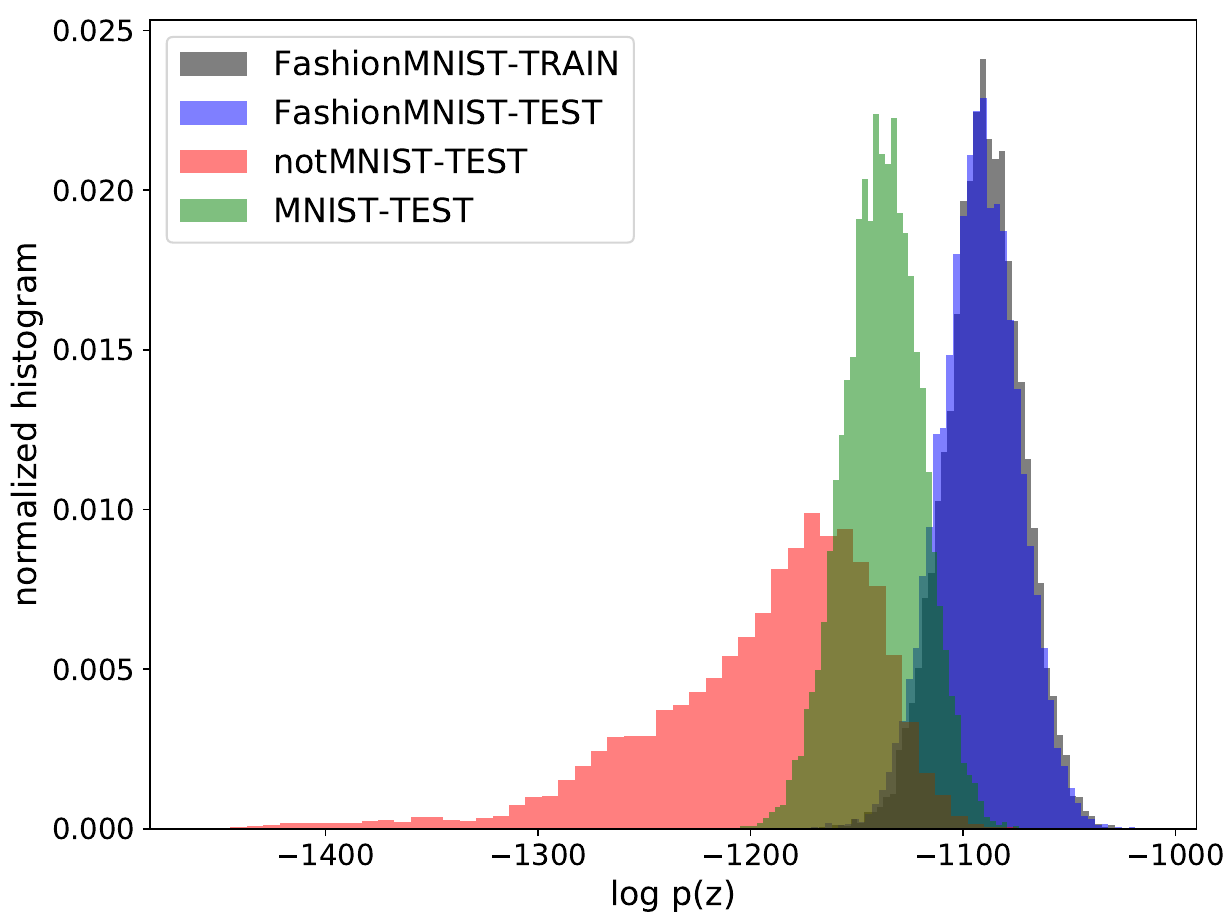}
			\label{fig:logpz_fashionmnist_notmnist_mnist}
		\end{minipage}
	}
	\subfigure[]{
		\begin{minipage}[t]{4.18cm}
			\includegraphics[width=4.18cm]{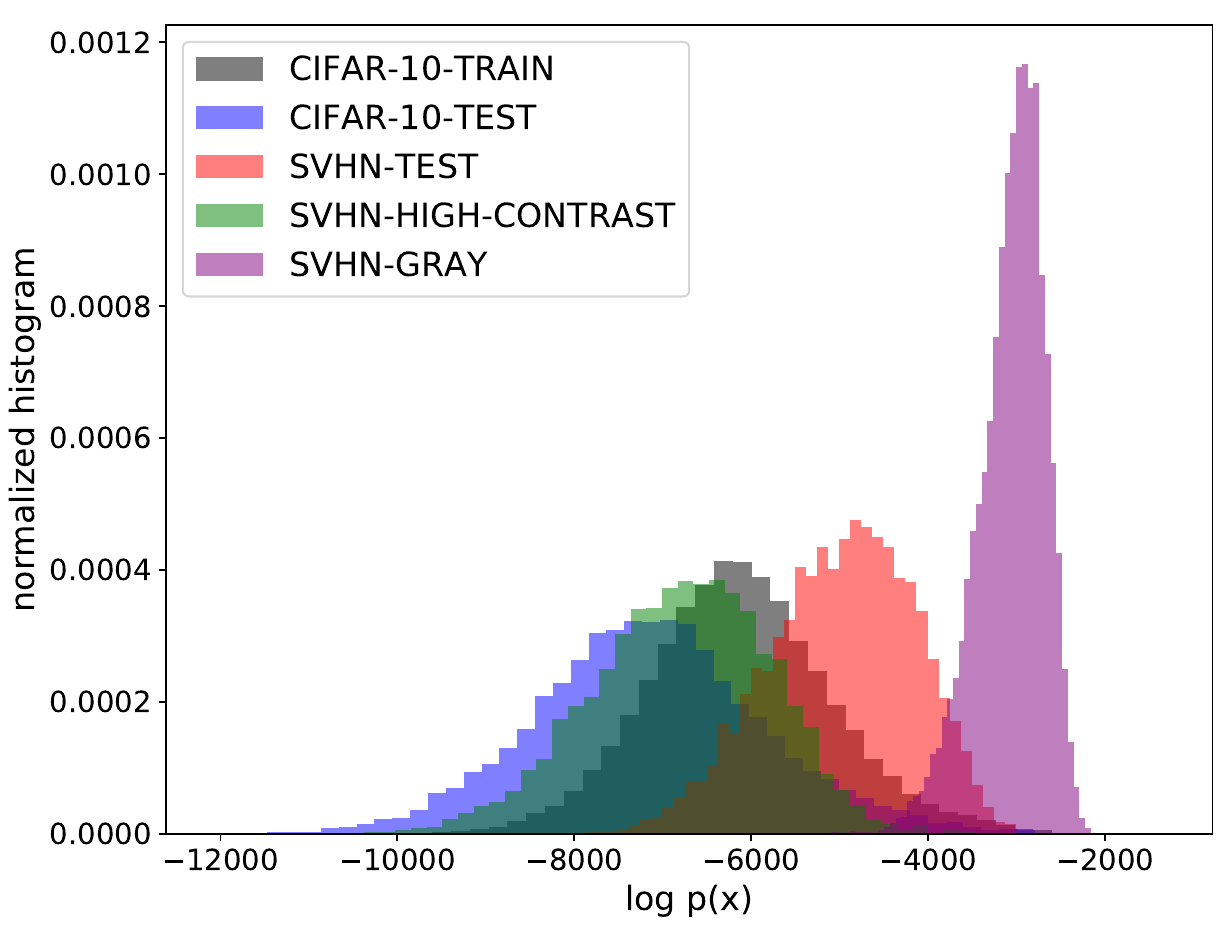}
			\label{fig:logpx_cifar10_svhn_contrast_glow}
		\end{minipage}
	}
	\vspace{-5pt}
	\caption{Distributions of likelihoods of ID dataset (train and test) and OOD dataset. (a) and (b) show the normalized histogram of $\log p(\bm{z})$ and $\log p(\bm{x})$  on Glow trained on FashionMNIST, respectively. (c) shows the normalized histogram of $\log p(\bm{x})$ on Glow trained on CIFAR-10. (d) shows that $\log p(\bm{x})$ of OOD data can be manipulated by adjusting the contrast of images. SVHN-HIGH-CONTRAST and SVHN-GRAY are SVHN with adjusted contrast by a factor of 2.0 and 0.3, respectively.}  
	\label{fig:logp_compare_problem}
\end{figure*}

\begin{figure*}[t]
	\centering
	\subfigure[]{
		\begin{minipage}[t]{5cm}
			\centering
			\includegraphics[width=5cm]{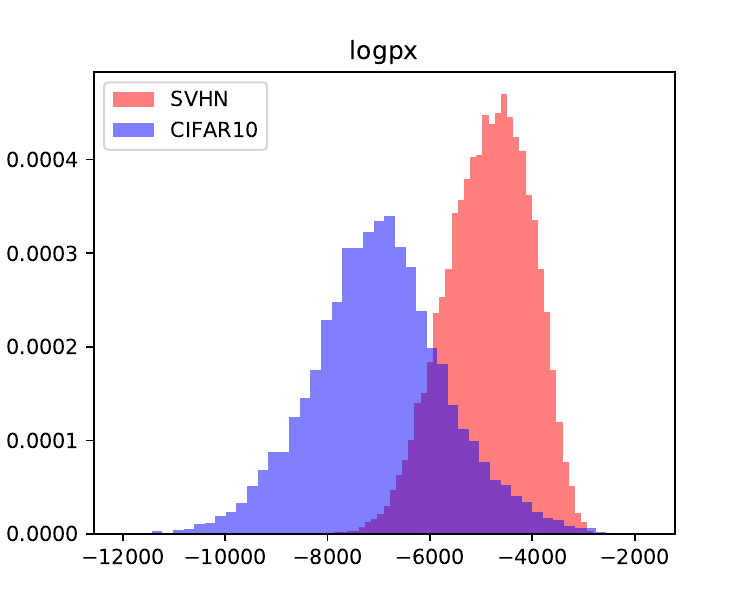}
			\label{fig:logpx_cifar10_vs_svhn_residual_flow}
		\end{minipage}
	}
	\subfigure[]{
		\begin{minipage}[t]{5cm}
			\centering
			\includegraphics[width=5cm]{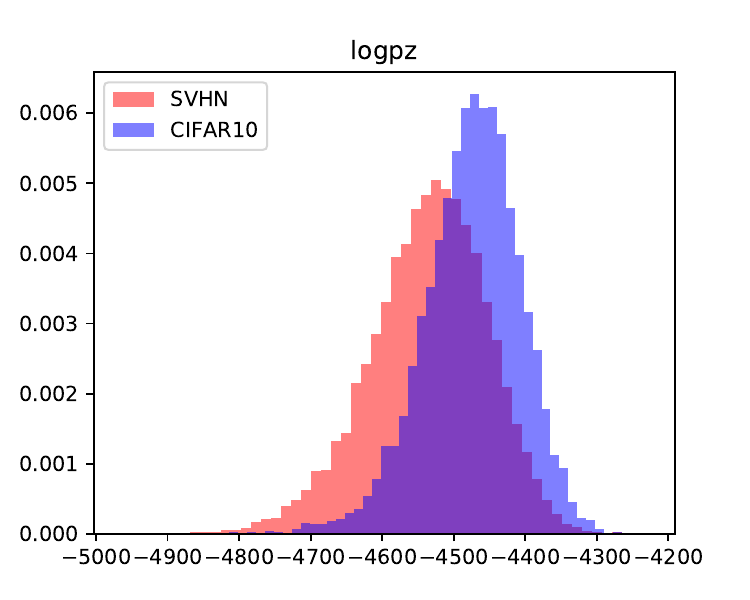}
			\label{fig:logpz_cifar10_vs_svhn_residual_flow}
		\end{minipage}
	}
	\subfigure[]{
		\begin{minipage}[t]{5cm}
			\centering
			\includegraphics[width=5cm]{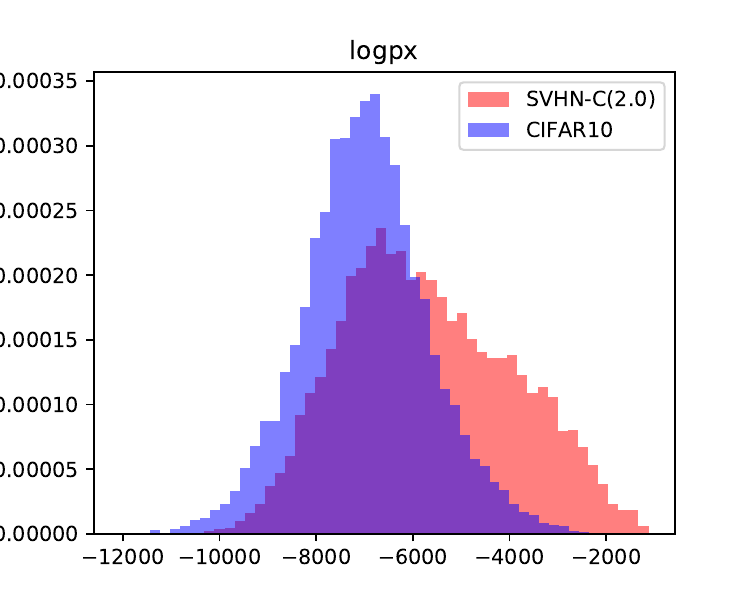}
			\label{fig:logpx_cifar10_vs_svhn_divergence_residual_flow}
		\end{minipage}
	}
	
	\caption{Residual flow trained on CIFAR-10 assigns (a)  higher $\log p(\x)$ for SVHN; (b) similar $\log p(\z)$ for SVHN; and (c) coinciding $\log p(\x)$ for SVHN with increased contrast with a factor of 2. We use the official implementation at \cite{residual_flow} and the model checkpoint released at \cite{residual_flow_ckp}.}
	\label{fig:logp_cifar10_vs_svhn_residual_flow}
\end{figure*}

\begin{figure*}[ht]
	\centering
	\includegraphics[width=13cm]{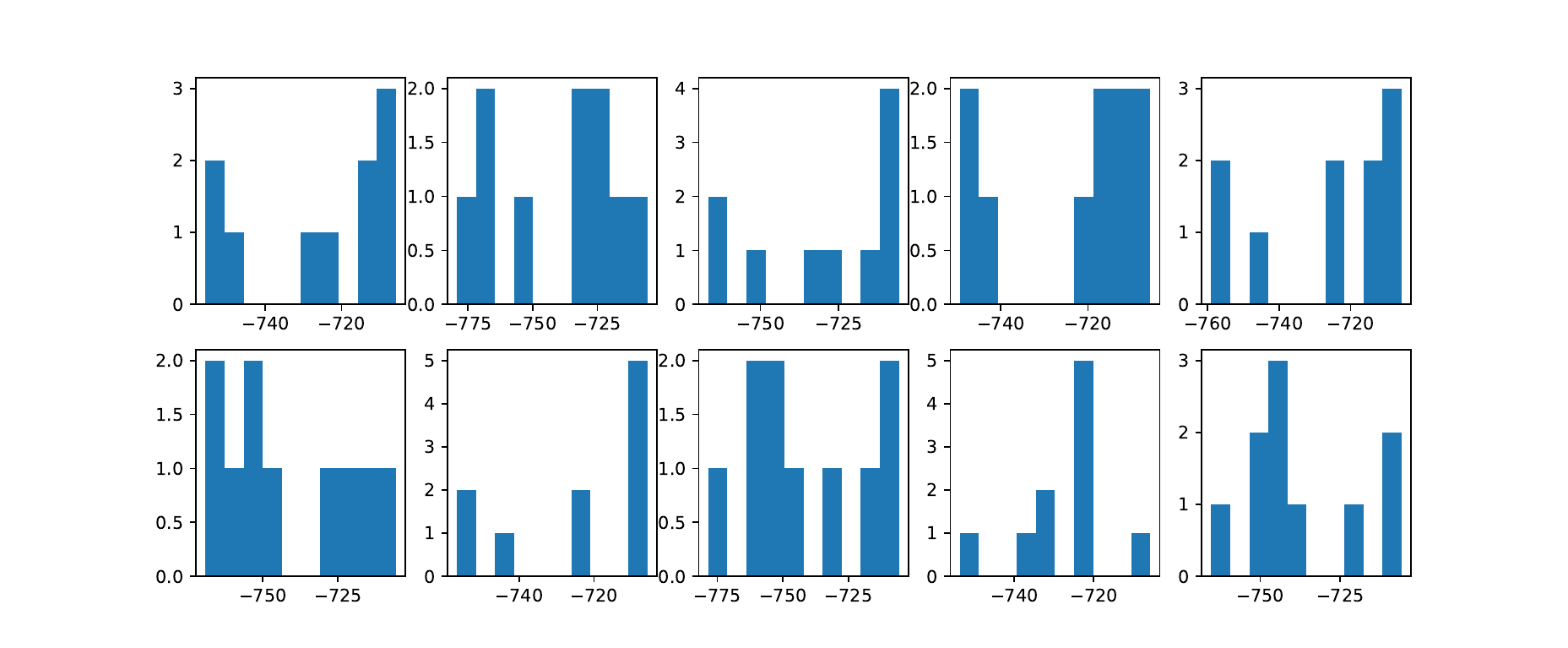}
	\caption{Train GlowGMM on FashionMNIST. The $i$-th subfigure shows the histogram of log-probabilities of 10 centroids under the $i$-th Gaussian component.	All log-probabilities are close to $768\times \log(1/\sqrt{2\pi})\approx -705.74$, which is the log-probability of the center of 768-dimensional standard Gaussian distribution. These results indicate that these centroids are close to each other.}
	\label{fig:cond_glow_logp_of_centroids_under_all_other_components}
\end{figure*}

\begin{figure*}[t]
	\centering	\includegraphics[width=9cm]{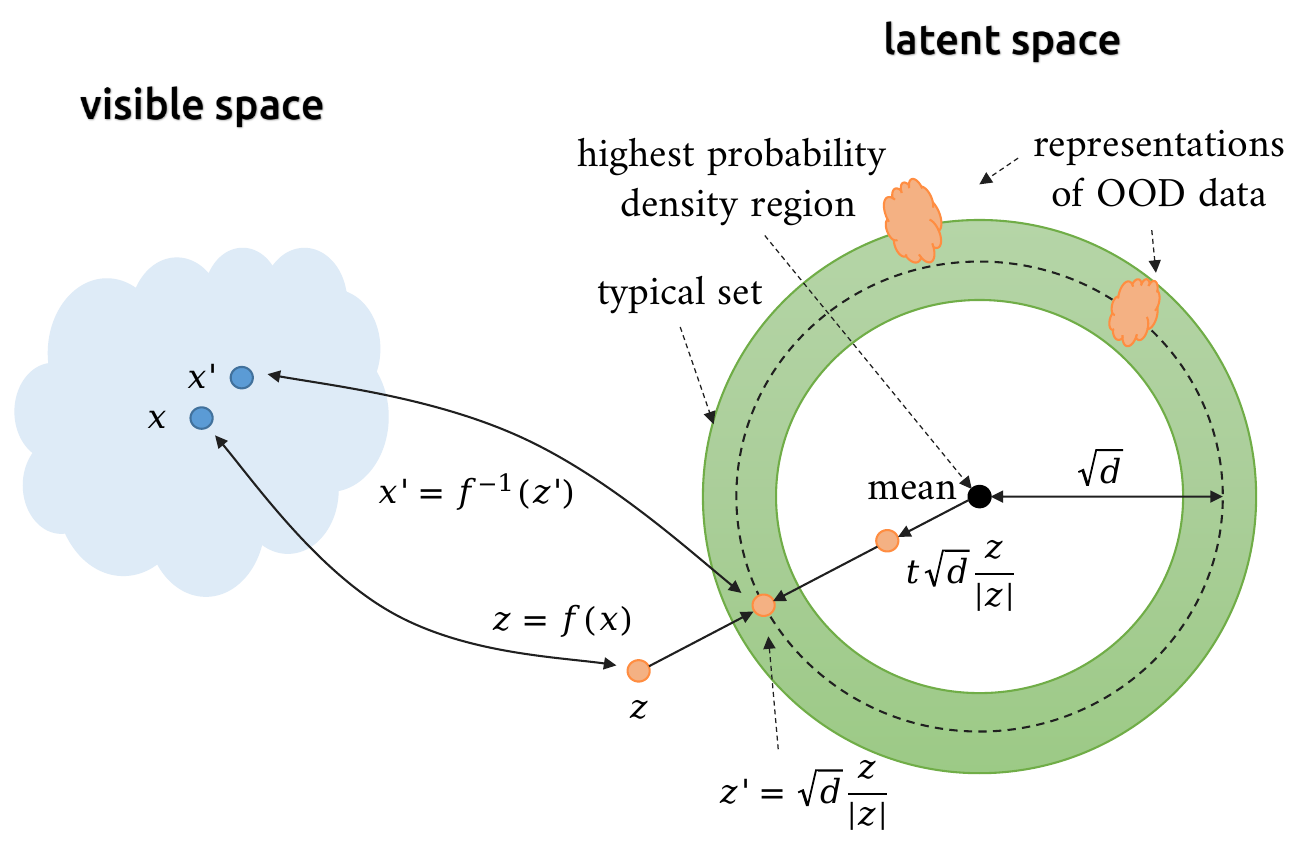}
	\caption{Rescaling $\z$ to the typical set of prior. For each input $\x$, we can compute $\z=f(\x)$, and rescale $\z$ to the typical set annulus of Gaussian prior as $\z'=\sqrt{d}\frac{\z}{|\z|}$. Then we map $\z'$ back to visible space as $\x'=f^{-1}(\z')$. We observe that $\x'$ is similar to $\x$. See Figure \ref{fig:sample_images_rescale_to_typical_set_trained_glow_on_fashionmnist} for examples.
	}
	\label{fig:typical_set}
\end{figure*}

\begin{figure*}[t]
	\centering
	\subfigure[]{
		\begin{minipage}[t]{3.5cm}
			\centering
			\includegraphics[width=3.5cm]{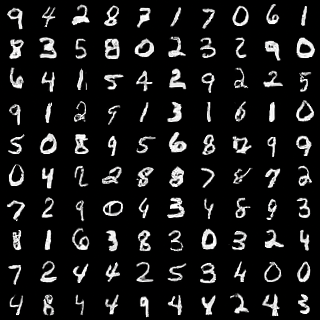}
			\label{fig:scale_to_typical_set_mnist_encodings_glow_fashionmnist}
		\end{minipage}
	}
	\subfigure[]{
		\begin{minipage}[t]{3.5cm}
			\centering
			\includegraphics[width=3.5cm]{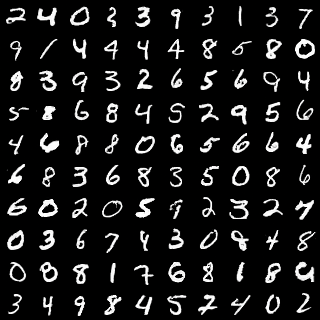}
			\label{fig:scale_to_typical_set_mnist_encodings_glow_fashionmnist}
		\end{minipage}
	}
	\subfigure[]{
		\begin{minipage}[t]{3.5cm}
			\centering
			\includegraphics[width=3.5cm]{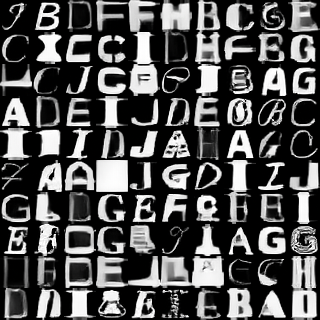}
			\label{fig:scale_to_typical_set_notmnist_encodings_glow_fashionmnist}
		\end{minipage}
	}
	
	\caption{Train Glow on FashionMNIST and test on (a), (b) MNIST and (c) notMNIST . We scale the representations of OOD dataset to the typical set of prior Gaussian distribution. The scaled latent vectors still correspond to nearly the same images. (a) and (c): rescale only the last scale of OOD representation to the typical set of prior. The first and second scales are kept as standard Gaussian noise. (b) rescale the OOD representations at all scales to the typical set of prior.}
	\label{fig:sample_images_rescale_to_typical_set_trained_glow_on_fashionmnist}
\end{figure*}

\begin{figure*}[htbp]
	\centering
	\subfigure{
		\begin{minipage}[t]{4.2cm}
			\centering
			\includegraphics[width=4.2cm]{./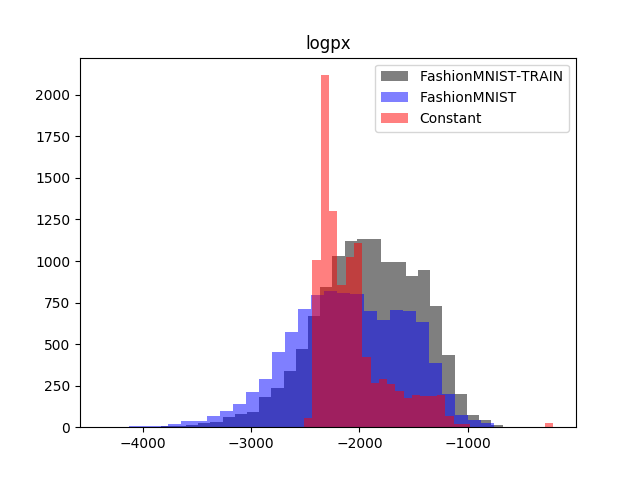}
			\label{fig:logpx_glow_fashionmnist_mnist}
		\end{minipage}
	}
	\subfigure{
		\begin{minipage}[t]{4.2cm}
			\centering
			\includegraphics[width=4.2cm]{./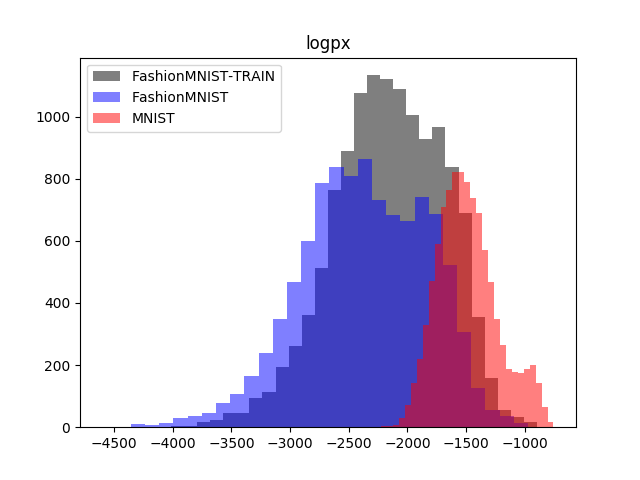}
			\label{fig:logpx_glow_fashionmnist_mnist}
		\end{minipage}
	}
	\subfigure{
		\begin{minipage}[t]{4.2cm}
			\centering
			\includegraphics[width=4.2cm]{./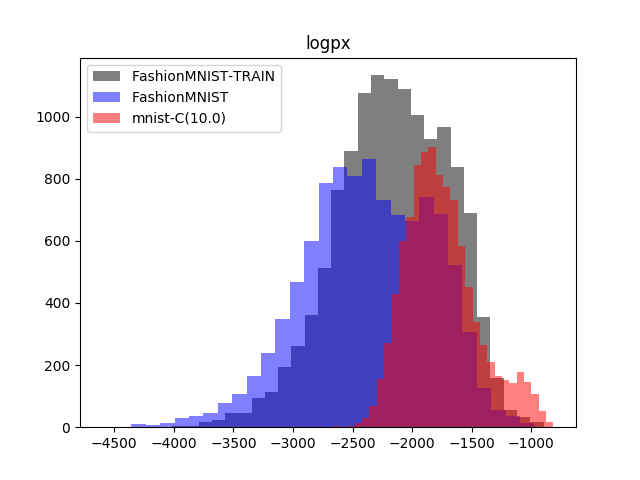}
			\label{fig:logpx_glow_fashionmnist_mnist_divergence}
		\end{minipage}
	}
	
	\subfigure{
		\begin{minipage}[t]{4.2cm}
			\centering
			\includegraphics[width=4.2cm]{./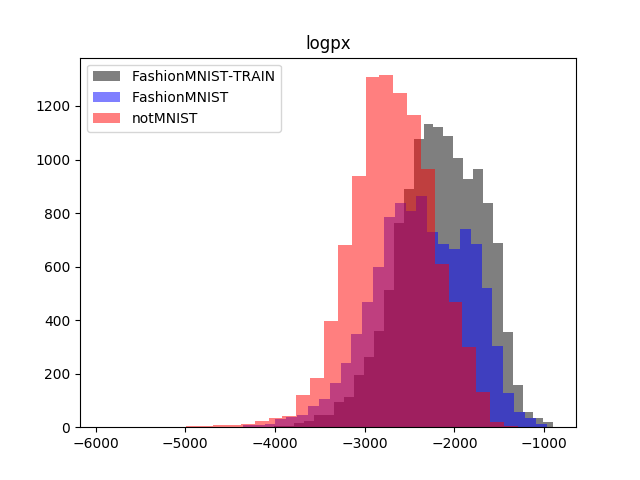}
			\label{fig:logpx_glow_fashionmnist_vs_notmnist}
		\end{minipage}
	}
	\subfigure{
		\begin{minipage}[t]{4.2cm}
			\centering
			\includegraphics[width=4.2cm]{./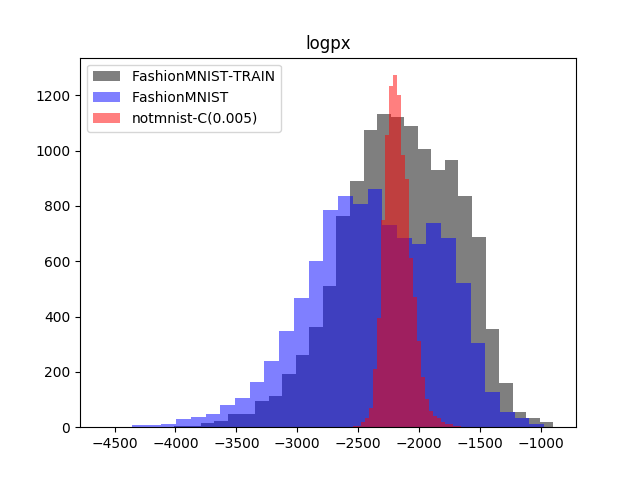}
			\label{fig:logpx_glow_fashionmnist_vs_notmnist_gray}
		\end{minipage}
	}
	
	\caption{Glow trained on FashionMNIST. Histogram of $\log p(\bm{x})$. We can manipulate the likelihood distribution of OOD dataset by adjusting the contrast. ``-C(\textit{k})'' means the dataset with adjusted contrast by a factor of \textit{k}.}
	\label{fig:logpx_glow_fashionmnist_vs_others}
\end{figure*}

\begin{figure*}[htbp]
	\centering
	\subfigure{
		\begin{minipage}[t]{4.2cm}
			\centering
			\includegraphics[width=4.2cm]{./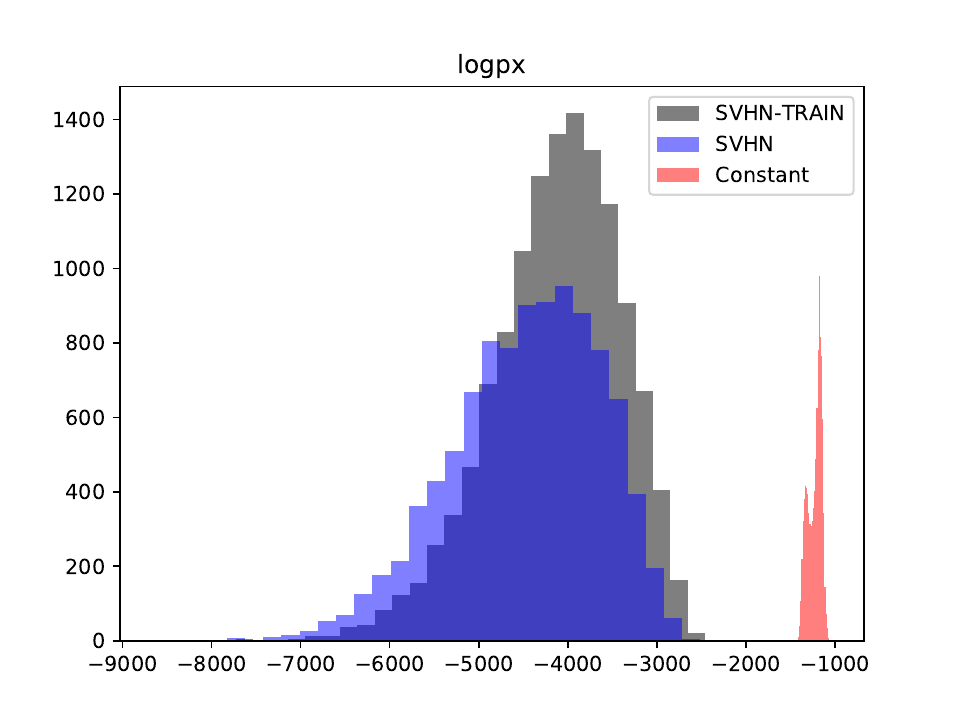}
		\end{minipage}
	}
	\subfigure{
		\begin{minipage}[t]{4.2cm}
			\centering
			\includegraphics[width=4.2cm]{./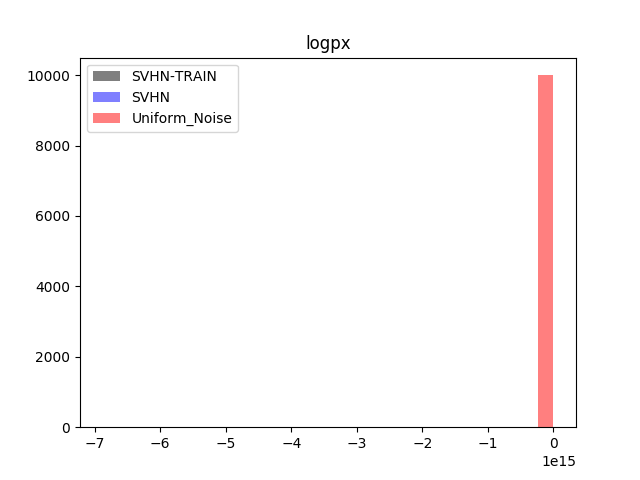}
		\end{minipage}
	}
	\subfigure{
		\begin{minipage}[t]{4.2cm}
			\centering
			\includegraphics[width=4.2cm]{./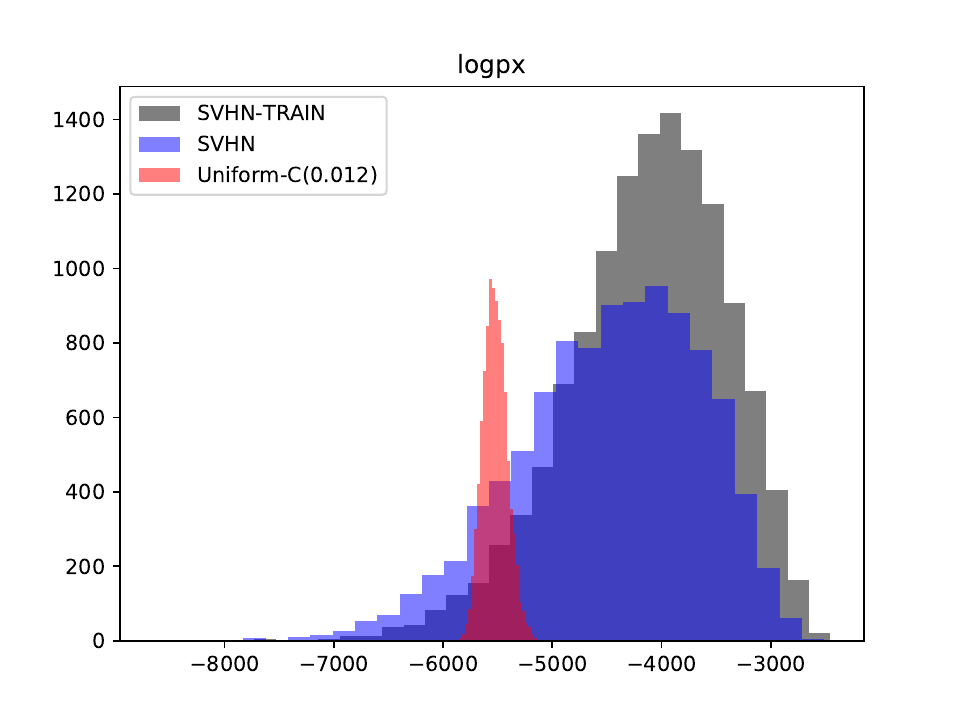}
		\end{minipage}
	}
	
	\subfigure{
		\begin{minipage}[t]{4.2cm}
			\centering
			\includegraphics[width=4.2cm]{./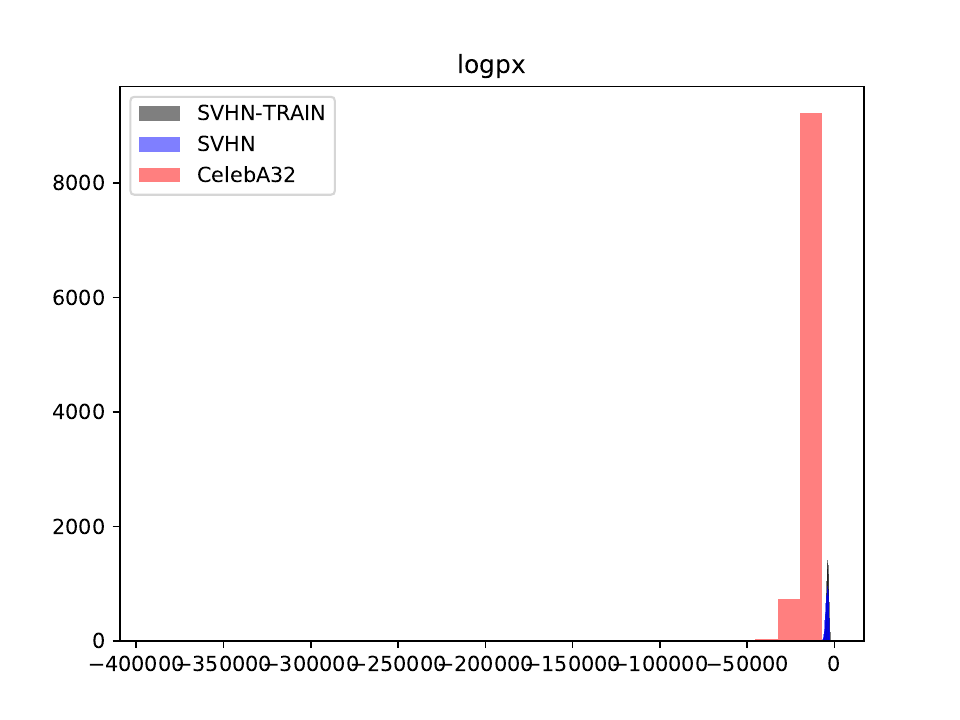}
		\end{minipage}
	}
	\subfigure{
		\begin{minipage}[t]{4.2cm}
			\centering
			\includegraphics[width=4.2cm]{./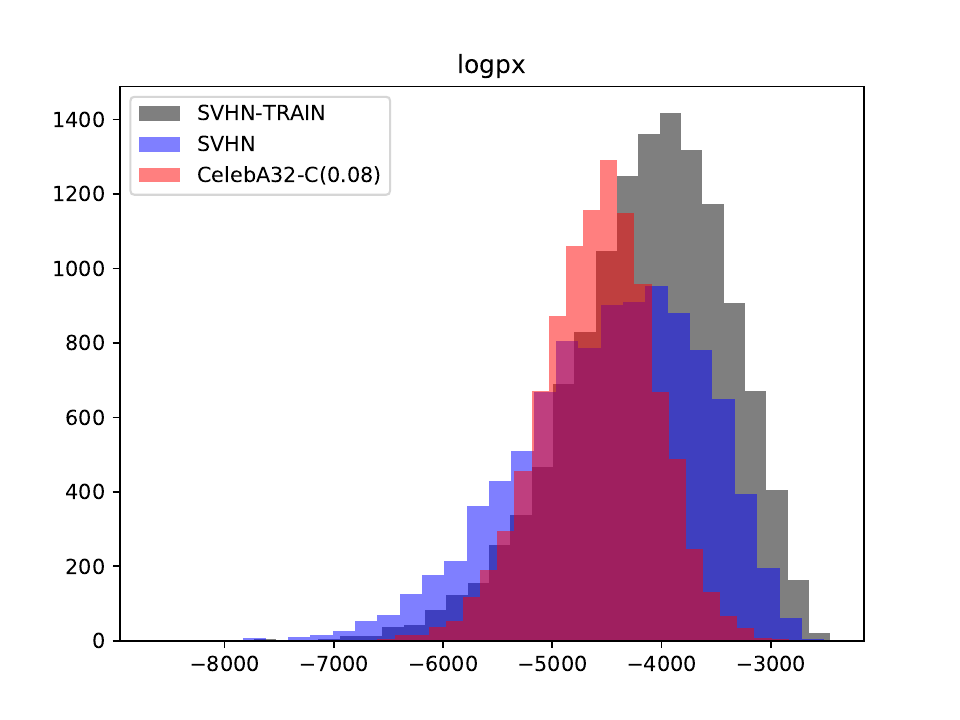}
		\end{minipage}
	}
	\subfigure{
		\begin{minipage}[t]{4.2cm}
			\centering
			\includegraphics[width=4.2cm]{./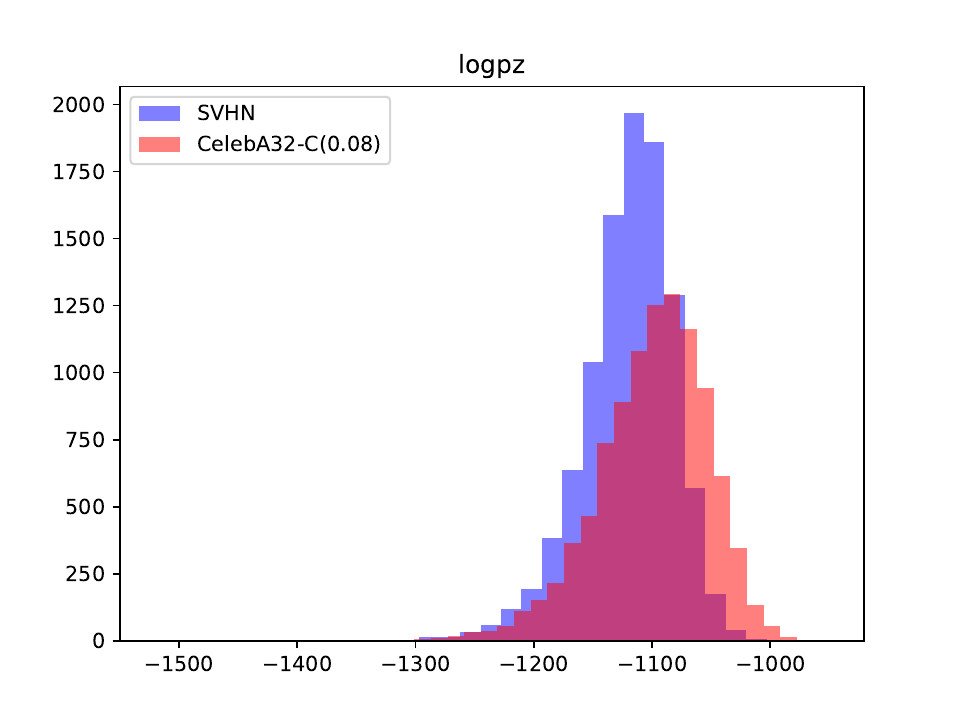}
		\end{minipage}
	}

	\vspace{-0pt}
	\subfigure{
		\begin{minipage}[t]{4.2cm}
			\centering
			\includegraphics[width=4.2cm]{./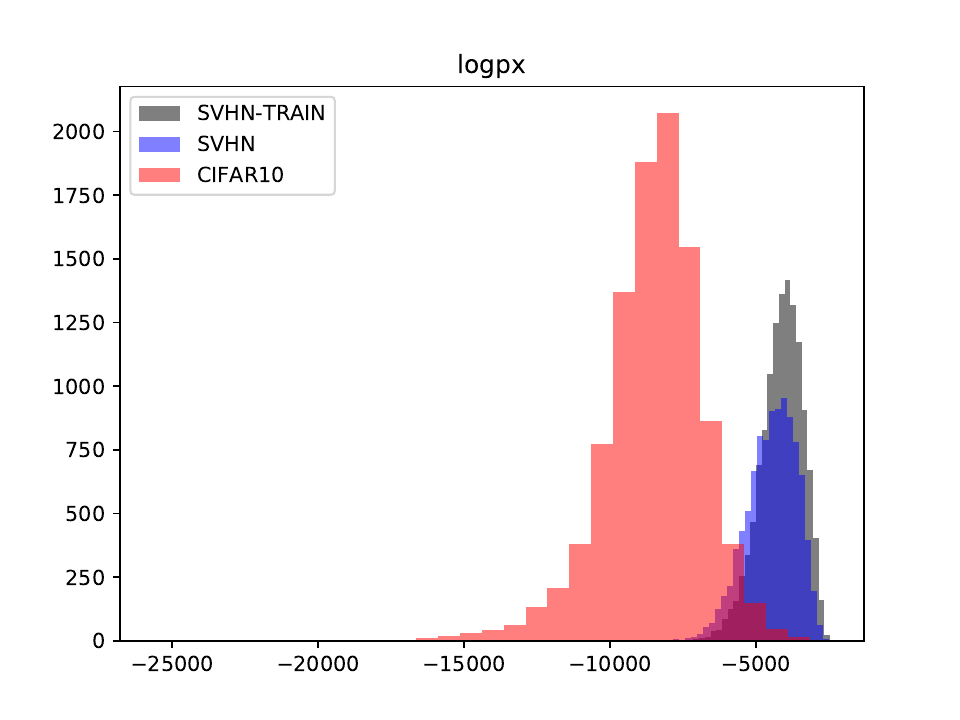}
		\end{minipage}
	}
	\subfigure{
		\begin{minipage}[t]{4.2cm}
			\centering
			\includegraphics[width=4.2cm]{./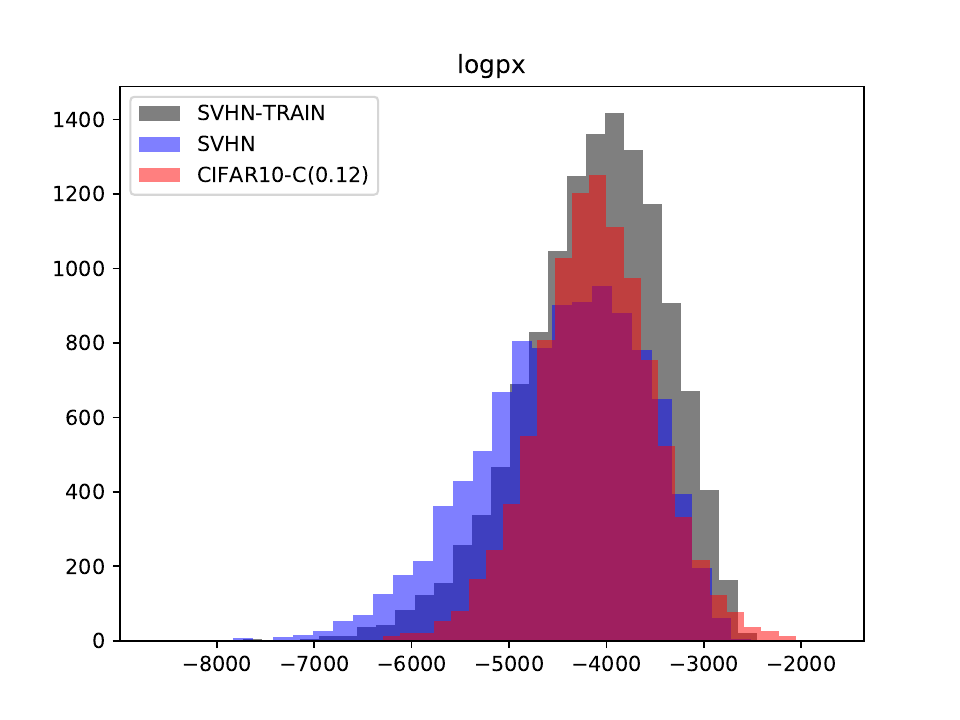}
		\end{minipage}
	}
	\subfigure{
		\begin{minipage}[t]{4.2cm}
			\centering
			\includegraphics[width=4.2cm]{./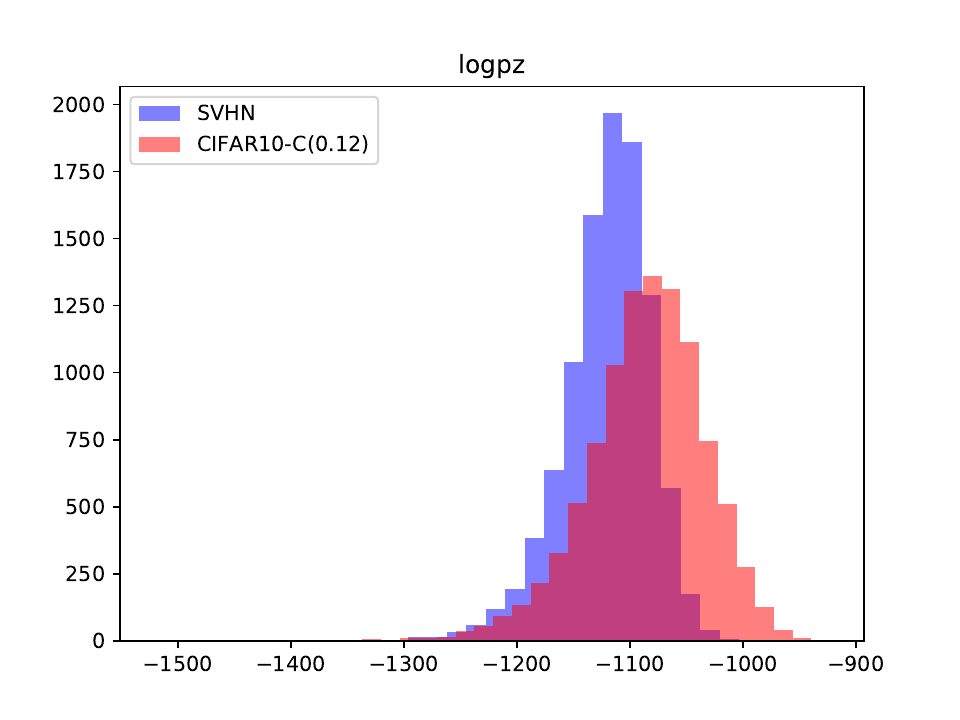}
		\end{minipage}
	}
	
	\vspace{-10pt}
	\subfigure{
		\begin{minipage}[t]{4.2cm}
			\centering
			\includegraphics[width=4.2cm]{./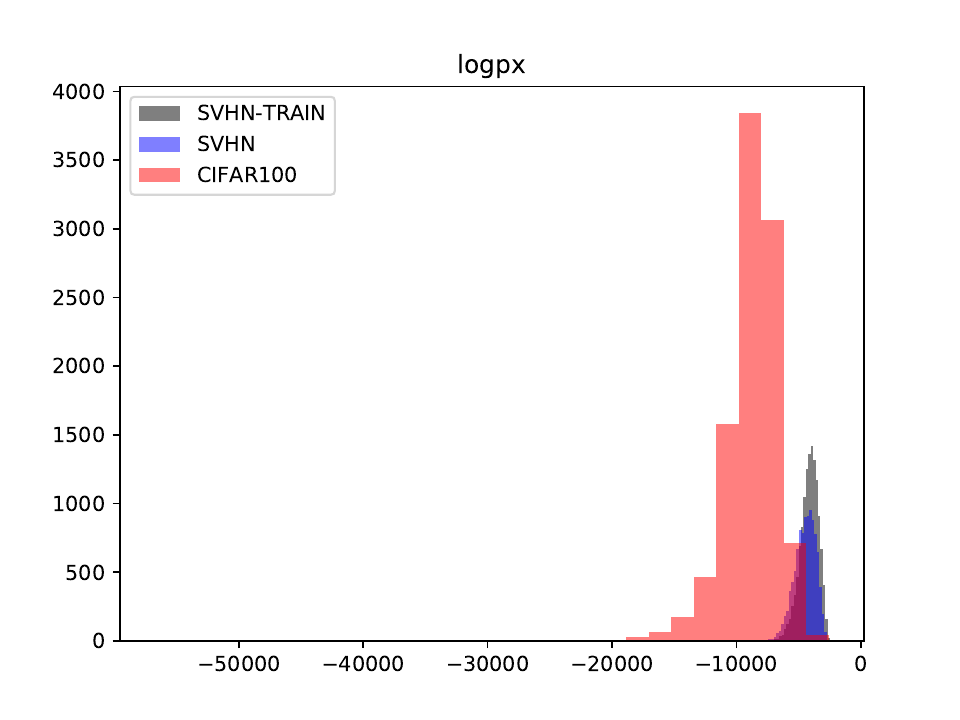}
		\end{minipage}
	}
	\subfigure{
		\begin{minipage}[t]{4.2cm}
			\centering
			\includegraphics[width=4.2cm]{./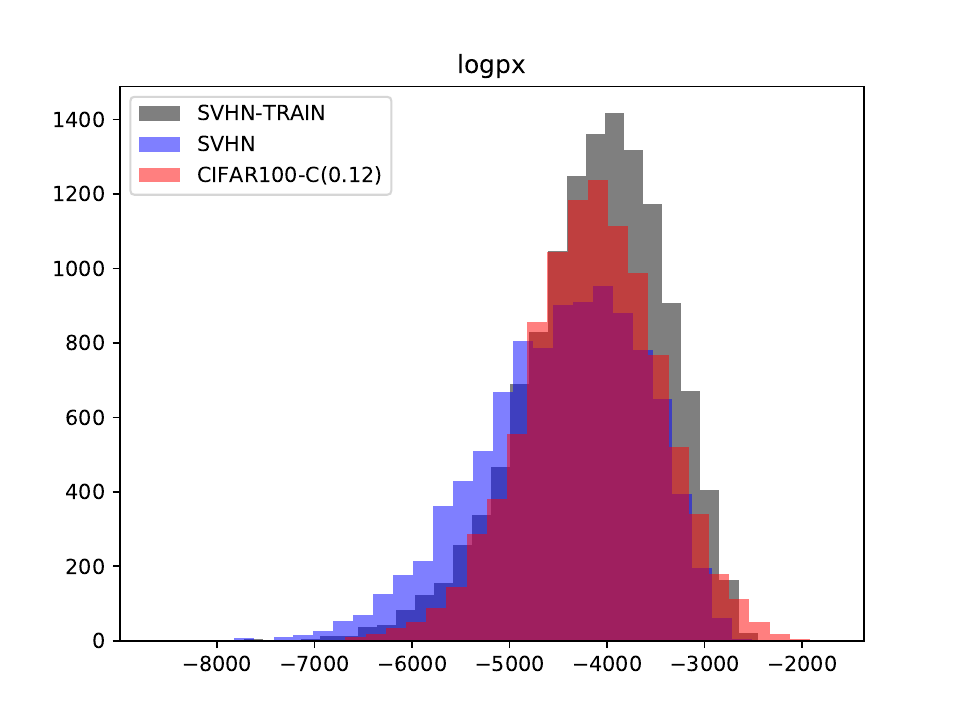}
		\end{minipage}
	}
	\subfigure{
		\begin{minipage}[t]{4.2cm}
			\centering
			\includegraphics[width=4.2cm]{./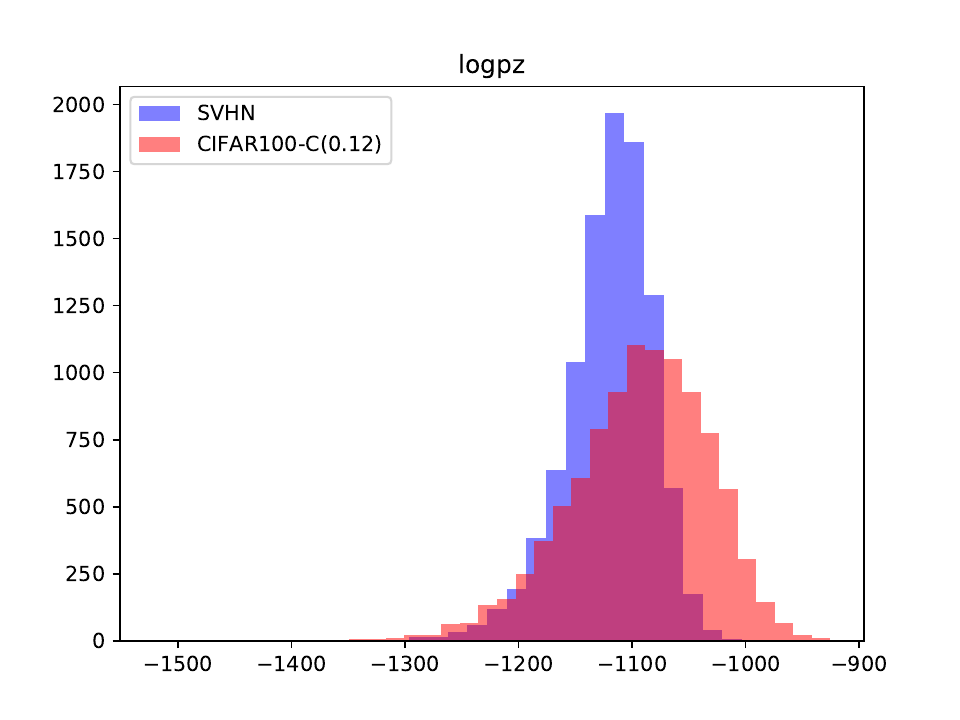}
		\end{minipage}
	}
	
	\vspace{-0pt}
	\subfigure{
		\begin{minipage}[t]{4.2cm}
			\centering
			\includegraphics[width=4.2cm]{./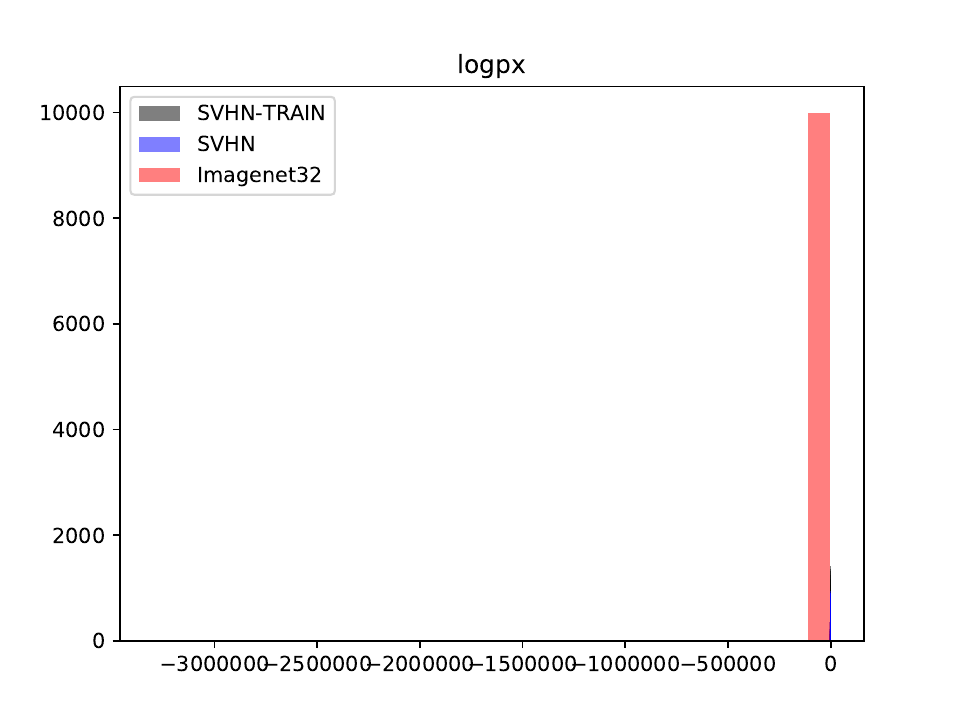}
		\end{minipage}
	}
	\subfigure{
		\begin{minipage}[t]{4.2cm}
			\centering
			\includegraphics[width=4.2cm]{./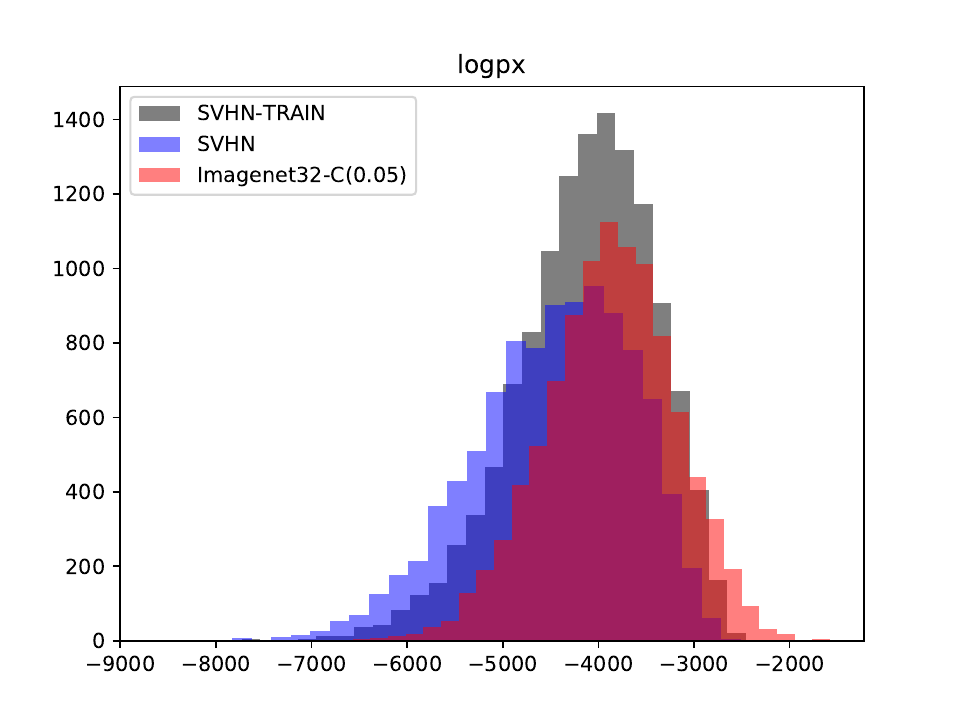}
		\end{minipage}
	}	
	\subfigure{
		\begin{minipage}[t]{4.2cm}
			\centering
			\includegraphics[width=4.2cm]{./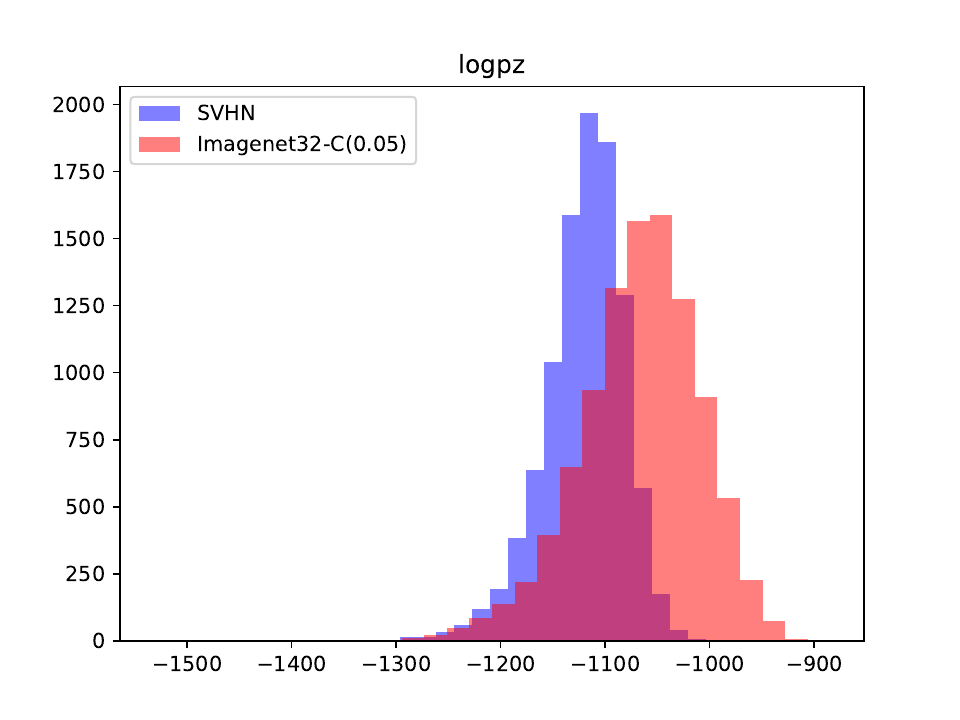}
		\end{minipage}
	}	
	
	\subfigure{
		\begin{minipage}[t]{4.2cm}
			\centering
			\includegraphics[width=4.2cm]{./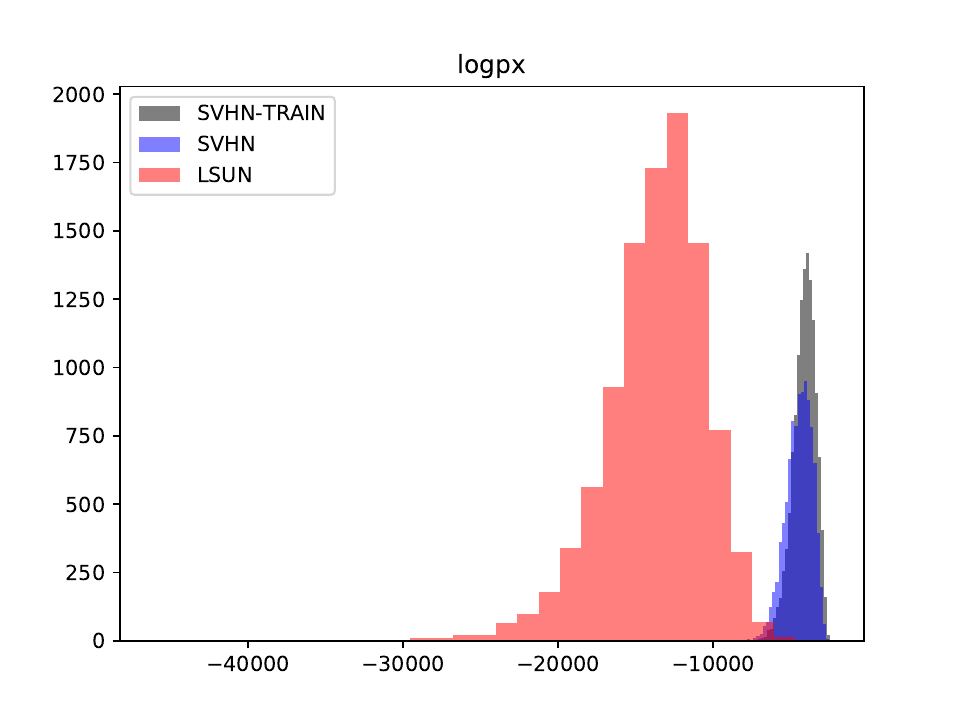}
		\end{minipage}
	}	
	\subfigure{
		\begin{minipage}[t]{4.2cm}
			\centering
			\includegraphics[width=4.2cm]{./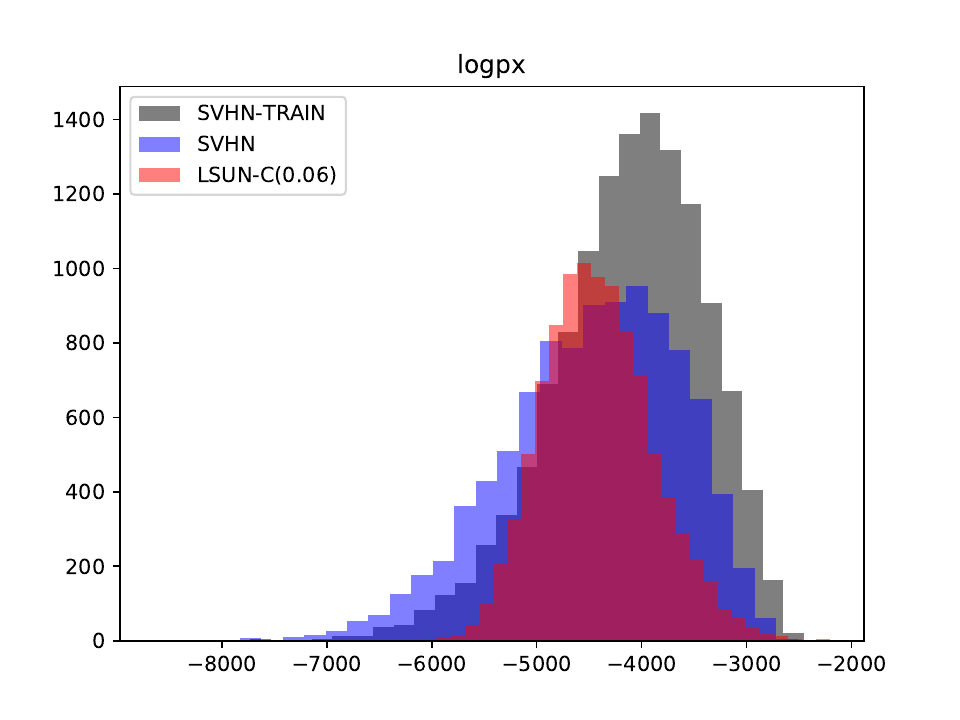}
		\end{minipage}
	}	
	\subfigure{
		\begin{minipage}[t]{4.2cm}
			\centering
			\includegraphics[width=4.2cm]{./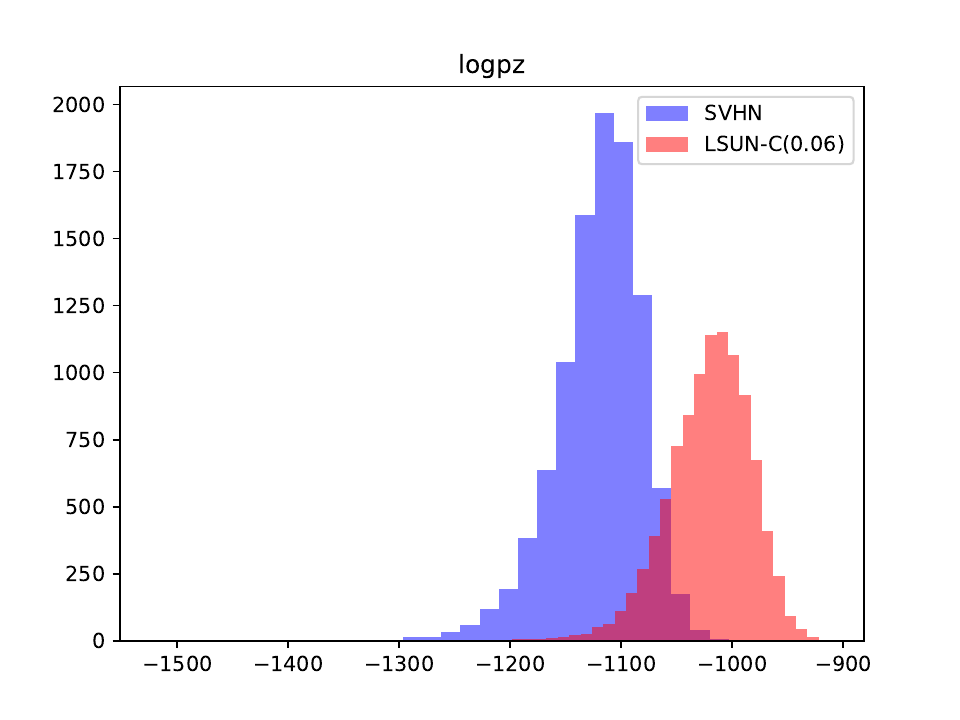}
		\end{minipage}
	}
	
	\caption{Official Glow trained on SVHN. Histogram of $\log p(\bm{x})$, $\log p(\bm{z})$, and $\log p(\bm{x})$ contributed by the last scale. We can manipulate the likelihood distribution of OOD dataset by adjusting the contrast. ``-C(\textit{k})'' means the dataset with adjusted contrast by a factor of \textit{k}. Note that the distribution of $\log p(\x)$ of the last scale and $\log p(\z)$ have a similar shape. This is because the log-determinant of the last scale is similar for every data point in the same dataset. We do not observe this phenomenon in Glow trained on CIFAR-10.}
	\label{fig:logpx_glow_SVHN_vs_others}
\end{figure*}

\begin{figure*}[htbp]
	\centering

	\vspace{-0pt}
	\subfigure{
		\begin{minipage}[t]{4.2cm}
			\centering
			\includegraphics[width=4.2cm]{./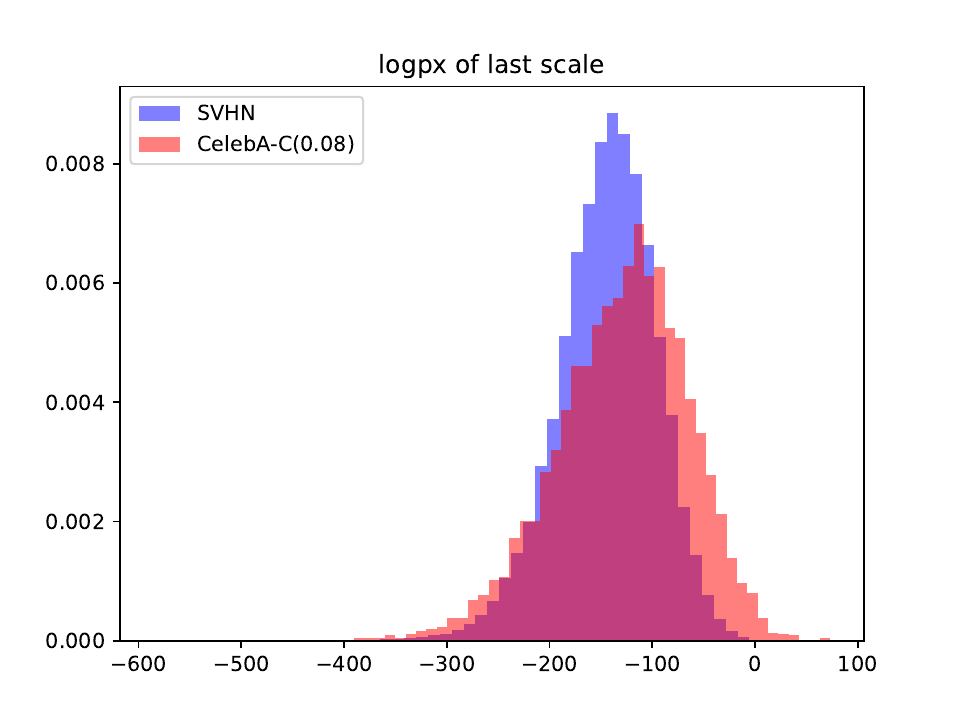}
		\end{minipage}
	}
	\subfigure{
		\begin{minipage}[t]{4.2cm}
			\centering
			\includegraphics[width=4.2cm]{./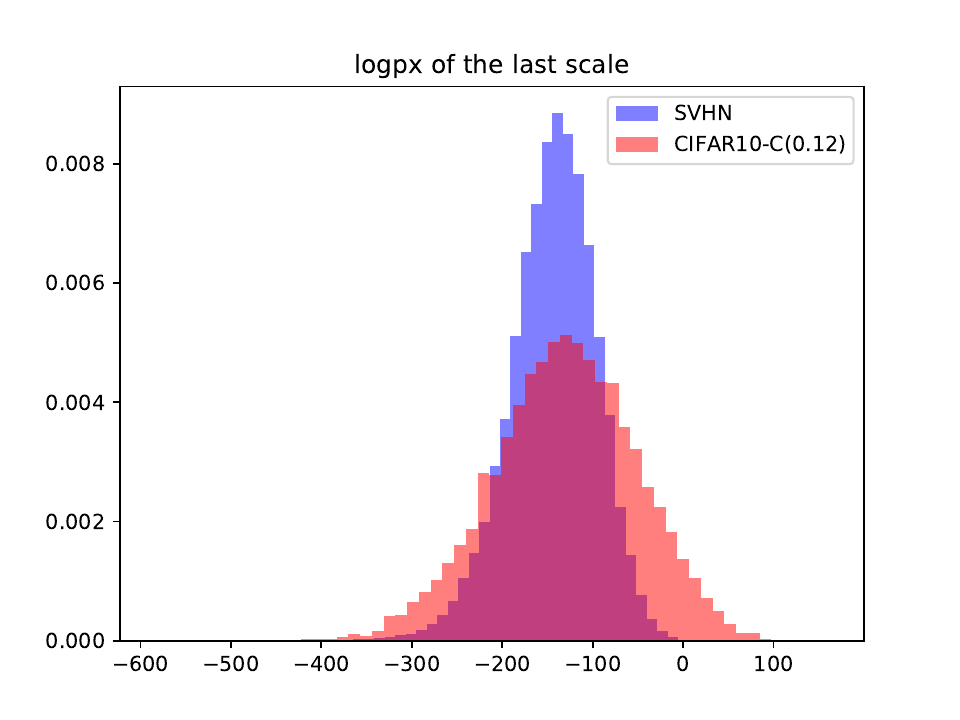}
		\end{minipage}
	}	
	\subfigure{
		\begin{minipage}[t]{4.2cm}
			\centering
			\includegraphics[width=4.2cm]{./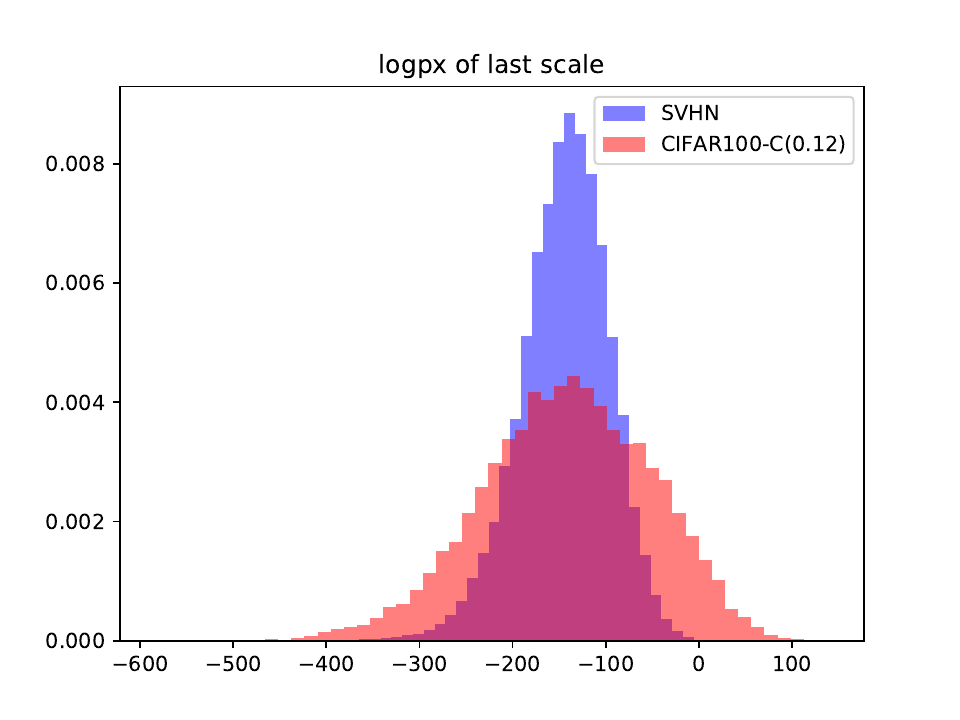}
		\end{minipage}
	}		
	\subfigure{
		\begin{minipage}[t]{4.2cm}
			\centering
			\includegraphics[width=4.2cm]{./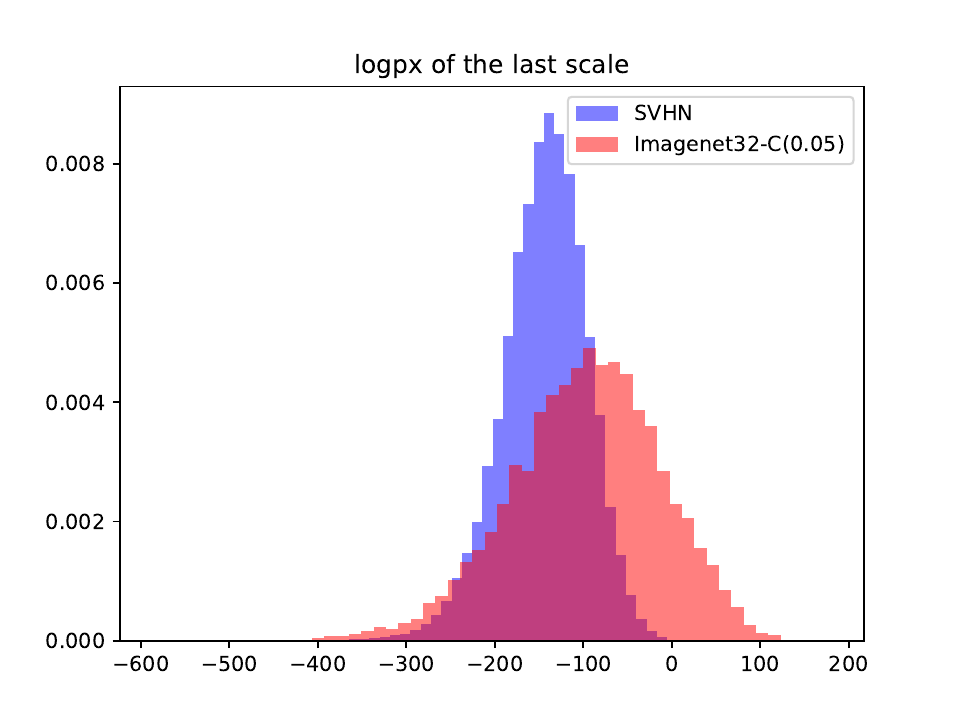}
		\end{minipage}
	}	
	\caption{To reproduce the results of baseline method $L_{last}$ precisely, we use the implementation of Glow model and checkpoint trained on SVHN released by the authors of  $L_{last}$. The $\log p(\x)$ of the last scale of OOD and ID data also coincide under data manipulation. We can also see that the $\log p(\x)$ of the last scale even becomes positive for OOD data. The authors also said that the metric used by $L_{last}$ could not be explained as log-likelihood for OOD data.}
	\label{fig:logpx_last_scale_glow_SVHN_vs_others_hierarchical_model}
\end{figure*}

\begin{figure*}[htbp]
	\centering
	\subfigure{
		\begin{minipage}[t]{4.2cm}
			\centering
			\includegraphics[width=4.2cm]{./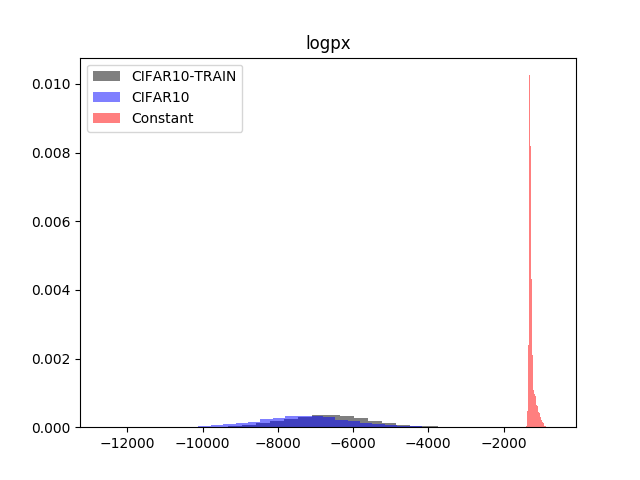}
		\end{minipage}
	}
	\subfigure{
		\begin{minipage}[t]{4.2cm}
			\centering
			\includegraphics[width=4.2cm]{./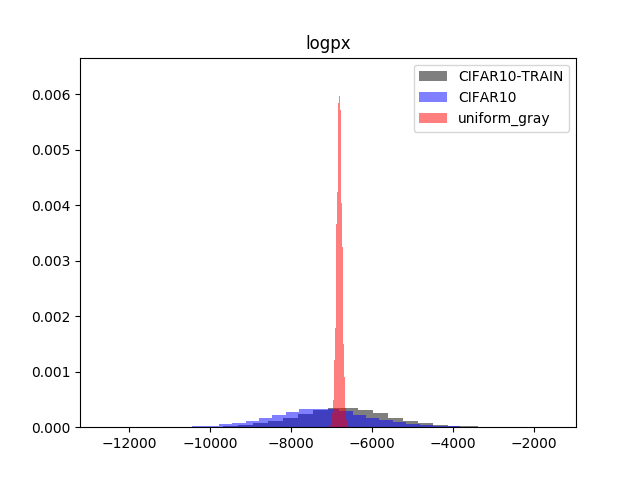}
		\end{minipage}
	}
	\subfigure{
		\begin{minipage}[t]{4.2cm}
			\centering
			\includegraphics[width=4.2cm]{./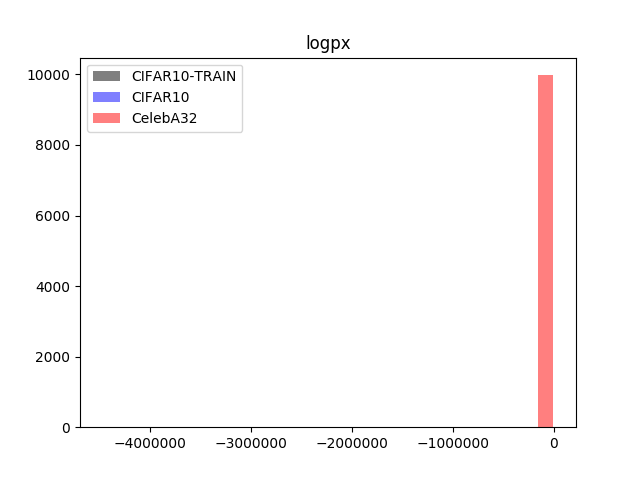}
		\end{minipage}
	}
	\subfigure{
		\begin{minipage}[t]{4.2cm}
			\centering
			\includegraphics[width=4.2cm]{./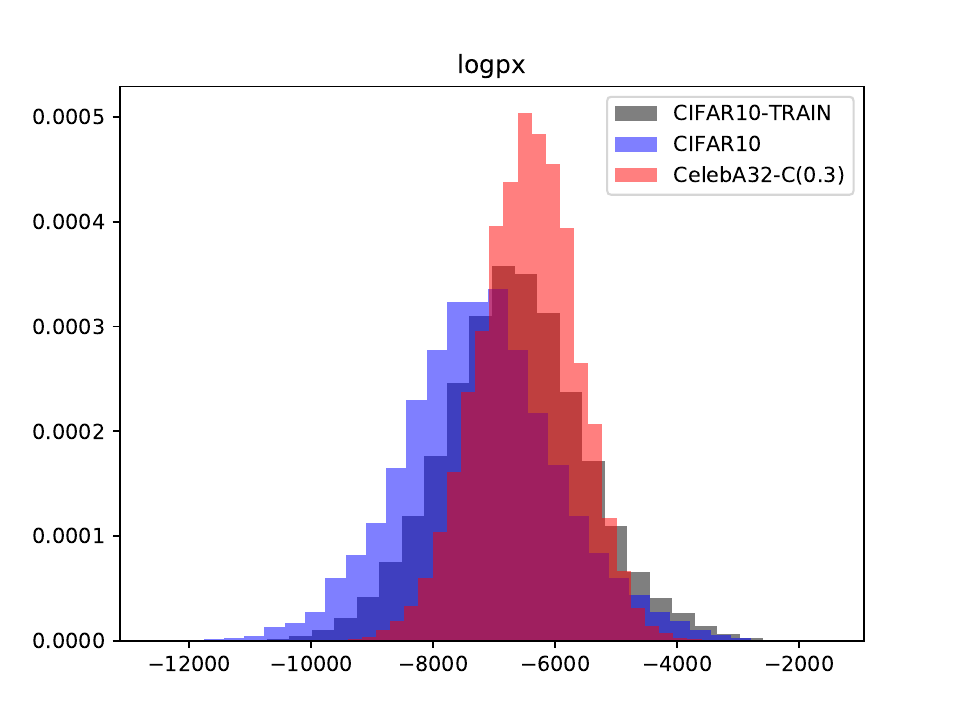}
		\end{minipage}
	}
	
	\subfigure{
		\begin{minipage}[t]{4.2cm}
			\centering
			\includegraphics[width=4.2cm]{./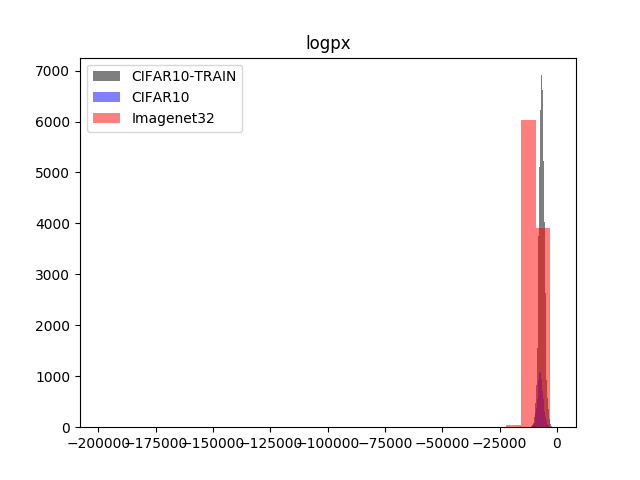}
		\end{minipage}
	}
	\subfigure{
		\begin{minipage}[t]{4.2cm}
			\centering
			\includegraphics[width=4.2cm]{./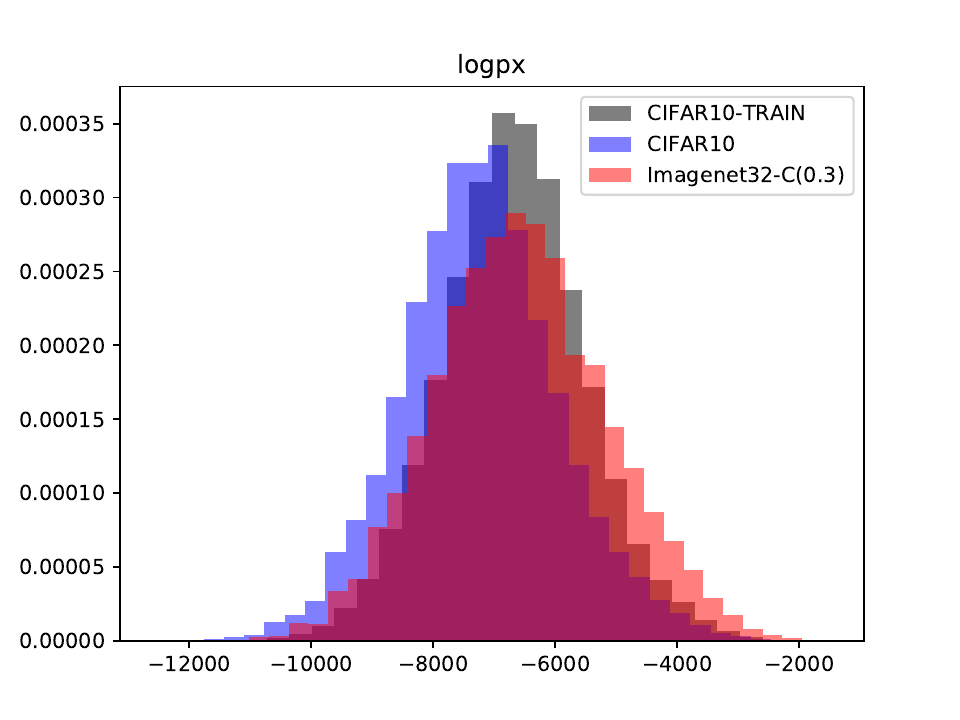}
		\end{minipage}
	}
	\vspace{-10pt}
	\subfigure{
		\begin{minipage}[t]{4.2cm}
			\centering
			\includegraphics[width=4.2cm]{./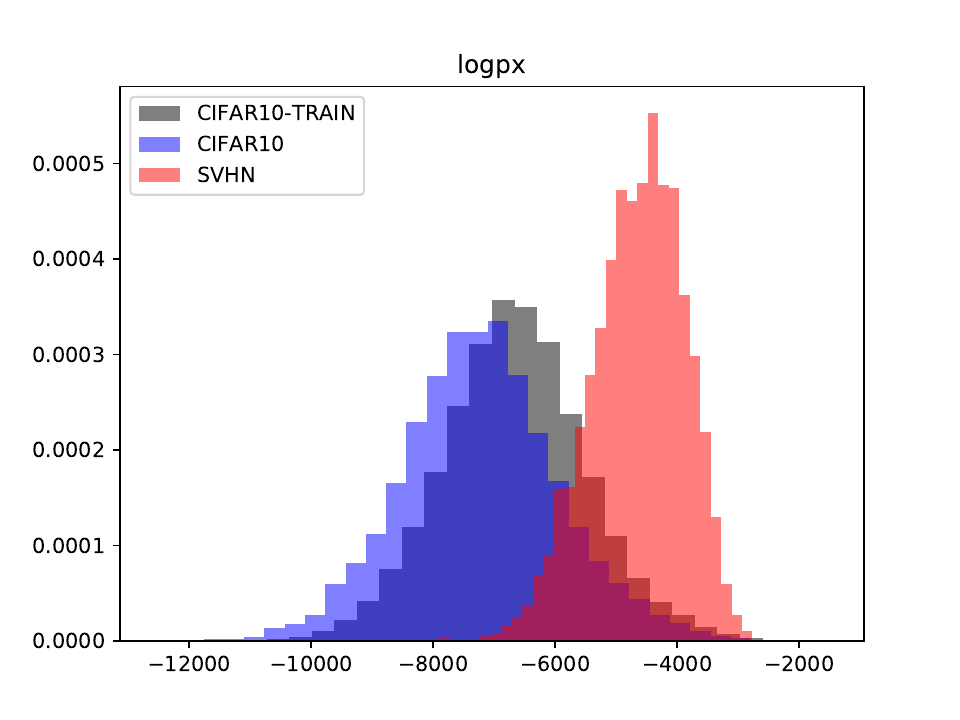}
		\end{minipage}
	}
	
	\subfigure{
		\begin{minipage}[t]{4.2cm}
			\centering
			\includegraphics[width=4.2cm]{./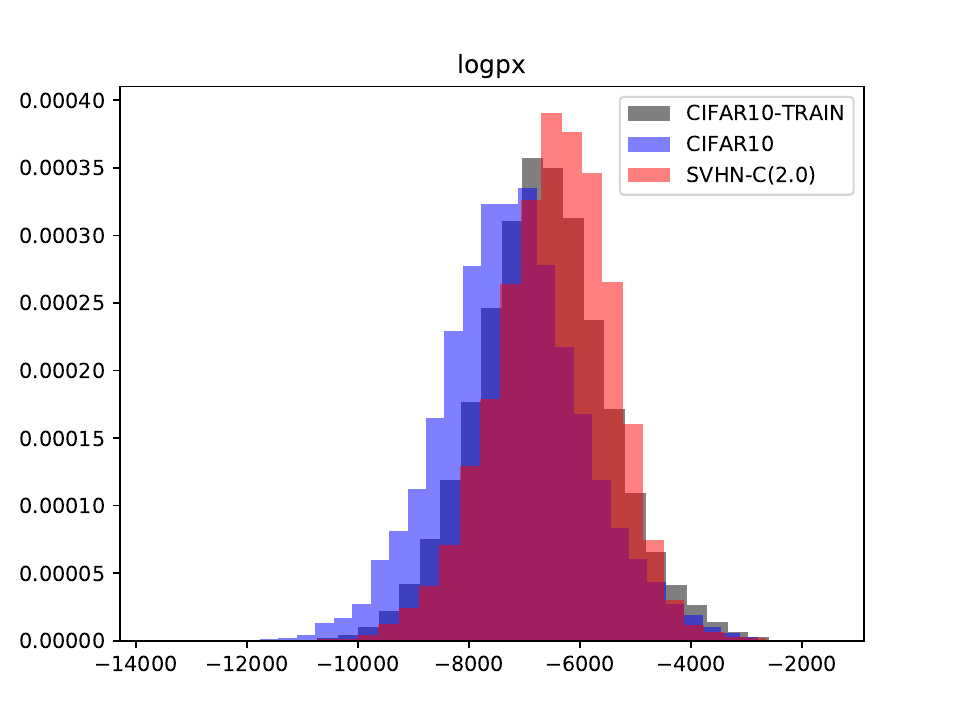}
		\end{minipage}
	}
	\vspace{-0pt}
	\subfigure{
		\begin{minipage}[t]{4.2cm}
			\centering
			\includegraphics[width=4.2cm]{./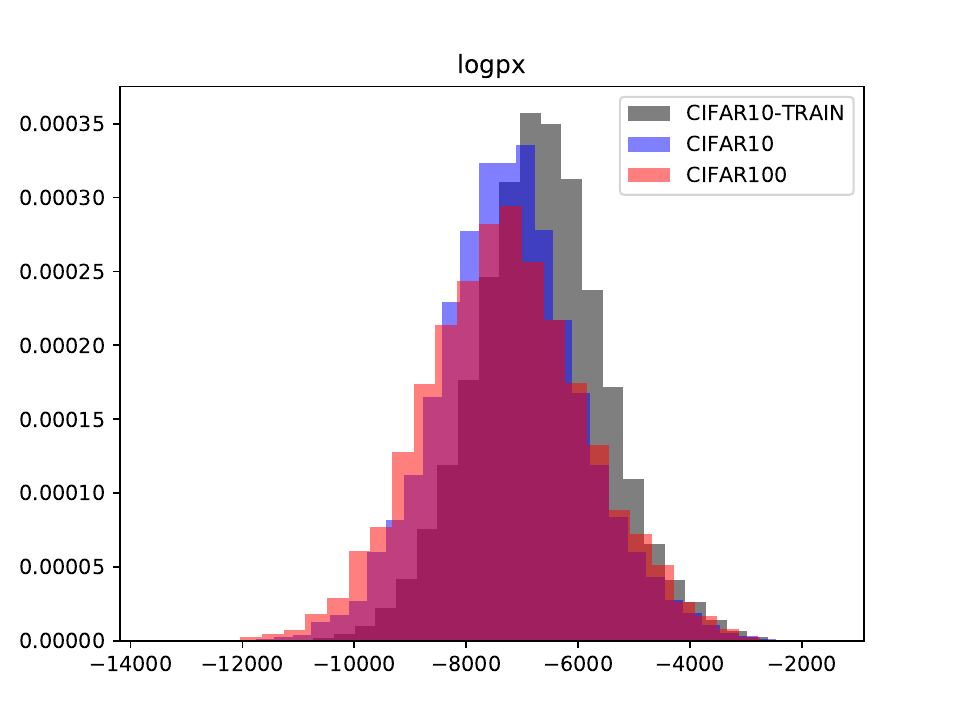}
		\end{minipage}
	}	
	\subfigure{
		\begin{minipage}[t]{4.2cm}
			\centering
			\includegraphics[width=4.2cm]{./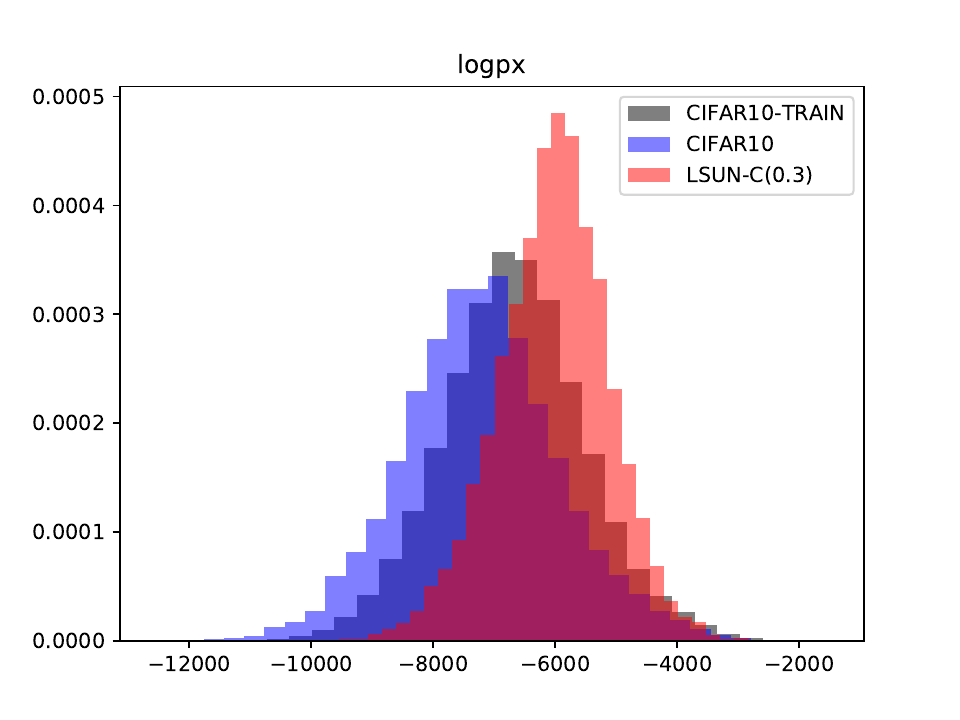}
		\end{minipage}
	}			
	
	\caption{Glow trained on CIFAR10. Histogram of $\log p(\bm{x})$.  We can manipulate the likelihood distribution of OOD dataset by adjusting the contrast. ``-C(\textit{k})'' means the dataset with adjusted contrast by a factor of \textit{k}. The ranges of $\log p(\bm{x})$ of CelebA and LSUN are too large to break the scale of the figure. For CIFAR10 vs Uniform, $\log p(\bm{x})$ of Uniform are too small. }
	\label{fig:logpx_glow_cifar10_vs_others}
\end{figure*}

\begin{figure*}[htbp]
	\centering
	\subfigure{
		\begin{minipage}[t]{4.2cm}
			\centering
			\includegraphics[width=4.2cm]{./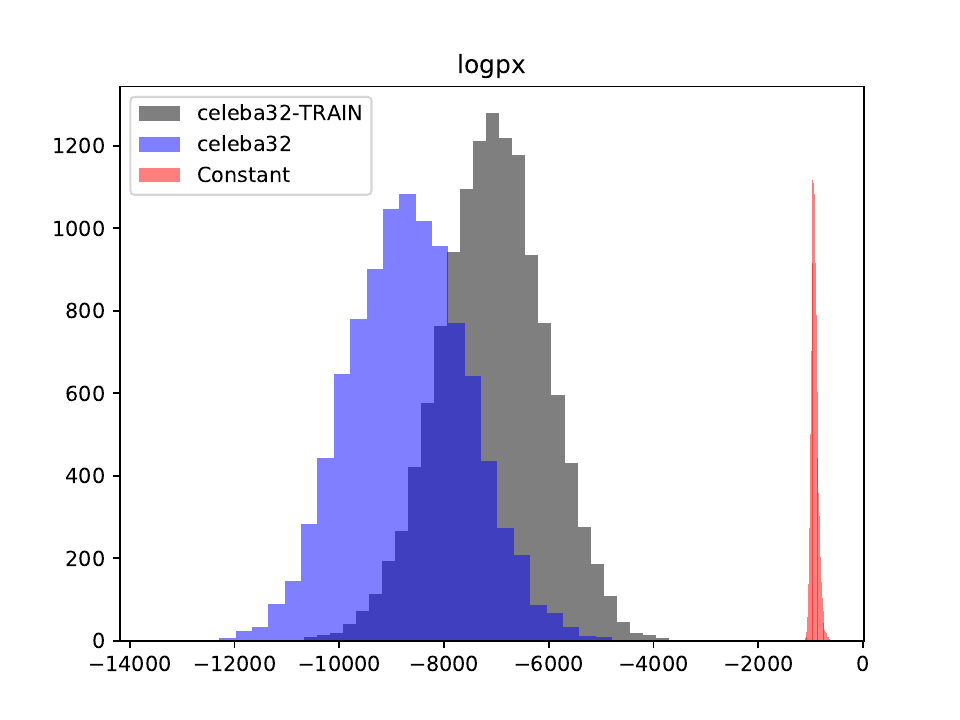}
		\end{minipage}
	}
	\subfigure{
		\begin{minipage}[t]{4.2cm}
			\centering
			\includegraphics[width=4.2cm]{./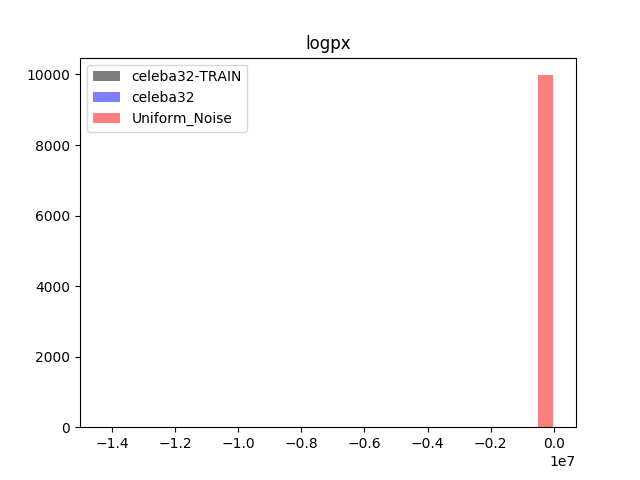}
		\end{minipage}
	}
	\subfigure{
		\begin{minipage}[t]{4.2cm}
			\centering
			\includegraphics[width=4.2cm]{./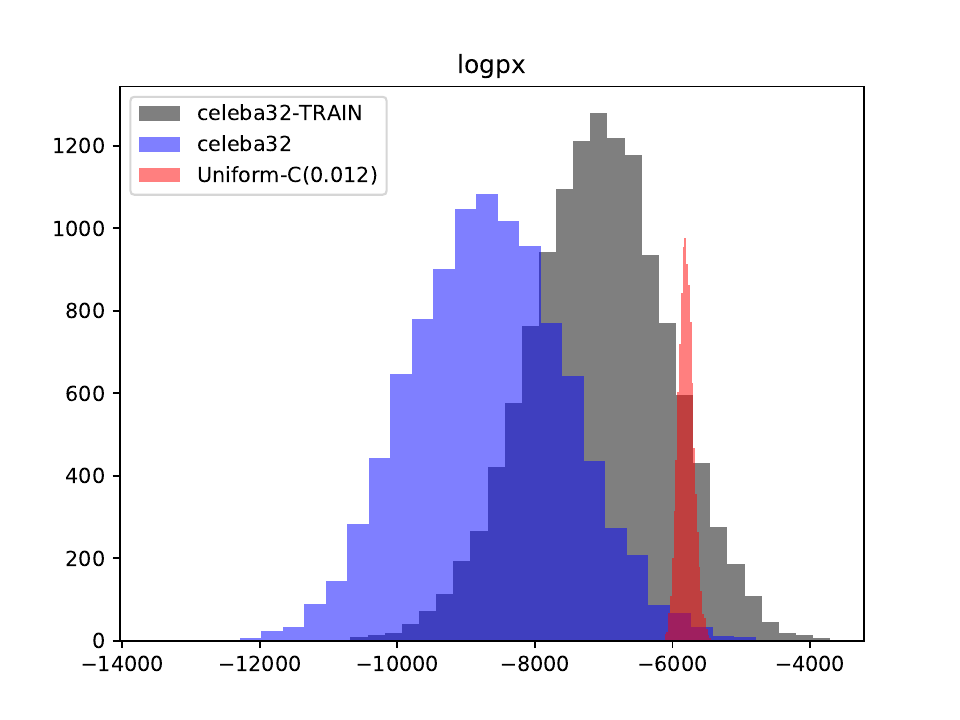}
		\end{minipage}
	}
	\subfigure{
		\begin{minipage}[t]{4.2cm}
			\centering
			\includegraphics[width=4.2cm]{./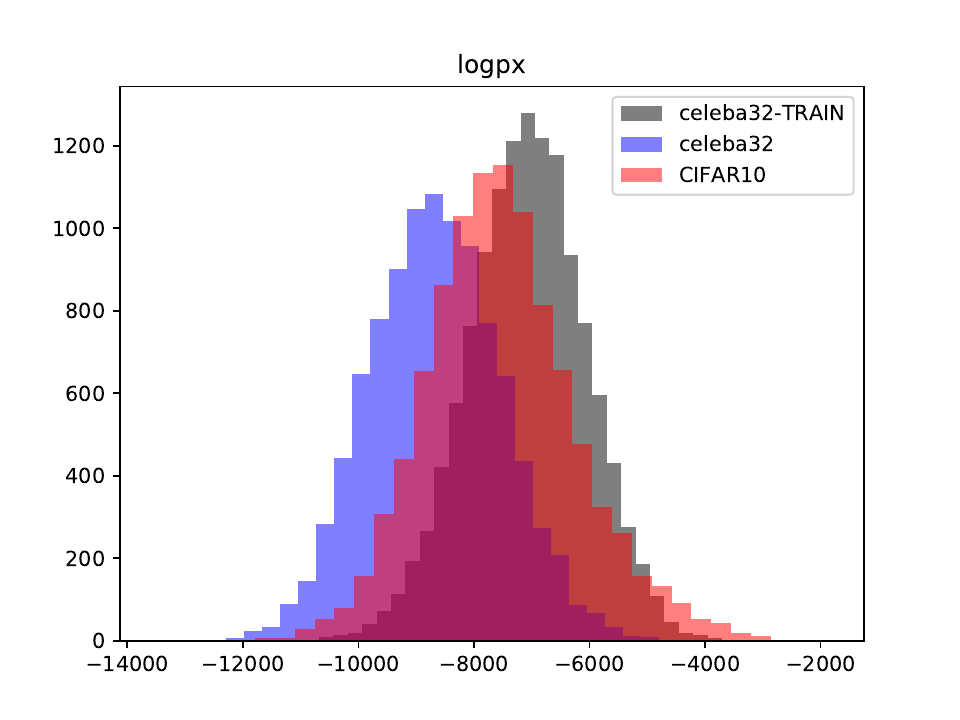}
		\end{minipage}
	}
	
	\subfigure{
		\begin{minipage}[t]{4.2cm}
			\centering
			\includegraphics[width=4.2cm]{./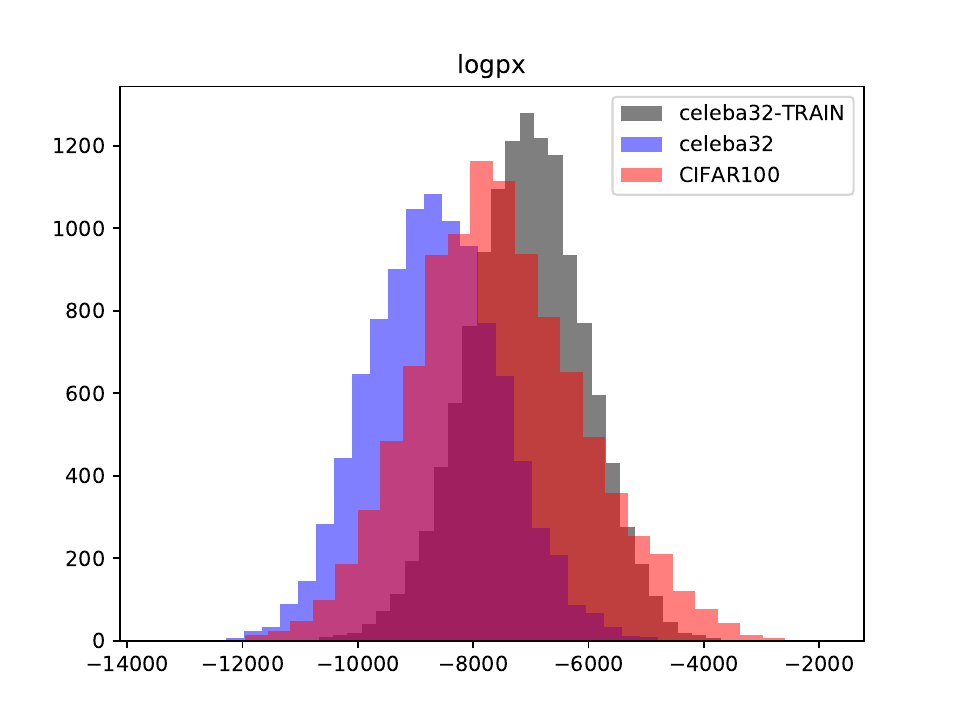}
		\end{minipage}
	}
	\vspace{-0pt}
	\subfigure{
		\begin{minipage}[t]{4.2cm}
			\centering
			\includegraphics[width=4.2cm]{./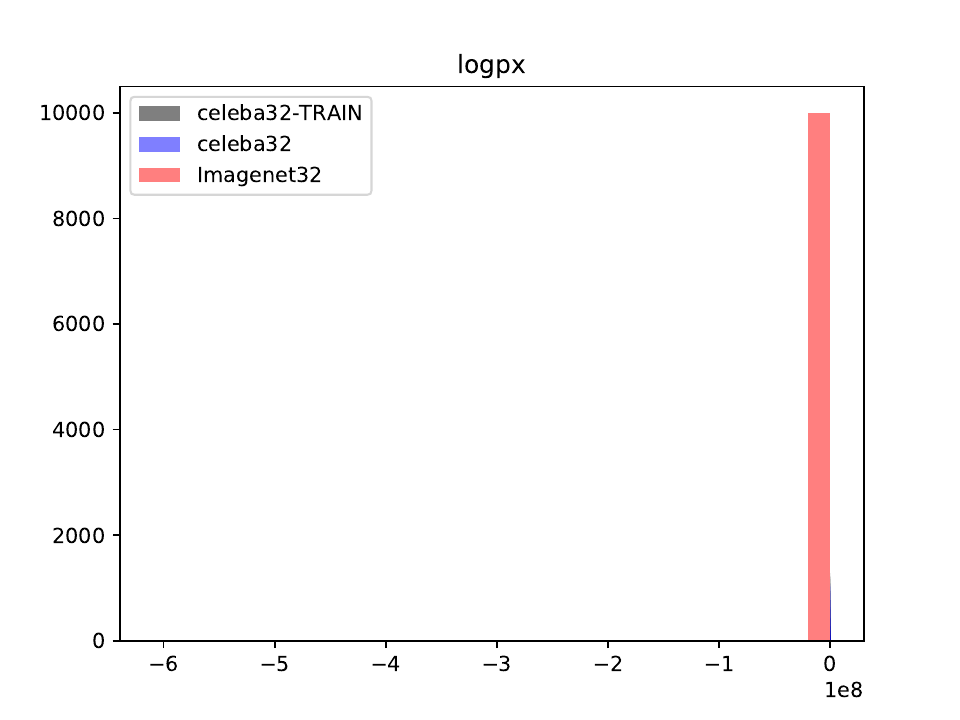}
		\end{minipage}
	}
	\subfigure{
		\begin{minipage}[t]{4.2cm}
			\centering
			\includegraphics[width=4.2cm]{./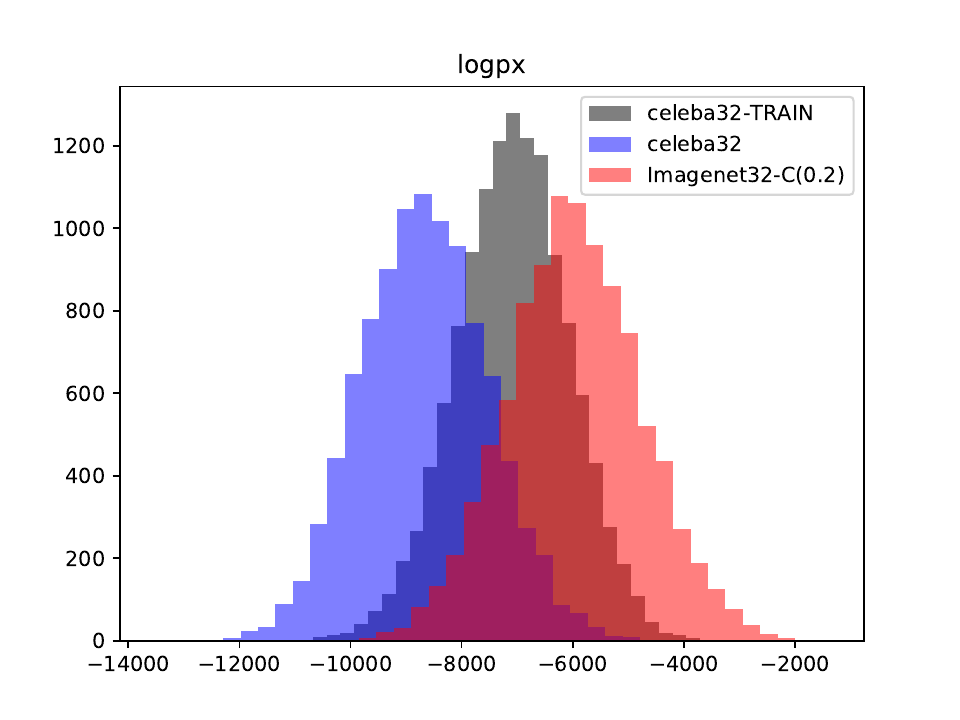}
		\end{minipage}
	}
	
	\vspace{-10pt}
	\subfigure{
		\begin{minipage}[t]{4.2cm}
			\centering
			\includegraphics[width=4.2cm]{./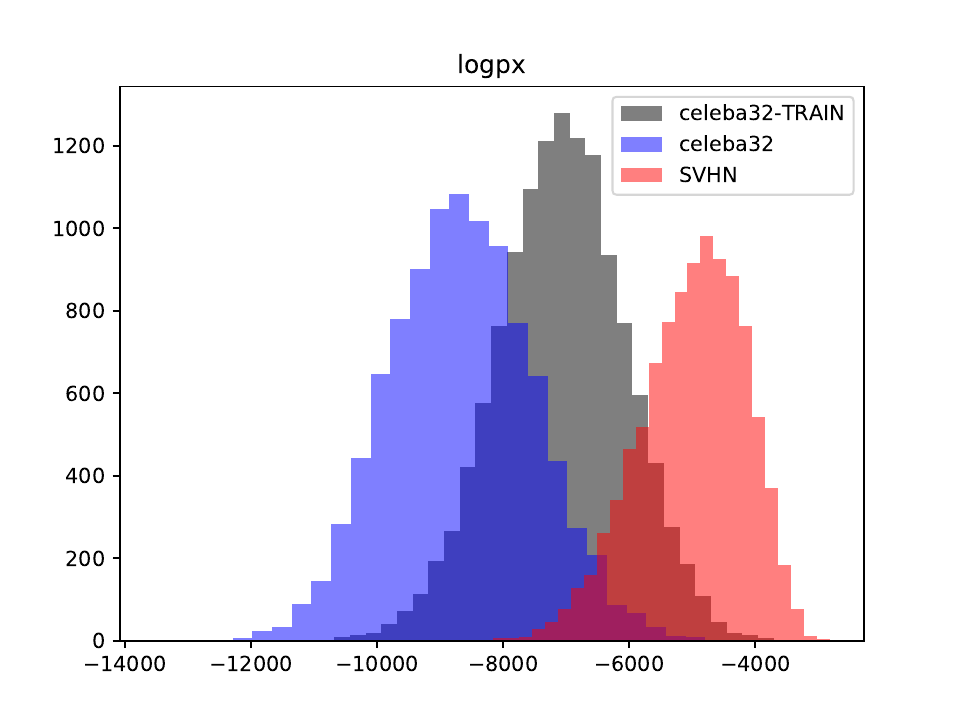}
		\end{minipage}
	}
	\subfigure{
		\begin{minipage}[t]{4.2cm}
			\centering
			\includegraphics[width=4.2cm]{./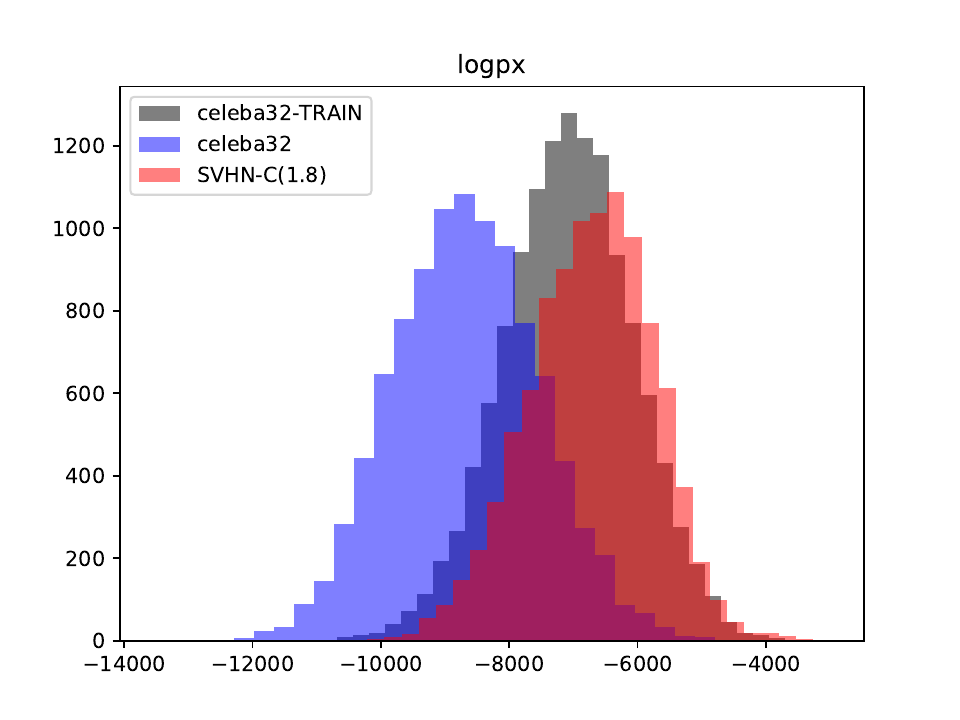}
		\end{minipage}
	}
	\subfigure{
		\begin{minipage}[t]{4.2cm}
			\centering
			\includegraphics[width=4.2cm]{./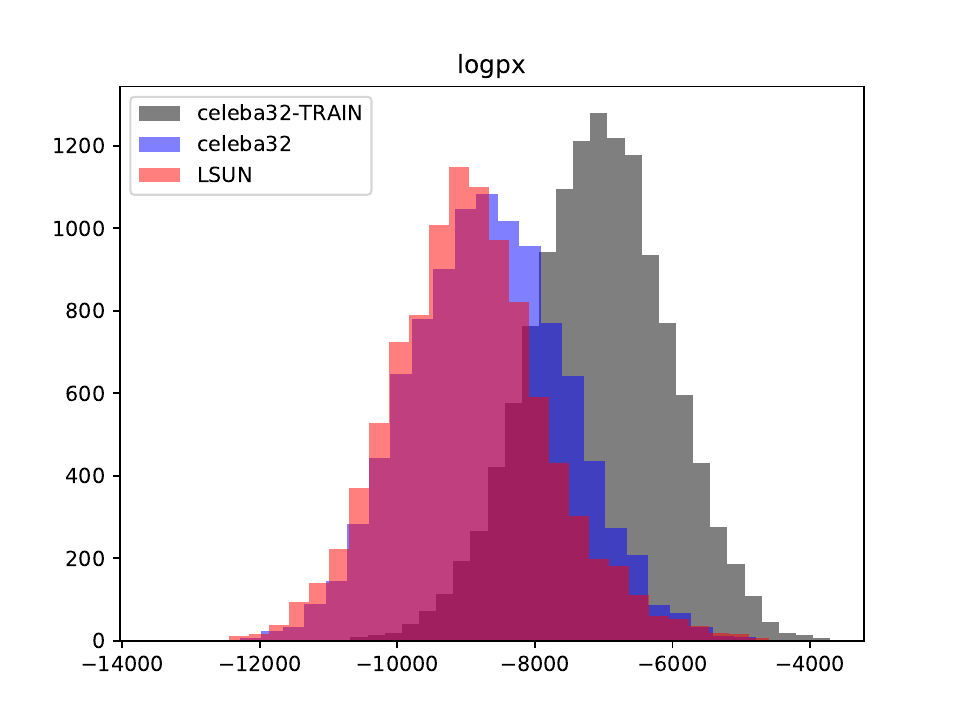}
		\end{minipage}
	}
	
	\caption{Glow trained on CelebA. Histogram of $\log p(\bm{x})$.  We can manipulate the likelihood distribution of OOD dataset by adjusting the contrast. ``-C(\textit{k})'' means the dataset with adjusted contrast by a factor of \textit{k}. It is hard to make the likelihoods of train and test split of CelebA fit well on the official Glow model. 
	}
	\label{fig:logpx_glow_celeba_vs_others}
\end{figure*}
\begin{figure*}[htbp]
	\centering
	\includegraphics[width=7cm]{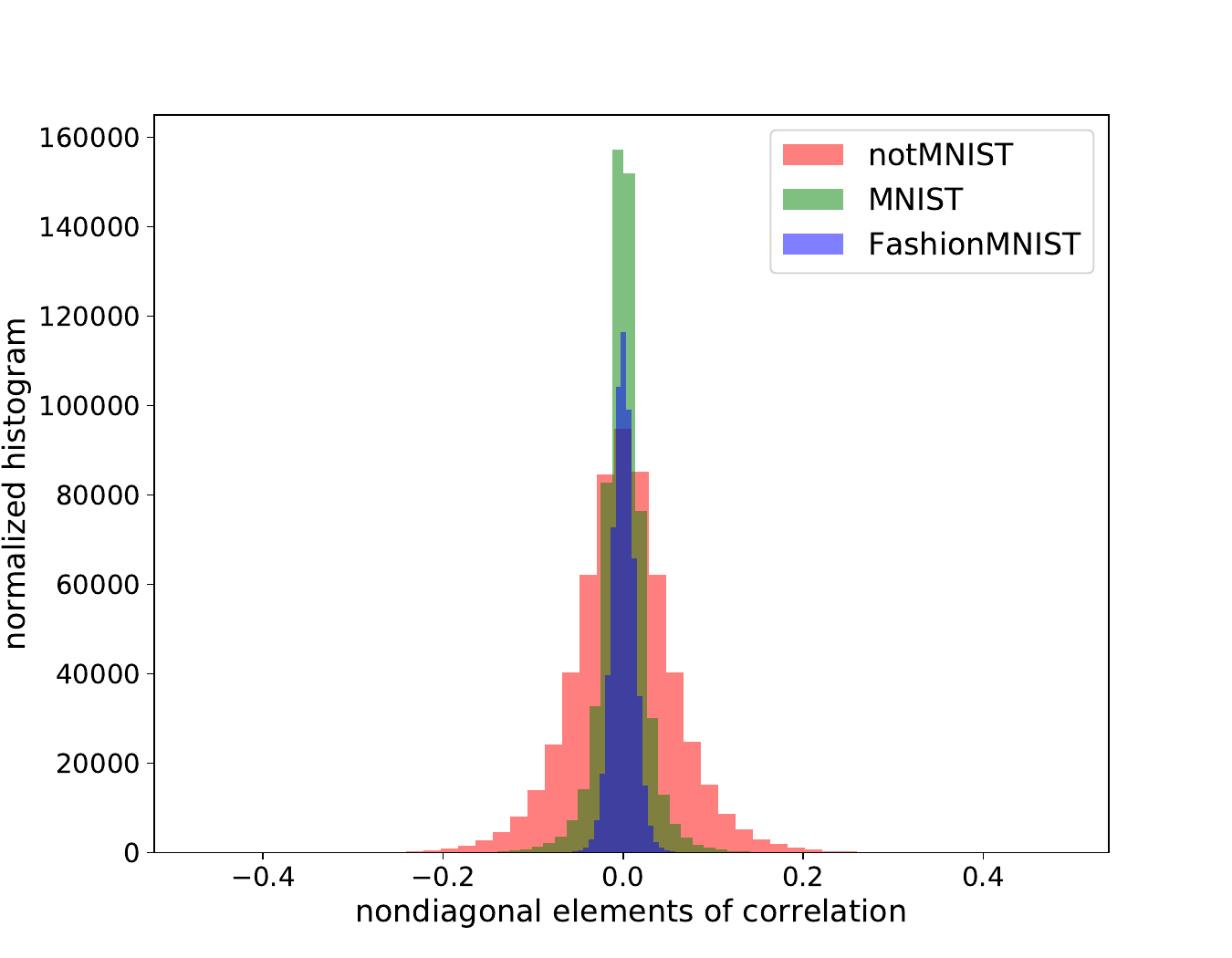}
	\caption{Glow trained on FashionMNIST and tested on MNIST/notMNIST. Histogram of non-diagonal elements in the correlation coefficient of representations.}
	\label{fig:nondiagonal_correlation_coefficient_fashionmnist_mnist_notmnist_glow} 
\end{figure*}

\begin{figure*}[htbp]
	\centering
	\includegraphics[width=10cm]{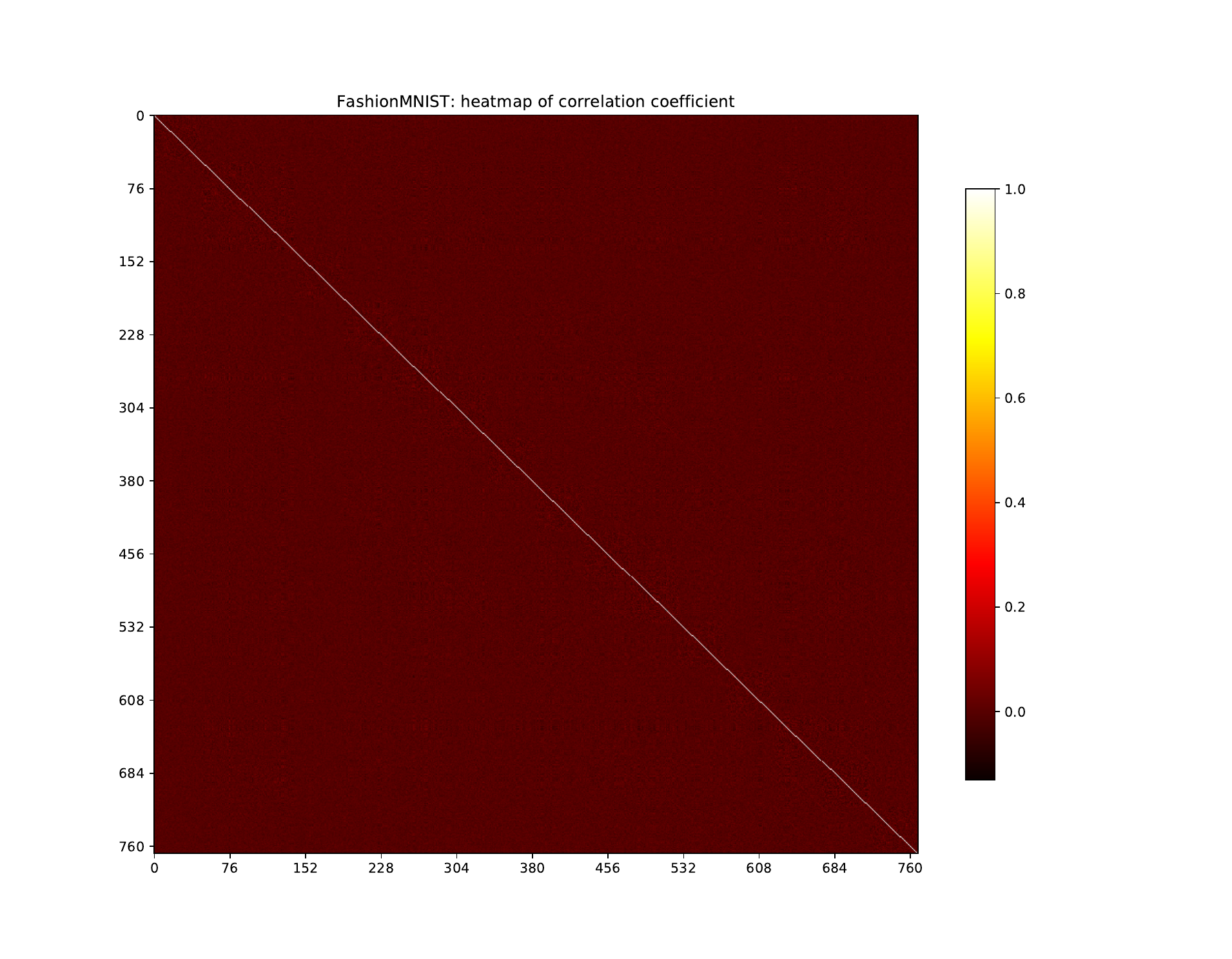}
	\caption{Glow trained on FashionMNIST. Heatmap of correlation of FashionMNIST representations.}
	\label{fig:heatmap_fashionmnist_correlation_coefficient}
\end{figure*}

\begin{figure*}[htbp]
	\vspace{-20pt}
	\centering
	\includegraphics[width=10cm]{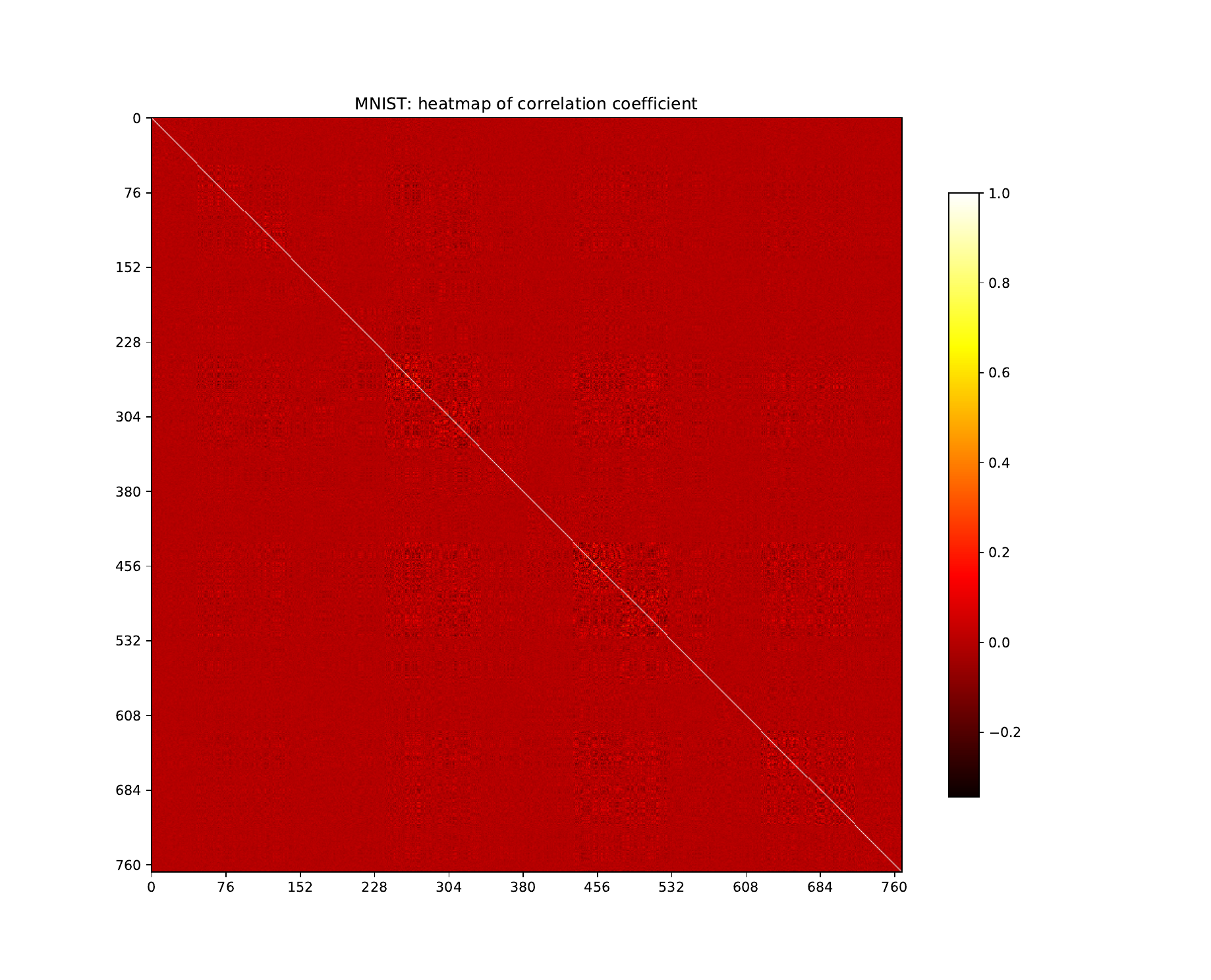}
	\caption{Glow trained on FashionMNIST. Heatmap of correlation of MNIST representations.}
	\label{fig:heatmap_mnist_correlation_coefficient}
\end{figure*}

\begin{figure*}[htbp]
	\centering
	\includegraphics[width=10cm]{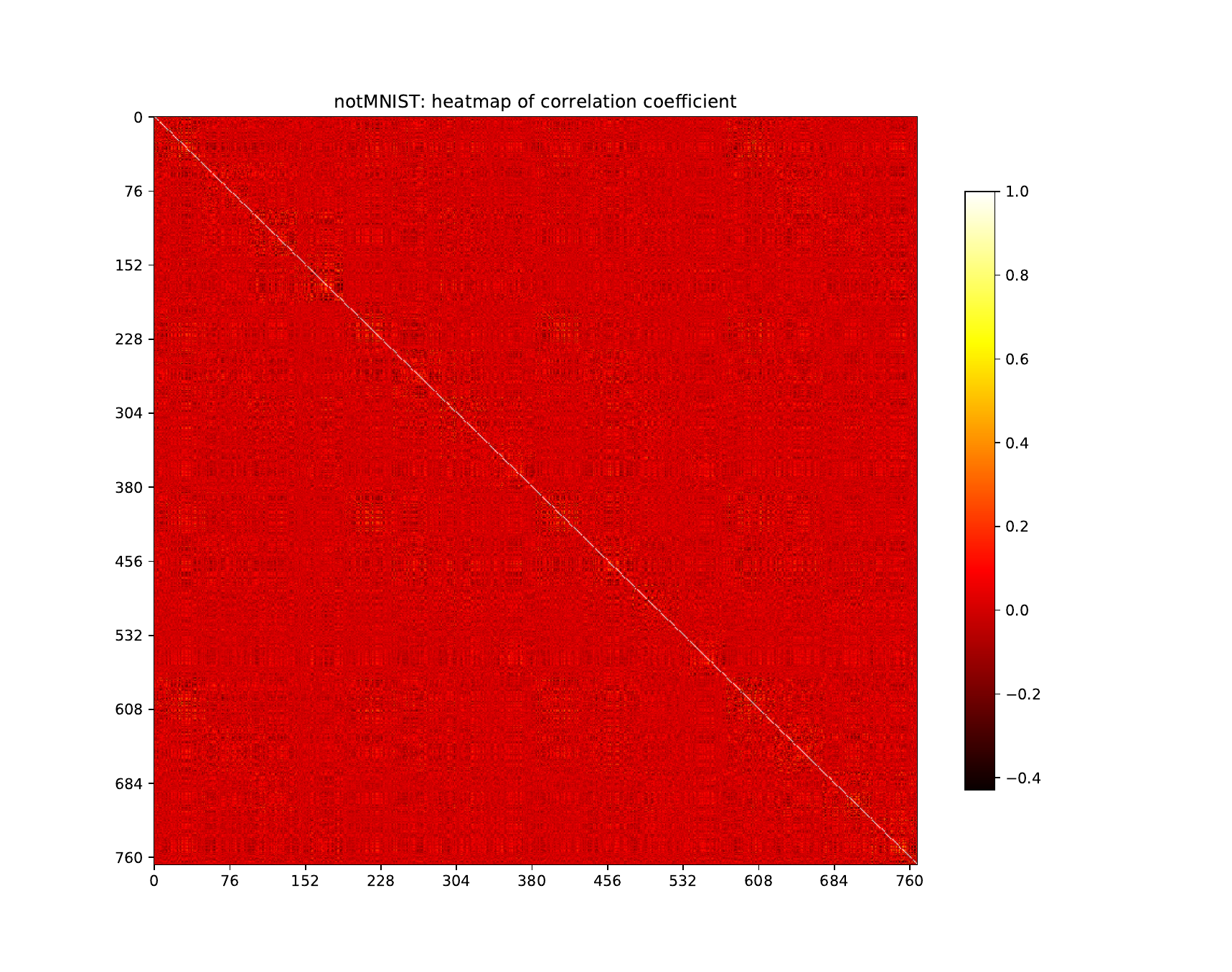}
	\caption{Glow trained on FashionMNIST. Heatmap of correlation of notMNIST representations.}
	\label{fig:heatmap_notmnist_correlation_coefficient}
\end{figure*}
%

\begin{figure*}[htbp]
	\centering
	\subfigure{
		\begin{minipage}[t]{4.2cm}
			\centering
			\includegraphics[width=4.2cm]{./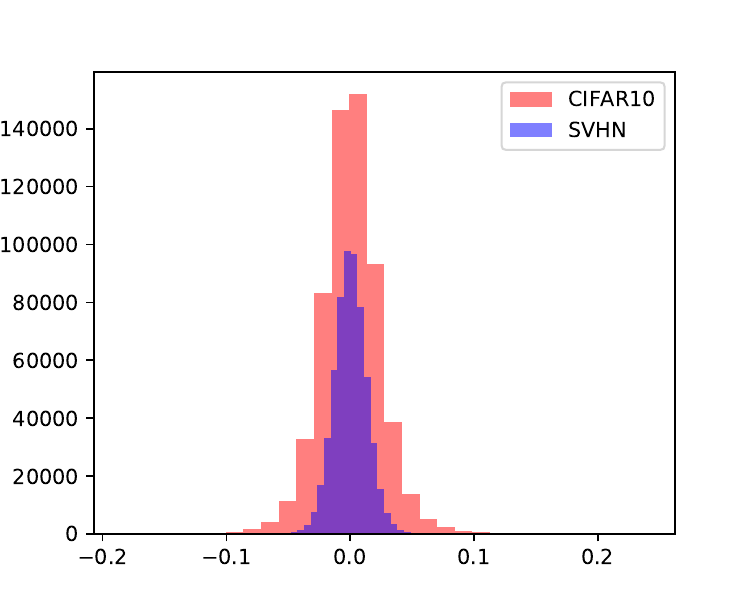}
		\end{minipage}
	}
	\subfigure{
		\begin{minipage}[t]{4.2cm}
			\centering
			\includegraphics[width=4.2cm]{./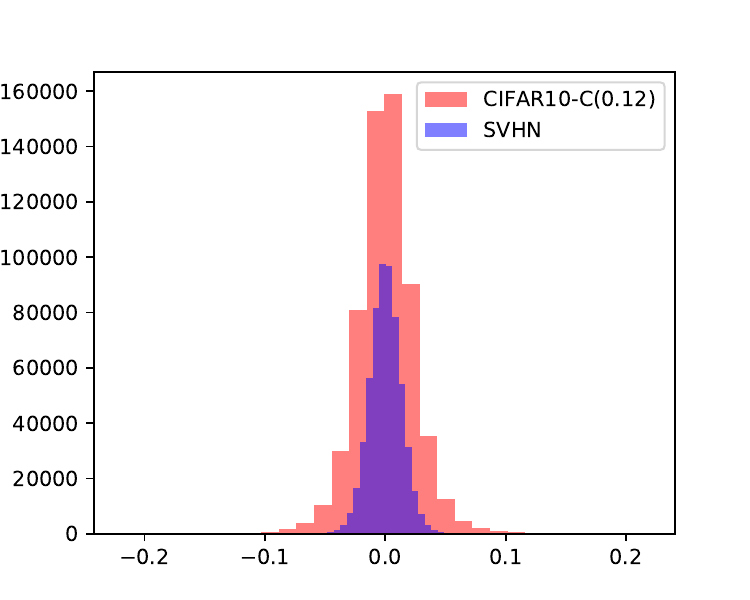}
		\end{minipage}
	}
	\subfigure{
		\begin{minipage}[t]{4.2cm}
			\centering
			\includegraphics[width=4.2cm]{./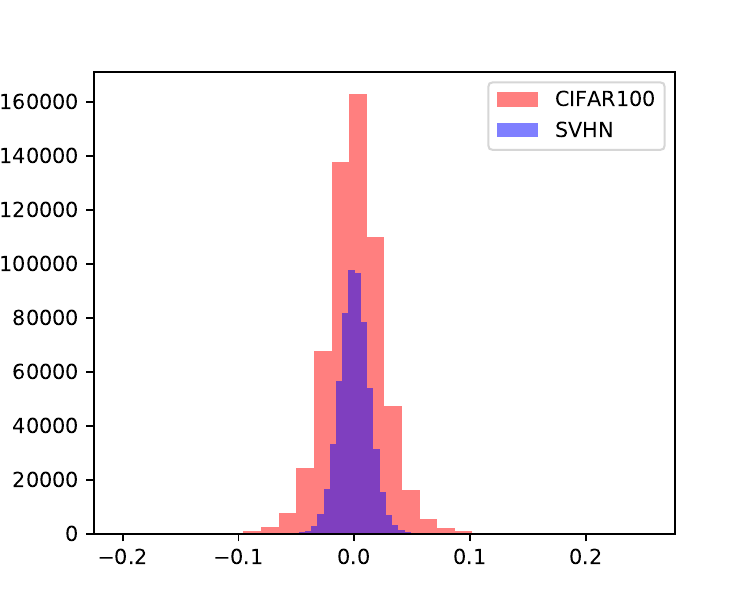}
		\end{minipage}
	}
	\subfigure{
		\begin{minipage}[t]{4.2cm}
			\centering
			\includegraphics[width=4.2cm]{./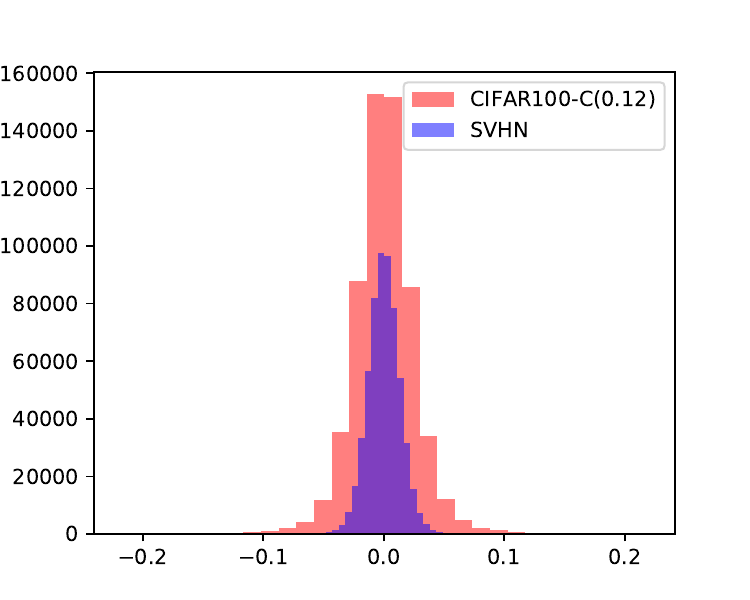}
		\end{minipage}
	}
	
	\vspace{-10pt}
	\subfigure{
		\begin{minipage}[t]{4.2cm}
			\centering
			\includegraphics[width=4.2cm]{./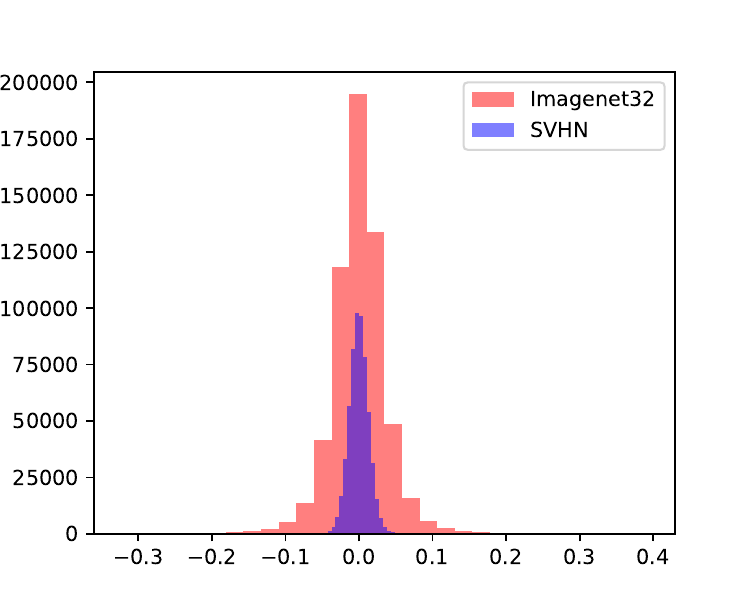}
		\end{minipage}
	}
	\subfigure{
		\begin{minipage}[t]{4.2cm}
			\centering
			\includegraphics[width=4.2cm]{./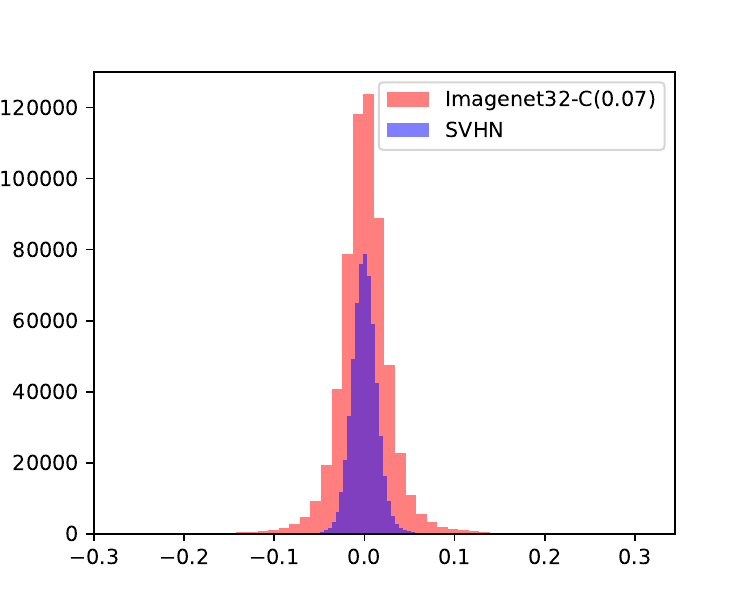}
		\end{minipage}
	}
	\subfigure{
		\begin{minipage}[t]{4.2cm}
			\centering
			\includegraphics[width=4.2cm]{./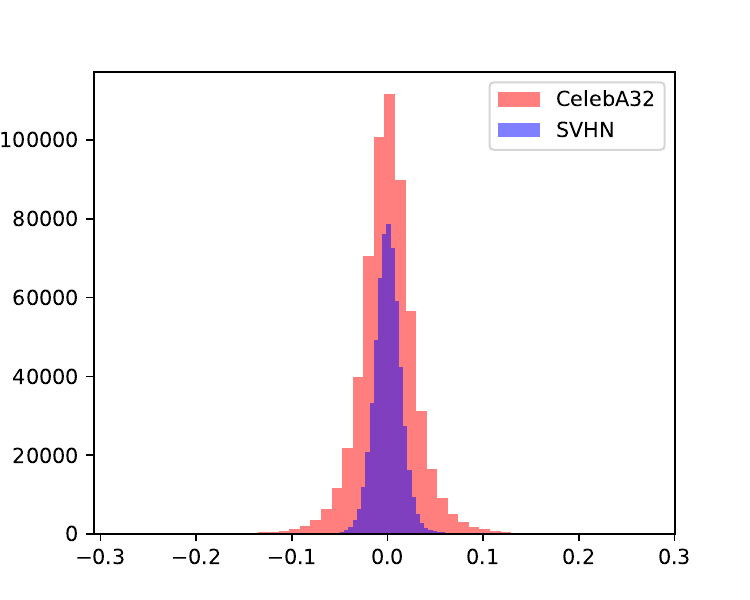}
		\end{minipage}
	}
	
	\subfigure{
		\begin{minipage}[t]{4.2cm}
			\centering
			\includegraphics[width=4.2cm]{./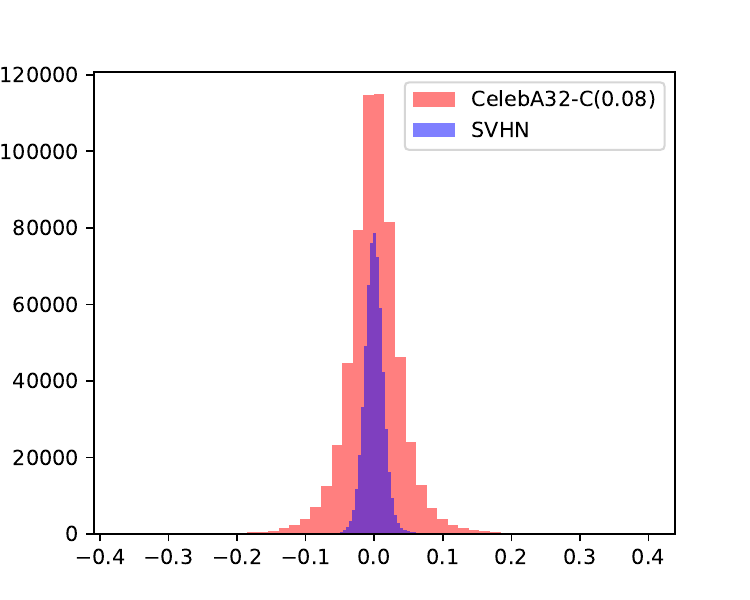}
		\end{minipage}
	}
	\subfigure{
		\begin{minipage}[t]{4.2cm}
			\centering
			\includegraphics[width=4.2cm]{./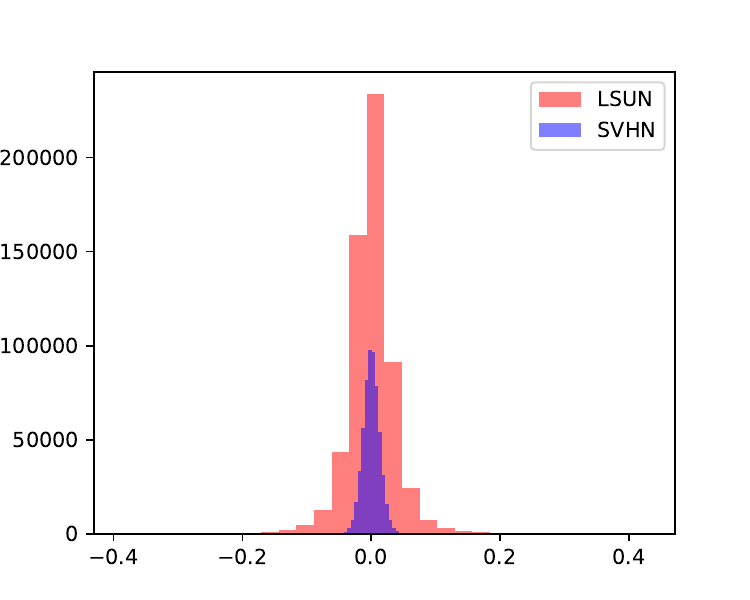}
		\end{minipage}
	}
	\subfigure{
		\begin{minipage}[t]{4.2cm}
			\centering
			\includegraphics[width=4.2cm]{./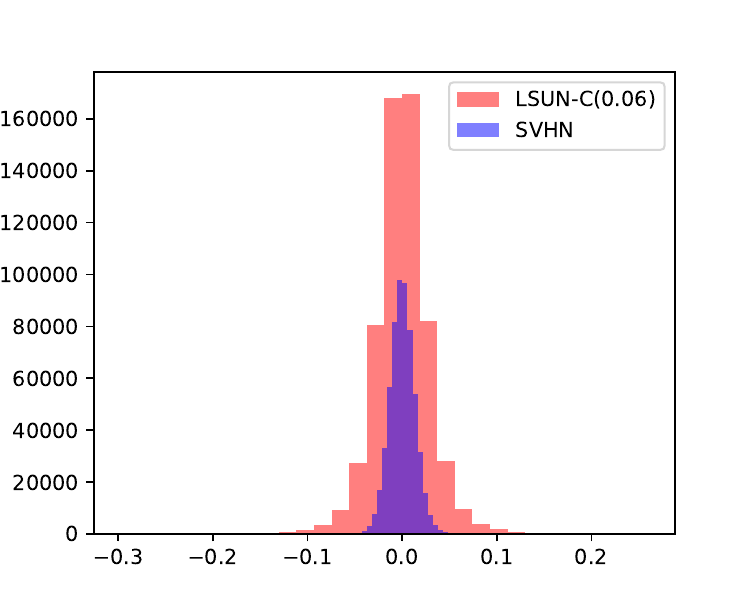}
		\end{minipage}
	}
	\caption{Glow trained on SVHN. Histogram of non-diagonal elements of correlation of representations.}
	\label{fig:histo_correlation_train_svhn_test_cifar}
\end{figure*}

%
%

%

\begin{figure*}[htbp]
	\centering
	\vspace{-0pt}
	\subfigure{
		\begin{minipage}[t]{4.2cm}
			\centering
			\includegraphics[width=4.2cm]{./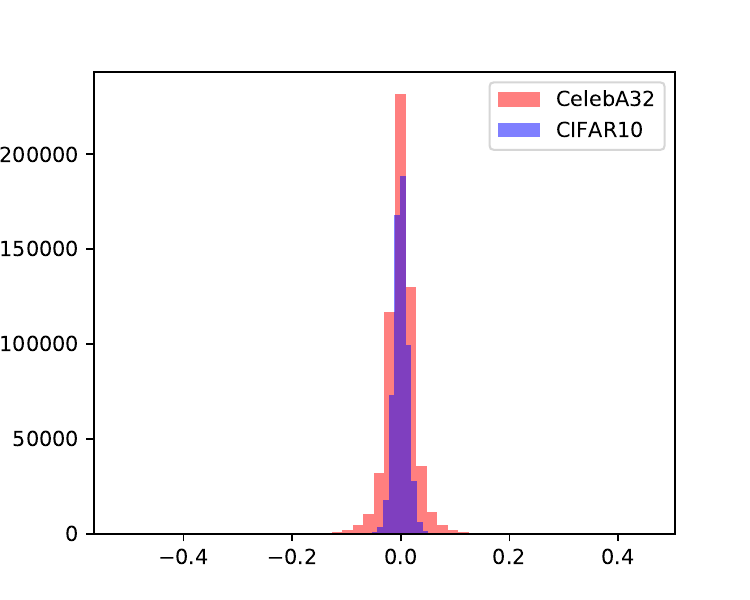}
		\end{minipage}
	}
	\subfigure{
		\begin{minipage}[t]{4.2cm}
			\centering
			\includegraphics[width=4.2cm]{./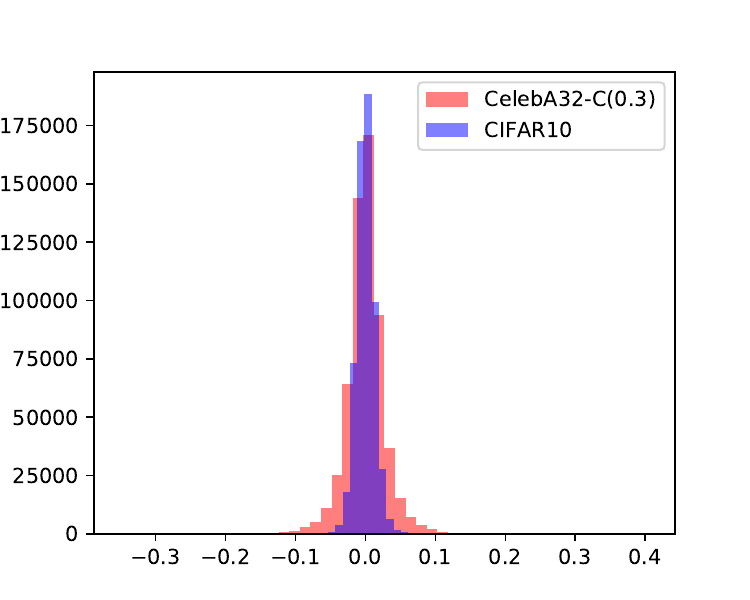}
		\end{minipage}
	}
	\subfigure{
		\begin{minipage}[t]{4.2cm}
			\centering
			\includegraphics[width=4.2cm]{./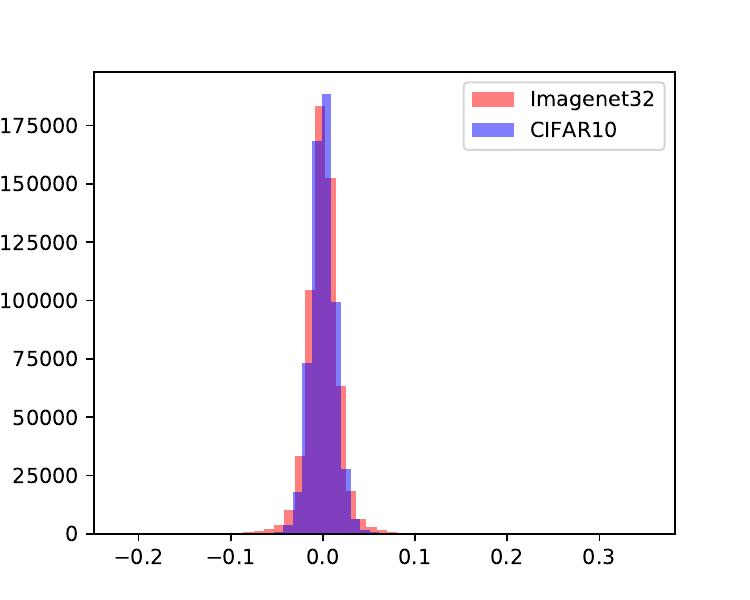}
		\end{minipage}
	}
	\subfigure{
		\begin{minipage}[t]{4.2cm}
			\centering
			\includegraphics[width=4.2cm]{./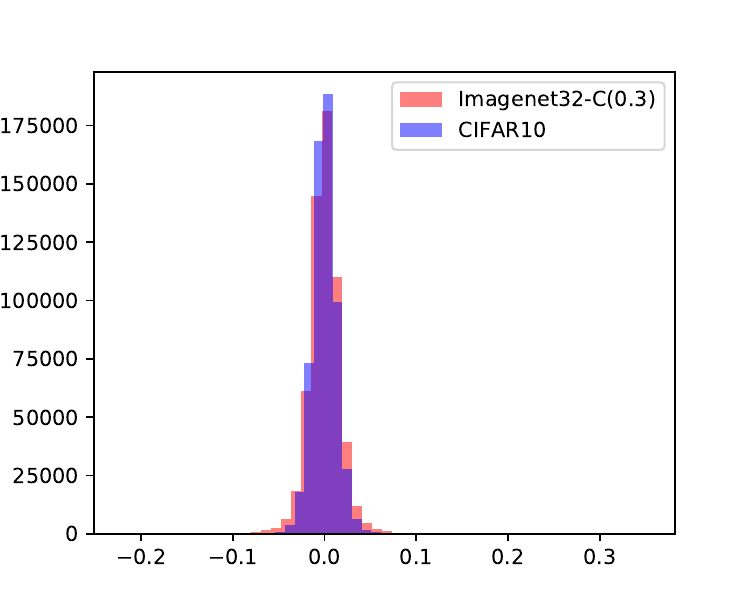}
		\end{minipage}
	}
	
	\vspace{-10pt}
	\subfigure[]{
		\begin{minipage}[t]{4.2cm}
			\centering
			\includegraphics[width=4.2cm]{./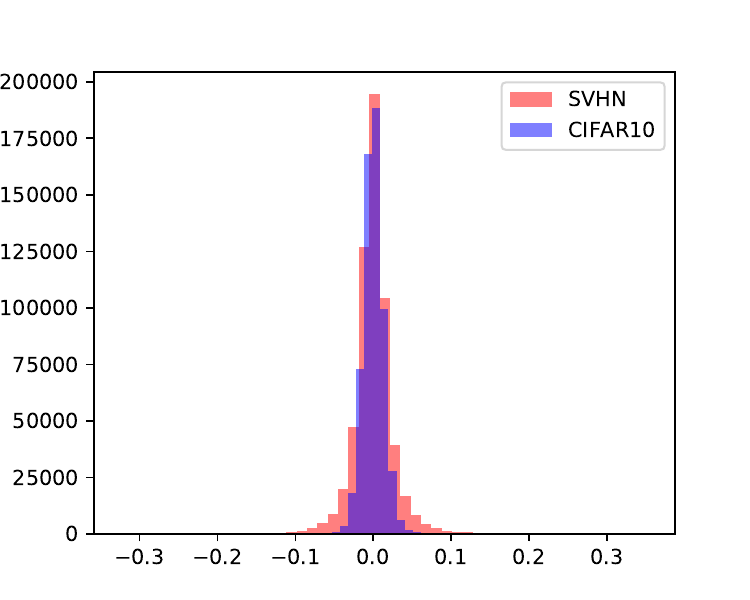}
		\end{minipage}
	}
	\subfigure[]{
		\begin{minipage}[t]{4.2cm}
			\centering
			\includegraphics[width=4.2cm]{./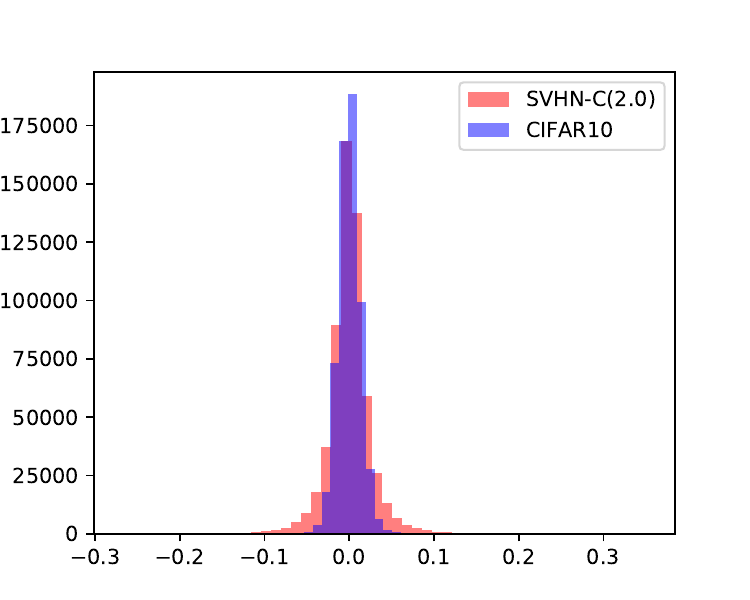}
		\end{minipage}
	}	
	\vspace{-0pt}
	\subfigure[]{
		\begin{minipage}[t]{4.2cm}
			\centering
			\includegraphics[width=4.2cm]{./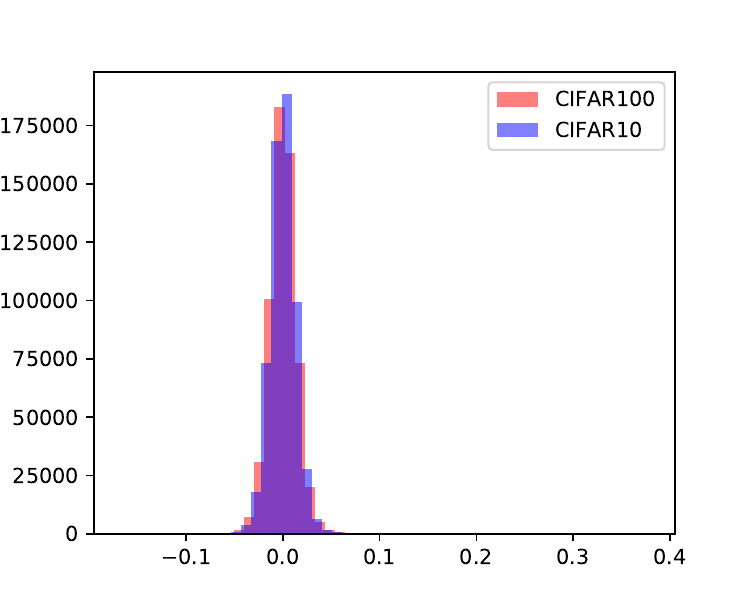}
		\end{minipage}
	}	
	\vspace{-0pt}
	\caption{Glow trained on CIFAR10. Histogram of non-diagonal elements of correlation of representations.}
	\label{fig:histo_correlation_train_cifar10_test_others}
\end{figure*}

\begin{figure*}[htbp]
	\centering
	\subfigure{
		\begin{minipage}[t]{4.2cm}
			\centering
			\includegraphics[width=4.2cm]{./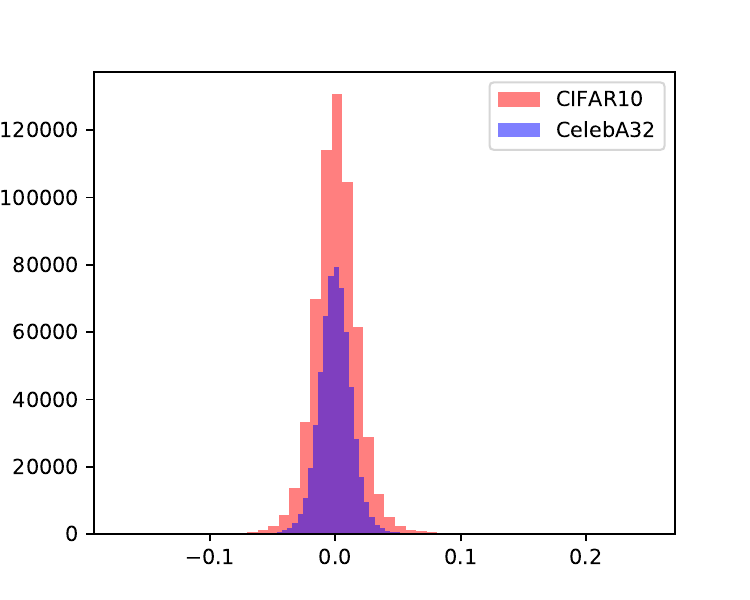}
		\end{minipage}
	}
	\subfigure{
		\begin{minipage}[t]{4.2cm}
			\centering
			\includegraphics[width=4.2cm]{./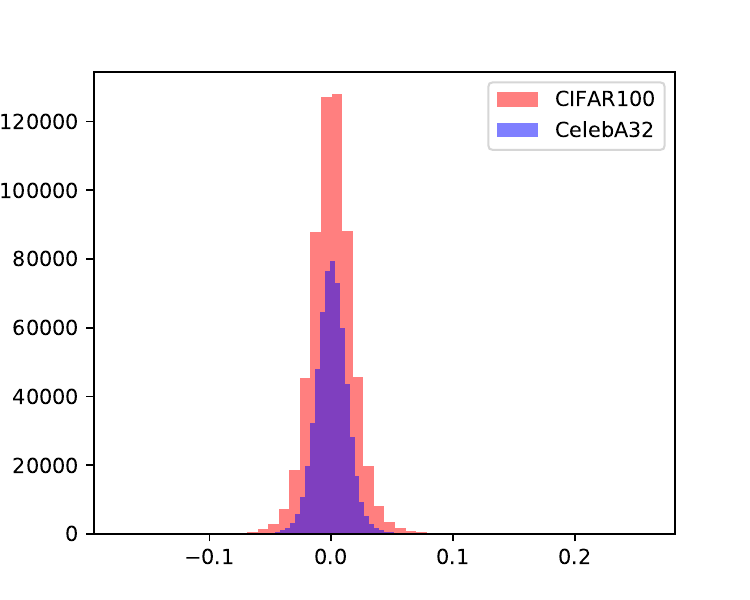}
		\end{minipage}
	}
	\subfigure{
		\begin{minipage}[t]{4.2cm}
			\centering
			\includegraphics[width=4.2cm]{./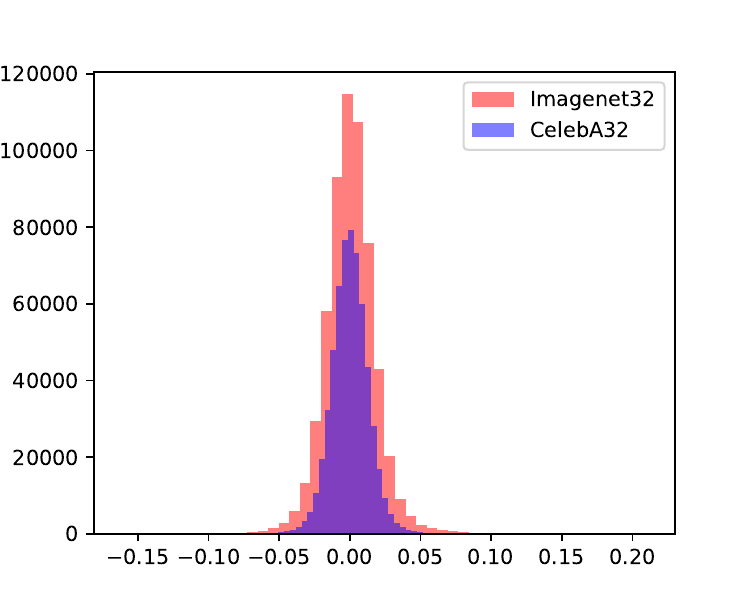}
		\end{minipage}
	}
	
	\vspace{-10pt}
	\subfigure{
		\begin{minipage}[t]{4.2cm}
			\centering
			\includegraphics[width=4.2cm]{./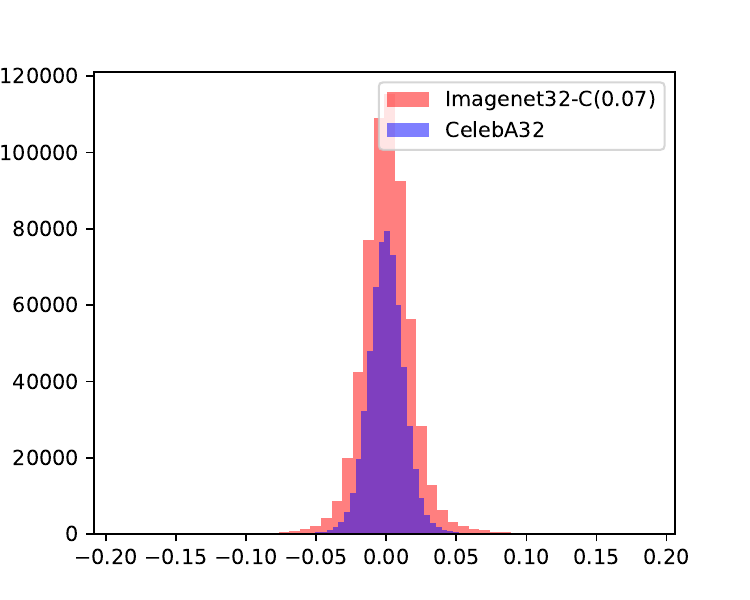}
		\end{minipage}
	}
	\subfigure{
		\begin{minipage}[t]{4.2cm}
			\centering
			\includegraphics[width=4.2cm]{./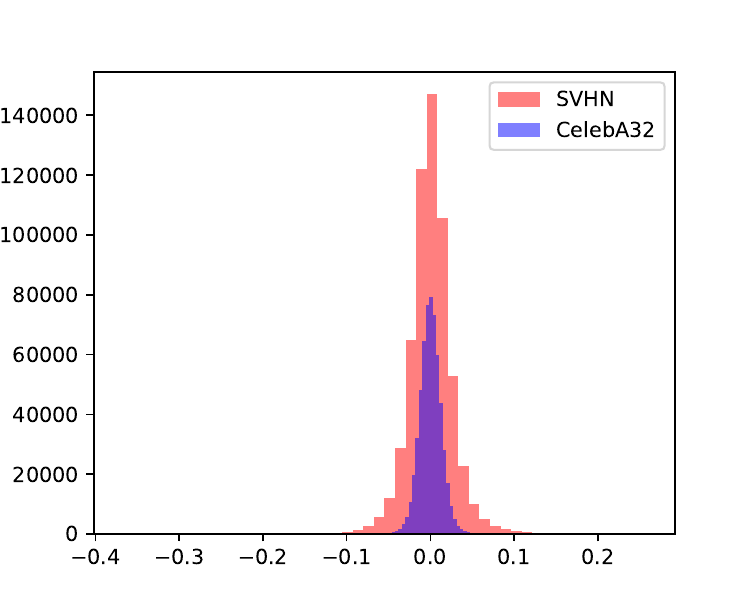}
		\end{minipage}
	}
	\subfigure{
		\begin{minipage}[t]{4.2cm}
			\centering
			\includegraphics[width=4.2cm]{./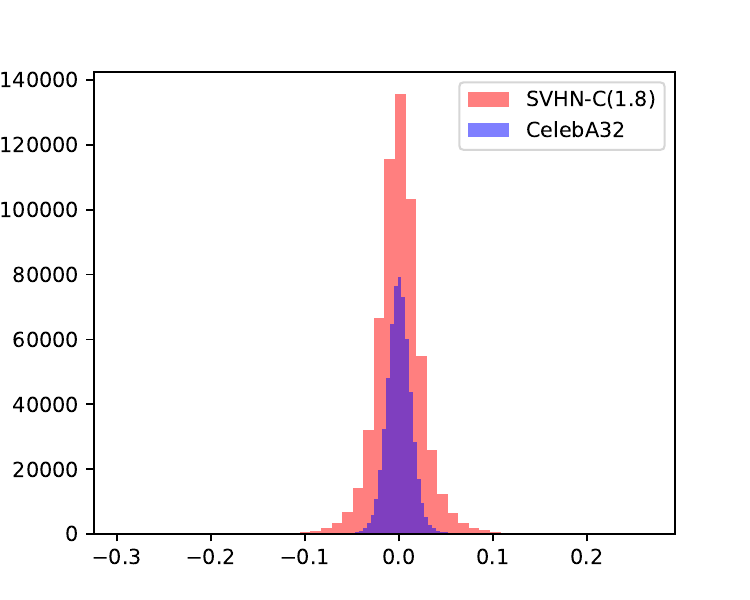}
		\end{minipage}
	}	
	\caption{Glow trained on CelebA. Histogram of non-diagonal elements of correlation of representations.}
	\label{fig:histo_correlation_train_celeba_test_others}
\end{figure*}

\begin{figure*}[htbp]
	\centering
	\subfigure[]{
		\begin{minipage}[t]{8cm}
			\centering
			\includegraphics[width=8cm]{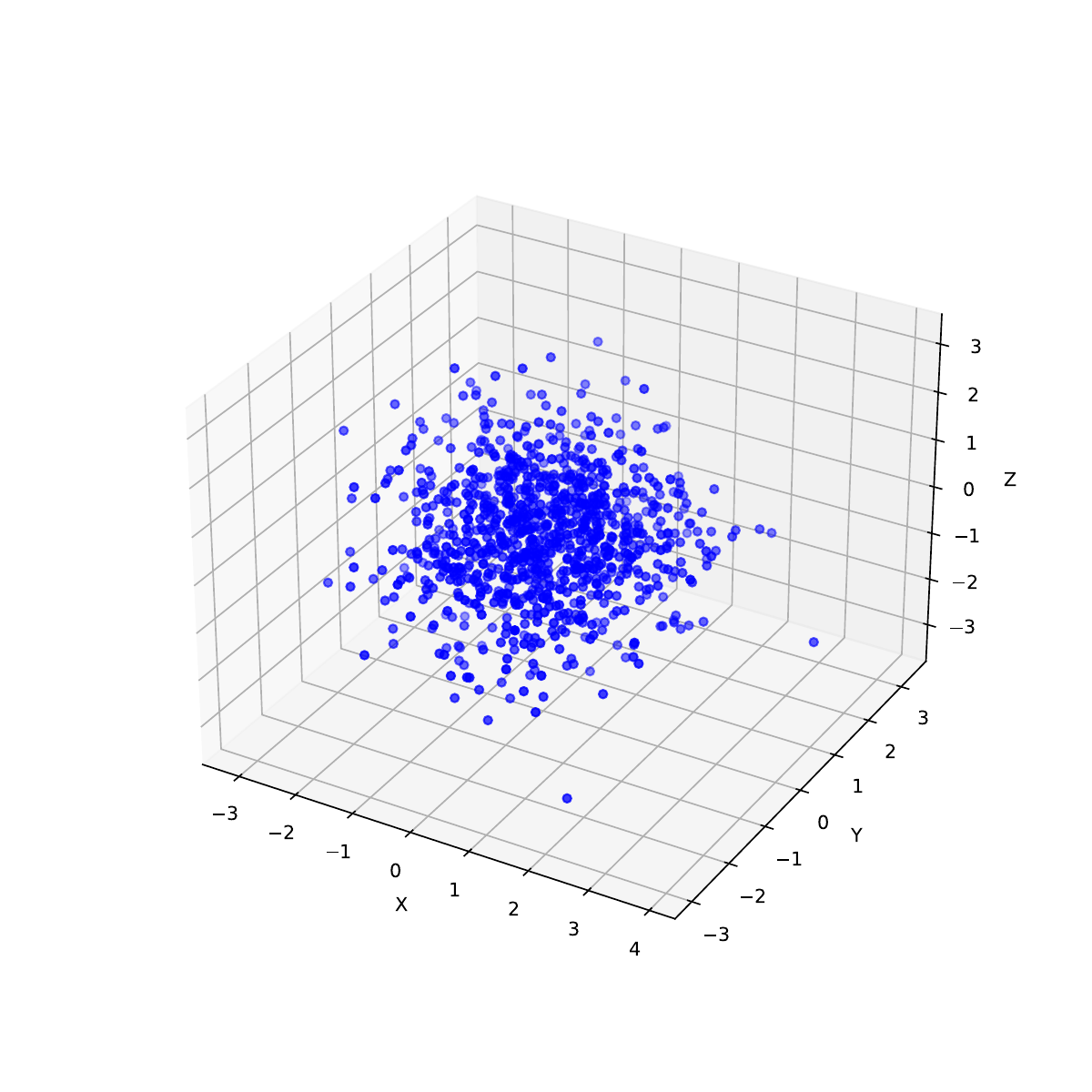}
			\label{fig:3d_nodirection}
		\end{minipage}
	}
	\subfigure[]{
		\begin{minipage}[t]{8cm}
			\centering
			\includegraphics[width=8cm]{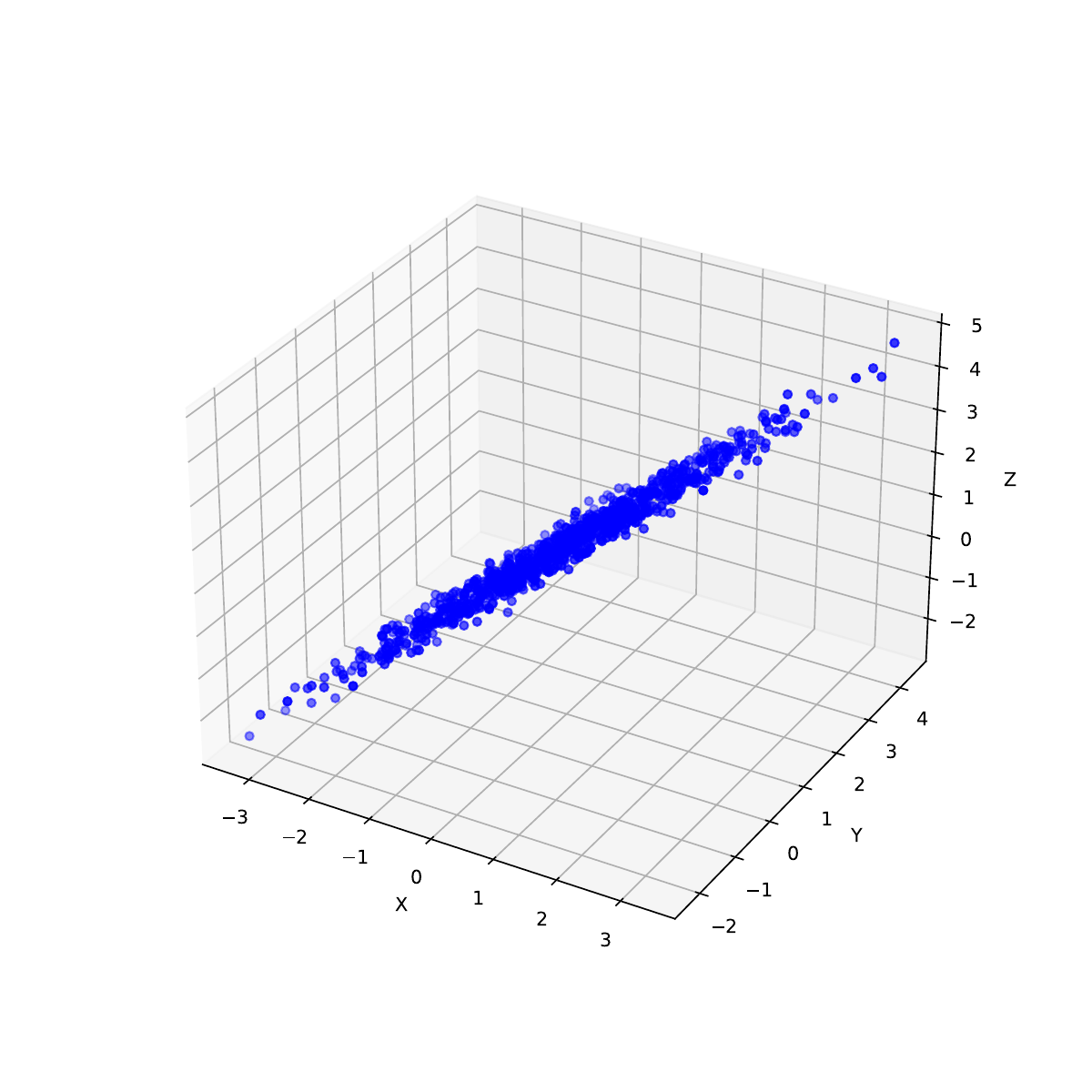}
			\label{fig:3d_direction}
		\end{minipage}
	}
	
	\caption{Samples from 3-d Gaussian distribution $\n(\mu, \Sigma)$. The mean $\mu$ and covariance matrix $\Sigma$ determines where the data locate in. (a) $\mu=(0,0,0)$,
		$\Sigma=((1,0,0)^{\top},(0,1,0)^{\top},(0,0,1)^{\top})$. (b) $\mu=(0,1,1)$, $\Sigma=((1,0.98,0.98)^{\top},(0.98,1,0.98)^{\top},(0.98,0.98,1)^{\top})$.
	}
	\label{fig:3d_gaussian}
\end{figure*}
%

\begin{figure*}%
	\centering
	\includegraphics[width=4.5cm]{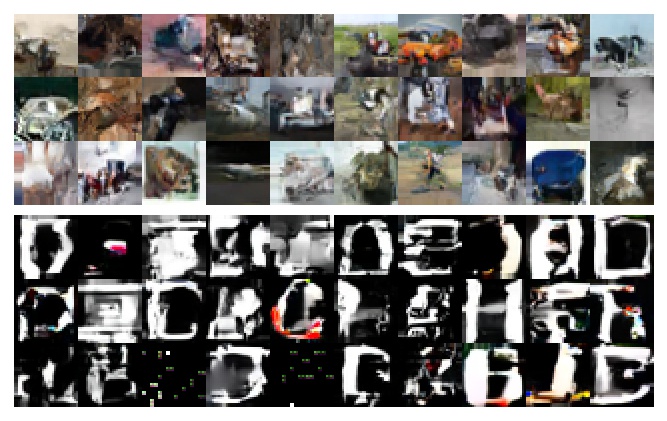}
	\vspace{-8pt}
	\caption{Glow trained on CIFAR-10. Generated images from prior (up), fitted Gaussian distribution from the representations of OOD dataset notMNIST (down).}
	\label{fig:sampled_notmnist_trained_on_fashionmnist}
\end{figure*}

\begin{figure*}[htbp]
	\centering
	\subfigure[]{
		\begin{minipage}[t]{4cm}
			\centering
			\includegraphics[width=4cm]{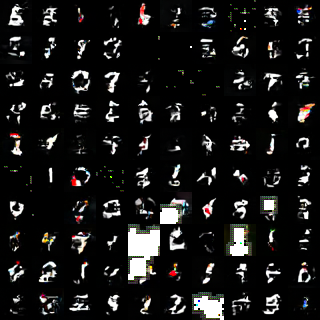}
			\label{fig:sampled_mnist_trained_on_cifar10}
		\end{minipage}
	}
	\subfigure[]{
		\begin{minipage}[t]{4cm}
			\centering
			\includegraphics[width=4cm]{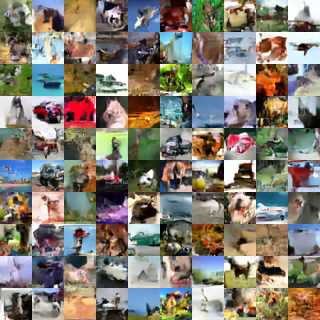}
			\label{fig:sampled_cifar100_trained_on_cifar10}
		\end{minipage}
	}
	\subfigure[]{
		\begin{minipage}[t]{4cm}
			\centering
			\includegraphics[width=4cm]{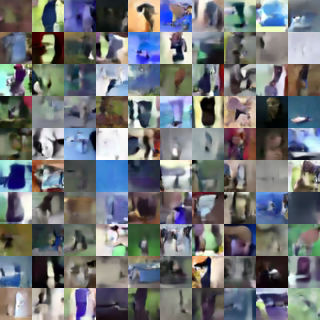}
			\label{fig:sampled_svhn_trained_on_cifar10}
		\end{minipage}
	}
	
	\subfigure[]{
		\begin{minipage}[t]{4cm}
			\centering
			\includegraphics[width=4cm]{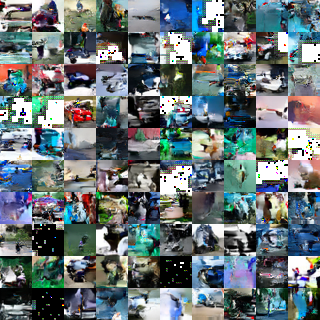}
			\label{fig:sampled_imagenet32_trained_on_cifar10}
		\end{minipage}
	}
	\subfigure[]{
		\begin{minipage}[t]{4cm}
			\centering
			\includegraphics[width=4cm]{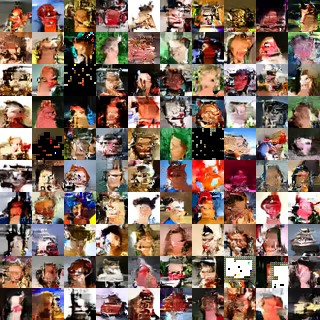}
			\label{fig:sampled_celeba_trained_on_cifar10}
		\end{minipage}
	}
	
	\caption{Glow trained on CIFAR10. Generated images according to the fitted Gaussian distribution from representations of (a) MNIST; (b) CIFAR100; (c) SVHN; (d) ImageNet32; (e) CelebA. We replicate MNIST into three channels and pad zeros for consistency. These results demonstrate that the covariance of representations contains important information of an OOD dataset.}
	\label{fig:sample_images_trained_on_cifar10}
\end{figure*}

\begin{figure*}[htbp]
	\centering
	\subfigure[]{
		\begin{minipage}[t]{4cm}
			\centering
			\includegraphics[width=4cm]{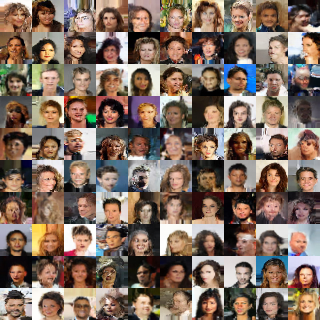}
			\label{fig:sampled_celeba_trained_on_celeba}
		\end{minipage}
	}
	\subfigure[]{
		\begin{minipage}[t]{4cm}
			\centering
			\includegraphics[width=4cm]{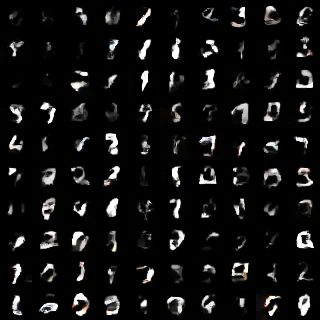}
			\label{fig:sampled_mnist_trained_on_celeba}
		\end{minipage}
	}
	\subfigure[]{
		\begin{minipage}[t]{4cm}
			\centering
			\includegraphics[width=4cm]{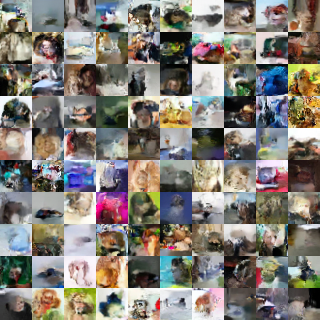}
			\label{fig:sampled_cifar10_trained_on_celeba}
		\end{minipage}
	}
	
	\caption{Glow trained on CelebA32$\times$32, sampling according to (a) standard Gaussian distribution; (b) fitted Gaussian distribution from MNIST representations; (c) fitted Gaussian distribution from CIFAR10 representations.}
	\label{fig:sample_images_trained_on_celeba}
\end{figure*}

\begin{figure*}[htbp]
	\centering
	\includegraphics[width=4cm]{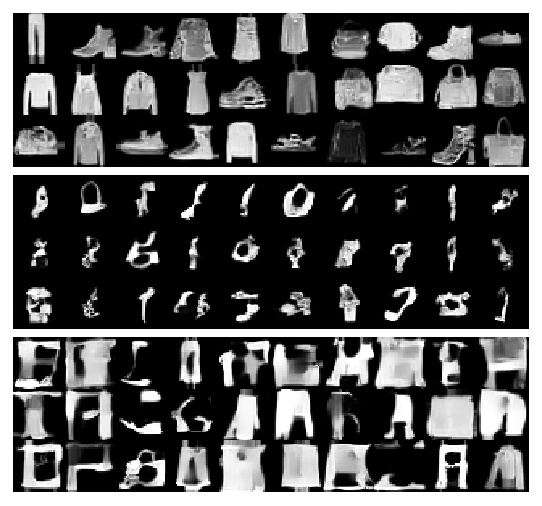}
	\vspace{-0pt}
	\caption{Glow trained on FashionMNIST. Sampling according to prior (up), fitted Gaussian distribution from representations of MNSIT (middle) and notMNIST (down).}
	\label{fig:sampled_mnist_notmnist_trained_on_fashionmnist}
\end{figure*}

\begin{figure*}[htbp]
	\centering
	\subfigure[]{
		\begin{minipage}[t]{5cm}
			\centering
			\includegraphics[width=5cm]{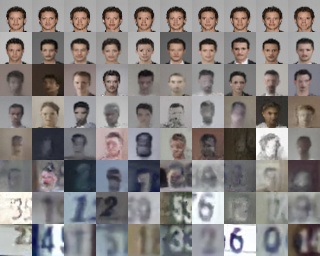}
			\label{fig:glow_trained_on_celeba_temperature_svhn}
		\end{minipage}
	}
	\subfigure[]{
		\begin{minipage}[t]{5cm}
			\centering
			\includegraphics[width=5cm]{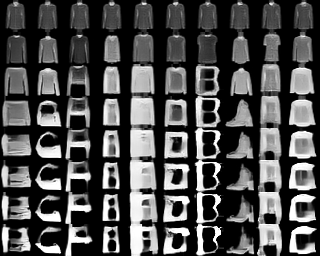}
			\label{fig:temperature_fashionmnist_to_notmnist}
		\end{minipage}
	}
	
	\caption{(a) Train Glow on CelebA and sample from the fitted Gaussian distribution of SVHN. (b) Train on FashionMNIST and sample from the fitted Gaussian distribution of notMNIST. From top to down, the sampled noises from Gaussian distribution are scaled by temperature 0, 0.25, 0.5, 0.6, 0.7, 0.8, 0.9, 1.0, respectively.
	}
	\label{fig:glow_trained_on_ID_temperature_OOD}
\end{figure*}

\begin{figure*}[ht]
	\centering
	\subfigure[FashionMNIST vs MNIST]{
		\begin{minipage}[t]{5cm}
			\centering
			\includegraphics[width=5cm]{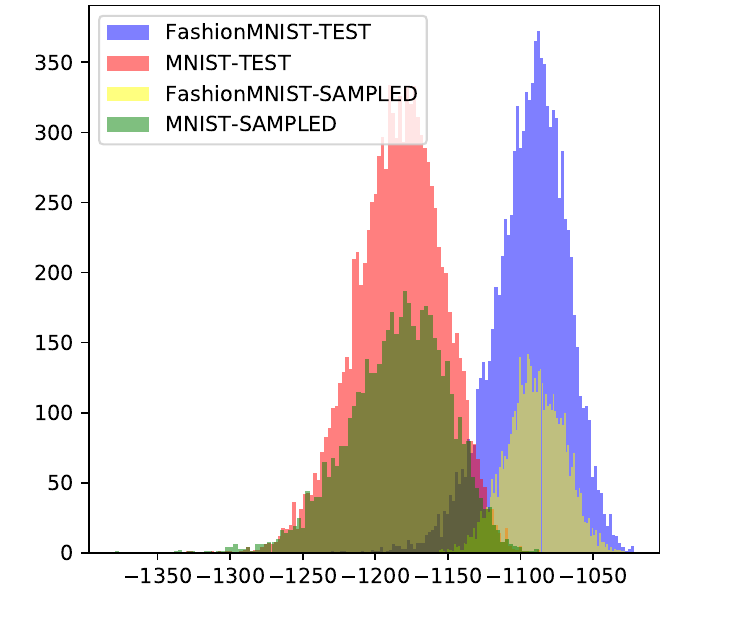}
			\label{fig:logpz_fashionmnist_mnist_with_sample}
		\end{minipage}
	}
	\subfigure[FashionMNIST vs notMNIST]{
		\begin{minipage}[t]{5cm}
			\centering
			\includegraphics[width=5cm]{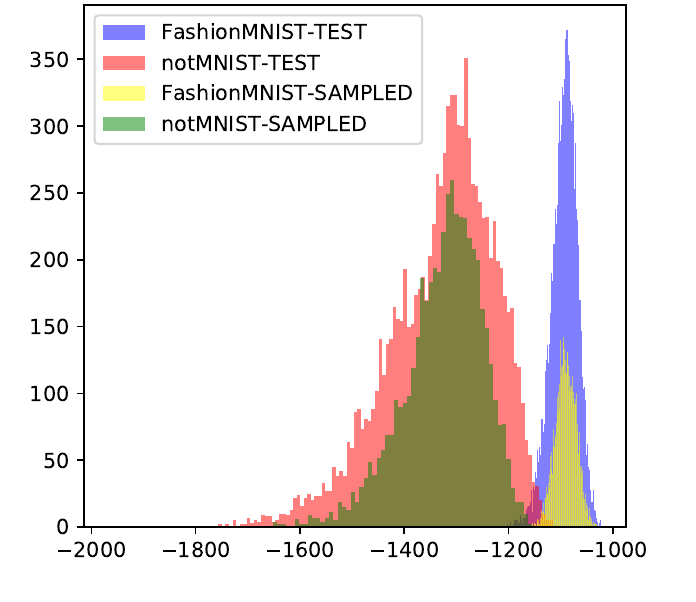}
			\label{fig:logpz_fashionmnist_notmnist_with_sample}
		\end{minipage}
	}
	
	\caption{Glow trained on FashionMNIST. Histogram of $\log p(\bm{z})$ of (a) FashionMNIST vs MNIST, (b) FashionMNIST vs notMNIST under Glow. The green part corresponds to the $\log p(\bm{z})$ of noises sampled from the fitted Gaussian distribution of OOD datasets. }
	\label{fig:logpz_datasets_with_sample}
\end{figure*}

\begin{figure*}[htbp]
	\centering
	\subfigure[]{
		\begin{minipage}[t]{4.2cm}
			\centering
			\includegraphics[width=4.2cm]{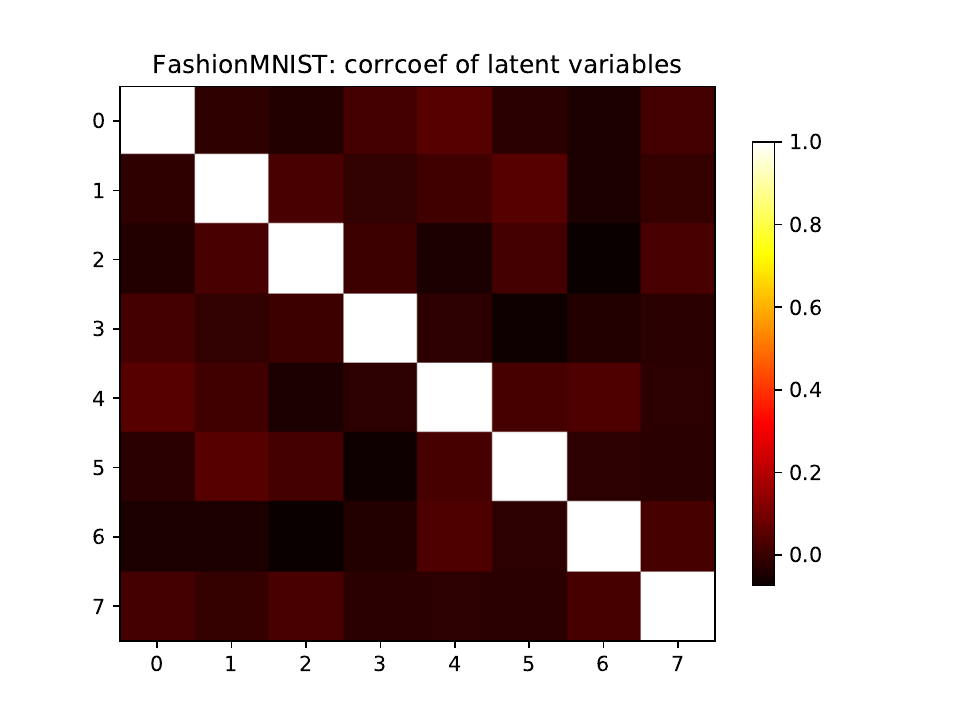}
			\label{fig:heatmap_fashionmnist_correlation_coefficient_VAE}
		\end{minipage}
	}
	\subfigure[]{
		\begin{minipage}[t]{4.2cm}
			\centering
			\includegraphics[width=4.2cm]{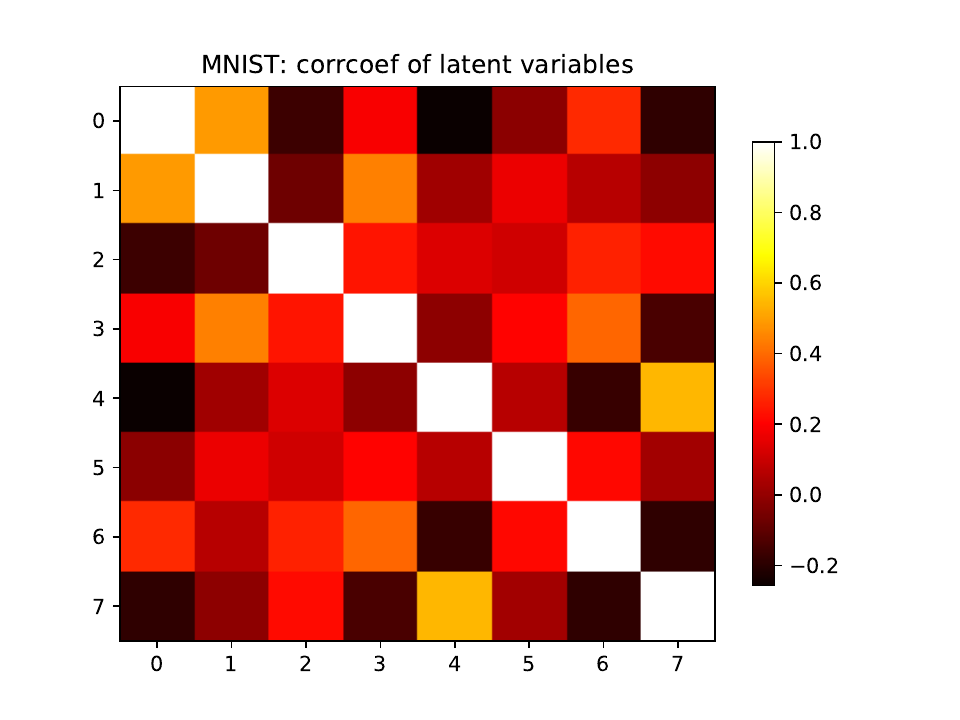}
			\label{fig:heatmap_mnist_corrcoef_VAE}
		\end{minipage}
	}
	\subfigure[]{
		\begin{minipage}[t]{4.2cm}
			\centering
			\includegraphics[width=4.2cm]{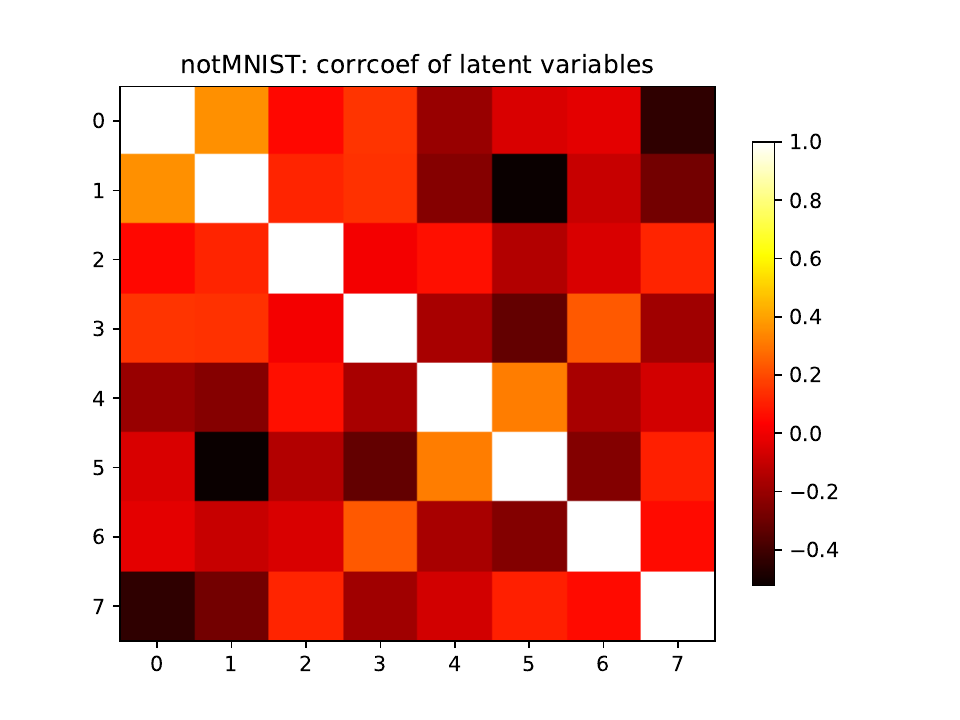}
			\label{fig:heatmap_notmnist_corrcoef_VAE}
		\end{minipage}
	}
	\subfigure[]{
		\begin{minipage}[t]{4.2cm}
			\centering
			\includegraphics[width=4.2cm]{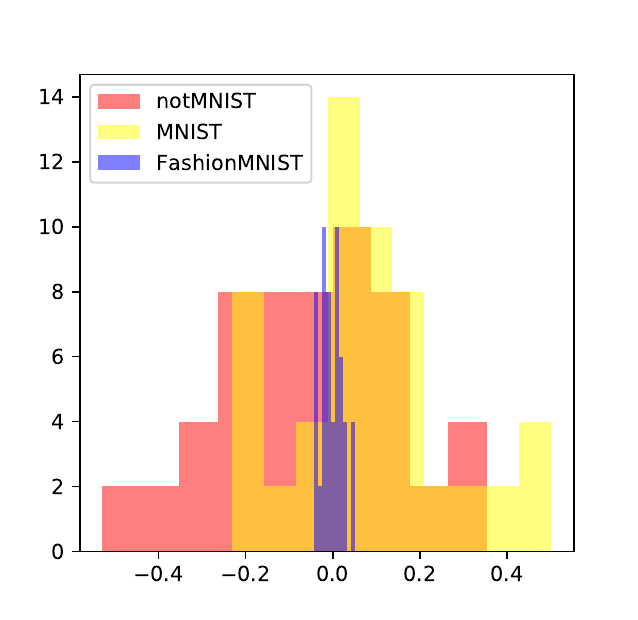}
			\label{fig:heatmap_notmnist_corrcoef_VAE}
		\end{minipage}
	}
	\caption{VAE trained on FashionMNIST. Heatmap of correlation of (a)FashionMNIST (b)MNIST (c) notMNIST representations. (d) Histogram of non-diagonal elements of correlation of sampled representations.}
	\label{fig:heatmap_correlation_fashionmnist_mnist_notmnist_VAE}
\end{figure*}

%

\begin{figure*}[htbp]
	\centering
	\subfigure{
		\begin{minipage}[t]{4.2cm}
			\centering
			\includegraphics[width=4.2cm]{./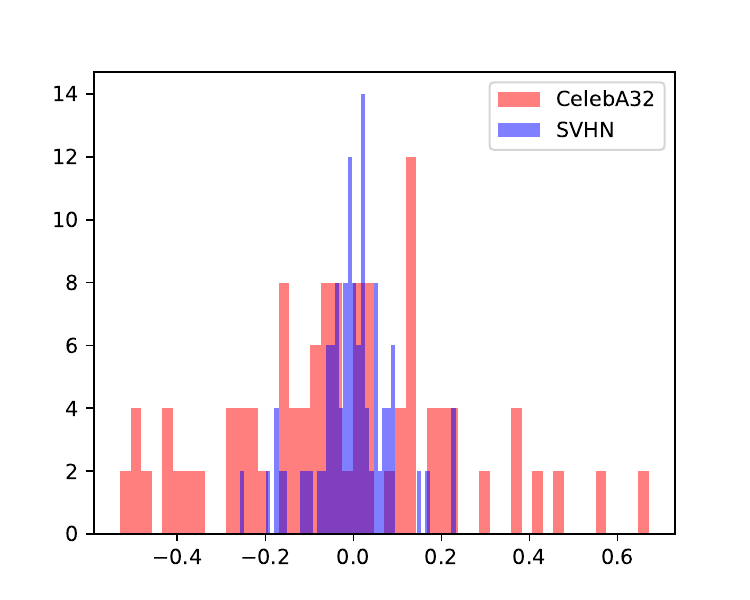}
		\end{minipage}
	}
	\subfigure{
		\begin{minipage}[t]{4.2cm}
			\centering
			\includegraphics[width=4.2cm]{./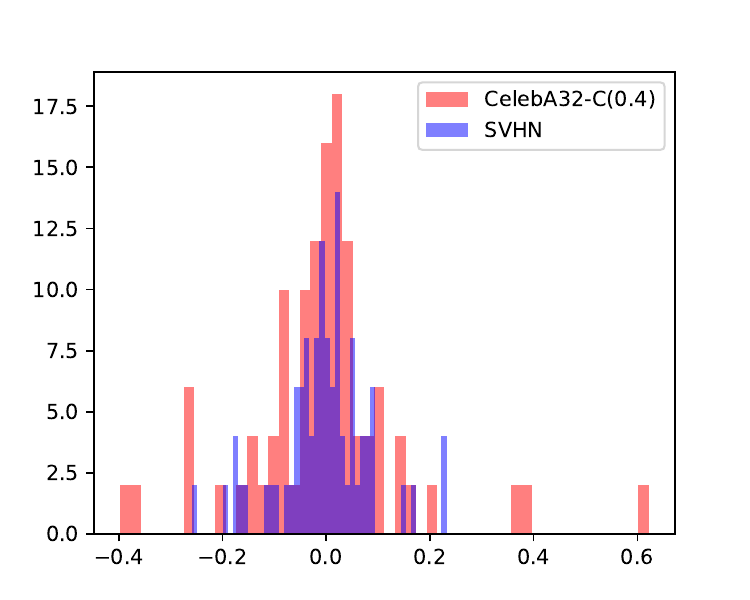}
		\end{minipage}
	}
	\subfigure{
		\begin{minipage}[t]{4.2cm}
			\centering
			\includegraphics[width=4.2cm]{./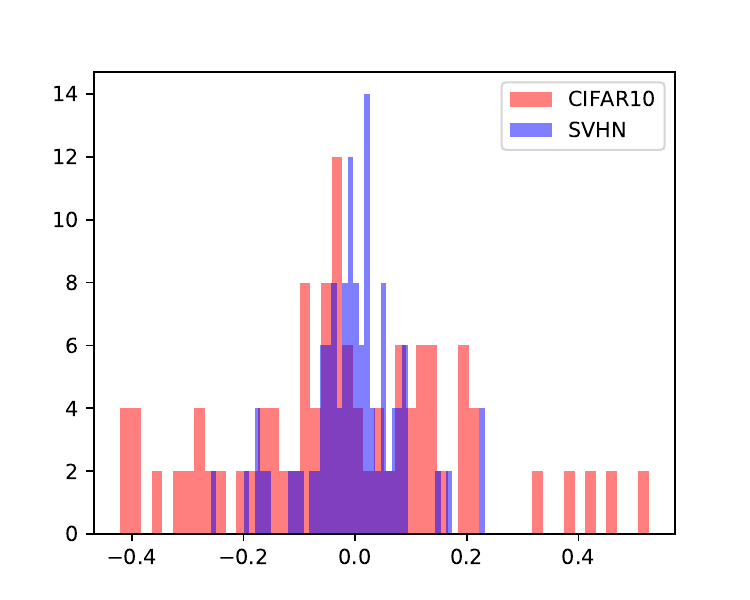}
		\end{minipage}
	}
	\subfigure{
		\begin{minipage}[t]{4.2cm}
			\centering
			\includegraphics[width=4.2cm]{./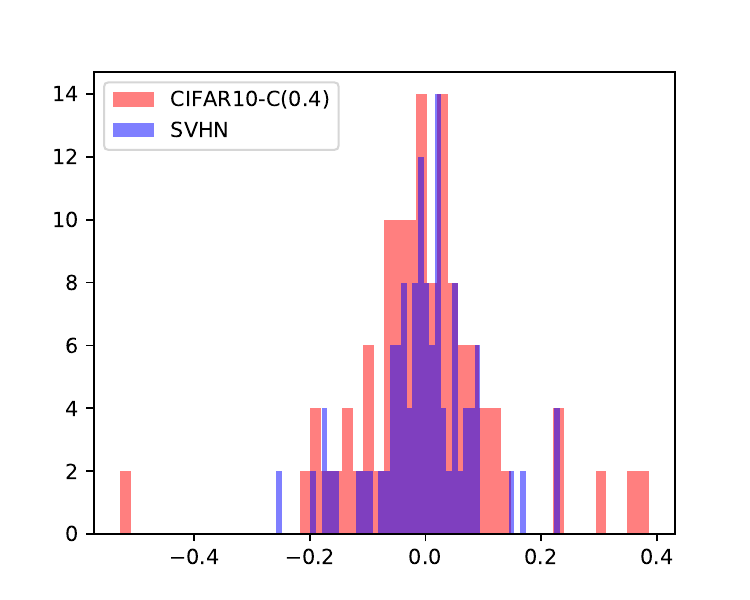}
		\end{minipage}
	}
	
	\vspace{-10pt}
	\subfigure{
		\begin{minipage}[t]{4.2cm}
			\centering
			\includegraphics[width=4.2cm]{./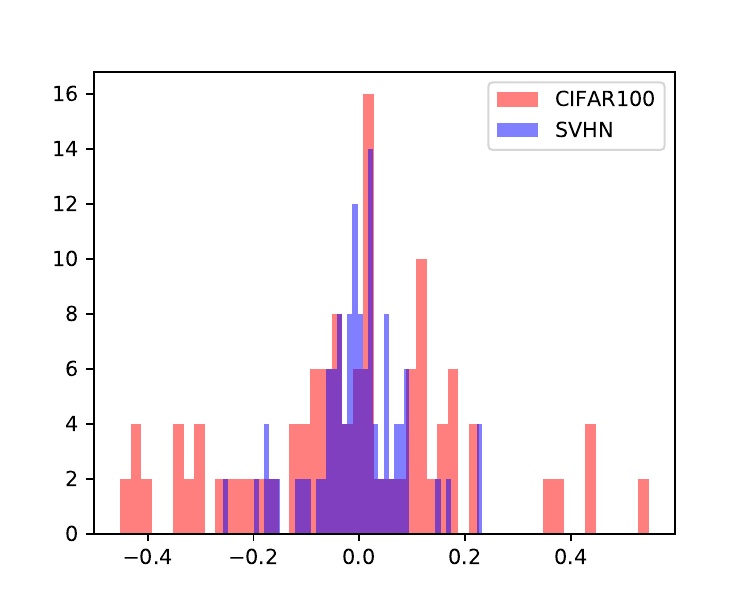}
		\end{minipage}
	}
	\subfigure{
		\begin{minipage}[t]{4.2cm}
			\centering
			\includegraphics[width=4.2cm]{./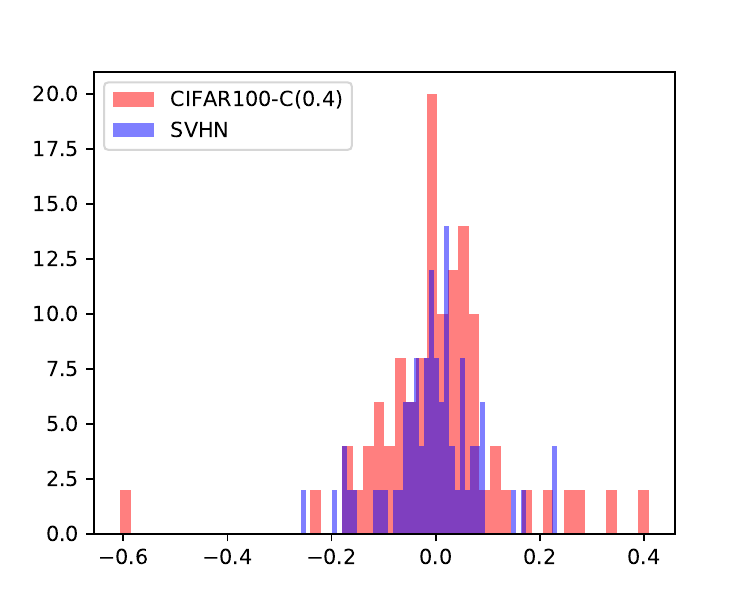}
		\end{minipage}
	}
	\subfigure{
		\begin{minipage}[t]{4.2cm}
			\centering
			\includegraphics[width=4.2cm]{./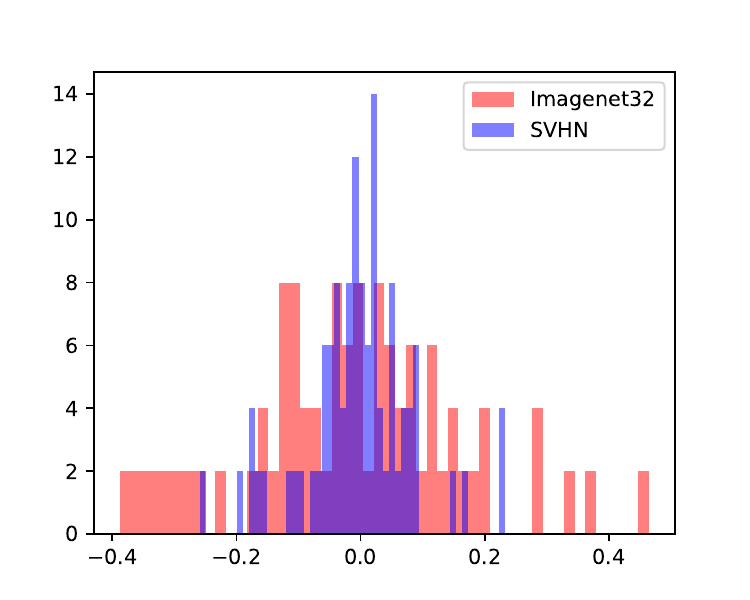}
		\end{minipage}
	}
	\subfigure{
		\begin{minipage}[t]{4.2cm}
			\centering
			\includegraphics[width=4.2cm]{./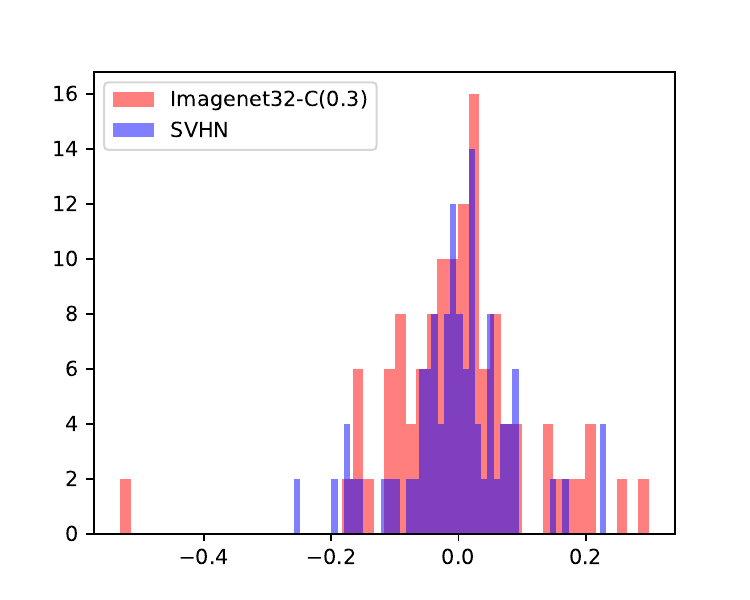}
		\end{minipage}
	}
	\caption{VAE trained on SVHN. Histogram of non-diagonal elements of correlation of sampled representations.}
	\label{fig:histo_correlation_VAE_train_svhn}
\end{figure*}

\begin{figure*}[htbp]
	\centering
	\subfigure{
		\begin{minipage}[t]{4.2cm}
			\centering
			\includegraphics[width=4.2cm]{./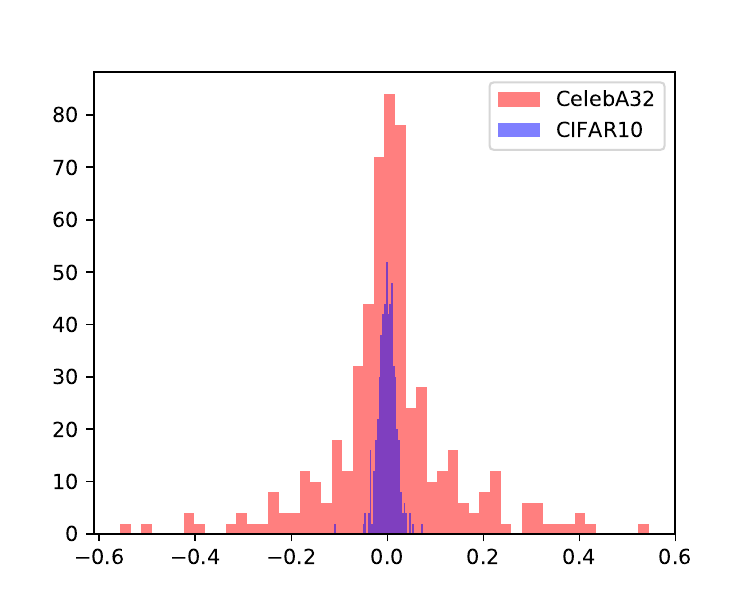}
		\end{minipage}
	}
	\subfigure{
		\begin{minipage}[t]{4.2cm}
			\centering
			\includegraphics[width=4.2cm]{./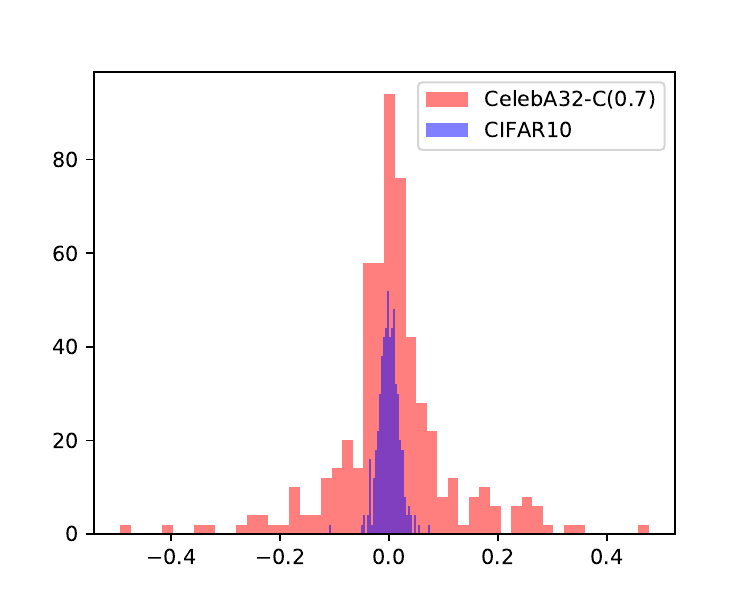}
		\end{minipage}
	}
	\subfigure{
		\begin{minipage}[t]{4.2cm}
			\centering
			\includegraphics[width=4.2cm]{./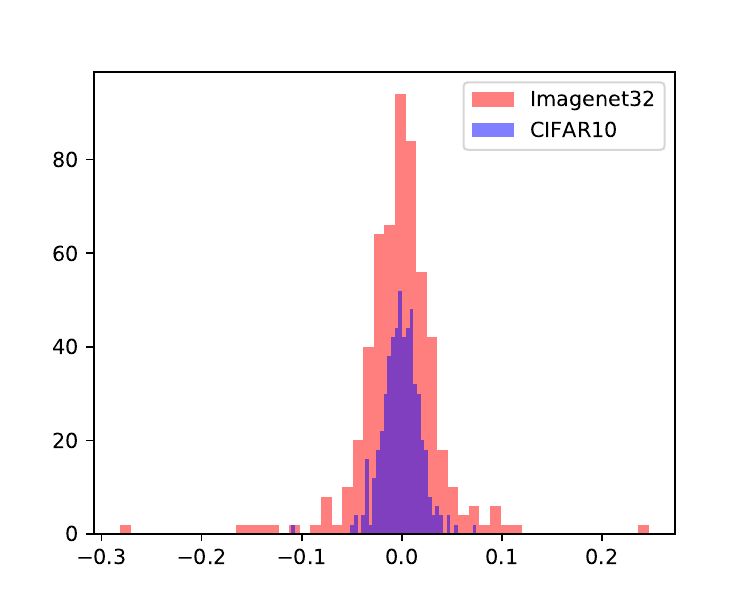}
		\end{minipage}
	}
	\subfigure{
		\begin{minipage}[t]{4.2cm}
			\centering
			\includegraphics[width=4.2cm]{./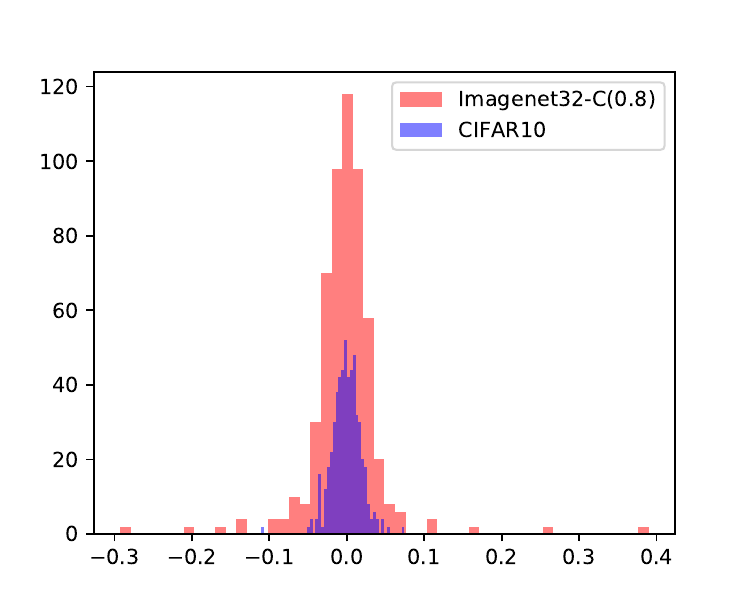}
		\end{minipage}
	}
	
	\vspace{-10pt}
	\subfigure{
		\begin{minipage}[t]{4.2cm}
			\centering
			\includegraphics[width=4.2cm]{./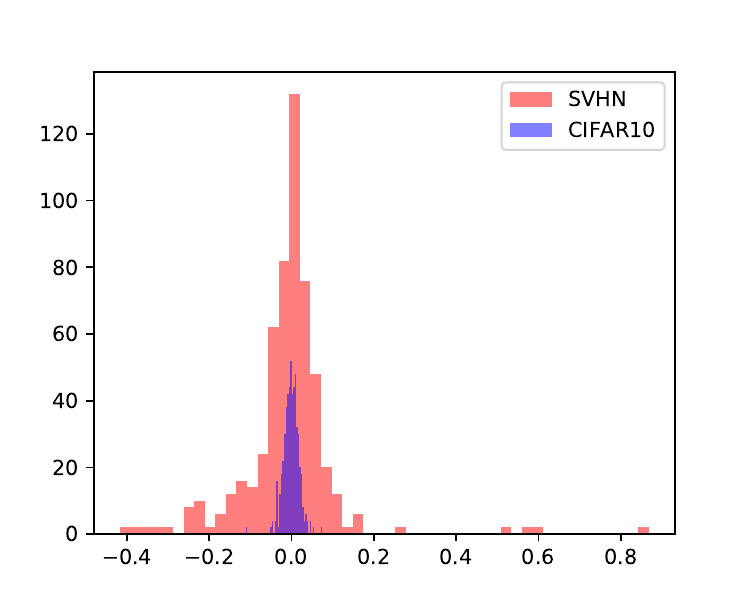}
		\end{minipage}
	}
	\subfigure{
		\begin{minipage}[t]{4.2cm}
			\centering
			\includegraphics[width=4.2cm]{./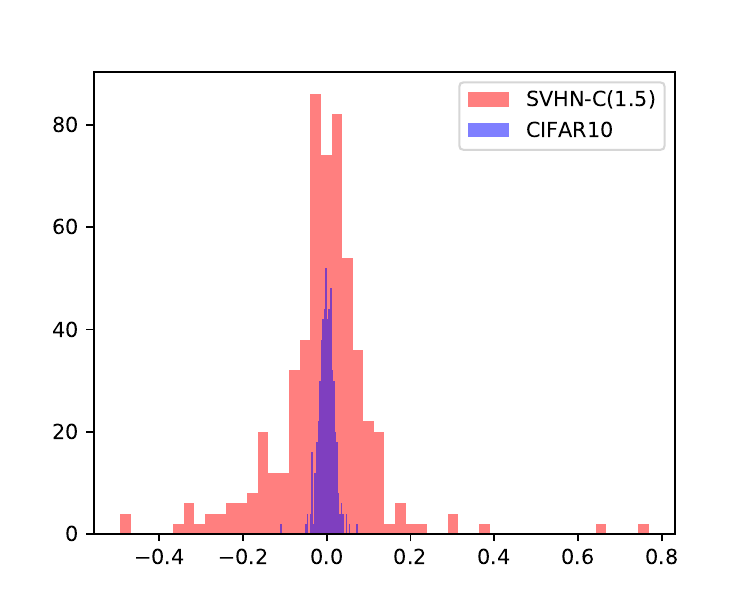}
		\end{minipage}
	}
	\subfigure{
		\begin{minipage}[t]{4.2cm}
			\centering
			\includegraphics[width=4.2cm]{./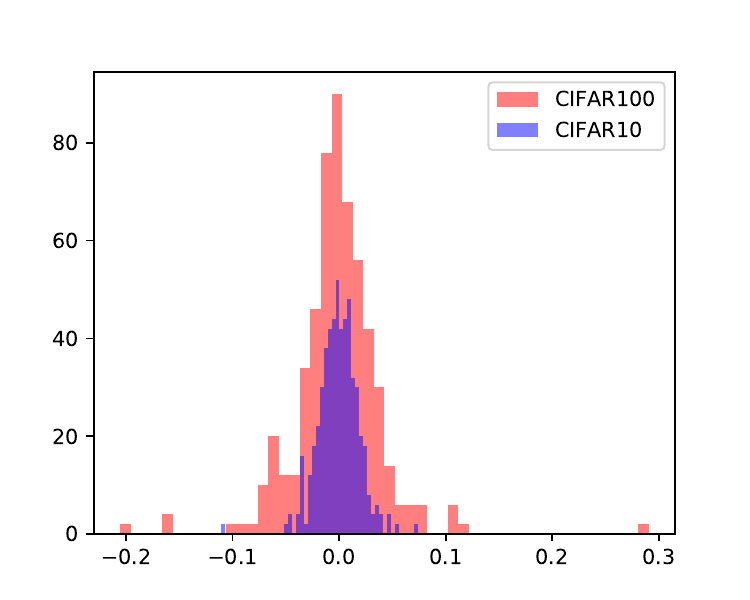}
		\end{minipage}
	}
	\caption{VAE trained on CIFAR10. Histogram of non-diagonal elements of correlation of sampled representations.}
	\label{fig:histo_correlation_VAE_train_cifar10}
\end{figure*}

\begin{figure*}[t]
	\centering
	\includegraphics[width=10cm]{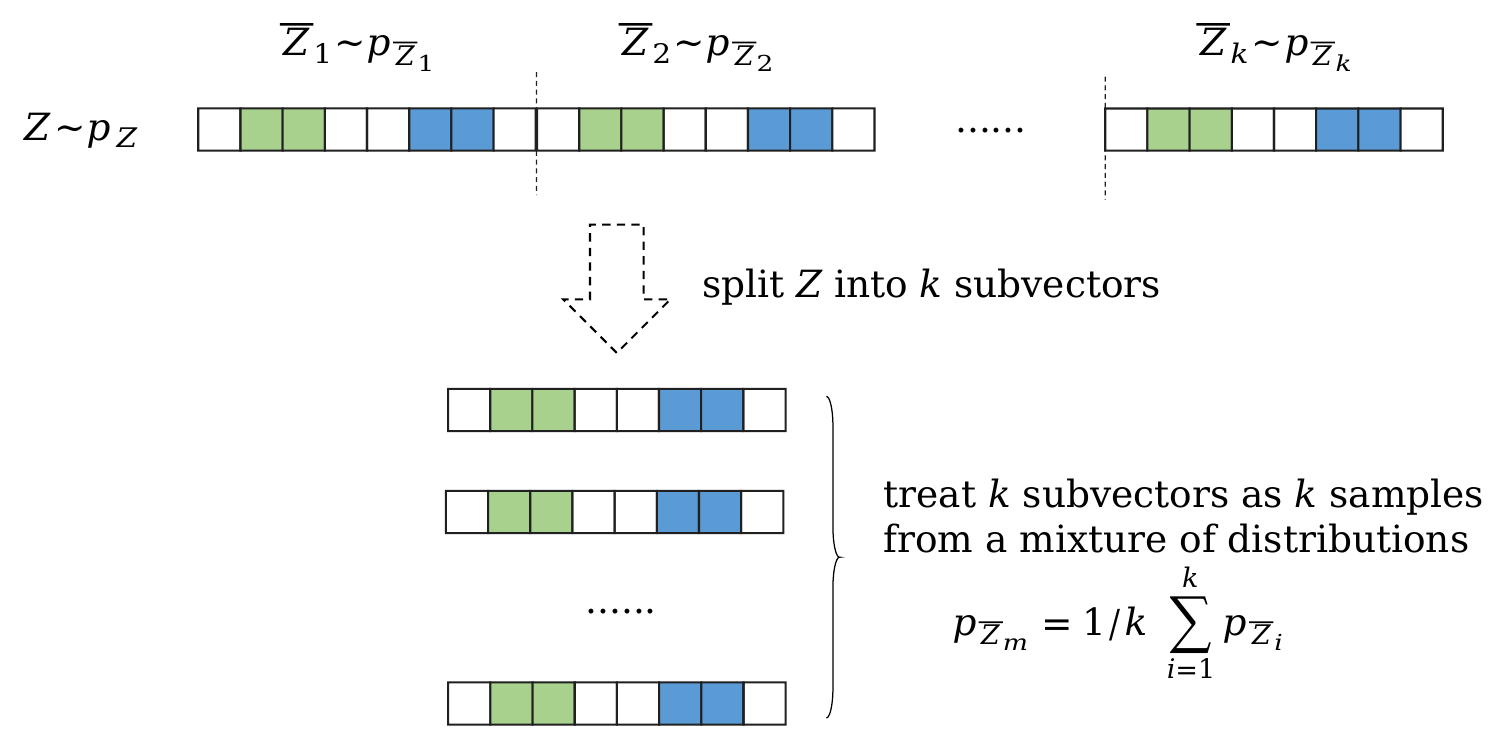}
	\caption{Split a random vector $Z\sim p_Z$ into $k$ subvectors $\bar{Z}_i\sim p_{\bar{Z}_i}$ ($1\leq i \leq k$). We treat $k$ subvectors as $k$ samples from a mixture of distributions $p_{\bar{Z}_i}=1/k\Sigma_{i=1}^k p_{\bar{Z}_i}$. In the figure, we use the same color to indicate neighboring pixels that are strongly correlated. For example, if the second element $\bar{Z}_{i,2}$ and the third element $\bar{Z}_{i,3}$ are strongly correlated for all $1\leq i\leq k$, we can say that $\bar{Z}_{m,2}$ and $\bar{Z}_{m,3}$ are also strongly correlated. This is why we can leverage local pixel dependence in our method.}
	\label{fig:splitting_strategy}
\end{figure*}

\begin{figure*}[t]
	\centering
	%
		%
		
		\begin{minipage}[t]{12cm}
			\centering
			\includegraphics[width=12cm]{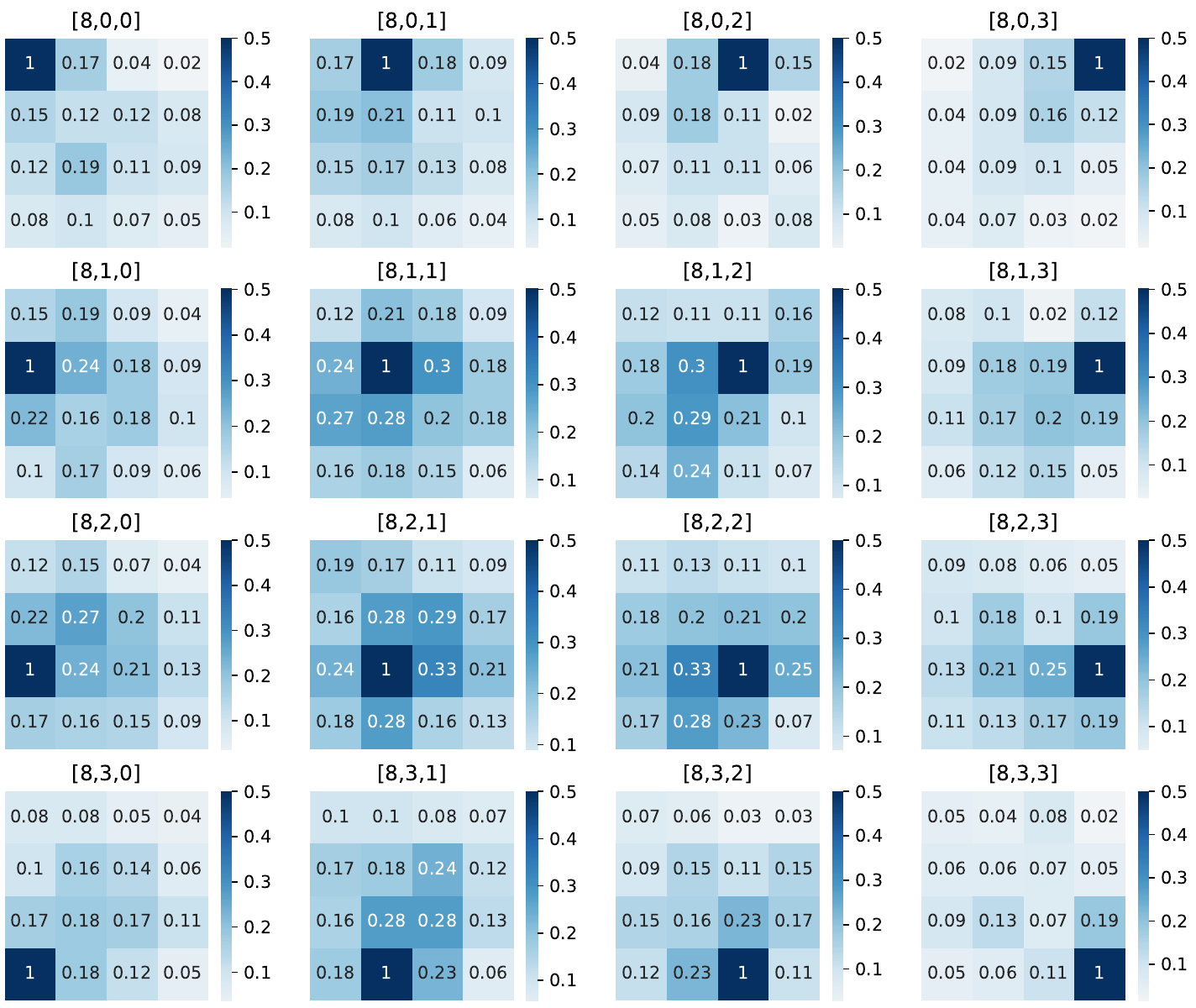}
		\end{minipage}
	\caption{Train Glow on SVHN and test on ImageNet32. We randomly select the 8-th channel. The subfigure at $i$-th row and $j$-th column shows the correlation between the pixel at position $(i,j)$ and all other pixels. Adjacent pixels tend to have stronger correlation.}
	\label{fig:heatmap_correlation_between_pixels_cifar10_imagenet_under_svhn_glow}
\end{figure*}

\begin{figure*}[htbp]
	\centering
	\subfigure[SVHN]{
		\begin{minipage}[t]{5cm}
			\centering
			\includegraphics[width=5cm]{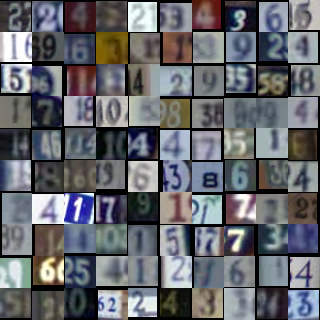}
			\label{fig:svhn_examples}
		\end{minipage}
	}
	\subfigure[SVHN with increased contrast by a factor of 2, have lower likelihood]{
		\begin{minipage}[t]{5cm}
			\centering
			\includegraphics[width=5cm]{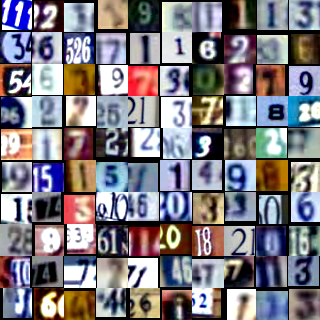}
			\label{fig:svhn_contrast_examples}
		\end{minipage}
	}
	
	\subfigure[CelebA32]{
		\begin{minipage}[t]{5cm}
			\centering
			\includegraphics[width=5cm]{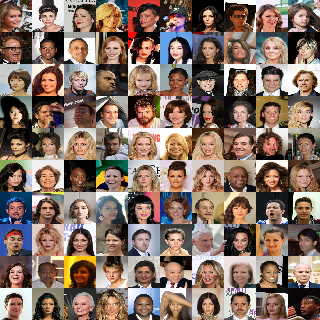}
			\label{fig:celeba32_examples}
		\end{minipage}
	}
	\subfigure[CelebA32 with decreased contrast by a factor of 0.3, have higher likelihood]{
		\begin{minipage}[t]{5cm}
			\centering
			\includegraphics[width=5cm]{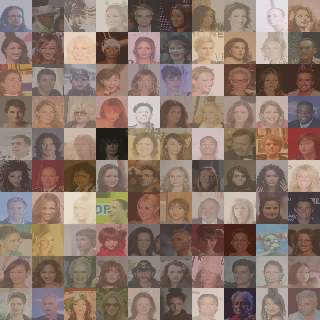}
			\label{fig:celeba32_gray_examples}
		\end{minipage}
	}

	\subfigure[ImageNet32]{
		\begin{minipage}[t]{5cm}
			\centering
			\includegraphics[width=5cm]{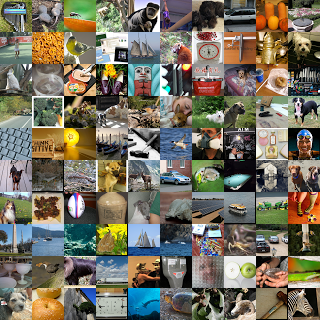}
			\label{fig:imagenet32_examples}
		\end{minipage}
	}
	\subfigure[ImageNet32 with decreased contrast by a factor of 0.3, have higher likelihood]{
		\begin{minipage}[t]{5cm}
			\centering
			\includegraphics[width=5cm]{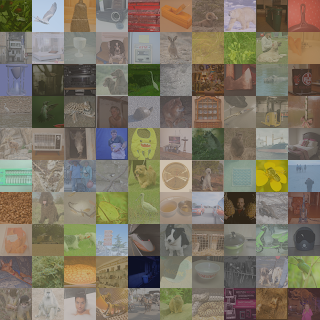}
			\label{fig:imagenet32_gray_examples}
		\end{minipage}
	}
	
	\caption{Examples of datasets and their mutations. Under Glow trained on CIFAR10, these mutated datasets have a similar likelihood distribution with CIFAR10 test split.}
	\label{fig:dataset_examples}
\end{figure*}


\begin{figure*}[htbp]
	\centering
	\subfigure[]{
		\begin{minipage}[t]{4.5cm}
			\centering
			\includegraphics[width=4.5cm]{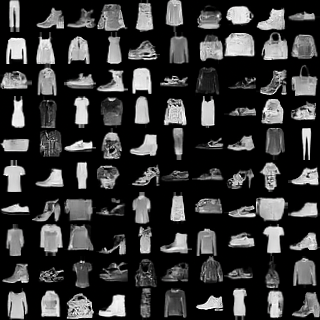}
			\label{fig:generated_fashionmnist_glow}
		\end{minipage}
	}
	\subfigure[]{
		\begin{minipage}[t]{4.5cm}
			\centering
			\includegraphics[width=4.5cm]{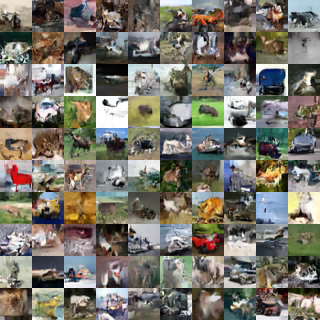}
			\label{fig:generated_cifar10_glow}
		\end{minipage}
	}
	\subfigure[]{
		\begin{minipage}[t]{4.5cm}
			\centering
			\includegraphics[width=4.5cm]{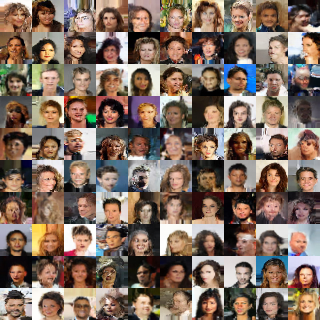}
			\label{fig:generated_celeba_glow}
		\end{minipage}
	}

	\caption{Generated images from Glow trained on (a)FashionMNIST; (b)CIFAR-10; (c)CelebA32. }
	\label{fig:generated_images_glow}
\end{figure*}


\begin{figure*}[t]
	\centering
	\includegraphics[width=11cm]{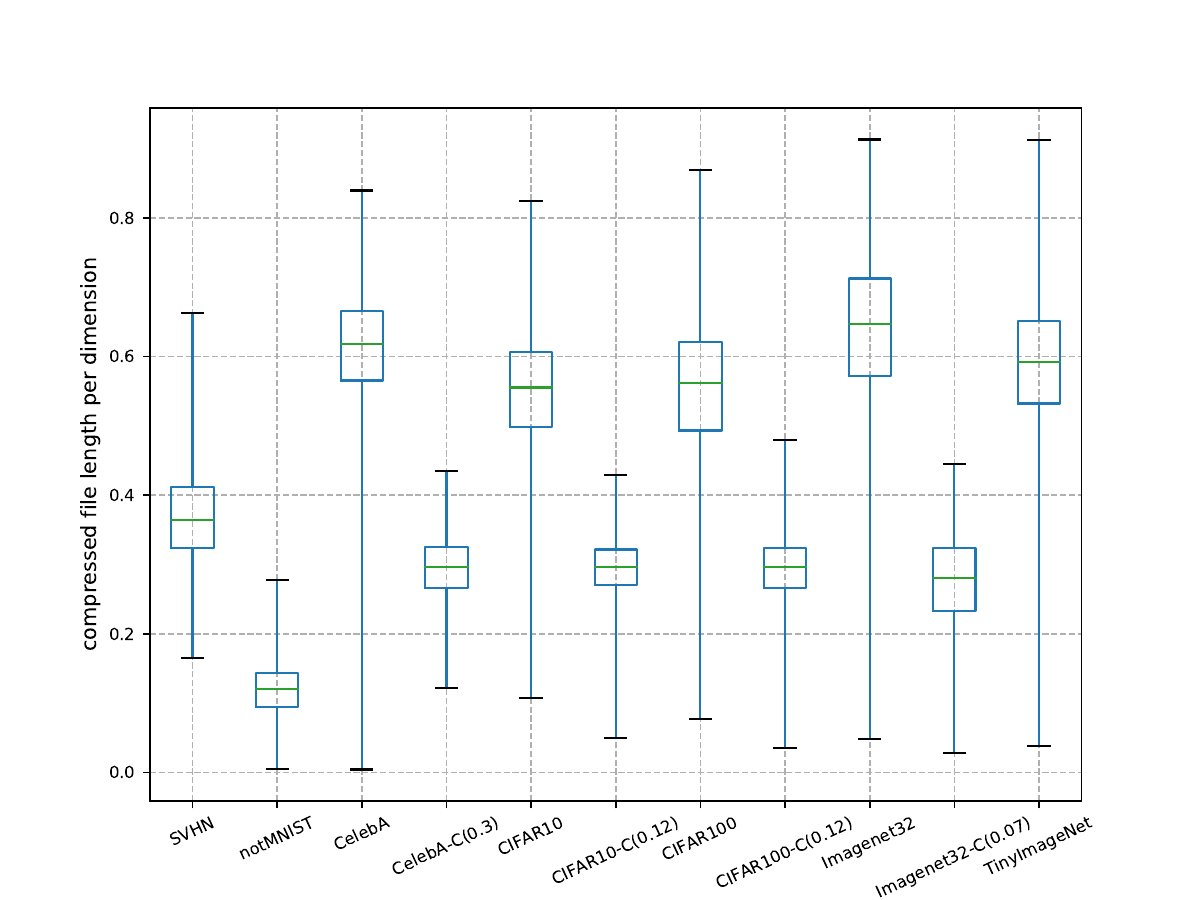}
	\caption{The distributions of complexity estimated by the lengths of compressed files of datasets. We use FLIF as compressor and compute lengths in bits per dimension. Datasets with decreased contrast has lower complexity. }
	\label{fig:complexity_length}
\end{figure*}

\end{document}